\newcommand{\be}{\begin{equation}}
\newcommand{\ee}{\end{equation}}
\newtheorem{lemma}{Lemma}
\newtheorem{theorem}{Theorem}
\newtheorem{definition}{Definition}
\newtheorem{proposition}{Proposition}
\newtheorem{assumption}{Assumption}
\newcommand{\bProof}{\begin{proof}{Proof.}}
\newcommand{\eProof}{\hfill\Halmos\\  \end{proof}}
\newcommand{\bea}{\begin{equation*}}
\newcommand{\eea}{\end{equation*}}
\newcommand{\gao}[1]{\textcolor{blue}{{#1}}} 
\begin{document}


\begin{center}
		\Large \bf Regret Bounds for Episodic Risk-Sensitive Linear Quadratic Regulator

	\end{center}
	\author{}
	\begin{center}
	{Wenhao Xu}\,\footnote{Department of Systems Engineering and Engineering Management, The Chinese University of Hong Kong, Hong Kong, China. Email: \url{whxu@se.cuhk.edu.hk.} },
	Xuefeng Gao\,\footnote{Department of Systems Engineering and Engineering Management, The Chinese University of Hong Kong, Hong Kong, China. Email: \url{xfgao@se.cuhk.edu.hk.} },
	Xuedong He\,\footnote{Department of Systems Engineering and Engineering Management, The Chinese University of Hong Kong, Hong Kong, China. Email: \url{xdhe@se.cuhk.edu.hk.} }
	\end{center}
	
	\begin{center}
            \today
	\end{center}
\vspace{3mm}
\begin{abstract}
Risk-sensitive linear quadratic regulator is one of the most fundamental problems in risk-sensitive optimal control. 
In this paper, we study online adaptive control of risk-sensitive linear quadratic regulator in the finite horizon episodic setting. 
We propose a simple least-squares greedy algorithm and show that it achieves $\widetilde{\mathcal{O}}(\log N)$ regret under a specific identifiability assumption, where $N$ is the total number of episodes. If the identifiability assumption is not satisfied, we propose incorporating exploration noise into the least-squares-based algorithm, resulting in an algorithm with $\widetilde{\mathcal{O}}(\sqrt{N})$ regret. 
To our best knowledge, this is the first set of regret bounds for episodic risk-sensitive linear quadratic regulator. 
Our proof relies on perturbation analysis of less-standard Riccati equations for risk-sensitive linear quadratic control, and a delicate analysis of the loss in the risk-sensitive performance criterion due to applying the suboptimal controller in the online learning process.
\end{abstract}



\section{Introduction}
In classical reinforcement learning (RL), one optimizes the \textit{expected cumulative rewards} in an unknown environment modeled by a Markov decision
process (MDP, \cite{sutton2018reinforcement}). However, this risk-neutral performance criterion 
may not be the most suitable one in applications such as finance, robotics and healthcare. Hence, a large body of literature has studied \textit{risk-sensitive} RL, incorporating the notion of risk into the decision criteria, see, e.g., \citet{mihatsch2002risk, shen2014risk, chow2017risk, prashanth2018risk}. 

In this paper, we study online learning and adaptive control for a \textit{risk-sensitive} linear quadratic control problem, referred to as 
the Linear Exponential-of-Quadratic Regulator (LEQR) problem. The LEQR problem is one of the most fundamental problems in risk-sensitive optimal control, and there is extensive literature on this topic \citep{jacobson1973optimal, whittle1990risk, zhang2021derivative}. In this control problem, the system dynamics is linear in the state and control variables, and it is disturbed with additive Gaussian noise. The cost in each period is convex quadratic in both the state and the control/action variables,
and the performance criteria is the logarithm of the expectation of the exponential functions of the cumulative costs. When the system parameters are known, the optimal control at each stage is linear in state with the coefficient determined by certain Riccati equation. Different from the risk-neutral setting, the solution to the Riccati equation for LEQR explicitly depends on the risk parameter and the covariance matrix of the additve Gaussian noise in the system dynamics \citep{jacobson1973optimal}. 
For general risk-sensitive nonlinear control, one does not have such closed-form solutions. However, one can use LEQR as a local approximation model and solve risk-sensitive control problems by iteratively solving LEQR problems, see e.g. \cite{roulet2020convergence}.

We consider the standard \textit{finite-horizon episodic} RL setting, where the system matrices of LEQR are \textit{unknown} to the agent. 
The learning agent repeatedly interacts with the unknown system over $N$ episodes, the time horizon of each episode is fixed, and the system resets to a fixed initial state distribution at the beginning of each
episode. We focus on the finite horizon LEQR model because it is widely used as a model of locally linear dynamics. 
The performance of the agent or the online algorithm is often quantified by the total regret, which measures the cumulative suboptimality of the algorithm accrued over time as compared to the optimal policy. We seek algorithms with (finite-time) regret that is sublinear in $N$, which means the per episode regret converges to zero and the agent can act near optimally as $N$ grows. 

Regret bounds for the \textit{risk-neutral} linear quadratic regulator (LQR) in the \textit{infinite-horizon average reward setting} have been extensively studied in the literature, 
see e.g. \cite{abbasi2011regret, mania2019certainty, cohen2019learning, simchowitz2020naive}. 
It has been shown that in this \textit{average reward setting}, the \textit{certainty-equivalent controller} where the agent selects control inputs according to the optimal controller for her estimate of the system, together with a simple random-search type exploration strategy, is (rate-)optimal for the online adaptive control of \textit{risk-neutral} LQR \citep{simchowitz2020naive}. However, non-asymptotic regret analysis of the finite-horizon episodic LQR has received much less attention, though some applications, especially in finance, naturally fall into the episodic setting. For example, a common task faced by a financial institution is to liquidate a large position of assets, e.g., a stock, in a finite amount of time, e.g., in one day. With a linear price impact, such problems can be formulated as a stochastic control problem with linear dynamics and quadratic cost functions; see Section 1.5 of \citet{AlmgrenChriss2001:OptimalExecution}. One can consider optimizing the expected utility of the total cost of trading and with an exponential utility function (see e.g. \cite{schied2010optimal}), the problem becomes an episodic LEQR problem. In different days, the institution may need to liquidate different assets, so the initial state of this problem, which represents the initial position of the asset that the institution needs to liquidate during the day, resets at the beginning of each day, resembling the episodic setting.
\cite{basei2022logarithmic} is among the first to establish regret bounds for the \textit{risk-neutral continuous time finite-horizon} LQR in the episodic setting. They proposed a greedy least-squares algorithm and established a regret bound that is logarithmic in the number of episodes $N$ under a specific identifiability condition.  By contrast, we study finite-horizon LEQR, which is a risk-sensitive model, in a discrete-time setting.

On the other hand, there is a surge of interest recently on studying finite-time regret bounds for risk-sensitive RL. 
The first regret bound for risk-sensitive tabular MDP is due to \cite{fei2020risk}, who study episodic RL with the goal of optimizing the exponential utility of the cumulative
rewards. There is now a rapidly growing body of literature on this topic, see, e.g. \citep{fei2020risk, fei2021exponential, du2022risk, bastani2022regret, liang2022bridging, xu2023regret, wang2023near, wu2023risk, chen2024provable}. Most of the studies consider learning in risk-sensitive MDPs with finite state and action spaces.

Inspired by these studies, in this paper we study regret bounds for online adaptive control of the (discrete-time) risk-sensitive LEQR in the finite-horizon episodic setting, where both the state and the action spaces are \textit{continuous}. In particular, we obtain two main results: 
\begin{itemize}
\item First, we propose a simple least-squares greedy algorithm without exploration noise (Algorithm \ref{ALG1}), and show that it achieves a regret of order $ \log N$ under a certain identifiability condition (Assumption~\ref{ASSU1}) on the LEQR model. 

\item Second, without Assumption \ref{ASSU1}, we propose another algorithm with actively injected exploration noise (Algorithm \ref{ALGO:alg}), and show that it achieves a regret of order $\sqrt{N}$.

\end{itemize}

To the best of our knowledge, this is the first set of regret bounds for finite-horizon episodic LEQR. 
In the learning theory community, there has been a significant interest in the questions of whether logarithmic regret is possible for what type of linear systems and under what assumptions. See e.g. \cite{agarwal2019logarithmic, cassel2020logarithmic, faradonbeh2020input, foster2020logarithmic, lale2020logarithmic}. Our first result provides an answer to these questions in the setting of risk-sensitive LEQR models. In addition, our second result complements the first result by showing that 
$\sqrt{N}$-regret bounds can be established for risk-sensitive LEQR models without the identifiability assumption. 

We briefly discuss the technical challenges and highlight the novelty of our regret analysis. Even though our proposed algorithms are fairly simple, the analysis is nontrivial and it builds on two new components: (a) perturbation analysis of Riccati equation for LEQR; and (b) analysis of risk-sensitive performance loss due to the suboptimal controller applied in the online control process. 
For the perturbation analysis in (a), we cannot use the existing techniques from the literature on online learning in risk-neutral LQR \citep{mania2019certainty,simchowitz2020naive, basei2022logarithmic}. This is because the Riccati equation for LEQR is less standard and more complicated: there are some extra parameters (see $\widetilde{P}_{t},t=0,\ \cdots,T-1$ in (\ref{eq:reccati})) involved in the equation, and the risk-sensitive parameter impacts the solution to the Riccati equation. To overcome this challenge, we first analyze one-step perturbation bound for the solution to Riccati equation, and then leverage the recursive structure of Riccati equation from our finite-horizon LEQR problem to establish a bound on the controller mismatch in terms of the error in the estimated system matrices. For the performance loss in (b), we can not employ the existing approach in online control of risk-neutral LQR as well. This is because the performance objective in LEQR is nonlinear in terms of the random cumulative costs (unlike the expectation which is a linear operator). Indeed, this type of non-linearity has been one of the key challenges in regret analysis for risk-sensitive tabular MDPs \citep{fei2021exponential}. To address this challenge, we leverage results from \cite{jacobson1973optimal} for LEQR to express the performance loss in terms of the controller mismatch (i.e. the gap between the executed controller and the optimal controller). Due to these two new technical components, our analysis is 
substantially different from the proof in the closely related work \cite{basei2022logarithmic}. In addition, \cite{basei2022logarithmic} did not  analyze the case when the identifiability condition does not hold and provide $\sqrt{N}$-regret bound.

There are several recent studies on RL for LEQR. 
\cite{zhang2021derivative} proposes model-free policy gradient methods for
solving the finite-horizon LEQR problem and provides a sample complexity result. Sample complexity is another popular performance metric for RL algorithms in addition to the regret. Note that the controller in \cite{zhang2021derivative}
is assumed to have simulation access to the model, i.e., the controller can execute multiple policies within each episode. By contrast, our work considers online
control of LEQG with regret guarantees, where we do not assume access to a simulator and the agent can only execute
one policy within each episode. Other related works include \cite{zhang2021provably}, which proposes a nested natural actor-critic algorithm for LEQR with the average reward criteria, and
\cite{cui2023reinforcement}, which proposes a robust policy optimization algorithm for solving the LEQR problem to handle model disturbances and mismatches. These studies do not consider regret bounds for LEQR, and hence are different from our work. 

Finally, we comment that an alternative approach to considering risk sensitive LQR is $H_\infty$-optimal adaptive control \citep{HassibiSayedKailath1999:IndefiniteQuadraticEstimation}. This approach takes a different perspective from LEQR: it considers deterministic, unknown noise and, instead of taking expectation with respect to random noise as in LEQR, it considers the $H_\infty$ norm of the cost with respect to the deterministic, unknown noise. Thus, in the presence of system noise, $H_\infty$-optimal adaptive control takes the robust control approach to consider the worst case performance while LEQR assumes a probabilistic model for the noise and a degree of risk aversion. Regret bounds for $H_\infty$ control have been studied in e.g. \cite{karapetyan2022regret}. Because of different settings and objective functions in LEQG and $H_\infty$-optimal control, the regret bounds in these two problems are not directly comparable.


\section{Problem Formulation} 
\subsection{The LEQR problem}\label{sec:LEQR}
We first provide a brief review of the LEQR problem \citep{jacobson1973optimal}. 
We consider the following linear discrete-time dynamic system: 
\begin{align}\label{eq:system}
x_{t+1}=A x_t+B u_t+w_t, \quad t=0,1,\cdots,T-1,
\end{align}
where the state vector $x_t\in \mathbb{R}^n$, the control vector $u_t\in\mathbb{R}^m$, the matrices $A\in\mathbb{R}^{n\times n}$, $B\in\mathbb{R}^{n\times m}$, and the process noise $w_t \in \mathbb{R}^n$ form a sequence of i.i.d. Gaussian random vectors. For the simplicity of presentation, we assume the noise
$w_t \sim \mathcal{N}(0,I)$ where $I$ is the identity matrix. 
The goal in the finite-horizon LEQR problem is to choose a control policy $\pi=\{u_0,u_1,\cdots,u_{T-1}\}$ so as to minimize the exponential risk-sensitive cost given by
\begin{align}\label{eq:costfunction}
&J^{\pi}(x_0)=\frac{1}{\gamma}\log \mathbb{E}\exp\left (\frac{\gamma}{2}\left(\sum_{t=0}^{T-1}c_t(x_t,u_t)+c_T(x_T)\right )\right),
\end{align}
where $c_t(x_t,u_t)=x_t^{\top}Qx_t+u_t^{\top}Ru_t$, $c_T(x_T)=x_T^{\top}Q_Tx_T$, $Q\succeq 0, Q_T\succeq 0$ (i.e. positive semidefinite), $R\succ 0$ (i.e. positive definite), and $\gamma$ is the risk-sensitivity parameter.

Note that when $\gamma$ is small, we have by Taylor expansion: 
\begin{align*}
\frac{1}{\gamma}\log \mathbb{E}\exp(\gamma Z) = \mathbb{E}[Z] + \frac{\gamma}{2} Var(Z) + O(\gamma^2), 
\end{align*}
for a random variable $Z$ with a finite moment generating function. 
It is well understood in the economics literature that $\gamma$ measures the risk aversion degree, and a positive (negatively, respectively) $\gamma$ stands for risk-averse (risk-seeking, respectively) attitude; see for instance \cite{pratt1964risk}.
When $\gamma \rightarrow 0,$ the LEQR problem reduces to the conventional risk-neutral linear quadratic control where one minimizes the expected total quadratic cost and the controller becomes risk-neutral. For concreteness, we focus on the case where $\gamma>0$ (our analysis extends to $\gamma \le 0$). 
 The optimal performance is denoted by
\begin{align}\label{eq:optJ}
J^{\star}(x_0)=\inf_{\pi}J^{\pi}(x_0).
\end{align}

When the system parameters are all \textit{known}, \cite{jacobson1973optimal} shows that under the assumption that $I-\gamma P_{t+1}\succ 0$ for all $t=0,1,\cdots, T-1$ (Note that if $\gamma$ is too large, we have $J^{\pi}(x_0) = \infty$ for all policies),
the optimal feedback control for (\ref{eq:optJ}) is a linear function of the system state
\begin{align}\label{eq:ustar}
u_t^{\star}=K_tx_t, \quad t=0,1,\cdots,T-1,
\end{align}
where $(K_t)$ can be solved from the following discrete-time (modified) Riccati equation: 
\begin{align}\label{eq:reccati}
P_T & =Q_T, \nonumber \\
\widetilde{P}_{t+1} &=P_{t+1}+\gamma P_{t+1}\left(I_n-\gamma P_{t+1}\right)^{-1}P_{t+1},  \nonumber \\
K_t & =-(B^{\top}\widetilde{P}_{t+1}B+R)^{-1}B^{\top}\widetilde{P}_{t+1}A,  \nonumber\\
P_t & =Q+K_t^{\top}RK_t+(A+B K_t)^{\top}\widetilde{P}_{t+1}(A+BK_t),\nonumber\\
t &=0,1,\cdots,T-1.
\end{align}
One can see that scaling all the cost matrices $Q, Q_T,$ and $R$ does not change the optimal controller, and hence we assume $R \succeq I_m$ without loss of generality. Note that in the risk-neutral setting where $\gamma =0$, we have $\widetilde{P}_{t} = {P}_{t}$ in the Riccati equation~(\ref{eq:reccati}). 
However, in the risk-sensitive setting, we have extra matrices $(\widetilde{P}_{t})$ in the Riccati equation. This is one of the difficulties we need to overcome when we study perturbation analysis of Riccati equations for the LEQR problem.

\subsection{Finite-horizon Episodic RL in LEQR}
In this paper, we consider the online learning/control setting for LEQR, where the system matrices $(A, B)$ are \textit{unknown} to the agent. 
The learning agent repeatedly interacts with the linear system (\ref{eq:system}) over $N$ episodes, where the time horizon of each episode is $T$.
In each episode $i=1,2,\cdots, N$, an arbitrary fixed initial state $x_0^k=x_0\in\mathbb{R}^n$ is picked.\footnote{The results of the paper can also be extended to the case where the initial states are drawn from a fixed distribution over $\mathbb{R}^n$.} 
An online learning algorithm executes policy $\pi^i$ throughout episode $i$ based on the observed past data (states, actions and costs) up to the end of episode $i-1$. The performance of an online algorithm over $N$ episodes of interaction with the linear system (\ref{eq:system})  is the (total) regret:  
\begin{equation*}
\operatorname{Regret}(N)= \sum_{i=1}^N \left( J^{\pi^i}(x_0^i) - J^{\star}(x_0^i) \right),
\end{equation*}
where the term $J^{\pi^i}(x_0^i) - J^{\star}(x_0^i)$ (see (\ref{eq:costfunction}) and (\ref{eq:optJ})) measures the performance loss when the agent executes the suboptimal policy $\pi^i$ in episode $i$. 

\section{A logarithmic regret bound}\label{Log:Alg}
In this section, we propose a simple least-squares greedy algorithm and show that it achieves a regret that is logarithmic in $N$, under a specific identifiability assumption. 

\subsection{A Least-Squares Greedy Algorithm}\label{ALG0}


We now present the details of the least-squares greedy algorithm, which combines least-squares estimation for the unknown system matrices $(A,B)$ with a greedy strategy.



We divide the $N$ episodes into $L$ epochs. The $l$-th epoch has $m_l$ episodes, thus $\sum_{l=1}^L m_l=N$. At the beginning of the $l$-th epoch, we estimate the system matrices $(A,B)$ by using the data from the $(l-1)$-th epoch, and the obtained estimator is denoted by $(A^l,B^l)$. Then we select the control inputs according
to the optimal controller for the estimate $(A^l,B^l)$ of the system, and execute such a policy throughout epoch $l.$ The feedback control $K_t^l$ is obtained by replacing $(A,B)$ in (\ref{eq:reccati}) with the estimate $(A^l,B^l)$. Then, in the $k$-th episode of epoch $l$, we play the greedy policy $u_t^{l,k}$ by taking $K_t^l$ into (\ref{eq:ustar}).

It remains to discuss the estimation procedure for $(A^l,B^l)$ which is conducted at the beginning of epoch $l$. Within the $l$-th epoch, we note that the same policy is executed in each episode. Because we consider the episodic setting where the system state reset to the same state at $t=0$, we obtain that the state-action trajectories across different episodes are i.i.d within the same epoch. 
Note that the random linear dynamical system in epoch $l$ is given by
\begin{align}\label{SS1}
x_{t+1}^{l}=Ax_t^{l}+Bu_t^{l}+w_t^{l}, \quad t=0,1,\cdots,T-1,
\end{align} 
where $u_t^l = K_t^l x_t^l$.
For simplicity of notation, we denote by $z^{l}_t=\left[ x_t^{l \top}\ u_t^{ l\top}\right]^{\top}$, which is the state-action random vector at step $t$ in epoch $l$. We also denote by $\theta=[A\ B]^{\top}$ for the system matrices.  
Taking the transpose of (\ref{SS1}) and multiplying $z_t^{l}$ on both sides of (\ref{SS1}), we can get $z_t^{l} x_{t+1}^{l\top}=z_t^{l} z_t^{l\top}\theta+z_t^{l} w_t^{l\top}.$
Summing over $T$ steps and taking the expectation, we obtain $\mathbb{E}\left[Y^l\right]=\mathbb{E}\left[V^l\right]\theta,$
where $V^l=\sum_{t=0}^{T-1}z_t^lz_t^{l\top}$ and $Y^l=\sum_{t=0}^{T-1}z_t^lx_{t+1}^{l\top}.$
It follows that
\begin{align}\label{SS2}
\theta= [A\ B]^{\top} = \left(\mathbb{E}\left[V^l\right]\right)^{-1}\left(\mathbb{E}\left [Y^l\right]\right),
\end{align}
provided that the matrix $\mathbb{E}[V^l]$ is invertible. The formula (\ref{SS2}) and the fact that state-action trajectories across different episodes are i.i.d. within the same epoch provide the basis for our estimation procedure. Given the data in epoch $l$, we now discuss the construction of the estimator $\theta^{l+1}:=\left[A^{l+1},B^{l+1}\right]^{\top}.$


Consider the sample state process in the $k$-th episode of epoch $l$:
\begin{align}\label{eq:sample}
x_{t+1}^{l,k}=Ax_t^{l,k}+Bu_t^{l,k}+w_t^{l,k}, t=0,1,\cdots,T-1.
\end{align} 
Denote the sample state-action vector by $z_t^{l,k}=\left[x_t^{l,k\top}\ u_t^{l,k\top}\right]^{\top}$. 
Then, we can design the $l_2$-regularized least-squares estimation for $\theta$ by replacing the expectation in (\ref{SS2}) with the sample average over the $m_l$ episodes in epoch $l$ and adding the regularized term $\frac{1}{m_l}I_{n+m}$: 
\begin{align}\label{SS4}
\theta^{l+1}=\left(\bar{V}^l+\frac{1}{m_l}I_{n+m}\right)^{-1}\bar{Y}^l,
\end{align}
where $\bar{V}^l=\frac{1}{m_l}\sum_{k=1}^{m_l}\sum_{t=0}^{T-1}z_t^{l,k}z_t^{l,k\top}$ and $ \bar{Y}^l=\frac{1}{m_l}\sum_{k=1}^{m_l}\sum_{t=0}^{T-1}z_t^{l,k} x_{t+1}^{l,k\top}.$


We now summarize the details of the least-squares greedy algorithm in Algorithm \ref{ALG1}. Note that the input parameter $\theta^1$ denotes the initial guess of the true system matrices $(A, B)$. 

\begin{algorithm}[htbp]
   \caption{The Least-Squares Greedy Algorithm}
   \label{ALG1}
\begin{algorithmic}
   \STATE {\bfseries Input:} Parameters $L,T,m_1,\theta^1, Q, Q_T, R$
   \FOR{$l=1,\cdots,L$}
   \STATE $m_l=2^{l-1}m_1$
      \STATE Compute $(K_t^l)$ for all $t$ by (\ref{eq:reccati}) using $\theta^l$.
   \FOR{$k=1,\cdots,m_l$}
   \FOR{$t=0,\cdots,T-1$}
   \STATE Play $u^{l,k}_t\gets K_{t}^{l}x_t^{l,k}$.
   \ENDFOR
   \ENDFOR
   \STATE Obtain $\theta^{l+1}$ from the $l_2$-regularized least-squares estimation (\ref{SS4}).
   \ENDFOR
\end{algorithmic}
\end{algorithm}

\subsection{Logarithmic Regret}
In this section, we state our first main result. We first introduce the following assumption.
\begin{assumption}\label{ASSU1}
For the sequence of the controller $(K_t)$ defined in (\ref{eq:reccati}), we assume that 
\begin{align} \label{PSDKK}
\left\{v\in \mathbb{R}^{n+m}\Big\vert \left[I_n\ K_t^{\top}\right]v=0,\forall t=0,\cdots,T-1\right\}=\{0\}.
\end{align}
\end{assumption}

Assumption \ref{ASSU1} is essentially Assumption H.1(2) in \cite{basei2022logarithmic} for learning finite-horizon continuous-time risk-netural LQR, and it is referred to as the self-exploration property therein (i.e., exploration is `automatic' due to the system noise and the \textit{time-dependent} optimal feedback matrix $(K_t)_{t=0, \ldots, T-1}$ ). One can show that Assumption \ref{ASSU1} is equivalent to the condition (see Lemma \ref{LL5})
\begin{align}\label{eq:PSDzz}
\mathbb{E}\left[\sum_{t=0}^{T-1}z_tz_t^{\top}\right]
&=\sum_{t=0}^{T-1}\left[\begin{array}{cc}
    I_n   \\
    K_t
\end{array}\right]\mathbb{E}\left[x_tx_t^{\top}\right]\left[I_n\ K_t^{\top}\right]\succ 0,
\end{align}
which resembles the persistence of excitation assumption in adaptive control \citep[Definition 2.1, Chapter 2]{AstromWittenmark2008:AdaptiveControl}.

In view of (\ref{SS2}) and (\ref{eq:PSDzz}),  Assumption \ref{ASSU1} essentially guarantees  the identifiability of the true system matrices when the time-dependent optimal control in (\ref{eq:reccati}) is executed. This is important for the proposed greedy least-squares algorithm to achieve a logarithmic regret bound. 
Assumption \ref{ASSU1} can be satisfied under various sufficient conditions. We provide one set of sufficient conditions in Proposition~\ref{PROP1} in the appendix. 



We now present our first main result, which provides a logarithmic regret bound of Algorithm \ref{ALG1}. We denote $\Vert \cdot\Vert$ as the spectral norm for matrices. 

\begin{theorem}\label{THM1} 
Suppose Assumption \ref{ASSU1} holds and assume the optimal controller for the initial estimate $\theta^1$ also satisfy (\ref{PSDKK}). 
Fix $\delta\in (0,\frac{3}{\pi^2})$. 
Then we can choose  $m_1=\mathcal{C}_0(-\log\delta)$ for some positive constant $\mathcal{C}_0$  such that with probability at least $1-\frac{\pi^2\delta}{3}$, the regret of Algorithm \ref{ALG1} satisfies
\begin{align}\label{REGRET}
\operatorname{Regret}(N)\leq\mathcal{C}\left(\sum_{t=0}^{T-1} \psi_{t}\right)\left[\log\left(\frac{m+n}{\sqrt{\delta}}\right)L+L\log L\right],
\end{align}
where $\mathcal{C}$ is a constant independent of $N$ 
and $(\psi_t)$ is a sequence recursively defined by
\begin{align*}
&\psi_{T-1}=2\widetilde{\Gamma}^3,\quad\psi_{t}=2\widetilde{\Gamma}^3(10\mathcal{V}^2\mathcal{L}\widetilde{\Gamma}^4)^{2(T-t-1)}+12\widetilde{\Gamma}^4\psi_{t+1},\ t\in [T-2],
\end{align*}
with 
\begin{equation}\label{parameter}
\begin{aligned}
&\Gamma_t=\max\left\{\left\Vert A\right\Vert,\Vert B\Vert,\Vert Q\Vert,\Vert Q_T\Vert,\Vert R\Vert,\Vert P_t\Vert,\Vert \widetilde{P}_t\Vert,\Vert K_{t-1}\Vert\right\}, \quad \widetilde{\Gamma}=1+ \max_t\Gamma_t, \\
&\mathcal{V}=2(\mathcal{L}+1)\widetilde{\Gamma}^3, \quad \mathcal{L}=\frac{1}{(1-\gamma\sigma^2\widetilde{\Gamma})^2}.
\end{aligned}
\end{equation}
\end{theorem}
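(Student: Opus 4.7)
The plan is to organize the proof around three technical ingredients---epoch-wise estimation error for $\theta^l=[A^l\ B^l]^\top$, a Riccati perturbation bound that converts estimation error into controller mismatch $\|K_t^l - K_t\|$, and a performance loss bound that converts controller mismatch into per-episode suboptimality---and then sum over epochs using the doubling schedule $m_l = 2^{l-1}m_1$.

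\textbf{Step 1 (Estimation error at each epoch).} I would first show, using Lemma \ref{LL5} and Assumption \ref{ASSU1}, that $\mathbb{E}[V^l]\succ 0$ as long as the plug-in controllers $(K_t^l)$ stay close enough to $(K_t)$; the assumption on $\theta^1$ handles the base case, and for $l\ge 2$ this will be maintained inductively using Step 2. Since $z_t^{l,k}$ is a linear transformation of the Gaussian noise $(w_0^{l,k},\ldots,w_{t-1}^{l,k})$ with operator norm controlled by $\widetilde{\Gamma}$, the entries of $z_t^{l,k}z_t^{l,k\top}$ and $z_t^{l,k} w_t^{l,k\top}$ are sub-exponential with parameters controlled by $\widetilde{\Gamma}$ and $\mathcal{L}$. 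A Hanson--Wright / matrix Bernstein concentration argument, together with an independence-across-episodes observation, yields, with probability at least $1-\delta/l^2$,
\begin{equation*}
\bar{V}^l \succeq \tfrac{1}{2}\mathbb{E}[V^l], \qquad \|\theta^{l+1}-\theta\|^2 \le c_1\, \frac{\log((m+n)/\delta)}{m_l},
\end{equation*}
provided $m_l \ge \mathcal{C}_0(-\log\delta)$, which fixes the choice of $m_1$. Taking a union bound over $l=1,\ldots,L$ costs a factor $\sum_l l^{-2} \le \pi^2/6$ and gives the $\pi^2\delta/3$ failure probability stated in the theorem.

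\textbf{Step 2 (Riccati perturbation).} Given the estimation error $\|\theta^l-\theta\|$, I would bound $\|K_t^l-K_t\|$ by backward induction on $t$, starting from $P_T^l = P_T = Q_T$, and using the recursion in \eqref{eq:reccati}. The key obstacle, absent in the risk-neutral case, is propagating through the auxiliary matrix $\widetilde{P}_{t+1}$: a one-step perturbation lemma for the map $P\mapsto P + \gamma P(I-\gamma P)^{-1}P$ is required, and this is where the factor $(1-\gamma\sigma^2\widetilde{\Gamma})^{-2} = \mathcal{L}$ enters. Assuming the error stays in a neighborhood where $I - \gamma \widetilde{P}_{t+1}^l\succ 0$ (which will be verified by induction using smallness of $\|\theta^l-\theta\|$), I would combine the one-step perturbation bound with the inductive hypothesis on $\|P_{t+1}^l-P_{t+1}\|$ and $\|\widetilde{P}_{t+1}^l - \widetilde{P}_{t+1}\|$ to obtain
\begin{equation*}
\|K_t^l - K_t\|^2 \;\le\; \psi_t \|\theta^l - \theta\|^2,
\end{equation*}
with $\psi_t$ exactly the recursion stated in the theorem; the coefficient $10\mathcal{V}^2\mathcal{L}\widetilde{\Gamma}^4$ inside $\psi_t$ is precisely what the one-step Riccati perturbation contributes each time it is iterated.

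\textbf{Step 3 (Performance loss).} This is the genuinely risk-sensitive step, and the place where the proof of \cite{basei2022logarithmic} cannot be reused directly because the entropic risk $\gamma^{-1}\log\mathbb{E}\exp(\gamma\cdot)$ is nonlinear. I would exploit Jacobson's closed form: under any linear feedback $u_t = K_t^l x_t$, the risk-sensitive value admits a quadratic-exponential representation whose cost matrix solves the Lyapunov-type equation associated with the applied controller. Comparing this representation against the optimal one (with feedback $K_t$) and using that $K_t$ is a stationary point of the one-stage Bellman operator, I would cancel the first-order term and show that $J^{\pi^l}(x_0)-J^\star(x_0)$ is bounded by $\sum_{t=0}^{T-1} c_t\|K_t^l - K_t\|^2$ for constants depending on $\widetilde{\Gamma}$ and $\mathcal{L}$. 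Combined with Step 2, the per-episode loss in epoch $l$ is $O\bigl(\sum_t \psi_t \cdot \|\theta^l-\theta\|^2\bigr)$.

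\textbf{Step 4 (Summation).} Combining Steps 1--3, the total regret satisfies, with high probability,
\begin{equation*}
\operatorname{Regret}(N) \;\le\; \sum_{l=1}^L m_l \cdot \mathcal{C}\Bigl(\sum_{t=0}^{T-1}\psi_t\Bigr) \cdot \frac{\log((m+n)/\delta) + \log l}{m_{l-1}}.
\end{equation*}
The doubling schedule $m_l = 2m_{l-1}$ makes $m_l/m_{l-1} = 2$, so the sum telescopes to $\sum_{l=1}^L (\log((m+n)/\delta) + \log l) = O\bigl(L\log((m+n)/\sqrt{\delta}) + L\log L\bigr)$, matching \eqref{REGRET}. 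The main obstacle I anticipate is Step 2: controlling the recursion for $\widetilde{P}_t$ carefully enough that the inductive hypothesis $I-\gamma \widetilde{P}_{t+1}^l\succ 0$ is preserved and the one-step perturbation factors compose into exactly the form $\psi_t$, which requires choosing $\mathcal{C}_0$ large enough so that $\|\theta^l - \theta\|$ stays in the regime of validity throughout all epochs (enforced via the sub-Gaussian tail bound in Step 1).
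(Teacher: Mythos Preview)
Your four-step skeleton matches the paper's, but you misidentify where the $\psi_t$ recursion originates, and this exposes a real gap in Step~3. In Step~2 you claim the Riccati perturbation yields $\|K_t^l-K_t\|^2\le\psi_t\|\theta^l-\theta\|^2$ with $\psi_t$ exactly as in the theorem. That is not what the perturbation analysis gives: the paper's Lemma~\ref{L7} proves $\|K_t^l-K_t\|\le(10\mathcal{V}^2\mathcal{L}\widetilde{\Gamma}^4)^{T-t-1}\mathcal{V}\epsilon_l$, so the squared mismatch is bounded by $(10\mathcal{V}^2\mathcal{L}\widetilde{\Gamma}^4)^{2(T-t-1)}\mathcal{V}^2\epsilon_l^2$, which is only the \emph{first} summand in the $\psi_t$ recursion and lacks the $12\widetilde{\Gamma}^4\psi_{t+1}$ term.

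The missing $12\widetilde{\Gamma}^4\psi_{t+1}$ term comes from Step~3, and your description there---``bounded by $\sum_t c_t\|K_t^l-K_t\|^2$''---is too coarse to produce it. After the first-order cancellation you mention, one still has $J_0^{\pi^{l,k}}-J_0^\star=\gamma^{-1}\log\mathbb{E}\exp\bigl(\tfrac{\gamma}{2}\sum_t x_t^{\top}U_t^l x_t\bigr)$ with $U_t^l=\Delta K_t^{l\top}(R+B^\top\widetilde{P}_{t+1}B)\Delta K_t^l$ (Lemma~\ref{L2}). Because of the exponential, this does \emph{not} decompose into per-step contributions; the paper computes it exactly by iterating Jacobson's conditional-expectation identity (Lemma~\ref{L1}) backward in $t$, which produces auxiliary matrices $D_t^l$ satisfying their own recursion $D_t^l=U_t^l+(A+BK_t^l)^\top\widetilde{D}_{t+1}^l(A+BK_t^l)$ with $\widetilde{D}_{t+1}^l=(I-\gamma D_{t+1}^l)^{-1}D_{t+1}^l$ (Proposition~\ref{LP2}). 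It is the bound $\|D_t^l\|\le\psi_t\mathcal{V}^2\epsilon_l^2$ that generates the full $\psi_t$ recursion: the first summand comes from $\|U_t^l\|$ via Lemma~\ref{L7}, and the $12\widetilde{\Gamma}^4\psi_{t+1}$ summand is precisely the propagation of $\widetilde{D}_{t+1}^l$ through the closed loop $(A+BK_t^l)$. Without deriving this exact recursive formula for the loss, you will not recover the stated $\psi_t$; your Steps~1 and~4 are essentially as in the paper.
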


Because $\sum_{l=1}^L m_l=N$ and $m_l=2^{l-1}m_1$, we infer that 
$L=\left\lceil \log_2 \left(\frac{N}{m_1}+1\right)\right\rceil\lesssim \log N$, where $\lesssim$ means the inequality holds with a multiplicative constant. Hence, Theorem~\ref{THM1} implies that the regret of Algorithm \ref{ALG1} satisfies $\operatorname{Regret}(N)= {O} \left(  \log N \cdot \log\log(N) \right),$ where ${O}$ hides dependency on other constants. 
In Appendix~\ref{sec:depend}, we provide some further discussions on the dependency of the regret bound on other problem parameters, including the horizon length $T$, and the risk parameter $\gamma$ of the LEQR model.
Note that
Algorithm \ref{ALG1} requires $L$, or equivalently $N$ (the total number of episodes) as input. For unknown $N$, one can use the doubling trick \citep{besson2018doubling}. Specifically, consider an increasing sequence $\{n_k\}_{k=0}^{\infty}$ where $n_k=2^{2^k}$ for $k \ge 1$ and $n_0=0$. For each $k,$ one restarts
Algorithm \ref{ALG1} at the beginning of episode $n_k +1$, and run the
algorithm until episode $n_{k+1}$ with the input $N = n_{k+1} - n_k$. One can readily verify that this leads to an anytime algorithm which still achieves a logarithmic regret bound.

\subsection{Proof Sketch of Theorem~\ref{THM1} }\label{sketch1}
In this section, we provide the proof sketch of Theorem \ref{THM1}. The full proof is given in Appendix \ref{pf:A}. 


\textbf{Step 1: } We adapt the analysis in \cite{basei2022logarithmic} and use Bernstein inequality for the sub-exponential random variables to derive the following bound on estimation errors of system matrices.
\begin{proposition}[Informal]
Fix $\delta\in (0,\frac{3}{\pi^2})$. Let $\delta_l=\delta/l^2$. For $m_l \gtrsim \log\left(\frac{(m+n)^2}{\delta_l}\right)$, we have with probability at least $1-2\delta_l$,
\begin{align*}
&\left\Vert\theta^{l+1}-\theta\right\Vert \lesssim  \sqrt{\frac{\log\left(\frac{(m+n)^2}{\delta_l}\right)}{m_l}}+\frac{\log\left(\frac{(m+n)^2}{\delta_l}\right)}{m_l}+\frac{\log^2\left(\frac{(m+n)^2}{\delta_l}\right)}{m_l^2}.
\end{align*}
\end{proposition}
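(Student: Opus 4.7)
The plan is to write the estimation error as the product of an inverse empirical covariance and a noise term, and then control each factor separately with matrix Bernstein inequalities for sub-exponential random matrices. To set up the decomposition, substitute $x_{t+1}^{l,k}=\theta^\top z_t^{l,k}+w_t^{l,k}$ from (\ref{eq:sample}) into $\bar Y^l$ to obtain $\bar Y^l=\bar V^l\theta+\bar W^l$, where $\bar W^l := \frac{1}{m_l}\sum_{k=1}^{m_l}\sum_{t=0}^{T-1}z_t^{l,k}(w_t^{l,k})^\top$. Plugging this into (\ref{SS4}) yields the identity
\begin{equation*}
\theta^{l+1}-\theta = \left(\bar V^l+\tfrac{1}{m_l}I\right)^{-1}\!\left(\bar W^l - \tfrac{1}{m_l}\theta\right),
\end{equation*}
so by sub-multiplicativity it suffices to bound $\bigl\|\bigl(\bar V^l+\tfrac{1}{m_l}I\bigr)^{-1}\bigr\|$ and $\|\bar W^l\|$.

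For the inverse covariance factor, I would condition on $\theta^l$ (hence on $K^l$), which makes the $m_l$ trajectories in epoch $l$ i.i.d.\ across $k$. Since $x_0$ is deterministic and the closed-loop dynamics are linear in Gaussian noise, each $z_t^{l,k}$ is Gaussian, so $z_t^{l,k}(z_t^{l,k})^\top$ is a quadratic form in a Gaussian vector and therefore a sub-exponential random matrix. A matrix Bernstein inequality for sub-exponential summands then gives, with probability at least $1-\delta_l$,
\begin{equation*}
\bigl\|\bar V^l-\E[V^l\mid\theta^l]\bigr\|\lesssim \sqrt{\tfrac{\log((m+n)/\delta_l)}{m_l}}+\tfrac{\log((m+n)/\delta_l)}{m_l}.
\end{equation*}
By the equivalence between (\ref{PSDKK}) and (\ref{eq:PSDzz}) applied to $K^l$, combined with the perturbation analysis of the Riccati recursion (\ref{eq:reccati}) which keeps $K^l$ close to the optimal $(K_t)$, the span condition transfers and yields $\lambda_{\min}(\E[V^l\mid\theta^l])\geq c$ for some $c>0$. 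Choosing $m_l\gtrsim\log((m+n)^2/\delta_l)$ forces the Bernstein deviation below $c/2$, so $\bigl\|\bigl(\bar V^l+\tfrac{1}{m_l}I\bigr)^{-1}\bigr\|=O(1)$.

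For the noise term, $z_t^{l,k}$ is measurable with respect to $\sigma(\{w_s^{l,k}\}_{s<t})$ while $w_t^{l,k}\sim \mathcal{N}(0,I)$ is independent of it, so each entry of $z_t^{l,k}(w_t^{l,k})^\top$ is a mean-zero product of independent Gaussians and hence sub-exponential. Bernstein applied entrywise, followed by a union bound over the $(n+m)n$ entries, gives with probability at least $1-\delta_l$
\begin{equation*}
\|\bar W^l\|\lesssim \sqrt{\tfrac{\log((m+n)^2/\delta_l)}{m_l}}+\tfrac{\log((m+n)^2/\delta_l)}{m_l}.
\end{equation*}
Combining the three steps via a union bound produces the stated inequality; the $\log^2/m_l^2$ term appears when one expands $(\bar V^l+\tfrac{1}{m_l}I)^{-1}$ around $\E[V^l\mid\theta^l]^{-1}$ by a Neumann series and multiplies the Bernstein tail of $\bar V^l-\E[V^l\mid\theta^l]$ against the Bernstein tail of $\bar W^l$. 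I expect the main obstacle to be the well-conditioning step: Assumption \ref{ASSU1} is stated for the true optimal $(K_t)$, whereas the epoch-$l$ data are generated by the plug-in $K^l$, so self-exploration must be shown to persist under small perturbations of the controller. I would handle this by combining continuity of (\ref{eq:reccati}) in $(A,B)$ with the initial hypothesis that the optimal controller for $\theta^1$ also satisfies (\ref{PSDKK}), and propagate the property across epochs by induction on $l$, intertwined with the regret analysis leading to Theorem \ref{THM1}.
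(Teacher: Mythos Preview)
Your approach is correct and follows a route that is close to, but not identical with, the paper's. The paper works with the comparison
\[
\theta^{l+1}-\theta=\Bigl(\bar V^l+\tfrac{1}{m_l}I\Bigr)^{-1}\bar Y^l-\bigl(\E[V^l]\bigr)^{-1}\E[Y^l],
\]
expands via $E^{-1}-F^{-1}=E^{-1}(F-E)F^{-1}$, and then controls $\bar V^l-\E[V^l]$ and $\bar Y^l-\E[Y^l]$ separately by \emph{entrywise} scalar Bernstein for sub-exponentials (after showing each coordinate of $z_t^{l,k}$ is sub-Gaussian). Your decomposition $\theta^{l+1}-\theta=(\bar V^l+\tfrac{1}{m_l}I)^{-1}(\bar W^l-\tfrac{1}{m_l}\theta)$ is the standard regression-error identity and is in fact cleaner: once $\|(\bar V^l+\tfrac{1}{m_l}I)^{-1}\|=O(1)$ is established, the bound follows directly from concentration of $\bar W^l$, and the $\log^2/m_l^2$ term is not actually needed (in the paper it arises only from the cross term $\|\bar Y^l\|\cdot\|\bar V^l-\E[V^l]\|$ in their less direct decomposition). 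The well-conditioning step is handled the same way in both: the paper packages your continuity-plus-induction argument into a set $\Theta$ and an event $\mathcal{G}^l=\{\theta^j\in\Theta,\ j\le l\}$, proving (their Lemma~\ref{LL5}) that $\lambda_{\min}(\E[V^l])$ is uniformly bounded below over $\Theta$ by continuity of the Riccati map, exactly as you outline. One small inaccuracy: you invoke matrix Bernstein, but the paper uses only scalar Bernstein entrywise followed by a union bound over $(n+m)^2$ entries; either works for the stated informal bound, and your version would give a mildly better dimension dependence inside the logarithm.
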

For a complete rigorous statement, see Proposition~\ref{L12} in Appendix.
 
\textbf{Step 2: }We recursively carry out the perturbation analysis of less-standard Riccati equation (\ref{eq:reccati}) and prove that the perturbation of the controller $\Delta K_t^{l}:=K_t^{l}-K_t$ is on the order of $O(\epsilon_l)$, where $\epsilon_l=\max\left\{\Vert A^l-A\Vert,\Vert B^l-B\Vert\right\}$. The formal statement is presented in Lemma \ref{L7}.

\textbf{Step 3: }We use a result of \citet{jacobson1973optimal} (see Lemma~\ref{L1}) and the proof technique in \citet{fazel2018global} to prove that $$J^{\pi^{l,k}}(x_0^{l,k})-J^{\star}(x_0^{l,k})=-\frac{1}{2\gamma}\sum_{t=1}^{T-1}\log\left(\det\left(I_n-\gamma D_t^l\right)\right)+\frac{1}{2}x_0^{l,k\top}D_0^l x_0^{l,k},$$ where $D_t^l$ is a function of $\Delta K_t^{l},\cdots,\Delta K_{T-1}^{l}$, with $\Vert D_t^l\Vert\leq \psi_t \mathcal{V}^2\epsilon_l^2+o(\epsilon_l^2)$. See Proposition \ref{LP2}. Here, $\pi^{l,k}$ is the sub-optimal controller $\pi^{l,k}$ executed in the $k$-th episode of the $l$-th epoch.

\textbf{Step 4: }
We can then bound the regret: $\operatorname{Regret}(N) =\sum_{l=1}^L\sum_{k=1}^{m_l}\left(J^{\pi^{l,k}}(x_0^{l,k})-J^{\star}(x_0^{l,k})\right) \lesssim \sum_{l=1}^L m_l \epsilon_l^2\lesssim \sum_{l=1}^{L}\log(l)\lesssim O(\log N\cdot \log\log(N)).$

\section{A square-root regret bound}\label{Sqrt:Alg}
Theorem~\ref{THM1} shows that the logarithmic regret bound is achievable for episodic LEQR under Assumption \ref{ASSU1}. One may wonder how does the regret bound changes after removing Assumption \ref{ASSU1}. In particular, is 
$\sqrt{N}$ regret achievable without Assumption \ref{ASSU1}? This section provides an affirmative answer to this question, by proposing and analyzing a least-squares-based algorithm with actively injected exploration noise. 

\subsection{A Least-Squares-Based Algorithm with Exploration Noise}
We now introduce the algorithm (Algorithm \ref{ALGO:alg}), which is
 is different from Algorithm \ref{ALG1}. 
We no longer divide the $N$ episodes into epochs of increasing lengths to estimate the system matrices. Instead, in the $k$-th episode, the algorithm updates the estimation of the system matrices $(A,B)$ by using the data from the previous $k-1$ episodes, which is denoted by $(A^k,B^k)$. Similar to $K_t^l$ in Section \ref{ALG0}, we can obtain the feedback control $K_t^k$ by replacing the true system matrices in (\ref{eq:reccati}) with $(A^k,B^k)$. Then, we execute the control with exploration noise $(g_t^k)$ that follows a Gaussian distribution in the $k$-th episode. 
The design of Algorithm \ref{ALGO:alg} is inspired by \citep{mania2019certainty,simchowitz2020naive} that establish $\sqrt{T}$ regret bounds for risk-neutral LQR in the infinite-horizon average reward setting, where $T$ is the number of time steps.


The estimation of system matrices $(A, B)$ in Algorithm \ref{ALGO:alg} is different from that in Algorithm \ref{ALG1}. In Algorithm \ref{ALGO:alg}, the estimator $(A^{k+1},B^{k+1})$ is obtained by solving the following $l_2$-regularized least-squares problem (based on the linear dynamics (\ref{eq:system})):
\begin{equation}\label{eq:ols}
\begin{aligned}
\theta^{k+1}\in\arg\min_{y}\left\{\lambda \Vert y\Vert^2+\sum_{i=1}^{k}\sum_{t=0}^{T-1}\Vert x_{t+1}^i-y^{\top}z_t^i\Vert^2\right\},
\end{aligned}
\end{equation}
where $\theta^{k+1}=\left[A^{k+1},B^{k+1}\right]^{\top},z_t^i=[x_t^{i\top},\ u_t^{i\top}]^{\top}$ and $\lambda>0$ is the regularization parameter. By solving (\ref{eq:ols}), we can get
\begin{equation}\label{eq:Theta}
\theta^{k+1}=\left(\bar{V}^k\right)^{-1}\left(\sum_{i=1}^k\sum_{t=0}^{T-1}z_t^i x_{t+1}^{i\top}\right),
\end{equation}
where $\bar{V}^k=\lambda I+\sum_{i=1}^{k}\sum_{t=0}^{T-1}z_t^iz_t^{i\top}.$

\begin{algorithm}[htbp]
   \caption{The Least-Squares-Based Algorithm with Exploration Noise}
   \label{ALGO:alg}
\begin{algorithmic}
   \STATE {\bfseries Input:} Parameters $T,N,\theta^1, Q, Q_T, R,\lambda$
   \FOR{$k=1,\cdots,N$}
      \STATE Compute $(K_t^k)$ for all $t$ by (\ref{eq:reccati}) using $\theta^k$.
   \FOR{$t=0,\cdots,T-1$}
   \STATE Play $u^{k}_t\gets K_{t}^{k}x_t^{k}+g_t^k,g_t^k\sim\mathcal{N}(0,\frac{1}{\sqrt{k}}I_m)$.
   \ENDFOR
   \STATE Obtain $\theta^{k+1}$ from (\ref{eq:Theta}).
   \ENDFOR
\end{algorithmic}
\end{algorithm}

\subsection{Square-root Regret}
In this section, we present the second main result of our paper, which demonstrates that Algorithm \ref{ALGO:alg} can attain $\widetilde{\mathcal{O}}(\sqrt{N})$-regret (ignoring logarithmic factors). This is the first $\widetilde{\mathcal{O}}(\sqrt{N})$-regret for online learning in risk-sensitive LEQR models. \vspace{1mm}

\begin{theorem}\label{regret:noise}
Fix $\delta\in (0,1)$. When $N\geq 200\left(3(n+m)+\log\left(\frac{4N}{\delta}\right)\right)$, with probability at least $1-\delta$, the regret of Algorithm \ref{ALGO:alg} satisfies 
\begin{equation*}
\operatorname{Regret}(N)\leq \widetilde{\mathcal{C}}\sum_{t=0}^{T-1}\left(\alpha_t C_N+\beta_t\right)\sqrt{N},
\end{equation*}
where $\widetilde{\mathcal{C}}$ is a constant independent of $N$, $C_N$ exhibits a logarithmic dependence on $N$ and depends on $\lambda$, and $(\alpha_t),(\beta_t)$ are two sequences recursively defined by 
\begin{equation*}
\begin{aligned}
&\alpha_{T-1}=2\widetilde{\Gamma}^3,\quad\alpha_t=2\widetilde{\Gamma}\left(10\mathcal{V}^2\mathcal{L}\widetilde{\Gamma}^4\right)^{2(T-t-1)}+12\widetilde{\Gamma}^4\alpha_{t+1},\\
&\beta_{T-1}=0,\quad \beta_t=12\widetilde{\Gamma}^4+12\widetilde{\Gamma}^4\beta_{t+1},\\
\end{aligned}
\end{equation*}
with $\widetilde{\Gamma},\mathcal{V},\mathcal{L}$ defined in (\ref{parameter}).
\end{theorem}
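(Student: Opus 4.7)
The proof will follow the four-step template used in the sketch of Theorem~\ref{THM1}: bound the system-identification error $\epsilon_k:=\max\{\|A^k-A\|,\|B^k-B\|\}$; use Lemma~\ref{L7} to convert this into a controller-mismatch bound $\|\Delta K_t^k\|\lesssim \epsilon_k$; derive a per-episode performance-loss bound in terms of both $\Delta K_t^k$ and the exploration-noise standard deviation $\sigma_k=k^{-1/4}$; then sum across episodes. Two genuinely new difficulties arise relative to Theorem~\ref{THM1}: Assumption~\ref{ASSU1} is no longer available, so persistent excitation must be supplied entirely by the injected noise $g_t^k\sim\mathcal{N}(0,k^{-1/2}I_m)$; and the loss decomposition in Proposition~\ref{LP2} was obtained for a deterministic linear feedback and must be extended to the randomized policy $u_t^k=K_t^k x_t^k+g_t^k$.

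\textbf{Step 1 (estimation error).} From (\ref{eq:Theta}) and $x_{t+1}^i=\theta^\top z_t^i+w_t^i$ we get $\theta^{k+1}-\theta=(\bar V^k)^{-1}\bigl(\sum_{i,t}z_t^i w_t^{i\top}-\lambda\theta\bigr)$. A self-normalized martingale inequality of the Abbasi-Yadkori--Pe\~na--Szepesv\'ari type, applied to the stacked per-step innovations across the first $k$ episodes, controls $\|\theta^{k+1}-\theta\|_{\bar V^k}\lesssim\sqrt{C_N}$, where $C_N=O(\log(N/\delta))$ and depends on $\lambda$ and $\log\det(\bar V^k/\lambda)$. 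The critical new ingredient is a high-probability lower bound $\lambda_{\min}(\bar V^k)\gtrsim\sqrt{k}$ coming entirely from the injected noise: a matrix-Chernoff argument on $\sum_{i=1}^k g_t^i g_t^{i\top}$, propagated through $x_{t+1}^i=(A+BK_t^i)x_t^i+Bg_t^i+w_t^i$ so as to excite the $x$-block as well, gives the bound in all $n+m$ coordinates once $k\geq 200\bigl(3(n+m)+\log(4N/\delta)\bigr)$. Combining these two pieces yields $\epsilon_k\lesssim k^{-1/4}\sqrt{C_N}$ uniformly in $k$ on an event of probability at least $1-\delta$.

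\textbf{Steps 2--3 (controller perturbation and per-episode loss).} Lemma~\ref{L7} is used verbatim to convert the estimation bound into $\|\Delta K_t^k\|\lesssim\epsilon_k$. For the per-episode loss, I will view $u_t^k=K_t^k x_t^k+g_t^k$ as a time-varying linear feedback whose effective state noise is $Bg_t^k+w_t^k\sim\mathcal{N}\bigl(0,I_n+k^{-1/2}BB^\top\bigr)$ and whose stage cost acquires a cross term $2g_t^{k\top}RK_t^k x_t^k$ together with a constant offset $g_t^{k\top}Rg_t^k$. Substituting these into the Jacobson identity (Lemma~\ref{L1}) and extending the argument behind Proposition~\ref{LP2} so that the perturbed Gaussian noise and the additional cost terms are carried through the modified Riccati-type recursion, I expect to obtain
\begin{equation*}
J^{\pi^k}(x_0^k)-J^\star(x_0^k)\;\leq\;\sum_{t=0}^{T-1}\bigl(\alpha_t\,\|\Delta K_t^k\|^2+\beta_t\,\sigma_k^2\bigr),
\end{equation*}
with $\alpha_t,\beta_t$ governed by the recursions stated in the theorem; the $\alpha_t$-term reproduces the suboptimal-controller contribution of Theorem~\ref{THM1}, while $\beta_t$ quantifies the extra risk-sensitive cost of injecting Gaussian exploration noise of variance $k^{-1/2}$.

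\textbf{Step 4 (summation) and main obstacle.} Summing and using $\sum_{k=1}^N k^{-1/2}\lesssim\sqrt{N}$ and $\sum_{k=1}^N \epsilon_k^2\lesssim C_N\sqrt{N}$ gives
\begin{equation*}
\operatorname{Regret}(N)\;\lesssim\;\sum_{t=0}^{T-1}\sum_{k=1}^N\bigl(\alpha_t\epsilon_k^2+\beta_t/\sqrt{k}\bigr)\;\lesssim\;\sum_{t=0}^{T-1}(\alpha_t C_N+\beta_t)\sqrt{N},
\end{equation*}
which is the stated bound. The hardest part will be the Step~3 extension of Proposition~\ref{LP2}: because the log-expectation-exponential functional is nonlinear in the cumulative cost, the contributions of $\Delta K_t^k$ and $g_t^k$ do not decouple additively a priori, and a naive expansion leaves cross terms of the form $\Delta K_t^k\cdot g_t^k$. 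I plan to eliminate these by exploiting the independence of $g_t^k$ from $w_t^k$ and by completing the square inside the exponential cost, mirroring Jacobson's derivation on the enlarged Gaussian noise model, so that the cross terms vanish under the exponential expectation and leave the clean additive decomposition above. A secondary subtlety is that the eigenvalue lower bound on $\bar V^k$ is driven by the control-block excitation $g_t^k$; propagating it to the joint $(x,u)$ block requires combining the explicit dynamics $x_{t+1}^i=(A+BK_t^i)x_t^i+Bg_t^i+w_t^i$ with the regularization $\lambda I$, which I will handle by a direct block-matrix lower bound.
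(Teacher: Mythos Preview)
Your four-step outline matches the paper's proof, and Steps 1, 2, 4 are essentially right. The substantive gap is in Step~3; there is also a minor misattribution in Step~1.

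On Step~1: the $x$-block of $\bar V^k$ is excited by the \emph{system} noise $w_t$, not by the injected noise. The paper shows $\mathbb{E}[x_t^k x_t^{k\top}\mid\mathcal{H}_{t-1}^k]\succeq I_n$ directly from $w_{t-1}^k\sim\mathcal N(0,I_n)$; only the $u$-block needs $g_t$. A $2\times 2$ block argument then gives $\mathbb{E}[z_t^k z_t^{k\top}\mid\mathcal{H}_{t-1}^k]\succeq \frac{c}{\sqrt{k}}I_{n+m}$, which is converted to a high-probability lower bound on $\bar V^k$ via Azuma--Hoeffding on indicator variables followed by an $\epsilon$-net over $\mathbb S^{n+m-1}$ (no matrix-Chernoff is used).

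The real issue is your handling of the cross terms. After the first Jacobson step the surviving cross terms are not of the form ``$\Delta K_t^k\cdot g_t^k$''; they are
\[
g_t^{k\top}\bigl(RK_t^k+B^\top\widetilde P_{t+1}(A+BK_t^k)\bigr)x_t^k,
\]
with an $O(1)$ coefficient matrix (the $\Delta K_t^k$--$x_t^k$ cross term is what vanishes, by optimality of $K_t$). These $g$--$x$ terms do \emph{not} vanish under the exponential expectation: although $g_t^k$ is independent of $x_t^k$ at the same time step, $\mathbb{E}[\exp(\gamma\, g^\top M x)]\neq 1$. Your alternative of absorbing $g_t$ into an enlarged process noise $Bg_t+w_t$ also fails, because $g_t$ enters the stage cost $u_t^\top R u_t$ separately and cannot be hidden inside the dynamics. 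The paper's fix is to enlarge the \emph{state}, not the noise: treat $(x_t^k,\sigma_k^{-1}g_t^k)$ as a joint Gaussian with mean $\bigl((Ax_{t-1}+Bu_{t-1})^\top,0\bigr)^\top$ and identity covariance, and apply Lemma~\ref{L1} to a block matrix whose off-diagonal block $E_t^k=\sigma_k\bigl(RK_t^k+B^\top(\widetilde P_{t+1}+\widetilde U_{t+1}^k)(A+BK_t^k)\bigr)$ encodes exactly that cross term. A block Gauss--Jordan elimination then yields the Schur complement $U_t^k=D_t^k+\gamma E_t^{k\top}(I_m-\gamma F_t^k)^{-1}E_t^k$, and since $\|E_t^k\|=O(\sigma_k)$ the cross contribution is $O(\sigma_k^2)$, i.e.\ of the same order as the pure exploration-noise term. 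The $\beta_t$ recursion in the theorem is generated precisely by propagating this Schur-complement correction backwards through the $(D,E,F,U,\widetilde U)$ system; the cross terms are never eliminated, only bounded and carried.
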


\subsection{Proof Sketch of Theorem \ref{regret:noise}}
We provide a proof outline for Theorem \ref{regret:noise}. The complete proof is given in Appendix \ref{SEC:noise}. 


\textbf{Step 1: }We adapt the self-normalized martingale analysis framework \citep{abbasi2011improved, cohen2019learning,simchowitz2020naive} to derive the following high probability bound for the estimation error. See Proposition \ref{LEM:BTotal} for the complete statement.
\begin{proposition}[informal]
When $k$ is large enough, with probability at least $1-\delta$, 
\begin{equation}\label{eq:est-erro2}
\left\Vert \theta^{k+1}-\theta\right\Vert\lesssim k^{-\frac{1}{4}}\sqrt{\log\left(1+k\log\left(\frac{N}{\delta}\right)\right)}.
\end{equation}
\end{proposition}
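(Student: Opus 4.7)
The plan is to adapt the self-normalized martingale concentration framework of \cite{abbasi2011improved} (as used for risk-neutral LQR in \cite{cohen2019learning,simchowitz2020naive}) to the episodic setting, paying careful attention to the decaying variance $1/\sqrt{i}$ of the injected exploration noise $g_t^i$. The rate $k^{-1/4}$ in (\ref{eq:est-erro2}), instead of the usual $k^{-1/2}$, will come out precisely because the minimum eigenvalue of the regularized design matrix $\bar V^k$ can only be shown to grow like $\sqrt{k}$ rather than like $k$.

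First, substituting $x_{t+1}^i=\theta^{\top}z_t^i+w_t^i$ into (\ref{eq:Theta}) yields the standard identity
\begin{equation*}
\theta^{k+1}-\theta=(\bar V^k)^{-1}\bigl(S_k-\lambda\theta\bigr),\qquad S_k:=\sum_{i=1}^{k}\sum_{t=0}^{T-1}z_t^i w_t^{i\top},
\end{equation*}
so $\|\theta^{k+1}-\theta\|\le \|(\bar V^k)^{-1/2}S_k\|/\sqrt{\lambda_{\min}(\bar V^k)}+\lambda\|\theta\|/\lambda_{\min}(\bar V^k)$. Since $u_t^i=K_t^ix_t^i+g_t^i$ is measurable with respect to the filtration generated by all randomness strictly preceding $w_t^i$, the sequence $\{z_t^i w_t^{i\top}\}$ is a matrix-valued martingale difference, and the self-normalized martingale inequality will give, with probability at least $1-\delta/2$,
\begin{equation*}
\|(\bar V^k)^{-1/2}S_k\|^2\lesssim \log\det(\bar V^k)-\log\det(\lambda I_{n+m})+\log(1/\delta).
\end{equation*}
The term $\log\det(\bar V^k)$ will then be upper bounded by $(n+m)\log\bigl((\lambda+\mathrm{tr}(\sum z_t^iz_t^{i\top}))/(n+m)\bigr)$ via AM--GM, combined with a uniform-in-$(i,t)$ second-moment bound on $\|z_t^i\|$. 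The latter follows from the closed-loop recursion $x_{t+1}^i=(A+BK_t^i)x_t^i+Bg_t^i+w_t^i$ together with a uniform-in-$i$ bound on $\|K_t^i\|$ inherited from the perturbation lemma of Section~\ref{sketch1}, on the good event where $\theta^i$ remains in a fixed neighbourhood of $\theta$. This produces $\log\det(\bar V^k)\lesssim (n+m)\log\bigl(1+k\log(N/\delta)\bigr)$.

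The principal obstacle is the lower bound $\lambda_{\min}(\bar V^k)\gtrsim\sqrt{k}$, since both the actions and the states depend on past exploration noise through the (random) gains $K_t^i$. The plan is to isolate the contribution of the fresh exploration noise within each episode: writing $z_t^i=\bar z_t^i+[\,0_n^{\top},g_t^{i\top}\,]^{\top}$ with $\bar z_t^i$ measurable with respect to the past, the action block of $\bar V^k$ will be lower bounded, after a Cauchy--Schwarz absorption of the cross terms against the positive state block $\sum_{i,t}\bar z_t^i\bar z_t^{i\top}$, by $c\sum_{i=1}^{k}\sum_{t=0}^{T-1}g_t^i g_t^{i\top}$ for an absolute constant $c$. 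Because $g_t^i\sim\mathcal{N}(0,I_m/\sqrt{i})$ is independent across $(i,t)$, this sum has conditional expectation $T\sum_{i=1}^{k}i^{-1/2}I_m\asymp\sqrt{k}\,I_m$, and a matrix Freedman (or matrix Azuma) inequality applied to the Gaussian quadratic forms will yield $\sum_{i,t}g_t^ig_t^{i\top}\succeq c'\sqrt{k}\,I_m$ on an event of probability at least $1-\delta/4$, provided $k$ exceeds the sample threshold $200(3(n+m)+\log(4N/\delta))$ in Theorem~\ref{regret:noise}. The state block of $\bar V^k$ is at least $\lambda I_n$ by the regularization, which together with the Cauchy--Schwarz absorption gives $\lambda_{\min}(\bar V^k)\gtrsim \sqrt{k}$ on the intersected event.

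Combining the three pieces on a single $1-\delta$ event yields
\begin{equation*}
\|\theta^{k+1}-\theta\|\lesssim \frac{\sqrt{(n+m)\log(1+k\log(N/\delta))}}{k^{1/4}}+\frac{\lambda\|\theta\|}{\sqrt{k}},
\end{equation*}
matching (\ref{eq:est-erro2}) since the regularization term is of strictly lower order. The delicate step I expect to carry the bulk of the technical work is the matrix concentration producing $\lambda_{\min}(\bar V^k)\gtrsim\sqrt{k}$ under state-dependent, time-varying feedback gains: the Cauchy--Schwarz absorption of the cross terms must be tight enough not to destroy the $\sqrt{k}$ scaling, and the matrix martingale inequality must accommodate the decaying conditional variance, both of which will likely require the sample-size lower bound in the theorem to kick in.
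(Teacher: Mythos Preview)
Your framework matches the paper's: self-normalized martingale bound on $(\bar V^k)^{-1/2}S_k$, upper bound on $\log\det(\bar V^k)$ via state-norm bounds, and a lower bound on $\lambda_{\min}(\bar V^k)$. The first two pieces are essentially correct and proceed as you describe. The gap is in the lower bound.

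Your claim that ``the state block of $\bar V^k$ is at least $\lambda I_n$ by the regularization'' cannot yield $\lambda_{\min}(\bar V^k)\gtrsim\sqrt{k}$: for $e=[e_1^\top,0]^\top$ with $e_1\in\mathbb{R}^n$, the quadratic form $e^\top\bar V^ke$ sees only the state block, so if that block were merely $\succeq\lambda I_n$ you would obtain $\lambda_{\min}(\bar V^k)\leq\lambda$, a constant, and the estimation error would not decay at all. The Cauchy--Schwarz absorption you sketch also fails as stated: writing $z=\bar z+b$ with $b=[0;g]$, the inequality $\bar zb^\top+b\bar z^\top\succeq-\epsilon\,\bar z\bar z^\top-\epsilon^{-1}bb^\top$ forces one of $1-\epsilon$ and $1-\epsilon^{-1}$ to be nonpositive, so you cannot simultaneously retain a positive fraction of both $\sum\bar z\bar z^\top$ and the exploration block by this route.

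What you are missing is that the \emph{system noise} $w_t$ supplies the excitation in the state direction. The paper conditions on $\mathcal{H}_{t-1}^k$ and uses $\mathbb{E}[x_t^kx_t^{k\top}\mid\mathcal{H}_{t-1}^k]\succeq I_n$ (coming from $w_{t-1}^k$) together with the $\frac{1}{\sqrt{k}}I_m$ contribution of $g_t^k$ to the action block; a block Schur-complement manipulation exploiting the uniform bound $\|K_t^k\|\leq C_K$ then gives $\mathbb{E}[z_t^kz_t^{k\top}\mid\mathcal{H}_{t-1}^k]\succeq\frac{c}{\sqrt{k}}I_{n+m}$ for the full $(n+m)\times(n+m)$ matrix. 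This per-step conditional bound is converted into $\bar V^k\succeq\frac{cT\sqrt{k}}{40}I_{n+m}$ with high probability via the scalar indicator/Azuma technique of \cite{cohen2019learning}, not by matrix Freedman on $\sum g_t^ig_t^{i\top}$ alone.
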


\textbf{Step 2: }We conduct perturbation analysis of the Riccati equation (\ref{eq:reccati}) and show that $\Delta K_t^{k}:=K_t^{k}-K_t$ is on the order of $O(\epsilon_k)$, where $\epsilon_k$ denotes the estimation error of system matrices, i.e. right-hand-side of (\ref{eq:est-erro2}). This step is essentially the same as Step 2 in Section~\ref{sketch1}.

\textbf{Step 3: }Because of the additional exploration noise added to the online control, we show that the loss in the risk-sensitive performance becomes 
$$J^{\pi^k}(x_0^k)-J^{\star}(x_0^{k})=-\frac{1}{2\gamma}\sum_{t=0}^{T-1}\log\det\left(I_n-\gamma F_t^k\right)-\frac{1}{2\gamma}\sum_{t=1}^{T-1}\log\det \left(I_m-\gamma U_t^k\right)+\frac{1}{2}x_0^{k\top}U_0^kx_0^k,$$ where $F_t^k$ and $U_t^k$ are functions of $\Delta K_i^k=K_i^k-K_i,i=t,\cdots,T-1$, with $F_t^k \leq \frac{2\widetilde{\Gamma}^3}{\sqrt{k}}+o(\epsilon_k^2)$ and $U_t^k \leq \alpha_t\mathcal{V}^2\epsilon_k^2+\frac{5\widetilde{\Gamma}^5(1+\beta_t)}{\sqrt{k}}+o(\epsilon_k^2)$.
See Proposition \ref{LEM:gapfinal}. 

\textbf{Step 4: }Finally we can bound the regret: 
$\operatorname{Regret}(N)= \sum_{k=1}^N \left( J^{\pi^k}(x_0^k)-J^{\star}(x_0^{k}) \right) \lesssim \sum_{k=1}^N  \epsilon_k^2 \lesssim \sum_{k=1}^N \frac{1}{\sqrt{k}}\log \left(1+N\log\left(\frac{N}{\delta}\right)\right)\lesssim \widetilde{\mathcal{O}}(\sqrt{N})$. 

\section{Simulation studies}\label{simulation}
We perform simulation studies to illustrate the regret performances of
Algorithms \ref{ALG1} and \ref{ALGO:alg}. Note that our paper is the first to obtain regret bounds for episodic risk-sensitive LEQR and there are currently no other existing algorithms with sublinear regret for this problem. 
Our experiments are conducted on a PC with 2.10 GHz Intel Processor and 16 GB of RAM. We consider the following three LEQR systems for illustrations:

\textbf{System 1.} We use the system matrices and cost matrices in Section 6.1 of \citet{dean2020sample} with the following minor change: we set $Q_T=Q=\frac{1}{2}I$ instead of $Q=10^{-3}I$ as in their paper because the effect of risk parameters is difficult to visualize when $Q$ has small eigenvalues.

\textbf{System 2.} We generate non-positive-semidefinite, non-symmetric random system matrices $A\in\mathbb{R}^{7\times 7}$ and $B\in\mathbb{R}^{7\times 4}$ with all entries sampled from the uniform distribution $U(0,1)$. The cost matrices $Q$ and $R$ are randomly generated positive definite matrices, and we set $Q_T=0$. 

\textbf{System 3.} We consider a flying robot problem in Section IV of \cite{tsiamis2020risk} with the following minor change: we set $Q_T=0$ instead of $Q_T= \operatorname{diag}\{1, 0.1, 2, 0.1\}$ as in their paper so that the effect of risk parameters is easier to visualize.

We implement Algorithms \ref{ALG1} and \ref{ALGO:alg} in all the systems and compute the expectation and 95\% confidence interval of the regret of each algorithm using 150 independent runs. In both algorithms, we randomly generate the initial guess $\theta^1=(A^1,B^1)$ with all entries of $A^1$ and $B^1$ sampled from the uniform distribution. In Algorithm \ref{ALG1}, we set $m_1=500$ and $L=\left\lceil \log_2 \left(\frac{N}{m_1}+1\right)\right\rceil$. In Algorithm \ref{ALGO:alg}, we set $\lambda=0.8$. 

Because the simulation results for Systems 1, 2 and 3 are similar, we only present those for System 1 here and make the results for System 2 and system 3 available in Appendix \ref{simresult}. 
Figures \ref{Fig1}, \ref{Fig2}, and \ref{Fig3} show the average regret of Algorithm \ref{ALG1} in System 1 using 150 independent runs and Figures \ref{Fig4}, \ref{Fig5} and \ref{Fig6} show the average regret of Algorithm \ref{ALGO:alg} in the same system. The two blue dotted lines in Figures \ref{Fig1} and \ref{Fig4}
depict the 95\% confidence interval of the regret when $\gamma=0.1$ and $T=3$. 
We observe that Algorithm \ref{ALG1} incurs a large regret in the initial learning process, and as a result, the actual performance of Algorithm \ref{ALG1} is worse than that of Algorithm \ref{ALGO:alg} on this instance. This is because Algorithm \ref{ALG1} updates parameter estimations less frequently compared with Algorithm \ref{ALGO:alg}, and the inaccurate estimations in the initial learning process lead to a large regret for Algorithm \ref{ALG1}. 
With $T=3$, Figure \ref{Fig2} illustrates the effect of the risk aversion on the regret. The true value of the learning agent's risk aversion parameter $\gamma$ is 0.1. Figure \ref{Fig2} plots the regret of the algorithm when the true  value of $\gamma$ is used and when a wrong value, e.g., 0.05, 0.2, and 0, is used. The plot shows that if one runs Algorithm \ref{ALG1} with a misspecified risk aversion degree $\gamma$, particularly with $\gamma=0$, which corresponds to the algorithm in \cite{basei2022logarithmic} with the assumption of risk-neutral agents, the regret performance is much worse compared with the case of a correctly specified risk aversion parameter.
Setting $\gamma=0.01$, Figure \ref{Fig3} illustrates the effect of the time horizon $T$.\footnote{We choose $\gamma=0.01$ to ensure the existence of the solution to the Riccati equation (\ref{eq:reccati}) for the values of $T$ under consideration. } Consistent with our theoretical results, the regret is increasing in $T$. The same parameter settings are used in Figures \ref{Fig4}--\ref{Fig6}, where we test the performance of Algorithm \ref{ALGO:alg}. The results are similar to those of Algorithm \ref{ALG1} in Figures \ref{Fig1}--\ref{Fig3}.


\begin{figure}[!t]
\centering
\subfloat[Regret performance (Algorithm \ref{ALG1} System 1)]{		\includegraphics[width=7cm,height=5cm]{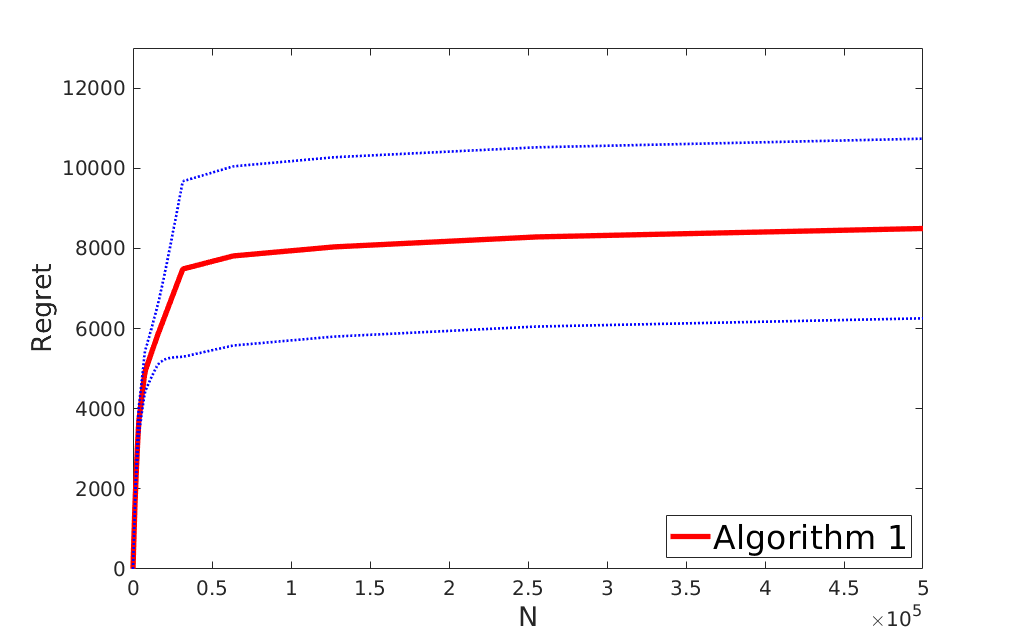}\label{Fig1}}
\subfloat[Effect of the risk parameter (Algorithm \ref{ALG1} System 1)]{
		\includegraphics[width=7cm,height=5cm]{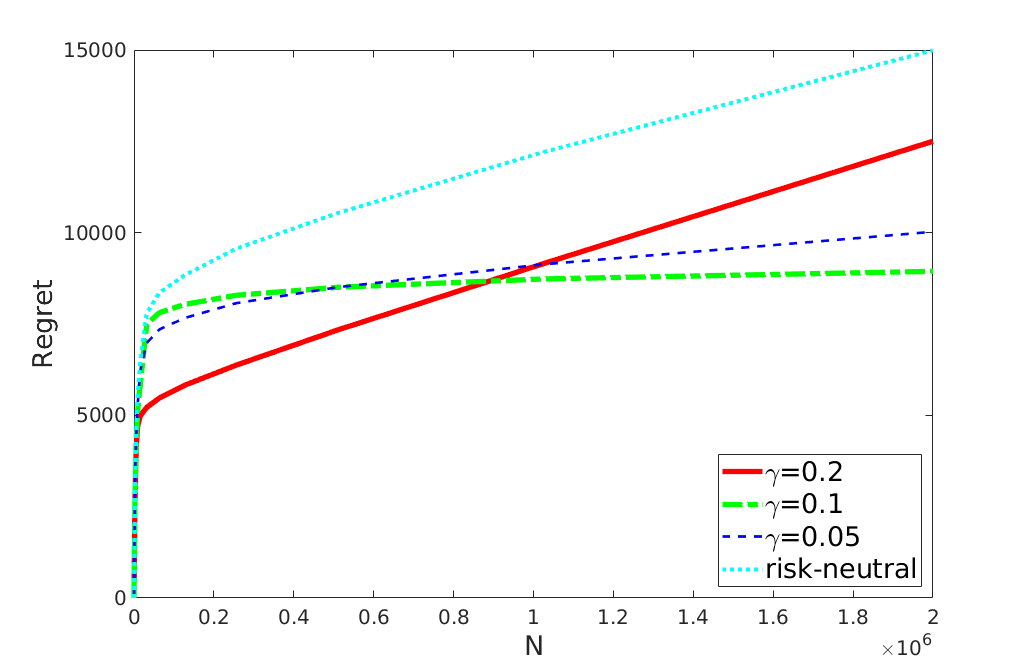}\label{Fig2}}
\\
\subfloat[Effect of the time horizon (Algorithm \ref{ALG1} System 1)]{
		\includegraphics[width=7cm,height=5cm]{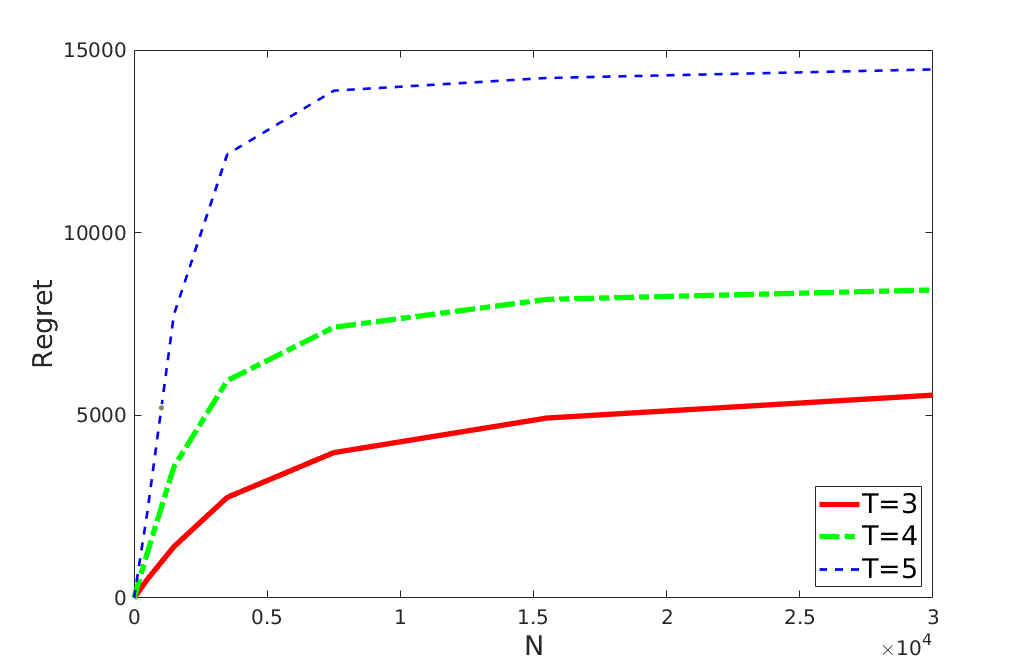}\label{Fig3}}
\subfloat[Regret performance (Algorithm \ref{ALGO:alg} System 1)]{
		\includegraphics[width=7cm,height=5cm]{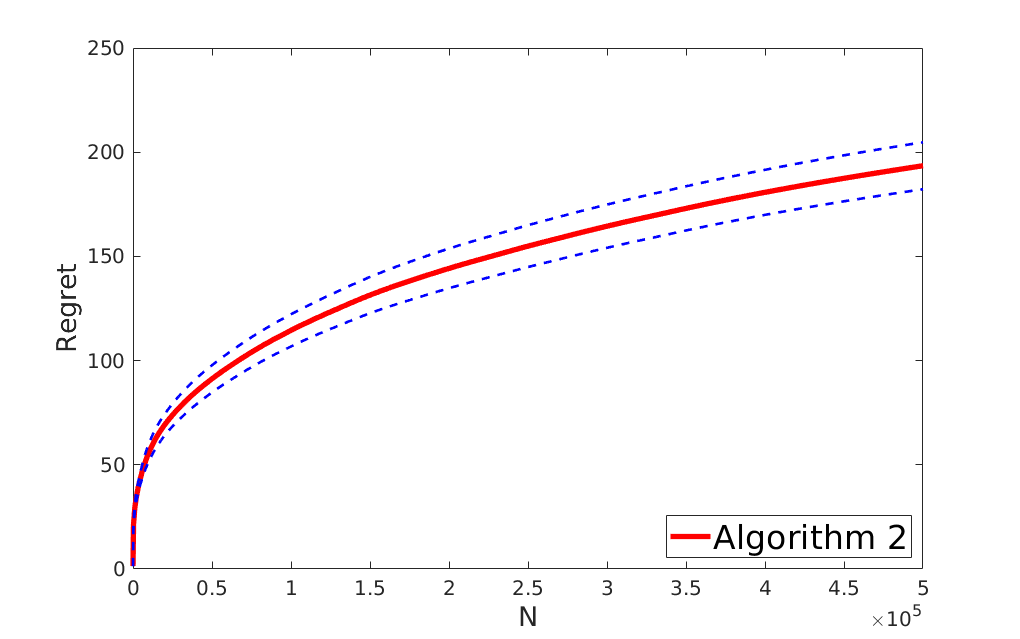}\label{Fig4}}
\\
\subfloat[Effect of the risk parameter (Algorithm \ref{ALGO:alg} System 1)]{
		\includegraphics[width=7cm,height=5cm]{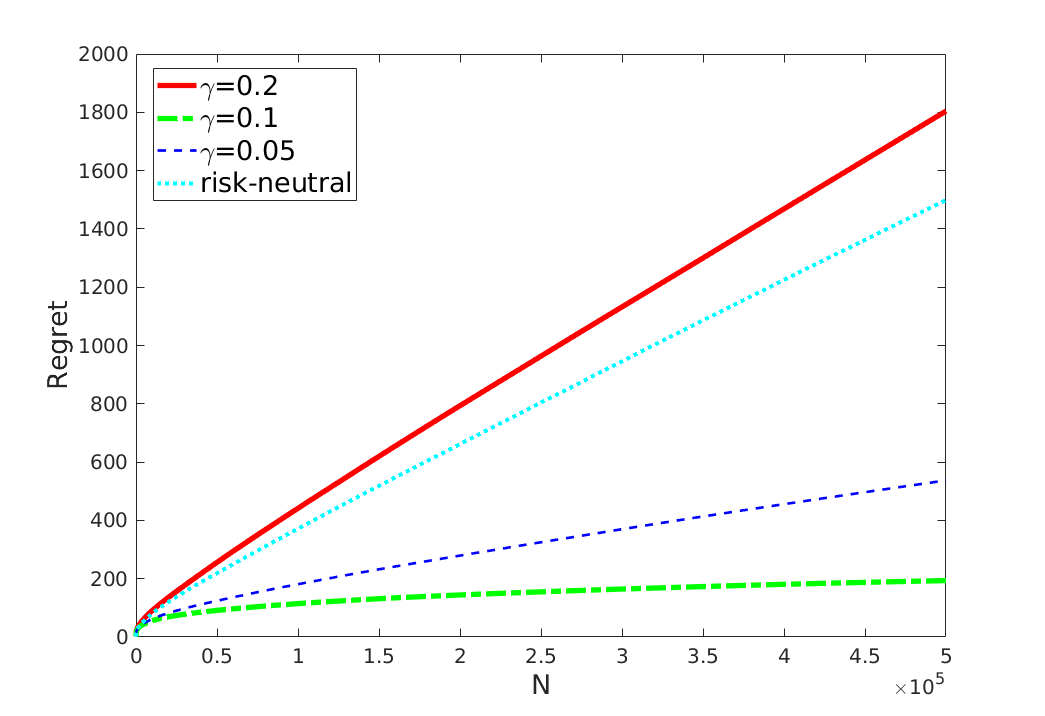}\label{Fig5}}
\subfloat[Effect of the time horizon (Algorithm \ref{ALGO:alg} System 1)]{
		\includegraphics[width=7cm,height=5cm]{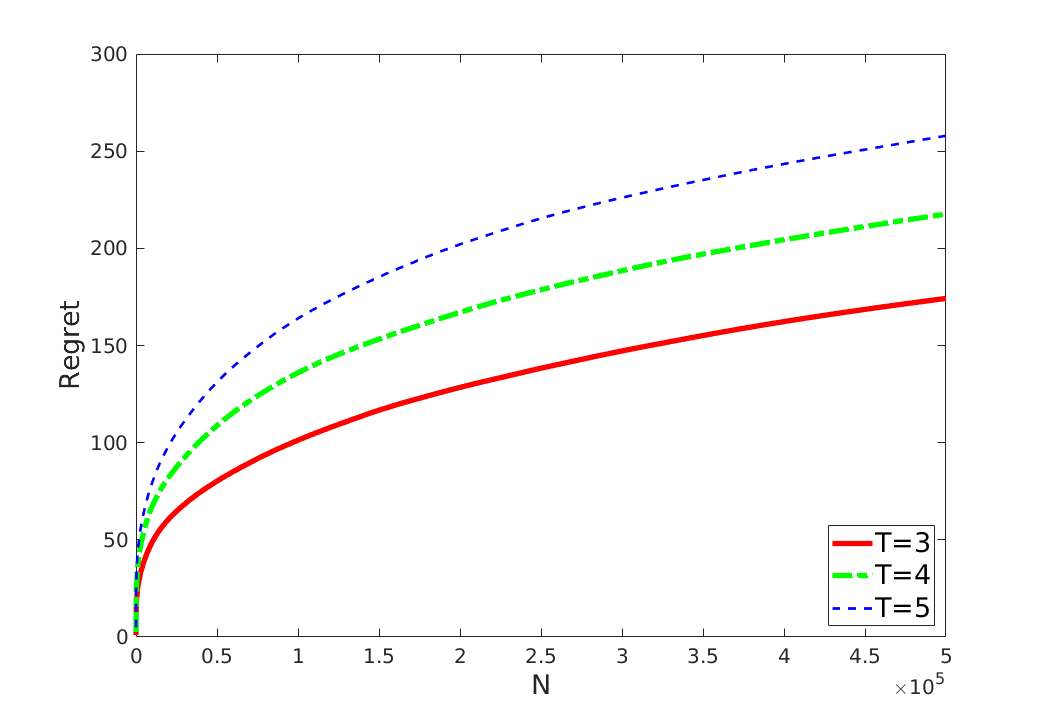}\label{Fig6}}
\caption{Simulation results in System 1}
\end{figure}

\section{Conclusion and Future Work}
This paper proposes two simple least-squares-based algorithm for online adaptive control of LEQR in the finite-horizon episodic setting. We prove that the least-squares greedy algorithm can achieve a regret bound that is logarithmic in the number of episodes under a identifiability  condition of the system. We also prove that the least-squares-based algorithm with exploration noise can achieve $\widetilde{\mathcal{O}}(\sqrt{N})$-regret when the identifiability condition is not satisfied. To the best of our knowledge, this is the first set of regret bounds for LEQR. 

The study of regret analysis for risk-sensitive control with continuous state and action spaces is still in its infancy, and there are many open questions. For instance, it would be interesting to study regret bounds for LEQR in the infinite-horizon average-reward (non-episodic) setting.It is not straightforward to extend our current proof methods to this non-episodic setting because it is nontrivial to establish explicit perturbation bounds for the generalized algebraic Riccati equation for average-reward LEQR (see e.g. \cite{cui2023reinforcement}). Another significant question 
is to study regret bounds for online linear quadratic regulators with other risk measures such as coherent risk measures (see e.g. \cite{lam2023risk}). 
Other interesting problems include lower bounds for online LEQR, regret bounds for LEQR with partially observable states and for more general risk-sensitive nonlinear control problems. We leave them for future work.


\newpage

\bibliographystyle{apalike}
\bibliography{main}


\newpage
\appendix


\section{Regret Analysis for the Least-Squares Greedy Algorithm}\label{pf:A}
In this section, we carry out the regret analysis for the least-squares greedy algorithm in Section \ref{Log:Alg}. We derive the high-probability bounds for the estimation error of system matrices in Appendix \ref{A}. We do the perturbation analysis of Riccati equations in Appendix \ref{B}. We simplify the suboptimality gap due to the controller mismatch in Appendix \ref{C}. Finally, we combine the results derived/proved above and prove Theorem \ref{THM1}.
\subsection{Bounds for the Estimation Error of System Matrices}\label{A}
In this section, we discuss the high probability bound for the estimation error of system matrices in Algorithm \ref{ALG1}. We adapt the analysis framework in \cite{basei2022logarithmic} and use the Bernstein inequality for the sub-exponential random variables to derive the desired error bound. 

To facilitate the presentation, we first introduce some notations. 
We fix the $l$-th epoch and define the following set
\begin{align*}
\Theta= \left\{\hat{\theta}\in \mathbb{R}^{(n+m)\times n} \Bigg\vert\left \Vert\hat{\theta}-\theta\right\Vert\leq \rho\right\}\cup \left\{\theta^1\right\},
\end{align*}
where $\rho>0$ is a constant such that for any $\theta^l\in\Theta$, $\left\Vert \left(\mathbb{E}[V^l]\right)^{-1}\right\Vert\leq \mathcal{C}_2$ and $\left\Vert \mathbb{E}\left[Y^l\right]\right\Vert\leq \mathcal{C}_2$ for some constant $\mathcal{C}_2\geq 1$. We choose the initial number of episodes $m_1$ such that
\begin{align*}
\rho\geq 3\mathcal{C}_1\sqrt{\frac{\log\left(\frac{(n+m)^2}{\delta_{j-1}}\right)}{m_{j-1}}},\forall j\in \mathbb{N}^+\backslash\{1\},
\end{align*}
where $\delta_{j-1}=\frac{\delta}{(j-1)^2}$, $m_{j-1}=2^{j-2}m_1$ and $\mathcal{C}_1$ is a constant independent of $m_j,\forall j\in \mathbb{N}^{+}\backslash \{1\}$, but may depend on other constants including $m, n,T$. We will show how to choose $m_1$ in Section \ref{D}. 
We also define the event 
\begin{align*}
\mathcal{G}^l=\left\{\theta^j\in \Theta,\forall j=1,\cdots,l\right\}.
\end{align*}
We will prove that $\mathbb{P}(\mathcal{G}^l)\geq 1-\sum_{j=1}^{l-1}\delta_j$ in Section \ref{D}. The following proposition is the main result of this section. Recall that $\theta^{l+1}=[A^{l+1}\ B^{l+1}]^{\top}$ are the estimated system matrices and $\theta=[A\ B]^{\top}$ are the true system matrices.


\begin{proposition}\label{L12}  
Conditional on event $\mathcal{G}^l$, there exists a constant $\mathcal{C}_3\geq 1$ such that for $m_l\geq \mathcal{C}_3\log\left(\frac{(n+m)^2}{\delta_l}\right)$, with probability at least $1-2\delta_l$,
\begin{equation*}
\begin{aligned}
\left\Vert\theta^{l+1}-\theta\right\Vert\leq \mathcal{C}_1 \left(\sqrt{\frac{\log\left(\frac{(n+m)^2}{\delta_l}\right)}{m_l}}+\frac{\log\left(\frac{(n+m)^2}{\delta_l}\right)}{m_l}+\frac{\log^2\left(\frac{(n+m)^2}{\delta_l}\right)}{m_l^2}\right).
\end{aligned}
\end{equation*}
\end{proposition}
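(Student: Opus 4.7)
\textbf{Proof Proposal for Proposition \ref{L12}.}

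The plan is to decompose the estimation error into two terms that can each be controlled by concentration of the empirical matrices $\bar V^l$ and $\bar Y^l$ around their means. From (\ref{SS2}) we have $\theta=(\mathbb{E}[V^l])^{-1}\mathbb{E}[Y^l]$, and from (\ref{SS4}) $\theta^{l+1}=(\bar V^l+\tfrac{1}{m_l}I)^{-1}\bar Y^l$. Using the identity $M_1^{-1}-M_2^{-1}=M_1^{-1}(M_2-M_1)M_2^{-1}$, I would write
\begin{align*}
\theta^{l+1}-\theta
=\Big(\bar V^l+\tfrac{1}{m_l}I\Big)^{-1}\big(\bar Y^l-\mathbb{E}[Y^l]\big)
+\Big(\bar V^l+\tfrac{1}{m_l}I\Big)^{-1}\Big(\mathbb{E}[V^l]-\bar V^l-\tfrac{1}{m_l}I\Big)(\mathbb{E}[V^l])^{-1}\mathbb{E}[Y^l].
\end{align*}
Conditioning on $\mathcal{G}^l$ gives $\Vert(\mathbb{E}[V^l])^{-1}\Vert\le\mathcal{C}_2$ and $\Vert\mathbb{E}[Y^l]\Vert\le\mathcal{C}_2$, so the problem reduces to establishing high probability bounds on $\Vert\bar V^l-\mathbb{E}[V^l]\Vert$ and $\Vert\bar Y^l-\mathbb{E}[Y^l]\Vert$ of order $\sqrt{\log(1/\delta_l)/m_l}$, together with a matching lower bound on the minimum eigenvalue of $\bar V^l+\tfrac{1}{m_l}I$ so that its inverse is bounded.

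Next I would show the state--action trajectories are sub-Gaussian. Since $w_t^{l,k}$ is i.i.d. $\mathcal{N}(0,I)$, $u_t^{l,k}=K_t^{l}x_t^{l,k}$ is linear in state, and $x_0^{l,k}$ is fixed, each $x_t^{l,k}$ is a Gaussian vector whose covariance is determined by the closed-loop matrices $A+BK_t^{l}$. On the event $\mathcal{G}^l$ these closed-loop matrices are bounded (by the constants defining $\Theta$), so there is a uniform sub-Gaussian norm for every entry of $z_t^{l,k}=[x_t^{l,k\top}\;u_t^{l,k\top}]^{\top}$. Consequently each scalar entry of $z_t^{l,k}z_t^{l,k\top}$ and of $z_t^{l,k}x_{t+1}^{l,k\top}$ is a product of two sub-Gaussian random variables and hence sub-exponential with parameters controlled by constants $(\iota,\eta)$ depending on $T$, the bounds from $\mathcal{G}^l$, and $\gamma$.

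I would then apply Bernstein's inequality elementwise. Averaging over the $m_l$ i.i.d. episodes, for any entry $(i,j)$,
\begin{align*}
\Prob\!\left(\big\vert(\bar V^l)_{ij}-(\mathbb{E}[V^l])_{ij}\big\vert\ge\zeta\right)
\le 2\exp\!\Big(-c\min\big\{m_l\zeta^2,\,m_l\zeta\big\}\Big),
\end{align*}
and an analogous bound for $\bar Y^l-\mathbb{E}[Y^l]$. Union-bounding over all $(n+m)^2$ entries of $\bar V^l$ and $(n+m)n$ entries of $\bar Y^l$ with failure probability $\delta_l$ each, choosing $\zeta\asymp\sqrt{\log((n+m)^2/\delta_l)/m_l}$, and converting elementwise bounds to spectral-norm bounds via $\Vert M\Vert\le\sqrt{(n+m)^2}\max_{ij}\vert M_{ij}\vert$, one gets the desired $O(\sqrt{\log(1/\delta_l)/m_l})$ control with probability $1-2\delta_l$.

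Finally, taking $\mathcal{C}_3$ large enough so that $m_l\ge\mathcal{C}_3\log((n+m)^2/\delta_l)$ forces $\Vert\bar V^l-\mathbb{E}[V^l]\Vert+\tfrac{1}{m_l}\le\tfrac12\lambda_{\min}(\mathbb{E}[V^l])$, a standard perturbation argument (e.g.\ Weyl's inequality) yields $\Vert(\bar V^l+\tfrac{1}{m_l}I)^{-1}\Vert\le 2\mathcal{C}_2$, so the decomposition above gives
\begin{align*}
\Vert\theta^{l+1}-\theta\Vert
\le 2\mathcal{C}_2\Vert\bar Y^l-\mathbb{E}[Y^l]\Vert
+2\mathcal{C}_2^3\Big(\Vert\bar V^l-\mathbb{E}[V^l]\Vert+\tfrac{1}{m_l}\Big),
\end{align*}
which combined with the concentration bounds produces the three-term rate $\sqrt{\log(\cdot)/m_l}+\log(\cdot)/m_l+\log^2(\cdot)/m_l^2$ after absorbing cross terms. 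The main obstacle I expect is obtaining a uniform sub-Gaussian constant for $z_t^{l,k}$ across all epochs $j\le l$ with $\theta^j\in\Theta$: one must carry the bounds on the closed-loop matrices $A+BK_t^{j}$ and on the covariance of $x_t^{l,k}$ through the recursive dynamics (\ref{eq:sample}) so that $\iota,\eta$ in Bernstein's inequality can be taken independent of $l$; this requires invoking the containment in $\Theta$ at each step and the continuity of the Riccati map, but does not depend on the risk-sensitivity parameter $\gamma$ beyond the constants already absorbed in $\mathcal{C}_1,\mathcal{C}_2$.
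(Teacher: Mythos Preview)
Your proposal is correct and follows essentially the same approach as the paper: establish sub-Gaussianity of the entries of $z_t^{l,k}$ via the expansion of the closed-loop dynamics on $\mathcal{G}^l$, deduce sub-exponentiality of the entries of $z_t^{l,k}z_t^{l,k\top}$ and $z_t^{l,k}x_{t+1}^{l,k\top}$, apply Bernstein elementwise with a union bound over the $(n+m)^2$ entries, and then use a Weyl-type perturbation to bound $\|(\bar V^l+\tfrac{1}{m_l}I)^{-1}\|$.

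One small remark: your algebraic decomposition is actually slightly cleaner than the paper's. The paper pairs the inverse difference with $\bar Y^l$ rather than $\mathbb{E}[Y^l]$, which forces it to bound $\|\bar Y^l\|\le \mathcal{C}_2+\Delta_l$ and thereby produces a genuine $\Delta_l^2$ cross term; this is the source of the $\log^2((n+m)^2/\delta_l)/m_l^2$ contribution in the stated bound. Your decomposition instead multiplies by $\|\mathbb{E}[Y^l]\|\le\mathcal{C}_2$, so your final estimate is linear in $\Delta_l$ and you never actually generate a $\Delta_l^2$ term---your comment about ``absorbing cross terms'' is therefore superfluous, and what you obtain is the (slightly sharper) two-term rate $\sqrt{\log(\cdot)/m_l}+\log(\cdot)/m_l$, which of course still implies the proposition.
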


The proof of Proposition~\ref{L12} is long, and we discuss it in the new few sections.  

\subsubsection{Preliminaries}

In this section, we recall the definition of sub-exponential random variables and state several well-known results about such random variables that will be used in our analysis later. 

\begin{definition}[Definition 2.7 of \cite{wainwright2019high}]
A random variable $X$ with mean $\mu=\mathbb{E}X$ is sub-exponential if there are non-negative parameters $(\nu,\alpha)$ such that $\mathbb{E}[e^{\lambda(X-\mu)}]\leq e^{\frac{\nu^2\lambda^2}{2}}$ for all $\vert \lambda\vert<\frac{1}{\alpha}$. Denote the set of such random variables as $SE(\nu^2,\alpha)$.
\end{definition}

\begin{lemma}[Bernstein Inequality, Proposition 2.9 of \citet{wainwright2019high}] \label{L8}
Suppose that $X\in SE(\nu^2,\alpha)$, and let $\mu=\mathbb{E}X$. Then
for any $\zeta>0$, we have
\begin{align*}
\mathbb{P}\left(\vert X-\mu\vert\geq \zeta\right)\leq 2\exp\left(-\min\left\{\frac{\zeta^2}{2\nu^2},\frac{\zeta}{2\alpha}\right\}\right).
\end{align*}
\end{lemma}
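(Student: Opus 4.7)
The plan is to use the standard Chernoff--Cram\'er exponential moment method. For any $\zeta>0$ and any $\lambda\in[0,1/\alpha)$, Markov's inequality applied to $e^{\lambda(X-\mu)}$ yields
\begin{equation*}
\mathbb{P}(X-\mu\geq \zeta)\;\leq\;e^{-\lambda\zeta}\,\mathbb{E}\bigl[e^{\lambda(X-\mu)}\bigr]\;\leq\;\exp\!\left(-\lambda\zeta+\frac{\nu^2\lambda^2}{2}\right),
\end{equation*}
where the second step uses the defining sub-exponential bound on the MGF, which is valid precisely because $\lambda<1/\alpha$. It then remains to minimize the exponent $g(\lambda):=-\lambda\zeta+\nu^2\lambda^2/2$ over $\lambda\in[0,1/\alpha)$ and to handle the lower tail.

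The unconstrained minimizer of $g$ is $\lambda^\star=\zeta/\nu^2$, which naturally splits the argument into two regimes. In the sub-Gaussian regime $\zeta\leq \nu^2/\alpha$, $\lambda^\star$ lies in the feasible interval and substituting gives $g(\lambda^\star)=-\zeta^2/(2\nu^2)$. In the sub-exponential regime $\zeta>\nu^2/\alpha$ the constraint binds; taking $\lambda\nearrow 1/\alpha$ yields $g(\lambda)\to -\zeta/\alpha+\nu^2/(2\alpha^2)\leq -\zeta/(2\alpha)$, where the last inequality uses the case assumption $\nu^2/\alpha<\zeta$. The two regimes together precisely match $-\min\{\zeta^2/(2\nu^2),\zeta/(2\alpha)\}$, controlling the upper tail.

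For the lower tail, the sub-exponential MGF bound is two-sided in $\lambda$, so the identical Chernoff argument applied with $\lambda\in(-1/\alpha,0]$ (equivalently, with $-(X-\mu)$ in place of $X-\mu$) gives the same bound on $\mathbb{P}(X-\mu\leq-\zeta)$. A union bound over the two one-sided tails produces the factor of $2$ in the final inequality. The only mildly nontrivial step is the boundary analysis when $\lambda^\star$ is infeasible: one must verify algebraically that evaluating at $\lambda=1/\alpha$ (in the limit) still yields the clean $-\zeta/(2\alpha)$ bound using $\zeta>\nu^2/\alpha$, and this is what produces the characteristic two-regime tail that distinguishes sub-exponential from purely sub-Gaussian concentration.
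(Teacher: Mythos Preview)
Your proof is correct and is exactly the standard Chernoff--Cram\'er argument used to establish this inequality. The paper does not provide its own proof of this lemma; it simply cites it as Proposition~2.9 of \citet{wainwright2019high}, so there is nothing to compare against beyond noting that your argument is the textbook one.
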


\begin{lemma}[Lemma 5.1 of \citet{Alessandro2018AST}] \label{L9}
If $X_i\in SE(\nu_i^2,\alpha_i),i\in [n]$, then
\begin{equation*}
\begin{aligned}
\sum_{i=1}^nX_i\in \begin{cases}SE\left(\sum_{i=1}^n \nu_i^2,\max_{i\in [n]}\alpha_i\right) &  \text{if $X_i$ are independent}, \\ SE\left(\left(\sum_{i=1}^n\nu_i\right)^2,\max_{i\in [n]}\alpha_i\right) &  \text{if $X_i$ are not independent}.\end{cases}
\end{aligned}
\end{equation*}
\end{lemma}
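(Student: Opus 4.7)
\textbf{Proof plan for Lemma \ref{L9}.} The statement has two parts, and the independent case is straightforward; the dependent case requires a Hölder-type argument. I write $\mu_i = \mathbb{E}X_i$ throughout, and I will bound the centered MGF $M(\lambda) := \mathbb{E}\!\left[\exp\!\big(\lambda\sum_{i=1}^n (X_i-\mu_i)\big)\right]$ for $\lambda$ in an appropriate interval around $0$, then read off the sub-exponential parameters from the defining inequality $M(\lambda)\le\exp(\nu^2\lambda^2/2)$.

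\textbf{Independent case.} Fix $|\lambda|<1/\max_{i\in[n]}\alpha_i$. Then $|\lambda|<1/\alpha_i$ for every $i$, so the assumption $X_i\in SE(\nu_i^2,\alpha_i)$ gives $\mathbb{E}[\exp(\lambda(X_i-\mu_i))]\le \exp(\nu_i^2\lambda^2/2)$. Independence lets me factor the MGF:
\begin{equation*}
M(\lambda)=\prod_{i=1}^n \mathbb{E}\!\left[e^{\lambda(X_i-\mu_i)}\right]\le \prod_{i=1}^n e^{\nu_i^2\lambda^2/2}=\exp\!\left(\tfrac{\lambda^2}{2}\sum_{i=1}^n\nu_i^2\right),
\end{equation*}
which is exactly the sub-exponential condition with proxy variance $\sum_i\nu_i^2$ and scale $\max_i\alpha_i$. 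This step is routine and should take only a few lines.

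\textbf{Dependent case.} Here I cannot factor the MGF, so I apply the generalized Hölder inequality. Set $s:=\sum_{j=1}^n\nu_j$ (assume all $\nu_i>0$; the degenerate cases $\nu_i=0$ collapse by redefining the index set) and choose the weights $q_i=\nu_i/s$, so that $\sum_i q_i=1$, and the conjugate exponents $p_i=1/q_i=s/\nu_i$. For any $\lambda$ in an interval around $0$ to be determined,
\begin{equation*}
M(\lambda)=\mathbb{E}\!\left[\prod_{i=1}^n e^{\lambda(X_i-\mu_i)}\right]\le \prod_{i=1}^n \left(\mathbb{E}\!\left[e^{\lambda p_i(X_i-\mu_i)}\right]\right)^{1/p_i}.
\end{equation*}
Provided $|\lambda p_i|<1/\alpha_i$ for each $i$, the sub-exponential condition yields $\mathbb{E}[\exp(\lambda p_i(X_i-\mu_i))]\le\exp(\nu_i^2 p_i^2\lambda^2/2)$, so taking the $1/p_i$-power and multiplying gives
\begin{equation*}
M(\lambda)\le \exp\!\left(\tfrac{\lambda^2}{2}\sum_{i=1}^n \nu_i^2 p_i\right)=\exp\!\left(\tfrac{\lambda^2}{2}\sum_{i=1}^n \nu_i\cdot s\right)=\exp\!\left(\tfrac{s^2\lambda^2}{2}\right),
\end{equation*}
which is the desired proxy variance $(\sum_i\nu_i)^2$. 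The scale parameter emerges from the admissibility condition $|\lambda|<\nu_i/(\alpha_i s)$ for all $i$, i.e.\ $|\lambda|<\min_i\nu_i/(\alpha_i s)$; combined with the trivially weaker requirement $|\lambda|<1/\max_i\alpha_i$ needed downstream in the Bernstein tail bound, this gives the stated scale $\max_i\alpha_i$ up to the usual convention of reporting the looser scale.

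\textbf{Main obstacle.} The technical subtlety I expect to need care is the exact admissibility interval for $\lambda$ in the dependent case: the Hölder argument above naturally produces the interval $|\lambda|<\min_i\nu_i/(\alpha_i s)$, which in general is stricter than $|\lambda|<1/\max_i\alpha_i$. Rinaldo's convention in the cited reference is to quote $\max_i\alpha_i$ as the scale, which is harmless for the downstream Bernstein-type tail bound (Lemma \ref{L8}) because only the regime of small $|\lambda|$ is used. If a tighter statement is desired, one records the full admissibility interval and plugs it into the Legendre transform when deriving tails. In either version, the proof structure (independence $\Rightarrow$ factor the MGF; dependence $\Rightarrow$ Hölder with weights proportional to $\nu_i$) is the entire content of the argument.
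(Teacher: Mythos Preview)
The paper does not prove this lemma; it simply cites it as Lemma~5.1 of Rinaldo's notes, so there is no paper proof to compare against. Your approach---factor the MGF in the independent case, apply generalized H\"older with weights $p_i=s/\nu_i$ in the dependent case---is the standard one and is correct for the proxy-variance parameters.

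You are right to flag the scale parameter in the dependent case, and in fact the issue is sharper than you suggest. The H\"older argument only delivers the MGF bound for $|\lambda|<\min_i \nu_i/(\alpha_i s)$, which is in general strictly smaller than $1/\max_i\alpha_i$; the example $X_1=X_2\in SE(\nu^2,\alpha)$ shows that $X_1+X_2=2X_1$ need not belong to $SE((2\nu)^2,\alpha)$ at all (the MGF can diverge for $|\lambda|\ge 1/(2\alpha)$), so the dependent-case scale $\max_i\alpha_i$ as quoted is not literally correct. Your caveat that this is ``harmless downstream'' is slightly off too: the scale $\alpha$ enters Bernstein's inequality (Lemma~\ref{L8}) through the linear-regime term $\zeta/(2\alpha)$, so misstating it does change the tail bound. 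That said, in the paper's application (Lemma~\ref{L10}) the resulting constants $\iota,\eta$ are immediately absorbed into the unnamed constant $\mathcal{C}_4$, so the imprecision affects only absolute constants and not the order of the regret bound. A clean fix is to record the honest scale $s\cdot\max_i(\alpha_i/\nu_i)$ (or, with equal H\"older weights $p_i=n$, the looser but simpler $n\max_i\alpha_i$) and carry that through.
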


\begin{lemma}[Lemma 2.7.7 of \cite{vershynin2018high}]\label{LLS1}
Let $X$ and $Y$ be sub-Gaussian random variables. Then, $XY$ is sub-exponential.
\end{lemma}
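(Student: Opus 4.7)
The plan is to prove that the product of two sub-Gaussian random variables is sub-exponential by passing through equivalent moment characterizations of these two classes. Recall that $X$ is sub-Gaussian iff its $L^p$-norms satisfy $(\mathbb{E}|X|^p)^{1/p} \leq c_X \sqrt{p}$ for every $p \geq 1$, while $Z$ is sub-exponential (in the MGF sense used for Lemma \ref{L8}) iff $(\mathbb{E}|Z|^p)^{1/p} \leq c_Z \cdot p$ for every $p \geq 1$. Both equivalences are standard (e.g., Propositions 2.5.2 and 2.7.1 of \cite{vershynin2018high}) and relate the MGF-based, tail-based, and moment-based definitions up to absolute constants.

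Given these equivalences, the proof collapses to one application of Cauchy--Schwarz. For any $p \geq 1$ I would write
\begin{equation*}
\mathbb{E}|XY|^p \leq \bigl(\mathbb{E}|X|^{2p}\bigr)^{1/2}\bigl(\mathbb{E}|Y|^{2p}\bigr)^{1/2},
\end{equation*}
and then apply the sub-Gaussian moment bound with exponent $2p$ to each factor. This yields $(\mathbb{E}|XY|^p)^{1/p} \leq c_X c_Y \cdot (2p)$, which is exactly the linear-in-$p$ growth characterizing sub-exponentiality. Hence $XY$ is sub-exponential, with sub-exponential parameters controllable by a constant multiple of the product of the sub-Gaussian norms of $X$ and $Y$.

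An equivalent route is via Orlicz norms: one can show $\Vert XY\Vert_{\psi_1} \leq \Vert X\Vert_{\psi_2}\,\Vert Y\Vert_{\psi_2}$ using a H\"older-type argument based on the pointwise inequality $\psi_1(ab) \leq \psi_2(a)\,\psi_2(b)$ for $\psi_1(x)=e^{|x|}-1$ and $\psi_2(x)=e^{x^2}-1$. Either approach gives the same conclusion up to absolute constants. I do not foresee a technical obstacle, since this is a textbook identity; the only care required is matching the specific normalization used here, namely that the downstream invocation in Lemma \ref{L8} is with respect to the MGF-based parameters $(\nu^2,\alpha)$, so one should record explicit values of $\nu^2$ and $\alpha$ in terms of $\Vert X\Vert_{\psi_2}$ and $\Vert Y\Vert_{\psi_2}$ when feeding the conclusion into subsequent Bernstein-type estimates.
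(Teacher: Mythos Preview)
Your proposal is correct and is essentially the standard argument: the paper does not prove this lemma at all but simply cites it as Lemma~2.7.7 of \cite{vershynin2018high}, and the moment/Cauchy--Schwarz route (equivalently, the Orlicz-norm inequality $\Vert XY\Vert_{\psi_1}\le \Vert X\Vert_{\psi_2}\Vert Y\Vert_{\psi_2}$) you outline is exactly the proof given in that reference.
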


\subsubsection{Properties of Estimated System Matrix}
In this section, we use the properties of sub-exponential random variables to derive some statistical properties for the estimated system matrix $\theta^l$. 

The following lemma shows that every element of the state-action random sample vector $z_t^{l,k}=\left[x_t^{l,k\top}\ u_t^{l,k\top}\right]^{\top}$ is sub-Gaussian.

\begin{lemma}\label{SUBG}
Consider the sample state (\ref{eq:sample}) in section \ref{ALG0}, conditional on event $\mathcal{G}^l$, we can prove that every element of the sample state $x_t^{l,k}$  and action vector $u_t^{l,k}$ is sub-Gaussian for any step $t$, episode $k$, epoch $l$.
\end{lemma}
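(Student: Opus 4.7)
The plan is to unroll the sample dynamics and exploit the fact that, conditional on the past estimators being well-behaved, the state is an affine Gaussian functional of the noise with coefficients of bounded operator norm. Concretely, first I would rewrite the closed-loop dynamics in epoch $l$: since $u_t^{l,k}=K_t^l x_t^{l,k}$, we have
\begin{equation*}
x_{t+1}^{l,k}=(A+BK_t^l)x_t^{l,k}+w_t^{l,k},\qquad x_0^{l,k}=x_0,
\end{equation*}
and hence by iteration
\begin{equation*}
x_t^{l,k}=\Phi_{t,0}^{\,l}\,x_0+\sum_{s=0}^{t-1}\Phi_{t,s+1}^{\,l}\,w_s^{l,k},
\end{equation*}
where $\Phi_{t,s}^{\,l}:=\prod_{r=s}^{t-1}(A+BK_r^l)$ (with $\Phi_{t,t}^{\,l}=I_n$ and the product in the standard right-to-left order).

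Next I would handle the conditioning. The key observation is that $\theta^l$ (and therefore every $K_t^l$, $t=0,\dots,T-1$) is computed from data strictly before epoch $l$, so it is independent of the noise vectors $w_0^{l,k},\dots,w_{T-1}^{l,k}\overset{\text{i.i.d.}}{\sim}\mathcal{N}(0,I_n)$ used inside episode $(l,k)$. Therefore, conditional on $\theta^l$, each $\Phi_{t,s+1}^{\,l}$ is deterministic and $x_t^{l,k}$ is a Gaussian vector whose coordinatewise variance equals the diagonal entries of $\sum_{s=0}^{t-1}\Phi_{t,s+1}^{\,l}\Phi_{t,s+1}^{\,l\top}$ and whose mean is $\Phi_{t,0}^{\,l}x_0$. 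On the event $\mathcal G^l$ we have $\theta^l\in\Theta$, so $\|\theta^l\|\le\|\theta\|+\rho$ and, by the perturbation analysis of the Riccati recursion (invoking Lemma~\ref{L7} proved later, or at this stage just the continuity of $K_t^l$ as a function of $\theta^l$ on the compact set $\Theta$), there is a deterministic constant $\bar K$ with $\|K_t^l\|\le\bar K$ for all $t$ and all $\theta^l\in\Theta$. Consequently $\|A+BK_t^l\|\le\widetilde\Gamma(1+\bar K)=:\Lambda$, and $\|\Phi_{t,s+1}^{\,l}\|\le\Lambda^{t-s-1}\le\Lambda^{T}$, so each diagonal entry of the conditional covariance is bounded by $T\Lambda^{2T}$ and the conditional mean is bounded by $\Lambda^{T}\|x_0\|$, uniformly over $\theta^l\in\Theta$.

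I would then conclude with the standard fact that a Gaussian random variable with mean of absolute value at most $\mu_{\max}$ and variance at most $\sigma_{\max}^2$ is sub-Gaussian with an explicit parameter depending only on $(\mu_{\max},\sigma_{\max})$. Applied coordinatewise to $x_t^{l,k}$, this gives the sub-Gaussianity of every element of $x_t^{l,k}$ under the conditional law $\mathbb{P}(\cdot\mid\theta^l)$ with parameters uniform in $\theta^l\in\Theta$; since these uniform parameters do not depend on $\theta^l$, marginalising over $\theta^l$ on $\mathcal G^l$ preserves sub-Gaussianity. For the action, $u_t^{l,k}=K_t^l x_t^{l,k}$, so each component is a linear combination (with coefficients bounded by $\bar K$ on $\mathcal G^l$) of sub-Gaussian coordinates of $x_t^{l,k}$, hence also sub-Gaussian with a uniform parameter.

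The main obstacle I anticipate is handling the conditional independence carefully: one must verify that, even though $\theta^l$ is a random object, it is measurable with respect to a $\sigma$-algebra independent of the in-episode noise, and that restricting to $\mathcal G^l$ (which depends on $\theta^1,\dots,\theta^l$) does not introduce additional correlation between $\theta^l$ and the noise sequence $(w_t^{l,k})_{t,k}$. Once this measurability/independence structure is set up (e.g.\ by writing the regression that produces $\theta^l$ only in terms of $w_\cdot^{j,\cdot}$ for $j<l$), the rest of the argument is a clean application of Gaussian tail bounds with uniform constants coming from the boundedness of $\Theta$ and of $K_t^l$ on $\Theta$.
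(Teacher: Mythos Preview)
Your proposal is correct and follows essentially the same approach as the paper: unroll the closed-loop dynamics to express $x_t^{l,k}$ as an affine combination of $x_0$ and the i.i.d.\ Gaussian noises $w_j^{l,k}$, use the continuity of $K_t^l$ in $\theta^l$ over the compact set $\Theta$ to get a uniform bound on the gains (the paper uses exactly this continuity argument rather than invoking Lemma~\ref{L7}), and conclude sub-Gaussianity coordinatewise. Your treatment of the conditioning---noting that $\theta^l$ is measurable with respect to data from prior epochs and hence independent of the in-episode noise, and that uniform sub-Gaussian parameters over $\theta^l\in\Theta$ survive marginalisation on $\mathcal G^l$---is more explicit than the paper's proof, which simply asserts the conclusion ``readily'' follows from the boundedness of $K_t^l$ and the independence of the $w_j^{l,k}$; but this is additional care, not a different route.
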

\begin{proof}
By the definition (\ref{eq:sample}), we have
\begin{align*}
x_t^{l,k}&=Ax_{t-1}^{l,k}+Bu_{t-1}^{l,k}+w_{t-1}^{l,k}\\
&=(A+BK_{t-1}^{l})x_{t-1}^{l,k}+w_{t-1}^{l,k}\\
&=(A+BK_{t-1}^l)(A+BK_{t-2}^l)x_{t-2}^{l,k}+(A+BK_{t-1}^l)w_{t-2}^{l,k}+w_{t-1}^{l,k}.
\end{align*}
Repeating this procedure, we can get
\begin{equation}\label{eq:EXP}
\begin{aligned}
x^{l,k}_t&=\left(\prod_{i=t-1}^{0}(A+BK^l_{i})\right)x^{l,k}_0+\sum_{j=0}^{t-1}\left(\prod_{i=t-1}^{j+1}(A+BK^l_i)\right)w^{l,k}_j,
\end{aligned}
\end{equation}
where $\prod_{i=t-1}^{t}(A+BK^l_i):=I_n$. Recall the definition of $K_t^l$ in (\ref{eq:reccati}) by using $\theta^l$. It's continuous in $\theta^l\in \Theta$, so $K_t^l$ is uniformly bounded for any $\theta^l\in\Theta$ by the boundedness of $\Theta$, i.e. there exists some constant $M>0$ such that $\sup_{t}\Vert K_t^{l}\Vert\leq M$. Because $x_0^{l,k}=x_0$ and $\{w_i^{l,k}\}_{i=0}^{t-1}$ are independent zero-mean normal random variables, we can then readily obtain from (\ref{eq:EXP}) that every element of $x_t^{l,k}$ is sub-Gaussian by the uniform boundedness of $K_t^l$. Similarly, we can prove that every element of $u_t^{l,k}=K_t^l x_t^{l,k}$ is sub-Gaussian, which completes the proof. 
\end{proof}

Recall the following matrices in (\ref{SS2}) and (\ref{SS4}).
\begin{equation}\label{LS10}
\begin{aligned}
&V^l=\sum_{t=0}^{T-1}z^l_tz_t^{l\top}\qquad \qquad \qquad \qquad Y^l=\sum_{t=0}^{T-1}z^l_tx_{t+1}^{l\top}\\
&\bar{V}^l=\frac{1}{m_l}\sum_{k=1}^{m_l}\sum_{t=0}^{T-1}z_t^{l,k}z_t^{l,k\top}\qquad \qquad \bar{Y}^l=\frac{1}{m_l}\sum_{k=1}^{m_l}\sum_{t=0}^{T-1}z_t^{l,k}x_{t+1}^{l,k\top},
\end{aligned}
\end{equation}
where $V^l,\bar{V}^l\in \mathbb{R}^{(n+m)\times (n+m)}$ and $Y^l,\bar{Y}^l\in \mathbb{R}^{(n+m)\times n}$. We denote the elements of $V^l,Y^l,\bar{V}^l,\bar{Y}^l$ as 
\begin{equation*}
\begin{aligned}
&V^l_{i,j}=\sum_{t=0}^{T-1}z^l_{t,i}z^l_{t,j},i,j\in [n+m]\\
&Y^l_{i,j}=\sum_{t=0}^{T-1}z^l_{t,i}x^l_{t+1,j},i\in [n+m],j\in [n]\\
&\bar{V}^l_{i,j}=\frac{1}{m_l}\sum_{k=1}^{m_l}\sum_{t=0}^{T-1}z_{t,i}^{l,k}z_{t,j}^{l,k},i,j\in[n+m]\\
&\bar{Y}^l_{i,j}=\frac{1}{m_l}\sum_{k=1}^{m_l}\sum_{t=0}^{T-1}z_{t,i}^{l,k}x_{t+1,j}^{l,k},i\in [n+m],j\in [n].
\end{aligned}
\end{equation*}


 Then we have the following result from Lemma \ref{SUBG}.

\begin{lemma}\label{L10}
There exist non-negative parameters $\iota$ and $\eta$ 
such that $\bar{V}_{i,j}^l,\bar{Y}_{i,j}^l\in SE\left(\frac{\iota^2}{m_l},\frac{\eta}{m_l}\right)$ for all $i, j$ and $l\in [L]$. 
\end{lemma}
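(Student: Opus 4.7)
The plan is to reduce the claim to a combination of Lemma~\ref{SUBG}, Lemma~\ref{LLS1}, and Lemma~\ref{L9}, and then track how the sub-exponential parameters scale when we average over the $m_l$ i.i.d.\ episodes within epoch $l$. I will treat the $\bar V^l_{i,j}$ case in detail; the argument for $\bar Y^l_{i,j}$ is identical after replacing the second copy of $z^{l,k}_{t,j}$ by $x^{l,k}_{t+1,j}$.

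First I would fix $l$ and $(i,j)$ and look at a single episode $k$. By Lemma~\ref{SUBG}, every coordinate of $z^{l,k}_t$ is sub-Gaussian, and the sub-Gaussian parameter can be taken \emph{uniform in} $\theta^l\in\Theta$, $t\in\{0,\dots,T-1\}$, and $l$, because the proof of Lemma~\ref{SUBG} bounds $z^{l,k}_t$ through the $K^l_t$'s, which are uniformly bounded on the compact set $\Theta$ via the continuity of the Riccati recursion. Applying Lemma~\ref{LLS1} to each pair $z^{l,k}_{t,i}z^{l,k}_{t,j}$, I obtain, for some constants $\nu_0,\alpha_0>0$ depending only on $T,n,m,\Theta$, that $z^{l,k}_{t,i}z^{l,k}_{t,j}\in SE(\nu_0^2,\alpha_0)$.

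Next I would handle the inner sum over $t$ within a fixed episode. Across $t=0,\dots,T-1$ these products are \emph{not} independent (since $x^{l,k}_{t+1}$ depends on $x^{l,k}_t$), so I use the \emph{non-independent} case of Lemma~\ref{L9} to deduce
\[
S^{l,k}_{i,j}\;:=\;\sum_{t=0}^{T-1}z^{l,k}_{t,i}z^{l,k}_{t,j}\;\in\;SE\!\bigl((T\nu_0)^2,\;\alpha_0\bigr).
\]
Across episodes $k=1,\dots,m_l$ within the same epoch, the trajectories are i.i.d.\ (this was established just after (\ref{SS1})), so the variables $S^{l,k}_{i,j}$ are i.i.d.\ and the \emph{independent} case of Lemma~\ref{L9} gives
\[
\sum_{k=1}^{m_l}S^{l,k}_{i,j}\;\in\;SE\!\bigl(m_l\,T^2\nu_0^2,\;\alpha_0\bigr).
\]
Finally, using the scaling property that if $X\in SE(\nu^2,\alpha)$ then $X/c\in SE(\nu^2/c^2,\alpha/c)$ (a direct consequence of the MGF definition), dividing by $m_l$ yields
\[
\bar V^l_{i,j}\;=\;\frac{1}{m_l}\sum_{k=1}^{m_l}S^{l,k}_{i,j}\;\in\;SE\!\Bigl(\tfrac{T^2\nu_0^2}{m_l},\;\tfrac{\alpha_0}{m_l}\Bigr),
\]
so we may take $\iota^2=T^2\nu_0^2$ and $\eta=\alpha_0$, which are independent of $l$ as required.

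The only genuine obstacle is ensuring the sub-Gaussian constants coming from Lemma~\ref{SUBG} can be chosen \emph{uniformly in $l$}; this is what forces the conditioning on $\mathcal{G}^l$ and the compactness of $\Theta$ to be used, and is the step I would write most carefully. Everything else is algebraic bookkeeping of sub-exponential parameters. The case of $\bar Y^l_{i,j}$ proceeds identically, with $x^{l,k}_{t+1}$ playing the role of one of the $z^{l,k}_t$ factors; its coordinates are sub-Gaussian by the same argument as in Lemma~\ref{SUBG} (it is one step of the same linear recursion), so the combination with $z^{l,k}_{t,i}$ again yields a sub-exponential with parameters of the same order, and the same scaling in $m_l$ holds after averaging.
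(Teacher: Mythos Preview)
Your proposal is correct and follows essentially the same route as the paper: both use Lemma~\ref{SUBG} and Lemma~\ref{LLS1} to get sub-exponential entries, then apply the non-independent case of Lemma~\ref{L9} for the sum over $t$ within an episode and the independent case for the average over the $m_l$ i.i.d.\ episodes, with the uniformity in $l$ coming from the compactness of $\Theta$. The only cosmetic difference is that the paper keeps $t$-dependent parameters $\nu_t,\alpha_t$ before summing and takes a max at the end, whereas you take a single uniform $\nu_0,\alpha_0$ upfront; the resulting $\iota,\eta$ are of the same order.
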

\begin{proof}

By Lemma \ref{LLS1} and Lemma \ref{SUBG}, we know that every element of $z_t^{l,k}z_t^{l,k\top}$ and $z_t^{l,k}x_{t+1}^{l,k\top}$ are sub-exponential random variables. That is, 
$z_{t,i}^{l,k}z_{t,j}^{l,k}\in SE\left(\left(\nu_{t,i,j}\right)^2,\alpha_{t,i,j}\right),i,j\in [n+m]$ and $z_{t,i}^{l,k}x_{t+1,j}^{l,k}\in SE\left( \left(\omega_{t,i,j}\right)^2,\beta_{t,i,j}\right),i\in [n+m],j\in [n]$. The subexponential parameters can be chosen independent of $l$ and $k$ by the proof of Lemma \ref{SUBG}. 
If we denote by
\begin{align*}
&\nu_t=\max_{i,j}\nu_{t,i,j},\qquad\qquad \alpha_{t}=\max_{i,j}\alpha_{t,i,j},\\
&\omega_{t}=\max_{i,j}\omega_{t,i,j},\qquad\qquad \beta_t=\max_{i,j}\beta_{t,i,j},
\end{align*}
then we have $z_{t,i}^{l,k}z_{t,j}^{l,k}\in SE\left(\left(\nu_{t}\right)^2,\alpha_{t}\right)$ for any $k\in [m_l]$, $i,j\in [n+m]$ and $z_{t,i}^{l,k}x_{t+1,j}^{l,k}\in SE\left( \left(\omega_{t}\right)^2,\beta_{t}\right)$ for any $k\in [m_l]$, $i\in [n+m],j\in [n]$.

By Lemma \ref{L9}, for non-independent sub-exponential random variables, we obtain
\begin{align*}
&\sum_{t=0}^{T-1}z_{t,i}^{l,k}z_{t,j}^{l,k}\in SE\left(\left(\sum_{t=0}^{T-1}\nu_t\right)^2,\max_{t}\alpha_t\right),\\
&\sum_{t=0}^{T-1}z_{t,i}^{l,k}x_{t+1,j}^{l,k}\in SE\left(\left(\sum_{t=0}^{T-1}\omega_t\right)^2,\max_{t}\beta_t\right).
\end{align*}
Applying Lemma \ref{L9} again, but for independent sub-exponential random variables, we infer that
\begin{align*}
\bar{V}_{i,j}^l=\frac{1}{m_l}\sum_{k=1}^{m_l}\sum_{t=0}^{T-1}z_{t,i}^{l,k}z_{t,j}^{l,k}&\in SE\left(\frac{\sum_{k=1}^{m_l}\left(\sum_{t=0}^{T-1}\nu_t\right)^2}{m_l^2},\frac{\max_{t}\alpha_t}{m_l}\right)\\
&=SE\left(\frac{\left(\sum_{t=0}^{T-1}\nu_t\right)^2}{m_l},\frac{\max_{t}\alpha_t}{m_l}\right),\\
\bar{Y}_{i,j}^l=\frac{1}{m_l}\sum_{k=1}^{m_l}\sum_{t=0}^{T-1}z_{t,i}^{l,k}x_{t+1,j}^{l,k}&\in SE\left(\frac{\sum_{k=1}^{m_l}\left(\sum_{t=0}^{T-1}\omega_t\right)^2}{m_l^2},\frac{\max_{t}\beta_{t}^{l}}{m_l}\right)\\
&=SE\left(\frac{\left(\sum_{t=0}^{T-1}\omega_t\right)^2}{m_l},\frac{\max_{t}\beta_{t}}{m_l}\right).
\end{align*}
The proof is complete by letting
\begin{align*}
&\iota=\max\left\{\sqrt{\left(\sum_{t=0}^{T-1}\nu_t\right)^2},\sqrt{\left(\sum_{t=0}^{T-1}\omega_t\right)^2}\right\},\\
&\eta=\max\left\{\max_{t}\alpha_t,\max_{t}\beta_{t}\right\}.
\end{align*}
\end{proof}


We can now derive the concentration inequalities for $\bar{V}^l$ and $\bar{Y}^l$.

\begin{lemma}\label{L11}
Conditional on event $\mathcal{G}^l$, we can derive that for any $\zeta>0$,
\begin{equation}\label{ineq:bernstein}
\begin{aligned}
&\max\left\{\mathbb{P}\left(\left\vert\bar{V}^l-\mathbb{E}V^l\right\vert\geq \zeta\right),\mathbb{P}\left(\left\vert \bar{Y}^l-\mathbb{E}Y^l\right\vert\geq \zeta\right)\right\}\\
&\leq 2(n+m)^2\exp\left(-\min\left\{\frac{m_l\zeta^2}{2\iota^2(n+m)^4},\frac{m_l\zeta}{2\eta (n+m)^2}\right\}\right),
\end{aligned}
\end{equation}
where $\vert \cdot\vert$ is a matrix norm that represents the summation of the absolute value of all the elements of the matrix, e.g. $\vert A\vert=\sum_{i,j}\vert a_{i,j}\vert$.
\end{lemma}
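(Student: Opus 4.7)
The plan is to reduce the matrix-norm concentration bound to a scalar concentration inequality for each entry, and then take a union bound. The key ingredients are already in place: Lemma \ref{L10} provides subexponential parameters for the individual entries $\bar{V}^l_{i,j}$ and $\bar{Y}^l_{i,j}$, and Lemma \ref{L8} (Bernstein's inequality) provides the scalar tail bound.

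First I would unpack the definition of the matrix norm used in the statement. Since $|F|=\sum_{i,j}|F_{ij}|$ is the entrywise $\ell_1$ norm, the event $\{|\bar{V}^l-\mathbb{E}V^l|\geq\zeta\}$ implies, by pigeonhole, that at least one of the $(n+m)^2$ entries must satisfy $|\bar{V}^l_{i,j}-\mathbb{E}V^l_{i,j}|\geq\zeta/(n+m)^2$. This gives the reduction
\begin{equation*}
\mathbb{P}\bigl(|\bar{V}^l-\mathbb{E}V^l|\geq\zeta\bigr)\leq \sum_{i,j=1}^{n+m}\mathbb{P}\!\left(|\bar{V}^l_{i,j}-\mathbb{E}V^l_{i,j}|\geq \tfrac{\zeta}{(n+m)^2}\right).
\end{equation*}
The analogous reduction applies to $\bar{Y}^l$, where the matrix has $(n+m)\cdot n\leq (n+m)^2$ entries, so the same number of terms (up to the bound) suffices.

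Next I would apply the scalar Bernstein inequality to each entry. Since episodes within the $l$-th epoch are i.i.d., $\mathbb{E}\bar{V}^l_{i,j}=\mathbb{E}V^l_{i,j}$ and likewise $\mathbb{E}\bar{Y}^l_{i,j}=\mathbb{E}Y^l_{i,j}$, so the scalar concentration is indeed around the correct mean. By Lemma \ref{L10} each entry satisfies $\bar{V}^l_{i,j}\in SE(\iota^2/m_l,\eta/m_l)$ and $\bar{Y}^l_{i,j}\in SE(\iota^2/m_l,\eta/m_l)$. Applying Lemma \ref{L8} with deviation $\zeta/(n+m)^2$ yields
\begin{equation*}
\mathbb{P}\!\left(|\bar{V}^l_{i,j}-\mathbb{E}V^l_{i,j}|\geq \tfrac{\zeta}{(n+m)^2}\right)\leq 2\exp\!\left(-\min\!\left\{\frac{m_l\zeta^2}{2\iota^2(n+m)^4},\frac{m_l\zeta}{2\eta(n+m)^2}\right\}\right),
\end{equation*}
and the identical bound for $\bar{Y}^l_{i,j}$.

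Finally I would combine these through the union bound over the (at most) $(n+m)^2$ entries, which yields precisely the claimed prefactor $2(n+m)^2$ and the stated exponent. The conditioning on $\mathcal{G}^l$ is used only implicitly, to ensure that the subexponential parameters $\iota,\eta$ produced in Lemma \ref{L10} are valid uniformly across epochs (since Lemma \ref{SUBG} required $\theta^l\in\Theta$). I do not anticipate any significant obstacle here: the proof is essentially a careful bookkeeping of the entrywise-to-matrix-norm passage. The main thing to watch is that the rectangular shape of $\bar{Y}^l$ does not affect the bound, since its number of entries is dominated by $(n+m)^2$, and that the subexponential parameters in Lemma \ref{L10} do not themselves depend on $\zeta$, so Bernstein's inequality applies directly with the claimed scaling.
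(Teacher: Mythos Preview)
Your proposal is correct and follows essentially the same approach as the paper: reduce the matrix-norm deviation to entrywise deviations via the pigeonhole/union-bound inequality $\mathbb{P}\bigl(\sum_i |X_i|\geq \zeta\bigr)\leq \sum_i \mathbb{P}\bigl(|X_i|\geq \zeta/M\bigr)$, then apply Bernstein (Lemma~\ref{L8}) with the subexponential parameters from Lemma~\ref{L10}, and finally absorb the rectangular case $\bar{Y}^l\in\mathbb{R}^{(n+m)\times n}$ into the square bound via $n\leq n+m$. Your explicit remark on why conditioning on $\mathcal{G}^l$ is needed (to make $\iota,\eta$ uniform) is a helpful clarification that the paper leaves implicit.
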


\begin{proof}
We first consider one element of the matrix $\bar{V}^l$. By Lemma \ref{L8} and Lemma \ref{L10}, we have
\begin{align*}
\mathbb{P}\left(\left\vert \bar{V}^l_{i,j}-\mathbb{E}V_{i,j}^l\right\vert\geq \zeta\right)\leq 2\exp\left(-\min\left\{\frac{m_l\zeta^2}{2\iota^2},\frac{m_l\zeta}{2\eta}\right\}\right).
\end{align*}
Then, by the fact that $\mathbb{P}\left(\sum_{i=1}^M\left\vert X_i\right\vert\geq \zeta\right)\leq \sum_{i=1}^M \mathbb{P}\left(\vert X_i\vert\geq \frac{\zeta}{M}\right)$ for all $M \in\mathbb{N}$ and random variables $(X_i)_{i=1}^M$, we can derive the concentration inequality for $\left\vert\bar{V}^l-\mathbb{E}V^l\right\vert$:
\begin{align*}
&\mathbb{P}\left(\left\vert \bar{V}^l-\mathbb{E}V^l\right\vert\geq \zeta\right)\\
&=\mathbb{P}\left(\sum_{i=1}^{n+m}\sum_{j=1}^{n+m}\left\vert \bar{V}^l_{i,j}-\mathbb{E}V_{i,j}^l\right\vert\geq \zeta\right)\\
&\leq \sum_{i=1}^{n+m}\sum_{j=1}^{n+m}\mathbb{P}\left(\left\vert \bar{V}^l_{i,j}-\mathbb{E}V_{i,j}^l\right\vert\geq \frac{\zeta}{(n+m)^2}\right)\\
&\leq 2(n+m)^2\exp\left(-\min\left\{\frac{m_l\zeta^2}{2\iota^2(n+m)^4},\frac{m_l\zeta}{2\eta(n+m)^2}\right\}\right).
\end{align*}
Similarly, we can derive the concentration probability for $\bar{Y}^l$:
\begin{align*}
&\mathbb{P}\left(\left\vert \bar{Y}^l-\mathbb{E}Y^l\right\vert\geq \zeta\right)\\
&=\mathbb{P}\left(\sum_{i=1}^{n+m}\sum_{j=1}^{n}\left\vert \bar{Y}^l_{i,j}-\mathbb{E}Y_{i,j}^l\right\vert\geq \zeta\right)\\
&\leq \sum_{i=1}^{n+m}\sum_{j=1}^{n}\mathbb{P}\left(\left\vert \bar{Y}^l_{i,j}-\mathbb{E}Y_{i,j}^l\right\vert\geq \frac{\zeta}{(n+m)n}\right)\\
&\leq 2(n+m)n\exp\left(-\min\left\{\frac{m_l\zeta^2}{2\iota^2(n+m)^2n^2},\frac{m_l\zeta}{2\eta(n+m)n}\right\}\right)\\
&\overset{(1)}{\leq} 2(n+m)^2\exp\left(-\min\left\{\frac{m_l\zeta^2}{2\iota^2(n+m)^4},\frac{m_l\zeta}{2\eta(n+m)^2}\right\}\right),
\end{align*}
where inequality (1) follows from the fact that $n+m\geq n$.

Finally, combining the two probability inequalities above, we can obtain (\ref{ineq:bernstein}).
\end{proof}

In order to derive the probability bounds in Proposition \ref{L12}, we prove that $\left\Vert \left(\mathbb{E}[V^l]\right)^{-1}\right\Vert$ and $\left\Vert \mathbb{E}[Y^l]\right\Vert$ are bounded by a positive constant for any $\theta^{l}$ lies in $\Theta$. The boundedness of $\left\Vert \mathbb{E}[Y^l]\right\Vert$ can be proved directly, because $\mathbb{E}[Y^l]$ is continuous in terms of $\theta^{l}$ according to the definition of $Y^l$ in (\ref{LS10}). In terms of $\left\Vert \left(\mathbb{E}[V^l]\right)^{-1}\right\Vert$, we will use the following lemma to show that it's bounded when $\theta^{l}\in \Theta$. Similar results can be found in Proposition 3.10 of \cite{basei2022logarithmic}.

\begin{lemma}\label{LL5}
The following properties are equivalent: 
\begin{enumerate}[1.]
\item For the sequence of the controller $K_t,t=0,\cdots, T-1$ defined in (\ref{eq:reccati}), $$\left\{v\in \mathbb{R}^{n+m}\Big\vert \left[I\ K_t^{\top}\right]v=0,\forall t=0,\cdots,T-1\right\}=\{0\};$$
\item $\mathbb{E}[V]\succ 0$, where $V=\sum_{t=0}^{T-1}z_t z_t^{\top}$ is generated by the optimal policy in (\ref{eq:reccati});
\item There exists $\lambda_0>0$ such that $\lambda_{\min}\left(\mathbb{E}\left[V^l\right]\right)\geq \lambda_0$ for any estimated $\theta^{l}\in\Theta.$
\end{enumerate}
\end{lemma}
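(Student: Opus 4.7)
The plan is to cycle through the implications $(3) \Rightarrow (2) \Rightarrow (1) \Rightarrow (2) \Rightarrow (3)$, with the key structural ingredient being the factorization of $\mathbb{E}[V^l]$ analogous to \eqref{eq:PSDzz},
\begin{equation*}
\mathbb{E}[V^l] \;=\; \sum_{t=0}^{T-1} \begin{bmatrix} I_n \\ K_t^l \end{bmatrix} \mathbb{E}\!\left[x_t^l x_t^{l\top}\right] \begin{bmatrix} I_n & K_t^{l\top} \end{bmatrix},
\end{equation*}
combined with an elementary ``noise-injection'' lower bound $\mathbb{E}[x_t x_t^\top] \succeq I$ for every $t \ge 1$, obtained by unrolling $x_{t+1} = (A + BK_t) x_t + w_t$ with $w_t \sim \mathcal{N}(0, I)$ independent. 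At $t = 0$ only $x_0 x_0^\top$ is available, which may be rank-deficient, and this will be the source of the main technical issue.

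The implication $(3) \Rightarrow (2)$ is immediate by specializing to $\theta^l = \theta \in \Theta$. For $(2) \Rightarrow (1)$, any $v$ with $[I_n\ K_t^\top] v = 0$ for all $t$ annihilates every summand of $v^\top \mathbb{E}[V] v$, hence $\mathbb{E}[V] \succ 0$ forces $v = 0$. To go $(1) \Rightarrow (2)$, I would argue by contrapositive: if $v^\top \mathbb{E}[V] v = 0$ then $\tilde v_t := [I_n\ K_t^\top] v$ satisfies $\tilde v_t^\top \mathbb{E}[x_t x_t^\top] \tilde v_t = 0$ for every $t$, which together with $\mathbb{E}[x_t x_t^\top] \succeq I$ for $t \ge 1$ yields $\tilde v_t = 0$ for all $t \ge 1$. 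One then has to argue $\tilde v_0 = 0$ as well, so that $(1)$ can be invoked to conclude $v = 0$; writing $v = (v_x, v_u)$, the constraints $v_x = -K_t^\top v_u$ for $t \ge 1$ combined with the Riccati relation \eqref{eq:reccati} linking $K_0$ to $(K_t)_{t \ge 1}$ should force $v_x = -K_0^\top v_u$ as well, eliminating the pathological direction.

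Having $(1) \Rightarrow (2)$, the step $(2) \Rightarrow (3)$ is a compactness/continuity argument. The Riccati recursion \eqref{eq:reccati} makes $\theta^l \mapsto (K_0^l,\ldots,K_{T-1}^l)$ continuous on $\Theta$ (using $I - \gamma P_{t+1} \succ 0$ and invertibility of $B^\top \widetilde P_{t+1} B + R$), and expanding $x_t^l$ as a linear combination of $x_0$ and past noises shows $\theta^l \mapsto \mathbb{E}[V^l]$ is continuous as well. Since $\Theta$ is the union of a closed ball around $\theta$ and the isolated point $\theta^1$, it is compact, so $\lambda_{\min}(\mathbb{E}[V^l])$ attains its infimum on $\Theta$. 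Positivity at $\theta$ is $(2)$, and positivity at $\theta^1$ is postulated in Theorem \ref{THM1}; shrinking $\rho$ if necessary so that $\lambda_{\min}(\mathbb{E}[V^l]) > 0$ throughout the ball (by continuity of $\lambda_{\min}$ and openness of the positive-definite cone) delivers the uniform bound $\lambda_0 > 0$.

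The hard part will be $(1) \Rightarrow (2)$, specifically excluding the degenerate scenario $\tilde v_t = 0$ for $t \ge 1$, $\tilde v_0 \neq 0$, $x_0 \perp \tilde v_0$, since the $t=0$ summand $(\tilde v_0^\top x_0)^2$ vanishes on this direction and cannot be used. Resolving it requires exploiting the explicit Riccati dependence of $K_0$ on the subsequent $K_t$'s so that the constraints $[I_n\ K_t^\top]v = 0$ for $t = 1,\ldots,T-1$ already imply $[I_n\ K_0^\top]v = 0$, after which Assumption \ref{ASSU1} closes the argument. A secondary, milder technical issue is that $\Theta$ is not connected, so positivity at the isolated initial guess $\theta^1$ has to be assumed separately, which is exactly the hypothesis that the optimal controller for $\theta^1$ also satisfies \eqref{PSDKK} in Theorem \ref{THM1}.
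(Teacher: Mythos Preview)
Your plan matches the paper's proof closely: both factor $\mathbb{E}[V^l]$ as in (\ref{eq:PSDzz}), both use the additive Gaussian noise to obtain $\mathbb{E}[x_t x_t^\top]\succeq I_n$, and both close $(2)\Rightarrow(3)$ by continuity of the Riccati map $\theta\mapsto(K_t)$ and hence of $\theta\mapsto\mathbb{E}[V]$ on the compact set $\Theta$. The paper does not isolate the $t=0$ term; it simply asserts that $\mathbb{E}[x_tx_t^\top]$ is positive definite for every $t$ and concludes $(1)\Leftrightarrow(2)$ directly from strict positive-definiteness of the block-diagonal middle factor.

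You are right to flag the $t=0$ term as delicate: with a deterministic initial state, $\mathbb{E}[x_0x_0^\top]=x_0x_0^\top$ has rank at most one, so the lower bound $\succeq I_n$ that the paper displays holds only for $t\ge1$. However, your proposed remedy---that the backward Riccati recursion forces $[I_n\ K_0^\top]v=0$ whenever $[I_n\ K_t^\top]v=0$ for all $t\ge1$---is not justified and in general fails. There is no linear relation tying $K_0^\top$ to $(K_t^\top)_{t\ge1}$; concretely, with $n=m=1$, $T=2$, $x_0=0$, and $K_0\neq K_1$, Property~1 holds, yet $\mathbb{E}[V]$ reduces to the single $t=1$ summand and is rank one, so Property~2 fails. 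Thus neither your sketch nor the paper's argument fully closes $(1)\Rightarrow(2)$ for a fixed initial state. The cleanest fix is the one the paper's footnote points to: allow a random initial state with $\mathbb{E}[x_0x_0^\top]\succ0$, which makes every block of the middle factor strictly positive definite and the equivalence immediate from the factorization.
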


\begin{proof}
We first prove property 1 $\iff$ property 2.

For simplicity of notation, let $h_t=[I\ K_t^{\top}]^{\top}$ and $H=[h_0,h_1,\cdots,h_{T-1}]$. Property 1 is equivalent to that there exists no nonzero $v$ such that $H^{\top}v=0$, which is also equivalent to that for any $v\neq 0$, $v^{\top}HH^{\top}v>0$, i.e.
\begin{align*}
HH^{\top}=\sum_{t=0}^{T-1}h_th_t^{\top}=\sum_{t=0}^{T-1}\left [\begin{array}{c}
     I_n  \\
     K_t
\end{array}\right]\left[I_n\ K_t^{\top}\right]\succ 0.
\end{align*}
One can readily compute that 
\begin{align}\label{EV1}
\mathbb{E}[V]&=\mathbb{E}\left[\sum_{t=0}^{T-1}z_tz_t^{\top}\right]\nonumber\\
&= \sum_{t=0}^{T-1}\left [\begin{array}{c}
     I_n  \\
     K_t
\end{array}\right]\mathbb{E}\left[x_t x_t^{\top}\right]\left[I_n\ K_t^{\top}\right]=H \text{diag}\left(\mathbb{E}\left[x_0x_0^{\top}\right],\cdots,\mathbb{E}\left[x_{T-1}x_{T-1}^{\top}\right]\right)H^{\top},
\end{align}
where $\text{diag}(\cdot)$ is the notation of a diagonal block matrix. Next we show that $\mathbb{E}\left[x_tx_t^{\top}\right]$ is positive definite for each $t$. Similar to (\ref{eq:EXP}), we can expand the system dynamics under the true system matrix $\theta =(A, B)$ as
\begin{equation}\label{m51}
\begin{aligned}
x_t&=\left(\prod_{i=t-1}^{0}(A+BK_{i})\right)x_0+\sum_{j=0}^{t-1}\left(\prod_{i=t-1}^{j+1}(A+BK_i)\right)w_j,
\end{aligned}
\end{equation}
where $\prod_{i=t-1}^{j+1}(A+BK_i)$ means $(A+BK_{t-1})(A+BK_{t-2})\cdots (A+BK_{j+1})$, and $\prod_{i=t-1}^{t}(A+BK_i)=I_n$. For simplicity of notation, let
\begin{align}\label{PHI}
\Phi_{t_1,t_0}=(A+BK_{t_1})(A+BK_{t_1-1})\cdots (A+BK_{t_0}), \quad \text{for any $t_1\geq t_0$.}
\end{align}
When $t_1<t_0$, we set $\Phi_{t_1,t_0}=I_n$. Then we have
$x_t= \Phi_{t-1,0}x_0+\sum_{j=0}^{t-1}\Phi_{t-1,j+1}w_j.$ It follows that
\begin{equation*}
\begin{aligned}
\mathbb{E}\left[x_tx_t^{\top}\right]&=\Phi_{t-1,0}\mathbb{E}\left[x_0x_0^{\top}\right]\Phi_{t-1,0}^{\top}+\sum_{j=0}^{t-1}\Phi_{t-1,j+1}\mathbb{E}\left[w_jw_j^{\top}\right]\Phi_{t-1,j+1}^{\top}\\
&\overset{(1)}{=}\Phi_{t-1,0}x_0x_0^{\top}\Phi_{t-1,0}^{\top}+\sum_{j=1}^{t}\Phi_{t-1,j}\Phi_{t-1,j}^{\top}\\
&\overset{(2)}{=}\Phi_{t-1,0}x_0x_0^{\top}\Phi_{t-1,0}^{\top}+ I_n+\sum_{j=1}^{t-1}\Phi_{t-1,j}\Phi_{t-1,j}^{\top}\\
&\succeq I_n,
\end{aligned}
\end{equation*}
where the equality (1) follows from the fact that $w_j\sim \mathcal{N}\left(0,I_n\right),j=0,\cdots,t-1$, and equality (2) holds by the fact that $\Phi_{t-1,t}=I_n$. Then, we can prove that property 1 is equivalent to that for any $v\neq 0$,
\begin{align*}
v^{\top}\mathbb{E}[V]v=v^{\top}H \text{diag}\left(\mathbb{E}\left[x_0x_0^{\top}\right],\cdots,\mathbb{E}\left[x_{T-1}x_{T-1}^{\top}\right]\right)H^{\top}v>0,
\end{align*}
which is equivalent to property 2.

We next prove property 2 $\iff$ property 3.

In terms of property 3 $\Longrightarrow$ property 2, it's obvious that when $\theta^l=\theta$, we have $\lambda_{\min}\left(\mathbb{E}[V]\right)\geq \lambda_0>0$, i.e. $\mathbb{E}[V]\succ 0$. 

In order to prove property 2 $\Longrightarrow$ property 3, we prove the continuity of $\mathbb{E}[V]$ in terms of the system matrices $\theta$. By the recursive formula of the discrete-time Riccati equations and the optimal controller in (\ref{eq:reccati}), we can find that $P_t,\widetilde{P}_t$ and $K_t$ are continuous in terms of $\theta\in\Theta$. Recall that 
\begin{align}\label{EXX}
\mathbb{E}[x_tx_t^{\top}]=\Phi_{t-1,0}x_0x_0^{\top}\Phi_{t-1,0}^{\top}+ I_n+\sum_{j=1}^{t-1}\Phi_{t-1,j}\Phi_{t-1,j}^{\top},
\end{align}
where $\Phi_{t-1,j}$ is defined in (\ref{PHI}). Plugging (\ref{EXX}) into (\ref{EV1}), we can see that $\mathbb{E}[V]$ is continuous in terms of $\theta$. So for any $\theta^l\in \Theta$, there exists $\lambda_0>0$ such that $\lambda_{\min}\left(\mathbb{E}[V^l]\right)\geq \lambda_0$.
\end{proof}

Now we are ready for the proof of Proposition \ref{L12}.
\begin{proof}[Proof of Proposition~\ref{L12}]
Recall the definition of $\bar{V}^l,\bar{Y}^l,V^l$ and $Y^l$ in (\ref{LS10}), we have
\begin{equation}\label{LS14}
\begin{aligned}
&\left\Vert\theta^{l+1}-\theta\right\Vert\\
&=\left\Vert \left(\bar{V}^l+\frac{1}{m_l}I_{n+m}\right)^{-1}\bar{Y}^l-\left(\mathbb{E}\left[V^l\right]\right)^{-1}\mathbb{E}\left[Y^l\right]\right\Vert\\
&\leq \left\Vert \left(\bar{V}^l+\frac{1}{m_l}I_{n+m}\right)^{-1}-\left(\mathbb{E}\left[V^l\right]\right)^{-1}\right\Vert\cdot\left\Vert \bar{Y}^l\right\Vert+\left\Vert \left( \mathbb{E}\left[V^l\right]\right)^{-1}\right\Vert\cdot\left\Vert \bar{Y}^l-\mathbb{E}[Y^l]\right\Vert\\
&\overset{(1)}{\leq} \left\Vert \left(\bar{V}^l+\frac{1}{m_l}I_{n+m}\right)^{-1}\right\Vert\cdot\left\Vert \left(\mathbb{E}\left[V^l\right]\right)^{-1}\right\Vert\cdot\left\Vert \bar{Y}^l\right\Vert\cdot\left\Vert \bar{V}^l+\frac{1}{m_l}I_{n+m}-\mathbb{E}\left[V^l\right]\right\Vert\\
&\quad+\left\Vert \left(\mathbb{E}\left[V^l\right]\right)^{-1}\right\Vert\cdot \left\Vert\bar{Y}^l-\mathbb{E}\left[Y^l\right]\right\Vert\\
&\overset{(2)}{\leq} \mathcal{C}_2\left(\left\Vert \left(\bar{V}^l+\frac{1}{m_l}I_{n+m}\right)^{-1}\right\Vert\cdot\left\Vert \bar{Y}^l\right\Vert\cdot\left\Vert \bar{V}^l+\frac{1}{m_l}I_{n+m}-\mathbb{E}\left[V^l\right]\right\Vert+\left\Vert\bar{Y}^l-\mathbb{E}\left[Y^l\right]\right\Vert\right),
\end{aligned}
\end{equation}
where inequality (1) holds by the fact that $E^{-1}-F^{-1}=E^{-1}(F-E)F^{-1}$, inequality (2) follows from the results in Lemma \ref{LL5} that $\left\Vert \left(\mathbb{E}[V^l]\right)^{-1}\right\Vert\leq \mathcal{C}_2$. By Lemma \ref{L11} and the equivalence of matrix norms, with probability at least $1-2\delta_l$, we have $\left\Vert \bar{V}^l-\mathbb{E}\left[V^l\right]\right\Vert\leq \Delta_l$ and $\left\Vert \bar{Y}^l-\mathbb{E}\left[Y^l\right]\right\Vert\leq \Delta_l$, where
\begin{align*}
\Delta_l:=\max\left\{\sqrt{\frac{2\iota^2(n+m)^5\log\left(\frac{(n+m)^2}{\delta_l}\right)}{m_l}},\frac{2\eta (n+m)^{2.5}\log\left(\frac{(n+m)^2}{\delta_l}\right)}{m_l}\right\}.
\end{align*}  
For notational simplicity, we denote by
\begin{align*}
\mathcal{C}_4:= \max\left\{\sqrt{2\iota^2(n+m)^5},2\eta(n+m)^{2.5}\right\}.
\end{align*}
$\mathcal{C}_4$ is a constant depending on $m,n$ polynomially and depending on $\gamma$ exponentially. For simplicity, we ignore the $T$-dependence of $\mathcal{C}_4$. Then, we have
\begin{align*}
\Delta_l\leq \mathcal{C}_4\max\left\{\sqrt{\frac{\log\left(\frac{(n+m)^2}{\delta_l}\right)}{m_l}},\frac{\log\left(\frac{(n+m)^2}{\delta_l}\right)}{m_l}\right\}.
\end{align*}
Now we can use $\Delta_l$ to further bound the terms in (\ref{LS14}). Let $m_l$ be large enough so that $\Delta_l+\frac{1}{m_l}\leq \frac{1}{2\mathcal{C}_2}$, i.e. $m_l\geq \mathcal{C}_3\log\left(\frac{(n+m)^2}{\delta_l}\right)$ for some constant $\mathcal{C}_3\geq 1$. Then, with probability at least $1-2\delta_l$, we have $\left\Vert \bar{V}^l-\mathbb{E}\left[V^l\right]+\frac{1}{m_l}I_{n+m}\right\Vert\leq \Delta_l+\frac{1}{m_l}\leq \frac{1}{2\mathcal{C}_2}$, and thus
\begin{align*}
\lambda_{\min}\left(\bar{V}^l+\frac{1}{m_l}I_{n+m}\right)\geq \lambda_{\min}\left(\mathbb{E}\left[V^l\right]\right)-\left\Vert \bar{V}^l-\mathbb{E}\left[V^l\right]+\frac{1}{m_l}I_{n+m}\right\Vert\geq \frac{1}{2\mathcal{C}_2}.
\end{align*}
Then, we get $\left\Vert \left(\bar{V}^l+\frac{1}{m_l}I_{n+m}\right)^{-1}\right\Vert\leq 2\mathcal{C}_2$. In terms of $\left\Vert \bar{Y}^l\right\Vert$, we have 
\begin{align*}
\left\Vert \bar{Y}^l\right\Vert=\left\Vert \bar{Y}^l-\mathbb{E}\left[Y^l\right]+\mathbb{E}\left[Y^l\right]\right\Vert\overset{(3)}{\leq} \mathcal{C}_2+\left\Vert \bar{Y}^l-\mathbb{E}\left[Y^l\right]\right\Vert\leq \mathcal{C}_2+\Delta_l,
\end{align*}
where inequality (3) follows from the fact that $\left\Vert \mathbb{E}\left[Y^l\right]\right\Vert\leq \mathcal{C}_2$.
Finally, substituting all the elements into (\ref{LS14}), we can get
\begin{align*}
&\left\Vert \theta^{l+1}-\theta\right\Vert\\
&\leq \mathcal{C}_2\left(2\mathcal{C}_2\cdot\left(\mathcal{C}_2+\left\Vert \bar{Y}^l-\mathbb{E}\left[Y^l\right]\right\Vert\right)\cdot \left(\Delta_l+\frac{1}{m_l}\right)+\Delta_l\right)\\
&\leq \mathcal{C}_2\left(2\mathcal{C}_2\cdot\left(\mathcal{C}_2+\Delta_l\right)\cdot\left(\Delta_l+\frac{1}{m_l}\right)+\Delta_l\right)\\
&\overset{(4)}{\leq} 2\mathcal{C}_2^3\left(\left(1+\Delta_l\right)\left(\Delta_l+\frac{1}{m_l}\right)+\Delta_l\right)\\
&\overset{(5)}{\leq} 8\mathcal{C}_2^3\left(\Delta_l+\Delta_l^2+\frac{1}{m_l}\right)\\
&\overset{(6)}{\leq} 16\mathcal{C}_2^3\mathcal{C}_4^2\left(\sqrt{\frac{\log\left(\frac{(n+m)^2}{\delta_l}\right)}{m_l}}+\frac{\log\left(\frac{(n+m)^2}{\delta_l}\right)}{m_l}+\frac{\log^2\left(\frac{(n+m)^2}{\delta_l}\right)}{m_l^2}\right),
\end{align*}
where inequality (4) follows from $\mathcal{C}_2\geq 1$, inequality (5) holds by the fact that $m_l\geq 1$ and inequality (6) holds because $\log\left(\frac{(n+m)^2}{\delta_l}\right)\geq 1$.  The proof is hence complete. 
\end{proof}

Lemma \ref{LL5} shows that Assumption \ref{ASSU1} can be extended to the neighbourhood of the true system matrices $\theta$, and thus guarantee the well-posedness of the sample variance of the estimated system matrices within the neighbourhood. The following proposition provides a sufficient condition for Assumption \ref{ASSU1}.

\begin{proposition}\label{PROP1}
If the parameters defined in Section \ref{sec:LEQR} satisfies
\begin{enumerate}[1.]
\item $A\in \mathbb{R}^{n\times n}$ has full rank;
\item $Q\succ 0$ and $Q_T=0$; 
\item $B\in \mathbb{R}^{n\times m}$ has full column rank,
\end{enumerate}
then for the sequence of the controller $K_t,t=0,\cdots,T-1$ defined in (\ref{eq:reccati}), 
 we have $$\left\{v\in\mathbb{R}^{n+m}\bigg\vert [I\ K_t^{\top}]v=0,\forall t=0,\cdots,T-1\right\}=\{0\}.$$
\end{proposition}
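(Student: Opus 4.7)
The plan is to use backward induction on the Riccati recursion, exploiting only the last two time steps $t=T-1$ and $t=T-2$, to peel off the two blocks of $v$ in sequence. Decompose any candidate vector as $v=(v_1^\top,v_2^\top)^\top$ with $v_1\in\mathbb{R}^n$ and $v_2\in\mathbb{R}^m$. The defining constraint becomes $v_1+K_t^\top v_2=0$ for every $t=0,\ldots,T-1$, and the goal is to force both blocks to vanish.

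First I would examine the terminal step. Since $Q_T=0$, the Riccati recursion in (\ref{eq:reccati}) gives $P_T=0$, hence $\widetilde{P}_T=0$ and therefore $K_{T-1}=-(B^\top\widetilde{P}_TB+R)^{-1}B^\top\widetilde{P}_TA=0$. Plugging $t=T-1$ into the constraint yields $v_1=0$ immediately. So the problem reduces to showing that $K_t^\top v_2=0$ for some $t$ already forces $v_2=0$.

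Next I would look at $t=T-2$. Using $K_{T-1}=0$ and $\widetilde{P}_T=0$ in the recursion gives $P_{T-1}=Q$, and since $Q\succ 0$ by hypothesis while $I-\gamma P_{T-1}\succ 0$ is a standing assumption for well-posedness of LEQR, the formula $\widetilde{P}_{T-1}=(I_n-\gamma P_{T-1})^{-1}P_{T-1}$ derived in the excerpt shows $\widetilde{P}_{T-1}\succ 0$. The constraint at $t=T-2$ together with $v_1=0$ reads
\begin{equation*}
K_{T-2}^\top v_2=-A^\top\widetilde{P}_{T-1}B\,(B^\top\widetilde{P}_{T-1}B+R)^{-1}v_2=0.
\end{equation*}
Letting $w:=(B^\top\widetilde{P}_{T-1}B+R)^{-1}v_2$, the equation becomes $A^\top\widetilde{P}_{T-1}Bw=0$. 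Full rank of $A$ lets me cancel $A^\top$, positive definiteness of $\widetilde{P}_{T-1}$ lets me cancel $\widetilde{P}_{T-1}$, and full column rank of $B$ forces $w=0$; since $B^\top\widetilde{P}_{T-1}B+R\succ 0$ is invertible, $v_2=0$ follows.

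The main (and essentially only) obstacle is the algebraic observation that $\widetilde{P}_{T-1}\succ 0$, which is what lets me invert it to cancel the middle factor; this hinges on the hypothesis $Q\succ 0$ (so that $P_{T-1}=Q$ is strictly positive definite) and on the identity $\widetilde{P}_{t}=(I-\gamma P_{t})^{-1}P_{t}$ that is implicit in the recursion. Once these are in place, the three rank/positivity hypotheses on $A$, $\widetilde{P}_{T-1}$, and $B$ chain together cleanly, and only the constraints at the two terminal times are needed — the remaining constraints for $t<T-2$ are not required, so the conclusion follows a fortiori.
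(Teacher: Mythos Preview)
Your proof is correct and follows essentially the same route as the paper's: use $Q_T=0$ to get $K_{T-1}=0$ and hence $v_1=0$, then use positive definiteness of $\widetilde{P}_{T-1}$ together with the rank hypotheses on $A$ and $B$ to force $v_2=0$. The only difference is cosmetic: the paper runs a full backward induction to show $P_t\succ 0$ for every $t\le T-1$, whereas you observe directly that $P_{T-1}=Q\succ 0$ and work with the single constraint at $t=T-2$, which is all that is actually needed.
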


\begin{proof}
Let $v=[v_1^{\top}\ v_2^{\top}]^{\top}$ satisfying $[I\ K_t^{\top}]v=0,\forall t=0,\cdots,T-1$, where $v_1\in \mathbb{R}^{n}$, $v_2\in \mathbb{R}^{m}$. Recall the optimal control defined in (\ref{eq:reccati}), by the condition $Q_T=0$, we have $K_{T-1}=0$, and thus $v_1=0$. Then, $[I\ K_t^{\top}]v=0,\forall t=0,\cdots,T-1$ is equivalent to
$K_t^{\top}v_2=0,\forall t=0,\cdots,T-1$. Substitute (\ref{eq:reccati}) into it, we can obtain
\begin{align*}
K_t^{\top}v_2&=-A^{\top}\widetilde{P}_{t+1}B\left(B^{\top}\widetilde{P}_{t+1}B+R\right)^{-1}v_2=0.
\end{align*}
Recall that 
\begin{align*}
&\widetilde{P}_{t+1}=P_{t+1}+\gamma P_{t+1}\left (I_n-\gamma P_{t+1}\right )^{-1}P_{t+1},\\
&P_t=Q+K_t^{\top}RK_t+(A+B K_t)^{\top}\widetilde{P}_{t+1}(A+BK_t),t=0,\cdots,T-1.
\end{align*}
We can prove that $P_t\succ 0$ for any $t=0,\cdots,T-1$ by the mathematical induction. When $t=T$, $\widetilde{P}_T=0$, and thus $P_{T-1}\succ 0$ by $Q\succ 0$ and $K_{T-1}^{\top}RK_{T-1}\succeq 0$. For any $t=1,\cdots,T-1$, assume that $P_{t+1}\succ 0$, we can prove that $\widetilde{P}_{t+1}\succ 0$, and thus $P_{t}\succ 0$ by $Q\succ 0$, $K_t^{\top}RK_t\succeq 0$ and $(A+BK_t)^{\top}\widetilde{P}_{t+1}(A+BK_t)\succeq 0$, which finish the mathematical induction. And by the condition $A\in \mathbb{R}^{n\times n}$ has full rank, we have
\begin{align*}
B\left(B^{\top}\widetilde{P}_{t+1}B+R\right)^{-1}v_2=0.
\end{align*}
According to the setting in Section \ref{sec:LEQR}, $R\succ 0$, so $B^{\top}\widetilde{P}_{t+1}B+R\succ 0$. So $B\left(B^{\top}\widetilde{P}_{t+1}B+R\right)^{-1}$ has full column rank by the condition that $B\in\mathbb{R}^{n\times m}$ has full column rank, and thus $v_2=0$, which completes the proof.
\end{proof}


\subsection{Perturbation Analysis of Riccati Equation}\label{B}
In this section, we discuss perturbation analysis of Riccati Equation, i.e., how the solutions to Riccati Equation~(\ref{eq:reccati}) change when we perturb the system matrices.


The main result of this section is the following lemma. We fix epoch $l$ in the analysis below and recall that $(A^l, B^l)$ are the estimators for the true system matrices $(A, B)$. 


\begin{lemma} \label{L7}
Assume $1-\gamma \widetilde{\Gamma}>0$ and fix any $\epsilon_l>0$. Suppose $\Vert A^l-A\Vert\leq \epsilon_l,\Vert B^l-B\Vert\leq \epsilon_l$, then for any $t=0,1,\cdots, T-1$, we have
\begin{align*}
&\Vert K^l_t-K_t\Vert\leq  (10\mathcal{V}^2\mathcal{L} \widetilde{\Gamma}^4)^{T-t-1}\mathcal{V}\epsilon_l,\\
&\Vert P^l_t-P_t\Vert\leq (10\mathcal{V}^2\mathcal{L} \widetilde{\Gamma}^4)^{T-t}\epsilon_l,
\end{align*}
where $\widetilde{\Gamma},\mathcal{V}$ and $\mathcal{L}$ are defined in (\ref{parameter}).
\end{lemma}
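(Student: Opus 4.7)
The plan is to prove the two bounds jointly by backward induction on $t$. The base case is the terminal condition $P_T^l = P_T = Q_T$, so $\Vert P_T^l - P_T\Vert = 0 \leq \epsilon_l$, which matches the claimed bound at $t = T$. The inductive hypothesis will be that $\Vert P_{t+1}^l - P_{t+1}\Vert \leq W_{t+1}\epsilon_l$ with $W_{t+1}\epsilon_l \leq 1$, and the goal is to show $\Vert K_t^l - K_t\Vert \leq \mathcal{V} W_{t+1}\epsilon_l$ and $\Vert P_t^l - P_t\Vert \leq 10\mathcal{V}^2\mathcal{L}\widetilde{\Gamma}^4 W_{t+1}\epsilon_l$, so that setting $W_t = 10\mathcal{V}^2\mathcal{L}\widetilde{\Gamma}^4 \, W_{t+1}$ with $W_T = 1$ produces the geometric factor $(10\mathcal{V}^2\mathcal{L}\widetilde{\Gamma}^4)^{T-t}$ in the claimed bounds.

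The one-step analysis splits into three substeps. First, I would bound $\Vert \widetilde{P}_{t+1}^l - \widetilde{P}_{t+1}\Vert$. The crucial algebraic observation is
\[
\widetilde{P}_{t+1} = P_{t+1} + \gamma P_{t+1}(I_n - \gamma P_{t+1})^{-1} P_{t+1} = (I_n - \gamma P_{t+1})^{-1} P_{t+1},
\]
so a perturbation of $P_{t+1}$ propagates through both the factor $P_{t+1}$ and the inverse $(I_n - \gamma P_{t+1})^{-1}$. Using the resolvent identity $E^{-1} - F^{-1} = E^{-1}(F - E) F^{-1}$ together with the assumption $1 - \gamma\widetilde{\Gamma} > 0$ (which makes $I_n - \gamma P_{t+1}$ and $I_n - \gamma P_{t+1}^l$ both invertible with norms of their inverses controlled by $\mathcal{L}^{1/2}$), I would show $\Vert \widetilde{P}_{t+1}^l - \widetilde{P}_{t+1}\Vert \leq \mathcal{L}\, W_{t+1}\epsilon_l$. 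This is the step that cleanly explains the appearance of $\mathcal{L}$ in the final recursion constant and is unavailable in the risk-neutral LQR analysis of \cite{mania2019certainty, basei2022logarithmic}.

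Second, writing $K_t = -(B^\top \widetilde{P}_{t+1} B + R)^{-1} B^\top \widetilde{P}_{t+1} A$ (and similarly for $K_t^l$), I would decompose $K_t^l - K_t$ into three telescoping differences accounting for $\Delta A := A^l - A$, $\Delta B := B^l - B$, and $\Delta \widetilde{P}_{t+1}$. Each difference is bounded using the substep above together with $\Vert (B^\top \widetilde{P}_{t+1} B + R)^{-1}\Vert \leq 1$ (using $R \succeq I_m$, which was assumed without loss of generality) and the uniform bounds $\Vert A\Vert, \Vert B\Vert, \Vert \widetilde{P}_{t+1}\Vert \leq \widetilde{\Gamma}$, yielding $\Vert K_t^l - K_t\Vert \leq \mathcal{V} W_{t+1}\epsilon_l$ with $\mathcal{V} = 2(\mathcal{L}+1)\widetilde{\Gamma}^3$. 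Third, for $P_t = Q + K_t^\top R K_t + (A + BK_t)^\top \widetilde{P}_{t+1}(A + BK_t)$, I would similarly expand differences term by term, applying the bound on $\Vert K_t^l - K_t\Vert$ just obtained and the bound on $\Vert \widetilde{P}_{t+1}^l - \widetilde{P}_{t+1}\Vert$ from the first substep. Collecting the constants carefully should give $\Vert P_t^l - P_t\Vert \leq 10\mathcal{V}^2\mathcal{L}\widetilde{\Gamma}^4 W_{t+1}\epsilon_l$.

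The main obstacle I anticipate is not the overall structure, which is a standard induction, but the bookkeeping needed to tighten the constants to exactly the factor $10\mathcal{V}^2\mathcal{L}\widetilde{\Gamma}^4$ that closes the recursion, and in particular keeping the upper bound $W_{t+1}\epsilon_l \leq 1$ consistent across the induction. A smallness condition on $\epsilon_l$ (implicit in the hypothesis of the lemma via the perturbation being small enough) will have to be verified so that the inductive hypothesis stays within the regime where $I_n - \gamma P_{t+1}^l$ remains uniformly invertible; this is really the only place where the finite-horizon structure is used in an essential way, since the terminal condition gives a clean starting point and no stability or controllability-type assumption (as in \cite{mania2019certainty}) is needed.
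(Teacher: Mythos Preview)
Your proposal is correct and follows essentially the same route as the paper: backward induction from $P_T^l=P_T=Q_T$, the key identity $\widetilde{P}_{t+1}=(I_n-\gamma P_{t+1})^{-1}P_{t+1}$ combined with the resolvent identity to get $\Vert\widetilde{P}_{t+1}^l-\widetilde{P}_{t+1}\Vert\le\mathcal{L}W_{t+1}\epsilon_l$, then telescoping the differences in $K_t$ and $P_t$ to close the recursion with the factor $10\mathcal{V}^2\mathcal{L}\widetilde{\Gamma}^4$. The paper packages the one-step argument as a separate lemma (Lemma~\ref{L6}) and then applies it recursively, exactly as you outline, and your observation about the implicit smallness condition $W_{t+1}\epsilon_l\le 1$ is accurate.
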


To prove Lemma~\ref{L7}, we need the following result, which provides `one-step' perturbation bounds for the solutions to Riccati equations. 

\begin{lemma}\label{L6}
Assume $1-\gamma \widetilde{\Gamma}>0.$
For any $\epsilon_l>0,W\geq 1$, assume $\Vert A^l-A\Vert\leq \epsilon_l,\Vert B^l-B\Vert\leq \epsilon_l$ and $\Vert P^l_{t+1}-P_{t+1}\Vert\leq W \epsilon_l\leq 1$ for a given $t \in \{0,\cdots,T-1\}$. Then we have
\begin{align*}
&\Vert K^l_t-K_t\Vert\leq \mathcal{V}W\epsilon_l,\\
&\Vert P^l_t-P_t\Vert\leq 10 \mathcal{V}^2\mathcal{L}\widetilde{\Gamma}^4 W\epsilon_l,
\end{align*}
where $\widetilde{\Gamma},\mathcal{V}$ and $\mathcal{L}$ are given in (\ref{parameter}).
\end{lemma}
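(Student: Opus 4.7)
\textbf{Proof proposal for Lemma \ref{L6}.} The plan is to propagate the assumed bound $\Vert P^l_{t+1}-P_{t+1}\Vert \le W\epsilon_l$ through the three-line Riccati recursion in~(\ref{eq:reccati}) one equation at a time, getting in order (i) a bound on $\Vert \widetilde{P}^l_{t+1}-\widetilde{P}_{t+1}\Vert$, (ii) a bound on $\Vert K^l_t-K_t\Vert$, and (iii) a bound on $\Vert P^l_t-P_t\Vert$. At each step the difference of two products / inverses is decomposed by the standard identities $XY-X'Y' = (X-X')Y + X'(Y-Y')$ and $X^{-1}-Y^{-1}=X^{-1}(Y-X)Y^{-1}$, and the ambient quantities are controlled by $\widetilde{\Gamma}$ and (for the $\widetilde{P}$ step) by $\mathcal{L}=1/(1-\gamma\widetilde{\Gamma})^2$.

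First I would rewrite the modified Riccati identity in closed form as
\begin{equation*}
\widetilde{P}_{t+1} \;=\; P_{t+1}+\gamma P_{t+1}(I_n-\gamma P_{t+1})^{-1}P_{t+1} \;=\; (I_n-\gamma P_{t+1})^{-1}P_{t+1},
\end{equation*}
and similarly for $\widetilde{P}^l_{t+1}$. Since $\Vert P_{t+1}\Vert\le\widetilde{\Gamma}$ and $1-\gamma\widetilde{\Gamma}>0$, one has $\Vert (I_n-\gamma P_{t+1})^{-1}\Vert \le \sqrt{\mathcal{L}}$. The constraint $W\epsilon_l\le 1$ together with $\Vert P^l_{t+1}\Vert\le \widetilde{\Gamma}+W\epsilon_l$ lets me conclude (for $\epsilon_l$ small enough that the argument stays inside the invertibility region, which is implicitly imposed) that $\Vert (I_n-\gamma P^l_{t+1})^{-1}\Vert \le 2\sqrt{\mathcal{L}}$. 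Splitting
\begin{equation*}
\widetilde{P}^l_{t+1}-\widetilde{P}_{t+1}
= \bigl[(I_n-\gamma P^l_{t+1})^{-1}-(I_n-\gamma P_{t+1})^{-1}\bigr]P^l_{t+1}
+ (I_n-\gamma P_{t+1})^{-1}(P^l_{t+1}-P_{t+1}),
\end{equation*}
and using $(I_n-\gamma P^l_{t+1})^{-1}-(I_n-\gamma P_{t+1})^{-1}=\gamma(I_n-\gamma P^l_{t+1})^{-1}(P^l_{t+1}-P_{t+1})(I_n-\gamma P_{t+1})^{-1}$, I would obtain a bound of the form $\Vert\widetilde{P}^l_{t+1}-\widetilde{P}_{t+1}\Vert \le c_1\,\mathcal{L}\,\widetilde{\Gamma}\,W\epsilon_l$ for an absolute constant $c_1$.

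Next, to control $K^l_t-K_t$, I decompose
\begin{equation*}
K^l_t-K_t = -\bigl[(B^{l\top}\widetilde{P}^l_{t+1}B^l+R)^{-1}-(B^{\top}\widetilde{P}_{t+1}B+R)^{-1}\bigr]B^{l\top}\widetilde{P}^l_{t+1}A^l
-(B^{\top}\widetilde{P}_{t+1}B+R)^{-1}\bigl[B^{l\top}\widetilde{P}^l_{t+1}A^l-B^{\top}\widetilde{P}_{t+1}A\bigr].
\end{equation*}
Using $R\succeq I_m$ gives $\Vert(B^{\top}\widetilde{P}_{t+1}B+R)^{-1}\Vert\le 1$, and the same upper bound after the same inverse-difference identity once I check that the perturbed Gram matrix is still $\succeq \tfrac12 I_m$. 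The second bracket is expanded as $(B^l-B)^{\top}\widetilde{P}^l_{t+1}A^l+B^{\top}(\widetilde{P}^l_{t+1}-\widetilde{P}_{t+1})A^l+B^{\top}\widetilde{P}_{t+1}(A^l-A)$ and each piece is controlled by $\widetilde{\Gamma}^3$-type factors times $\epsilon_l$ or $\mathcal{L}\widetilde{\Gamma} W\epsilon_l$. Collecting terms and recalling $\mathcal{V}=2(\mathcal{L}+1)\widetilde{\Gamma}^3$ yields $\Vert K^l_t-K_t\Vert\le \mathcal{V} W \epsilon_l$.

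Finally, for $\Vert P^l_t-P_t\Vert$, I use
\begin{equation*}
P^l_t-P_t = (K^{l\top}_tRK^l_t-K^{\top}_tRK_t)+(A^l+B^l K^l_t)^{\top}\widetilde{P}^l_{t+1}(A^l+B^lK^l_t)-(A+BK_t)^{\top}\widetilde{P}_{t+1}(A+BK_t),
\end{equation*}
and split each quadratic form via $X^{\top}MX-X'^{\top}M'X' = (X-X')^{\top}MX + X'^{\top}M(X-X')+X'^{\top}(M-M')X'$. Inserting the bounds $\Vert K^l_t-K_t\Vert\le \mathcal{V}W\epsilon_l$, $\Vert \widetilde{P}^l_{t+1}-\widetilde{P}_{t+1}\Vert \le c_1\mathcal{L}\widetilde{\Gamma} W\epsilon_l$, and $\Vert A^l-A\Vert,\Vert B^l-B\Vert\le\epsilon_l$, along with the a priori norm bounds $\Vert A\Vert,\Vert B\Vert,\Vert R\Vert,\Vert K_t\Vert\le\widetilde{\Gamma}$, $\Vert\widetilde{P}_{t+1}\Vert\le \widetilde{\Gamma}$, gives, after a careful but mechanical grouping, a bound of the form $C\,\mathcal{V}^2\mathcal{L}\widetilde{\Gamma}^4 W\epsilon_l$. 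Absorbing all the absolute constants into the stated prefactor $10$ (which is where the slack in the numerical constants of the preceding two steps is spent) gives $\Vert P^l_t-P_t\Vert \le 10\,\mathcal{V}^2\mathcal{L}\widetilde{\Gamma}^4 W\epsilon_l$.

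The main obstacle I expect is the first step: the matrix $(I_n-\gamma P^l_{t+1})^{-1}$ must remain well-defined and uniformly norm-bounded uniformly in $l$, which requires the implicit smallness hypothesis that $\gamma W\epsilon_l$ is sufficiently small relative to $1-\gamma\widetilde{\Gamma}$. Once that is cleanly set up, the rest is bookkeeping, and the constant $10$ in the second bound is just a clean upper envelope for the sum of the three quadratic-form perturbations above.
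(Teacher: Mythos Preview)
Your approach is essentially the paper's: rewrite $\widetilde{P}_{t+1}=(I_n-\gamma P_{t+1})^{-1}P_{t+1}$, split via $XY-X'Y'$ and $X^{-1}-Y^{-1}=X^{-1}(Y-X)Y^{-1}$, then push the bound through $K_t$ and $P_t$ one product at a time.

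Two refinements are worth flagging. First, your invertibility worry is unnecessary: since $\widetilde{\Gamma}=1+\max_t\Gamma_t$, the hypothesis $W\epsilon_l\le 1$ gives $\Vert P^l_{t+1}\Vert\le \Gamma+W\epsilon_l\le \widetilde{\Gamma}$, so $1-\gamma\widetilde{\Gamma}>0$ already ensures $(I_n-\gamma P^l_{t+1})^{-1}$ exists with $\Vert(I_n-\gamma P^l_{t+1})^{-1}\Vert\le(1-\gamma\widetilde{\Gamma})^{-1}=\sqrt{\mathcal{L}}$, not $2\sqrt{\mathcal{L}}$. Second, with that sharper constant the $\widetilde{P}$-perturbation collapses exactly:
\[
\frac{\gamma\widetilde{\Gamma}}{(1-\gamma\widetilde{\Gamma})^2}W\epsilon_l+\frac{1}{1-\gamma\widetilde{\Gamma}}W\epsilon_l=\frac{W\epsilon_l}{(1-\gamma\widetilde{\Gamma})^2}=\mathcal{L}\,W\epsilon_l,
\]
with no extra $c_1\widetilde{\Gamma}$ factor. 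This is what makes the downstream constants line up as the stated $\mathcal{V}=2(\mathcal{L}+1)\widetilde{\Gamma}^3$ and the clean $10$; with your looser $c_1\mathcal{L}\widetilde{\Gamma}$ bound you would not recover those exact prefactors.
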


\begin{proof}
We first bound the perturbation of the optimal controller, i.e., $\Delta K_t^l=K_t^l-K_t$.
Recall that 
\begin{align}\label{eq:KKl}
K_t=-(B^{\top}\widetilde{P}_{t+1}B+R)^{-1}B^{\top}\widetilde{P}_{t+1}A, \quad \text{and} \quad  K_t^{l}=-\left(B^{l\top}\widetilde{P}^{l}_{t+1}B^{l}+R\right)^{-1}B^{l\top}\widetilde{P}^{l}_{t+1}A^l.
\end{align}
To bound $\Delta K_t^l$, we first bound $\left\Vert \widetilde{P}^l_{t+1}-\widetilde{P}_{t+1}\right\Vert$ as follows:
\begin{align*}
&\left\Vert \widetilde{P}^l_{t+1}-\widetilde{P}_{t+1}\right\Vert\\
&\overset{(1)}{=}\left\Vert (I_n-\gamma P_{t+1}^l)^{-1}P_{t+1}^l-(I_n-\gamma P_{t+1})^{-1}P_{t+1}\right\Vert\\
&=\big\Vert (I_n-\gamma P_{t+1}^l)^{-1}P_{t+1}^l-(I_n-\gamma P_{t+1})^{-1}P_{t+1}^l+(I_n-\gamma P_{t+1})^{-1}P_{t+1}^l-(I_n-\gamma P_{t+1})^{-1}P_{t+1}\big\Vert\\
&\leq \left\Vert (I_n-\gamma P_{t+1}^l)^{-1}P_{t+1}^l-(I_n-\gamma P_{t+1})^{-1}P_{t+1}^l\right\Vert+\left\Vert I_n-\gamma P_{t+1})^{-1}P_{t+1}^l-(I_n-\gamma P_{t+1})^{-1}P_{t+1}\right\Vert\\
&\leq \left\Vert (I_n-\gamma P_{t+1}^l)^{-1}-(I_n-\gamma P_{t+1})^{-1}\right\Vert\cdot\Vert P_{t+1}^l\Vert+\Vert (I_n-\gamma P_{t+1})^{-1}\Vert\cdot\Vert P_{t+1}^l-P_{t+1}\Vert.
\end{align*}
Here, the equality (1) follows by the definition of $\widetilde{P}_{t+1}^l$ and $\widetilde{P}_{t+1}$, and the fact that 
\begin{align*}
\widetilde{P}_{t} & = P_t+\gamma  P_t(I_n-\gamma P_t)^{-1}P_t\\
&=\left[I_n+\gamma P_t(I-\gamma P_t)^{-1}\right]P_t\\
&=\left[(I_n-\gamma P_t)(I_n-\gamma P_t)^{-1}+\gamma  P_t(I_n-\gamma  P_t)^{-1}\right]P_t\\
&=(I_n-\gamma P_t)^{-1}P_t.
\end{align*}
It follows from the fact that $E^{-1}-F^{-1}=E^{-1}(F-E)F^{-1}$ for any invertible matrix $E$ and $F$, 
\begin{equation*}
\begin{aligned}
&\left\Vert \widetilde{P}^l_{t+1}-\widetilde{P}_{t+1}\right\Vert\\
&\leq \left\Vert (I_n-\gamma P_{t+1}^l)^{-1}\gamma (P_{t+1}^l-P_{t+1})(I_n-\gamma P_{t+1})^{-1}\right\Vert\cdot\Vert P_{t+1}^l\Vert+\Vert (I_n-\gamma  P_{t+1})^{-1}\Vert\cdot\Vert P_{t+1}^l-P_{t+1}\Vert\\
&\overset{(2)}{\leq} \frac{1}{1-\gamma \Vert P_{t+1}^l\Vert}\cdot\frac{1}{1-\gamma \Vert P_{t+1}\Vert}\cdot\gamma \Vert P_{t+1}^l-P_{t+1}\Vert\cdot\Vert P_{t+1}^l\Vert+\frac{1}{1-\gamma \Vert P_{t+1}\Vert}\cdot\Vert P_{t+1}^l-P_{t+1}\Vert\\
&\leq \frac{1}{1-\gamma (W\epsilon_l+\Gamma)}\cdot \frac{1}{1-\gamma \Gamma}\cdot \gamma W\epsilon_l(W\epsilon_l+\Gamma)+\frac{1}{1-\gamma \Gamma}\cdot W\epsilon_l\\
&\overset{(3)}{\leq} \frac{1}{(1-\gamma \widetilde{\Gamma})^2}\gamma \widetilde{\Gamma}W\epsilon_l+\frac{1}{1-\gamma \widetilde{\Gamma}}W\epsilon_l\\
&=\left[\frac{\gamma \widetilde{\Gamma}}{(1-\gamma \widetilde{\Gamma})^2}+\frac{1}{1-\gamma \widetilde{\Gamma}}\right]W\epsilon_l\\
&=\frac{W\epsilon_l}{(1-\gamma \widetilde{\Gamma})^2},
\end{aligned}
\end{equation*}
where the inequality (2) holds by the fact that for any matrix $E\in R^{n\times n}$, if $\Vert E\Vert< 1$, then $\Vert (I_n-E)^{-1}\Vert\leq \frac{1}{1-\Vert E\Vert}$, and the inequality (3) holds because we assume that $\Vert P_{t+1}^l-P_{t+1}\Vert\leq W\epsilon_l\leq 1$. 

To bound $\Delta K_t^l$, we next bound $\left\Vert B^{\top}\widetilde{P}_{t+1} B-B^{l\top}\widetilde{P}^l_{t+1} B^l\right\Vert$ in view of the expressions in (\ref{eq:KKl}):
\begin{align*}
&\left\Vert B^{\top}\widetilde{P}_{t+1} B-B^{l\top}\widetilde{P}^l_{t+1} B^l\right\Vert\\
&\leq \left\Vert B^{\top}\widetilde{P}_{t+1} B-B^{\top}\widetilde{P}_{t+1} B^l\right\Vert+\left\Vert B^{\top}\widetilde{P}_{t+1} B^l-B^{\top}\widetilde{P}^l_{t+1}B^l\right\Vert+\left\Vert B^{\top}\widetilde{P}^l_{t+1} B^l-B^{l\top}\widetilde{P}^l_{t+1} B^l\right\Vert\\
&\leq \Vert B^{\top}\widetilde{P}_{t+1}\Vert\cdot \Vert B-B^l\Vert+\Vert B\Vert\cdot \Vert \widetilde{P}_{t+1}-\widetilde{P}_{t+1}^l\Vert\cdot\Vert B^l\Vert+\Vert B-B^l\Vert\cdot\Vert \widetilde{P}_{t+1}^l B^l\Vert\\
&\leq \epsilon_l \Gamma^2+\Gamma\mathcal{L}W\epsilon_l(\Gamma+\epsilon_l)+\epsilon_l(\mathcal{L}W\epsilon_l+\Gamma)(\Gamma+\epsilon_l)\\
&\overset{(4)}{\leq} W\epsilon_l(\widetilde{\Gamma}^2+\widetilde{\Gamma}^2\mathcal{L}+(\epsilon_l\mathcal{L}+\Gamma)\widetilde{\Gamma})\\
&\leq 2(\mathcal{L}+1)\widetilde{\Gamma}^2 W\epsilon_l,
\end{align*}
where inequality (4) holds by the fact that $W\epsilon_l\leq 1$. Similarly, we can derive that
\begin{align*}
&\left\Vert B^{\top}\widetilde{P}_{t+1} A-B^{l\top}\widetilde{P}^l_{t+1} A^l\right\Vert\\
&\leq \left\Vert B^{\top}\widetilde{P}_{t+1} A-B^{\top}\widetilde{P}_{t+1} A^l\right\Vert+\left\Vert B^{\top}\widetilde{P}_{t+1} A^l-B^{\top}\widetilde{P}^l_{t+1} A^l\right\Vert+\left\Vert B^{\top}\widetilde{P}^l_{t+1} A^l-B^{l\top}\widetilde{P}^l_{t+1} A^l\right\Vert\\
&\leq 2(\mathcal{L}+1)\widetilde{\Gamma}^2 W\epsilon_l.
\end{align*}
Then, following a similar argument as in Lemma~2 of \cite{mania2019certainty}, we can obtain 
\begin{align*}
\Vert  \Delta K_t^l \Vert  = \Vert K_t-K_t^l\Vert\leq 2(L+1)\widetilde{\Gamma}^3W\epsilon_l.
\end{align*}

Next we proceed to bound $\Vert P^l_t-P_t\Vert$. Recall that
\begin{align*}
&P_t=Q+K_t^{\top}R K_t+(A+BK_t)^{\top}\widetilde{P}_{t+1}(A+BK_t),\\
&P_t^{l}=Q+K_t^{l\top}RK_t^{l}+(A^l+B^l K_t^{l})^{\top}\widetilde{P}^{l}_{t+1}(A^l+B^lK_t^{l}).
\end{align*}
We can directly compute that
\begin{align*}
&\left\Vert A+BK_t-A^l-B^lK^l_t\right\Vert\\
&\leq \left\Vert A-A^l\right\Vert+\left\Vert BK_t-BK^l_t\right\Vert+\left\Vert BK^l_t-B^lK^l_t\right\Vert\\
&\leq \epsilon_l+\Gamma \mathcal{V}W\epsilon_l+\epsilon_l(\mathcal{V}W\epsilon_l+\Gamma)\\
&\overset{(5)}{\leq} \epsilon_l+\Gamma\mathcal{V}W\epsilon_l+\widetilde{\Gamma}\mathcal{V}W\epsilon_l\\
&\leq 2\mathcal{V}\widetilde{\Gamma}W\epsilon_l,
\end{align*}
where inequality (5) holds by the fact that $\epsilon_l(\mathcal{V}W\epsilon_l+\Gamma)\leq \mathcal{V}W\epsilon_l+\Gamma\epsilon_lW\mathcal{V}$ when both $W$ and $\mathcal{V}$ are larger than $1$. Similarly, we can derive that
\begin{align*}
&\left\Vert K_t^{\top}RK_t-K_t^{l\top} RK^l_t\right\Vert\\
&\leq \left\Vert K_t^{\top}RK_t-K_t^{\top} RK^l_t\right\Vert+\left\Vert K_t^{\top}RK^l_t-K_t^{l\top} RK^l_t\right\Vert\\
&\leq \Gamma^2\mathcal{V}W\epsilon_l+\mathcal{V}W\epsilon_l\Gamma (\mathcal{V}W\epsilon_l+\Gamma)\\
&\leq 2\mathcal{V}^2\widetilde{\Gamma}^2W\epsilon_l.
\end{align*}
In addition, we can derive that
\begin{align*}
&\left\Vert (A+BK_t)^{\top}\widetilde{P}_{t+1}(A+BK_t)-(A+BK^l_t)^{\top}\widetilde{P}^l_{t+1}(A+BK^l_t)\right\Vert\\
&\leq \left\Vert (A+BK_t)^{\top}\widetilde{P}_{t+1}(A+BK_t)-(A+BK^l_t)^{\top}\widetilde{P}_{t+1}(A+BK_t)\right\Vert\\
&\quad+\left\Vert (A+BK_t^l)^{\top}\widetilde{P}_{t+1}(A+BK_t)-(A+BK^l_t)^{\top}\widetilde{P}^l_{t+1}(A+BK_t)\right\Vert\\
&\quad+\left\Vert (A+BK^l_t)^{\top}\widetilde{P}^l_{t+1}(A+BK_t)-(A+BK^l_t)^{\top}\widetilde{P}^l_{t+1}(A+BK^l_t)\right\Vert\\
&\leq 2\mathcal{V}\widetilde{\Gamma}^4W\epsilon_l+2\mathcal{L}\mathcal{V}\widetilde{\Gamma}^4W\epsilon_l+4\mathcal{V}^2\mathcal{L}\widetilde{\Gamma}^4W\epsilon_l\\
&\leq 8\mathcal{V}^2\mathcal{L}\widetilde{\Gamma}^4W\epsilon_l.
\end{align*}
It then follows that
\begin{align*}
\Vert P^l_t-P_t\Vert\leq 10 \mathcal{V}^2\mathcal{L}\widetilde{\Gamma}^4 W\epsilon_l.
\end{align*}
The proof is therefore complete.
\end{proof}

With Lemma \ref{L6}, we are now ready to prove Lemma \ref{L7}. 

\begin{proof}[Proof of Lemma~\ref{L7}]
By definition we know that $P_T^l=P_T=Q_T$, and thus we have $\Vert P_T^l-P_T\Vert\leq \epsilon_l$. By Lemma \ref{L6}, we can derive that at time $T-1$,
\begin{align*}
&\Vert K_{T-1}^l-K_{T-1}\Vert\leq \mathcal{V}\epsilon_l,\\
&\Vert P_{T-1}^l-P_{T-1}\Vert\leq (10\mathcal{V}^2\mathcal{L}\widetilde{\Gamma}^4)\epsilon_l,
\end{align*}
which implies that
\begin{align*}
&\Vert K_{T-2}^l-K_{T-2}\Vert\leq (10\mathcal{V}^2\mathcal{L}\widetilde{\Gamma}^4)\mathcal{V}\epsilon_l,\\
&\Vert P_{T-2}^l-P_{T-2}\Vert\leq (10\mathcal{V}^2\mathcal{L}\widetilde{\Gamma}^4)^{2}\epsilon_l. 
\end{align*}
Applying Lemma \ref{L6} recursively, we obtain for any $t=0,\cdots,T-1$. 
\begin{align*}
&\Vert K_t^l-K_t\Vert\leq (10\mathcal{V}^2\mathcal{L}\widetilde{\Gamma}^4)^{T-t-1}\mathcal{V}\epsilon_l,\\
&\Vert P_t^l-P_t\Vert\leq (10\mathcal{V}^2\mathcal{L}\widetilde{\Gamma}^4)^{T-t}\epsilon_l,
\end{align*}
which completes the proof.
\end{proof}

\subsection{Suboptimality Gap Due to the Controller Mismatch}\label{C}
In this section, we will simplify the performance gap between the total cost under policy $\pi^{l,k}$ and the total cost under the optimal policy. We recall the corresponding total cost under entropic risk,  
\begin{align*}
J_0^{\pi^{l,k}}\left(x^{l,k}_0\right)=\frac{1}{\gamma}\log \mathbb{E}\exp\left( \frac{\gamma}{2}\left( \sum_{t=0}^{T-1}\left(x_t^{l,k\top}Qx^{l,k}_t+u_t^{l,k\top}R u^{l,k}_t\right)+x_T^{l,k\top}Q_Tx_{T}^{l,k}\right)\right),
\end{align*}
where $u_t^{l,k}=K_t^lx_t^{l,k}$, and $K_t^l$ is obtained by substituting $(A^l,B^l)$ into \eqref{eq:reccati}.

Let $\mathbb{H}_t^{l,k}$ be the set of possible histories up to the $t$-th step in the $k$-th episode of epoch $l$. Then, one sample of the history up to the $t$-th step in the $k$-th episode of epoch $l$ is 
\begin{align*}
H_t^{l,k}=\left(x_0^{1,1},u_0^{1,1},\cdots,x_T^{1,1},x_0^{1,2},\cdots,x_0^{2,1},\cdots,x_T^{2,1},\cdots,x_0^{l,k},\cdots,x_t^{l,k},u_t^{l,k}\right).
\end{align*}
We also introduce some new notations, which will be heavily used in the regret analysis. For any $t=1,\cdots,T-2$, we define the following recursive equations:
\begin{equation}\label{RICA}
\begin{aligned}
&D_{T-1}^l=\Delta K_{T-1}^{l\top}(R+B^{\top}\widetilde{P}_TB)\Delta K_{T-1}^l,\\
&\widetilde{D}_{T-1}^l=(I_n-\gamma D_{T-1}^l)^{-1}D_{T-1}^l,\\
& D_t^l=\Delta K_t^{l\top}\left(R+B^{\top}\widetilde{P}_{t+1}B\right)\Delta K_t^l+(A+BK_t^l)^{\top}\widetilde{D}_{t+1}^l(A+BK_t^l),\\
&\widetilde{D}_t^l=(I_n-\gamma D_t^l)^{-1}D_t^l,\\
&D_0^l=\Delta K_0^{l\top}\left(R+B^{\top}\widetilde{P}_{1}B\right)\Delta K_0^l+(A+BK_0^l)^{\top}\widetilde{D}_{1}^l(A+BK_0^l),
\end{aligned}
\end{equation}
where $\Delta K_t^l=K_t^l-K_t$ and $\widetilde{P}_T$ is defined in (\ref{eq:reccati}). In the following parts, we still consider the risk-averse setting, where $\gamma>0$. The following proposition is the key result of this section.
\begin{proposition}\label{LP2}
We can simplify the performance gap in the $k$-th episode of epoch $l$ to
\begin{align}\label{pergap}
&J_0^{\pi^{l,k}}(x_0^{l,k})-J^{\star}_0(x_0^{l,k})=-\frac{1}{2\gamma}\sum_{t=1}^{T-1}\log\left(\det\left(I_n-\gamma D_t^l\right)\right)+\frac{1}{2}x_0^{l,k\top}D_0^l x_0^{l,k}.
\end{align}
where $D_t^l$ is defined in (\ref{RICA}).
\end{proposition}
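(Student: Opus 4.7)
The plan is to prove the identity in two phases: first reduce the performance gap to the conditional logarithmic moment generating function of a single quadratic form evaluated along the executed trajectory, and then evaluate that expectation by backward Gaussian integration that reproduces the recursion defining $D_t^l$ in \eqref{RICA}. For the first phase I would combine Jacobson's representation (Lemma \ref{L1}) with a Fazel-style performance-difference argument (in the spirit of Lemma 10 of \citet{fazel2018global}) to obtain
\begin{equation*}
J_0^{\pi^{l,k}}(x_0^{l,k})-J^{\star}_0(x_0^{l,k})=\frac{1}{\gamma}\log \mathbb{E}\left[\exp\left(\frac{\gamma}{2}\sum_{t=0}^{T-1} x_t^{l,k\top} U_t^l x_t^{l,k}\right)\bigg|\, H_0^{l}\right],
\end{equation*}
where $U_t^l := \Delta K_t^{l\top}(R+B^\top \widetilde{P}_{t+1}B)\Delta K_t^l$ is the per-step advantage expressed through the Riccati matrices of \eqref{eq:reccati}. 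Intuitively, because the optimal risk-sensitive cost-to-go is quadratic with matrix $P_t$ (and $\widetilde{P}_{t+1}$ after absorbing one step of noise), the per-step Bellman gap incurred by applying $K_t^l$ in place of $K_t$ depends only on the mismatch $\Delta K_t^l$ and equals $x_t^{l,k\top} U_t^l x_t^{l,k}$.

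For the second phase the key tool is the Gaussian moment generating identity: for $w\sim \mathcal{N}(0,I_n)$, a deterministic $y$, and a symmetric $M$ with $I_n-\gamma M\succ 0$,
\begin{equation*}
\mathbb{E}\exp\!\left(\tfrac{\gamma}{2}(y+w)^\top M(y+w)\right)=\det(I_n-\gamma M)^{-1/2}\exp\!\left(\tfrac{\gamma}{2}\,y^\top (I_n-\gamma M)^{-1}M\, y\right),
\end{equation*}
which follows by completing the square in $w$ and using the algebraic identity $M+\gamma M(I_n-\gamma M)^{-1}M = (I_n-\gamma M)^{-1}M$. Starting from the innermost conditional expectation over $w_{T-2}$ and using $x_{T-1}^{l,k}=(A+BK_{T-2}^l)x_{T-2}^{l,k}+w_{T-2}$ with $M=U_{T-1}^l=D_{T-1}^l$, this identity produces the factor $\det(I_n-\gamma D_{T-1}^l)^{-1/2}$ and an additional quadratic $\tfrac{\gamma}{2}\,x_{T-2}^{l,k\top}(A+BK_{T-2}^l)^\top \widetilde{D}_{T-1}^l(A+BK_{T-2}^l)x_{T-2}^{l,k}$; combining it with the $U_{T-2}^l$ term in the sum exactly yields $D_{T-2}^l$ as defined in \eqref{RICA}. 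Iterating this backward from $t=T-1$ down to $t=1$ collapses the expectation to
\begin{equation*}
\mathbb{E}\left[\exp\!\left(\tfrac{\gamma}{2}\sum_{t=0}^{T-1} x_t^{l,k\top} U_t^l x_t^{l,k}\right)\bigg|\, x_0^{l,k}\right]=\prod_{t=1}^{T-1}\det(I_n-\gamma D_t^l)^{-1/2}\cdot \exp\!\left(\tfrac{\gamma}{2}\,x_0^{l,k\top}D_0^l x_0^{l,k}\right),
\end{equation*}
and taking $\tfrac{1}{\gamma}\log$ delivers the claimed identity.

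The main technical care is ensuring $I_n-\gamma D_t^l \succ 0$ at every $t$, without which the Gaussian integrals would diverge and $\log\det(I_n-\gamma D_t^l)$ would be ill-defined. I would handle this using the perturbation bounds in Lemma \ref{L7}: since $\Delta K_t^l = O(\epsilon_l)$, the matrix $U_t^l$ is $O(\epsilon_l^2)$, and the recursion \eqref{RICA} together with the standing assumption $1-\gamma\widetilde{\Gamma}>0$ guarantees $\gamma\|D_t^l\|<1$ for sufficiently small $\epsilon_l$. The rest of the argument is bookkeeping: check that the matrices $D_t^l$ and $\widetilde{D}_t^l=(I_n-\gamma D_t^l)^{-1}D_t^l$ generated by the backward integration match \eqref{RICA}, and that the $H_0^{l}$-conditioning inherited from phase one is compatible with the initial-state conditioning used in phase two.
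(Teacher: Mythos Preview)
Your proposal is correct and follows essentially the same approach as the paper: first use Lemma~\ref{L1} together with a Fazel-style cost-difference/telescoping argument to reduce the gap to $\tfrac{1}{\gamma}\log\mathbb{E}[\exp(\tfrac{\gamma}{2}\sum_t x_t^\top U_t^l x_t)]$ with $U_t^l=\Delta K_t^{l\top}(R+B^\top\widetilde P_{t+1}B)\Delta K_t^l$ (this is the paper's Lemma~\ref{L2}), and then repeatedly apply the Gaussian MGF identity of Lemma~\ref{L1} backward in $t$ to obtain the recursion \eqref{RICA} and the product of determinants. The only cosmetic difference is the conditioning set ($H_0^l$ versus $(x_0^{l,k},H_T^{l,k-1})$), which you already flag and which does not affect the argument.
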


In order to prove Proposition \ref{LP2}, we introduce Lemma \ref{L1}, see p.8 of \cite{jacobson1973optimal}.

\begin{lemma}[\cite{jacobson1973optimal}]\label{L1}
Consider the linear dynamic system $x_{t+1}=Ax_t+Bu_t+w_t,w_t\sim \mathcal{N}(0, I_n), t=0,\cdots,T-1$. For any sequence of positive semidefinite matrix $E_{t+1}$ satisfying $I_n-\gamma E_{t+1}\succ 0$, we have
\begin{align*}
&\mathbb{E}\left[\exp\left(\frac{\gamma}{2}x_{t+1}^{\top}E_{t+1}x_{t+1}\right)\Big\vert x_t,u_t\right]\\
&=\left(\det (I_n-\gamma E_{t+1})\right)^{-\frac{1}{2}}\exp\left(\frac{\gamma}{2}(A x_t+Bu_t)^{\top}\widetilde{E}_{t+1}(Ax_t+Bu_t)\right),
\end{align*}  
where $\widetilde{E}_{t+1}=E_{t+1}+\gamma  E_{t+1} (I_n-\gamma E_{t+1})^{-1}E_{t+1}$.
\end{lemma}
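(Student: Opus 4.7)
The plan is to verify Lemma~\ref{L1} by a direct Gaussian integral computation. Conditional on $(x_t,u_t)$, we have $x_{t+1} = \mu + w_t$ where $\mu := Ax_t + Bu_t$ is deterministic and $w_t \sim \mathcal{N}(0,I_n)$. So the conditional expectation in question equals
\begin{equation*}
\int_{\mathbb{R}^n} (2\pi)^{-n/2} \exp\!\left(-\tfrac{1}{2}(x-\mu)^\top (x-\mu) + \tfrac{\gamma}{2}x^\top E_{t+1} x\right) dx.
\end{equation*}
Writing $E := E_{t+1}$ and $M := I_n - \gamma E$ (which is positive definite by hypothesis), I would expand the quadratic in the exponent as $-\tfrac{1}{2}x^\top M x + x^\top \mu - \tfrac{1}{2}\mu^\top \mu$ and complete the square, rewriting it as
\begin{equation*}
-\tfrac{1}{2}(x - M^{-1}\mu)^\top M (x - M^{-1}\mu) + \tfrac{1}{2}\mu^\top M^{-1}\mu - \tfrac{1}{2}\mu^\top \mu.
\end{equation*}

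The integral of the first piece is a standard Gaussian integral with covariance $M^{-1}$, yielding $(2\pi)^{n/2}\det(M)^{-1/2}$. Combined with the normalization constant, this contributes the factor $\det(I_n - \gamma E)^{-1/2}$, exactly as claimed. It remains to identify the leftover exponent $\tfrac{1}{2}\mu^\top(M^{-1} - I_n)\mu$ with $\tfrac{\gamma}{2}\mu^\top \widetilde{E}\mu$.

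For this algebraic step, I would use the identity $M^{-1} - I_n = M^{-1}(I_n - M) = \gamma M^{-1} E = \gamma (I_n - \gamma E)^{-1}E$. Since $E$ is symmetric and commutes with every polynomial in $E$ (in particular with $(I_n-\gamma E)^{-1}$), one has $(I_n-\gamma E)^{-1}E = E + \gamma E(I_n-\gamma E)^{-1}E$, which is precisely the definition of $\widetilde{E}_{t+1}$. Substituting back, the leftover exponent becomes $\tfrac{\gamma}{2}\mu^\top \widetilde{E}_{t+1}\mu = \tfrac{\gamma}{2}(Ax_t+Bu_t)^\top \widetilde{E}_{t+1}(Ax_t+Bu_t)$, completing the proof.

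There is no serious obstacle: the proof is a textbook completion-of-the-square argument for a Gaussian moment generating function of a quadratic form. The only point requiring a line of justification is the commutativity used to convert $\gamma (I_n-\gamma E)^{-1}E$ into the stated symmetric expression for $\widetilde{E}_{t+1}$; this follows immediately from $E(I_n-\gamma E) = (I_n-\gamma E)E$. The hypothesis $I_n - \gamma E_{t+1} \succ 0$ is used in two places: to ensure $M$ is invertible (and the integral converges), and to keep $\det(I_n - \gamma E_{t+1})$ positive so that the square root on the right-hand side is well defined.
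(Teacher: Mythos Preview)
Your proof is correct. The paper does not supply its own proof of this lemma; it simply cites Jacobson (1973) and uses the statement as a black box. Your completion-of-the-square computation is the standard derivation, and all steps are valid: the hypothesis $I_n-\gamma E_{t+1}\succ 0$ guarantees convergence of the Gaussian integral and invertibility of $M$, and the identity $(I_n-\gamma E)^{-1}E = E+\gamma E(I_n-\gamma E)^{-1}E$ (which the paper itself uses elsewhere, in the proof of Lemma~\ref{L6}) gives exactly the claimed $\widetilde{E}_{t+1}$.
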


We apply Lemma \ref{L1} to simplify the performance gap $J_0^{\pi^{l,k}}(x^{l,k}_0)-J_0^{\star}(x^{l,k}_0)$ in the $k$-th episode of epoch $l$ in the following lemma.
\begin{lemma}\label{L2}
We can simplify the performance gap as 
\begin{equation}\label{LS1}
\begin{aligned}
&J_0^{\pi^{l,k}}(x^{l,k}_0)-J_0^{\star}(x^{l,k}_0)=\frac{1}{\gamma}\log \mathbb{E}\Bigg[\exp\bigg( \frac{\gamma}{2}\sum_{t=0}^{T-1}x_t^{l,k\top}\Delta K_t^{l\top}(R+B^{\top}\widetilde{P}_{t+1}B)\Delta K_t^l x_t^{l,k}\bigg)\Bigg\vert x_0^{l,k},H_T^{l,k-1}\Bigg],
\end{aligned}
\end{equation}
where $\Delta K_t^l=K^l_t-K_t$.
\end{lemma}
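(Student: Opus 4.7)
My plan is to combine Jacobson's closed-form Gaussian integral (Lemma~\ref{L1}) with a Fazel-style performance-difference telescoping, adapted from the additive risk-neutral setting to the multiplicative/exponential domain natural to LEQR. The argument has three ingredients: a closed form for the optimal exponential value function, a one-step ``advantage'' identity, and a telescoping along the trajectory produced by $\pi^{l,k}$. First I would define $V^{\star}_{t}(x):=\mathbb{E}_{\pi^{\star}}\!\left[\exp\!\left(\tfrac{\gamma}{2}\sum_{s=t}^{T-1}c_{s}(x_{s},u_{s}^{\star})+\tfrac{\gamma}{2}x_{T}^{\top}Q_{T}x_{T}\right)\big|x_{t}=x\right]$. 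Starting from $V^{\star}_{T}(x)=\exp(\tfrac{\gamma}{2}x^{\top}Q_{T}x)$ and iterating backward through the Bellman equation $V^{\star}_{t}(x)=\exp(\tfrac{\gamma}{2}c_{t}(x,K_{t}x))\,\mathbb{E}[V^{\star}_{t+1}((A+BK_{t})x+w_{t})]$, one application of Lemma~\ref{L1} per step (with $E_{t+1}=P_{t+1}$) yields $V^{\star}_{t}(x)=C_{t}\exp(\tfrac{\gamma}{2}x^{\top}P_{t}x)$, where $(P_{t})$ is produced by the Riccati recursion (\ref{eq:reccati}) and $C_{t}=\prod_{s=t+1}^{T}\det(I-\gamma P_{s})^{-1/2}$; in particular $J^{\star}_{0}(x)=\tfrac{1}{\gamma}\log V^{\star}_{0}(x)$.

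Next, I would establish the one-step advantage identity
\[
\exp\!\left(\tfrac{\gamma}{2}c_{t}(x,K^{l}_{t}x)\right)\mathbb{E}\!\left[V^{\star}_{t+1}\!\left((A+BK^{l}_{t})x+w_{t}\right)\right]=V^{\star}_{t}(x)\exp\!\left(\tfrac{\gamma}{2}x^{\top}U^{l}_{t}x\right),
\]
with $U^{l}_{t}:=\Delta K_{t}^{l\top}(R+B^{\top}\widetilde{P}_{t+1}B)\Delta K_{t}^{l}$. Invoking Lemma~\ref{L1} with $E_{t+1}=P_{t+1}$ on the left-hand side reduces its exponent to $\tfrac{\gamma}{2}x^{\top}\big[Q+K_{t}^{l\top}RK_{t}^{l}+(A+BK^{l}_{t})^{\top}\widetilde{P}_{t+1}(A+BK^{l}_{t})\big]x$. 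Writing $K^{l}_{t}=K_{t}+\Delta K^{l}_{t}$, expanding, and cancelling the two linear cross terms $\Delta K_{t}^{l\top}\big[RK_{t}+B^{\top}\widetilde{P}_{t+1}(A+BK_{t})\big]$ and its transpose via the first-order optimality condition for $K_{t}$ in (\ref{eq:reccati}) collapses the bracket to $P_{t}+U^{l}_{t}$. The identity then follows after matching constants using $C_{t+1}\det(I-\gamma P_{t+1})^{-1/2}=C_{t}$; this is the LEQR analog of Lemma~10 of \cite{fazel2018global}.

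Finally, I would telescope along $\pi^{l,k}$. Setting $\mathcal{F}_{t}:=\sigma(H^{l,k}_{t})$ and $\Lambda_{t}:=V^{\star}_{t}(x^{l,k}_{t})\exp\!\big(\tfrac{\gamma}{2}\sum_{s=0}^{t-1}c_{s}(x^{l,k}_{s},K^{l}_{s}x^{l,k}_{s})\big)$, the advantage identity evaluated at $x=x^{l,k}_{t}$ yields $\mathbb{E}[\Lambda_{t+1}\mid\mathcal{F}_{t}]=\Lambda_{t}\,\exp(\tfrac{\gamma}{2}x^{l,k\top}_{t}U^{l}_{t}x^{l,k}_{t})$. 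Since $\Lambda_{0}=V^{\star}_{0}(x^{l,k}_{0})$ and $\Lambda_{T}$ equals $\exp(\tfrac{\gamma}{2}\,\text{total cost under }\pi^{l,k})$, iterating this multiplicative recursion via the tower property and collecting the $\mathcal{F}_{t}$-measurable quadratic factors produces
\[
\mathbb{E}\!\left[\exp(\tfrac{\gamma}{2}\,\text{total cost})\,\big|\,x^{l,k}_{0},H^{l,k-1}_{T}\right]=V^{\star}_{0}(x^{l,k}_{0})\,\mathbb{E}\!\left[\exp\!\left(\tfrac{\gamma}{2}\!\sum_{t=0}^{T-1}x^{l,k\top}_{t}U^{l}_{t}x^{l,k}_{t}\right)\Big|x^{l,k}_{0},H^{l,k-1}_{T}\right],
\]
and taking $\tfrac{1}{\gamma}\log$ and using $\tfrac{1}{\gamma}\log V^{\star}_{0}=J^{\star}_{0}$ yields Lemma~\ref{L2}. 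The main obstacle is precisely this telescoping: unlike the additive Fazel argument for risk-neutral LQR, here the per-step advantages enter multiplicatively under an exponential, so the factor $\exp(\tfrac{\gamma}{2}\sum_{t}x^{l,k\top}_{t}U^{l}_{t}x^{l,k}_{t})$ cannot be trivially factored out of the expectation; carefully tracking which factors are $\mathcal{F}_{t}$-measurable at each level of the successive conditional expectations, so that each per-step advantage may be pulled out exactly when its conditioning is invoked, is the key technical step.
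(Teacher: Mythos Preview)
Your approach mirrors the paper's exactly: both establish the one-step identity---what you package as $\mathbb{E}[\Lambda_{t+1}\mid\mathcal{F}_t]=\Lambda_t G_t$, and what the paper writes as $\mathbb{E}[\exp(\gamma\cdot\text{term }t)\mid H_t^{l,k}]=\exp\big(\tfrac{\gamma}{2}x_t^{\top}U_t^l x_t\big)$---via Lemma~\ref{L1} together with the first-order optimality condition for $K_t$, and both then claim the result by telescoping/``substituting''. Your $\Lambda_t$ formulation is a cleaner restatement of the paper's per-term computation in~(\ref{LS2})--(\ref{LEM36}), but the content is the same.

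The genuine gap---which you rightly flag as the ``main obstacle'' but do not close, and which the paper passes over with ``substituting (\ref{LEM36}) into (\ref{LS2}) and then substituting (\ref{LS2}) into (\ref{LEM34})''---is that the one-step recursion does \emph{not} iterate via the tower property to $\mathbb{E}[\Lambda_T\mid\mathcal{F}_0]=\Lambda_0\,\mathbb{E}\big[\prod_t G_t\,\big|\,\mathcal{F}_0\big]$. One tower step gives $\mathbb{E}[\Lambda_T\mid\mathcal{F}_0]=\mathbb{E}[\Lambda_{T-1}G_{T-1}\mid\mathcal{F}_0]$, but $G_{T-1}=\exp\big(\tfrac{\gamma}{2}x_{T-1}^{\top}U_{T-1}^l x_{T-1}\big)$ is not $\mathcal{F}_{T-2}$-measurable; at the next level you must integrate $V^{\star}_{T-1}(x_{T-1})G_{T-1}\propto\exp\big(\tfrac{\gamma}{2}x_{T-1}^{\top}(P_{T-1}+U_{T-1}^l)x_{T-1}\big)$ \emph{jointly}. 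By Lemma~\ref{L1} this yields the determinant and the ``tilde'' of $P_{T-1}+U_{T-1}^l$, not the product of the separate contributions from $P_{T-1}$ and $U_{T-1}^l$ that the right-hand side of~(\ref{LS1}) would require (for instance $\det(I-\gamma(P_{T-1}+U_{T-1}^l))\neq\det(I-\gamma P_{T-1})\det(I-\gamma U_{T-1}^l)$ in general). This multiplicative coupling between $V^{\star}_{t}(x_t)$ and $G_t$ is precisely what distinguishes the $\gamma>0$ case from the additive risk-neutral Fazel argument; your phrase ``each per-step advantage may be pulled out exactly when its conditioning is invoked'' does not resolve it, because once $G_{T-1}$ has been produced at level $\mathcal{F}_{T-1}$ it is still random at level $\mathcal{F}_{T-2}$ and re-entangles with $\Lambda_{T-1}$.
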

\begin{proof}
Denote $J_t(x_t^{l,k})=\frac{1}{2}\left(x_t^{l,k\top}P_tx^{l,k}_t-\sum_{i=t}^{T-1}\frac{1}{\gamma}\log\det\left(I-\gamma P_{t+1}\right)\right), t=0,\cdots,T-1$, which is the dynamic programming equations of LEQR problem. When $t=T$, $J_T(x_T^{l,k})=x_T^{l,k\top}Q_Tx_T^{l,k}$.

By the definition of $J_0^{\pi^{l,k}}(x^{l,k}_0)$ and $J_0^{\star}(x^{l,k}_0)$, we have
\begin{equation}\label{LEM34}
\begin{aligned}
&J_0^{\pi^{l,k}}(x^{l,k}_0)-J_0^{\star}(x^{l,k}_0)\\
&=\frac{1}{\gamma}\log \mathbb{E} \left[\exp\left(\frac{\gamma}{2} \left(\sum_{t=0}^{T-1}\left (x_t^{l,k\top}Qx^{l,k}_t+u_t^{l,k\top}R u^{l,k}_t\right )+x_T^{l,k\top}Q_T x^{l,k}_T\right)\right)\Bigg\vert x_0^{l,k},H_T^{l,k-1}\right]-J_0(x^{l,k}_0)\\
&=\frac{1}{\gamma}\log \mathbb{E} \Bigg[\exp\Bigg(\frac{\gamma}{2} \Bigg(\sum_{t=0}^{T-1}\left(\left (x_t^{l,k\top}Qx^{l,k}_t+u_t^{l,k\top}R u^{l,k}_t\right )+J_t(x^{l,k}_t)-J_t(x^{l,k}_t)\right)\\
&\qquad\qquad\qquad\quad+x_T^{l,k\top}Q_T x^{l,k}_T\Bigg)\Bigg)\Bigg\vert x_0^{l,k},H_T^{l,k-1}\Bigg]-J_0(x^{l,k}_0).
\end{aligned}
\end{equation}
Recall that $J_T(x_T^{l,k})=x_T^{l,k\top}Q_Tx_T^{l,k}$, we have
\begin{equation*}
\begin{aligned}
&J_0^{\pi^{l,k}}(x^{l,k}_0)-J_0^{\star}(x^{l,k}_0)\\
&=\frac{1}{\gamma}\log \mathbb{E} \Bigg[\exp\Bigg(\frac{\gamma}{2} \Bigg(\sum_{t=0}^{T-1}\left(\left (x_t^{l,k\top}Qx^{l,k}_t+u_t^{l,k\top}R u^{l,k}_t\right )+J_t(x^{l,k}_t)-J_t(x^{l,k}_t)\right)\\
&\qquad\qquad\qquad\quad+J_T(x_T^{l,k})\Bigg)\Bigg)\Bigg\vert x_0^{l,k},H_T^{l,k-1}\Bigg]-\frac{1}{\gamma}\log\left(\exp(J_0(x_0^{l,k}))\right)\\
&\overset{(1)}{=}\frac{1}{\gamma}\log \mathbb{E} \left[\exp \left(\frac{\gamma}{2} \sum_{t=0}^{T-1}\left(\left (x_t^{l,k\top}Qx^{l,k}_t+u_t^{l,k\top}R u^{l,k}_t\right )+J_{t+1}(x^{l,k}_{t+1})-J_t(x^{l,k}_t)\right)\right)\Bigg\vert x_0^{l,k},H_T^{l,k-1}\right]\\
&\overset{(2)}{=}\frac{1}{\gamma}\log \mathbb{E} \Bigg[\exp \Bigg( \gamma\sum_{t=0}^{T-1}\Bigg(\frac{1}{2} x_t^{l,k\top}(Q+K_t^{l\top}RK^l_t)x^{l,k}_t+\frac{1}{2}x^{l,k\top}_{t+1}P_{t+1}x^{l,k}_{t+1}-\frac{1}{2}x^{l,k\top}_{t}P_{t}x^{l,k}_{t}\\
&\qquad\qquad\qquad\quad+\frac{1}{2\gamma}\log\det (I_n-\gamma  P_{t+1})\Bigg)\Bigg)\Bigg\vert x_0^{l,k},H_T^{l,k-1}\Bigg],    
\end{aligned}
\end{equation*}
where equality (1) holds by canceling out the $J_0(x_0^{l,k})$ inside and outside the entropic risk, and equality (2) follows from the definition of the total cost under entropic risk and $u_t^{l,k}=K_t^lx_t^{l,k}$. By the law of total expectation, i.e. $\mathbb{E}[X\vert Z]=\mathbb{E}[\mathbb{E}[X\vert Y,Z]\vert Z]$ for any random variables $X,Y,Z$, we consider the conditional expectation 
\begin{equation}\label{LS2}
\begin{aligned}
&\mathbb{E}\Bigg[ \exp \Bigg( \gamma\Big(\frac{1}{2} x_t^{l,k\top}(Q+K_t^{l\top}RK^l_t)x^{l,k}_t+\frac{1}{2}x^{l,k\top}_{t+1}P_{t+1}x^{l,k}_{t+1}-\frac{1}{2}x^{l,k\top}_{t}P_{t}x^{l,k}_{t}\\
&\qquad\quad+\frac{1}{2\gamma}\log\det (I_n-\gamma  P_{t+1})\Big)\Bigg)\Bigg\vert H_t^{l,k} \Bigg]\\
&=\exp \Bigg( \gamma\Bigg(\frac{1}{2} x_t^{l,k\top}(Q+K_t^{l\top}R K^l_t)x^{l,k}_t-\frac{1}{2}x^{l,k\top}_{t}P_{t}x^{l,k}_{t}+\frac{1}{2\gamma}\log\det (I_n-\gamma  P_{t+1})\Bigg)\Bigg)\\
&
\quad\times \mathbb{E}\left[\exp \left (\frac{\gamma}{2}x_{t+1}^{l,k\top}P_{t+1}x^{l,k}_{t+1}\right )\Bigg\vert H_t^{l,k}\right]\\
&\overset{(3)}{=}\exp \Bigg( \gamma\Bigg(\frac{1}{2} x_t^{l,k\top}(Q+K_t^{l\top}RK^l_t)x^{l,k}_t-\frac{1}{2}x^{l,k\top}_{t}P_{t}x^{l,k}_{t}+\frac{1}{2\gamma}\log\det (I_n-\gamma P_{t+1})\Bigg)\Bigg)\\
&\quad\times (\det (I_n-\gamma P_{t+1}))^{-1/2}\exp \Bigg[\frac{\gamma}{2}\Bigg(x_t^{l,k\top}(A+BK^l_t)^{\top}\widetilde{P}_{t+1}(A+BK^l_t)x^{l,k}_t\Bigg)\Bigg]\\
&=\exp\left[ \frac{\gamma}{2}\Big(x_t^{l,k\top}(Q+K_t^{l\top}RK^l_t+(A+BK^l_t)^{\top}\widetilde{P}_{t+1}(A+BK^l_t))x^{l,k}_t-x_t^{l,k\top}P_tx^{l,k}_t\Big)\right],\\
\end{aligned}
\end{equation}
where the equality (3) follows from Lemma \ref{L1}.

Recall that $\Delta K_t^l=K_t^l-K_t$ and $P_t=Q+K_t^{\top}RK_t+(A+BK_t)^{\top}\widetilde{P}_{t+1}(A+BK_t)$. Then the RHS of Equation (\ref{LS2}) becomes
\begin{equation}\label{LEM36}
\begin{aligned}
&\exp\Big[\frac{\gamma}{2}x_t^{l,k\top}\Big( Q+(\Delta K_t^l+K_t)^{\top}R(\Delta K_t^l+K_t)\\
&\qquad+(A+B(\Delta K_t^l+K_t))^{\top}\widetilde{P}_{t+1}(A+B(\Delta K_t^l+K_t))\Big)x_t^{l,k}-\frac{\gamma}{2}x_t^{l,k\top}P_t x_t^{l,k}\Big]\\
&=\exp\Big[ \frac{\gamma}{2}x_t^{l,k\top}\Delta K_t^{l\top}(R+B^{\top}\widetilde{P}_{t+1}B)\Delta K_t^lx_t^{l,k}+\gamma x_t^{l,k\top}\Delta K_t^{l\top}\big((R+B^{\top}\widetilde{P}_{t+1}B)K_t+B^{\top}\widetilde{P}_{t+1}A\big)x_t^{l,k}\Big]\\
&\overset{(4)}{=}\exp\Big[\frac{\gamma}{2}x_t^{l,k\top}\Delta K_t^{l\top}(R+B^{\top}\widetilde{P}_{t+1}B)\Delta K_t^lx_t^{l,k}\Big],
\end{aligned}
\end{equation}
where the equality (4) holds by the fact that $K_t=-(R+B^{\top}\widetilde{P}_{t+1}B)^{-1}B^{\top}\widetilde{P}_{t+1}A$. Finally, substituting (\ref{LEM36}) into (\ref{LS2}) and then substituting (\ref{LS2}) into (\ref{LEM34}), we can get (\ref{LS1}).
\end{proof}

With Lemma \ref{L2}, we are now ready to prove Proposition \ref{LP2}.
\begin{proof}[Proof of Proposition~\ref{LP2}]
We prove the result recursively. When $t=T-1$, we have 
\begin{align*}
&\mathbb{E}\left[\exp\left( \frac{\gamma}{2}x_{T-1}^{l,k\top}\Delta K_{T-1}^{l\top}(R+B^{\top}\widetilde{P}_{T}B)\Delta K_{T-1}^l x_{T-1}^{l,k}\right)\bigg\vert H_{T-2}^{l,k}\right]\\
&=\mathbb{E}\left[ \exp\left(\frac{\gamma}{2}x_{T-1}^{l,k\top}D_{T-1}^lx_{T-1}^{l,k}\right)\bigg\vert H_{T-2}^{l,k}\right]\\
&\overset{(1)}{=}\left(\det(I_n-\gamma D_{T-1}^l)\right)^{-\frac{1}{2}}\exp\left \{\frac{\gamma}{2}\Big [x_{T-2}^{l,k\top}(A+BK_{T-2}^l)^{\top}\widetilde{D}_{T-1}^l (A+BK_{T-2}^l)x_{T-2}^{l,k}\Big]\right\},
\end{align*}
where equality (1) follows from Lemma \ref{L1} and $u_{T-1}^{l,k}=K_{T-1}^lx_{T-1}^{l,k}$. When $t=T-2$, we have
\begin{align*}
&\mathbb{E}\left[\exp\bigg( \frac{\gamma}{2}\sum_{t=T-2}^{T-1}\Big(x_{t}^{l,k\top}\Delta K_{t}^{l\top}(R+B^{\top}\widetilde{P}_{t+1}B)\Delta K_{t}^l x_{t}^{l,k}\Big)\bigg)\bigg\vert H_{T-3}^{l,k}\right]\\
&=\left(\det(I_n-\gamma D_{T-1}^l)\right)^{-\frac{1}{2}}\mathbb{E}\bigg[\exp\bigg ( \frac{\gamma}{2}\Big[x_{T-2}^{l,k\top}\big (\Delta K_{T-2}^{l\top}(R+B^{\top}\widetilde{P}_{T-1}B)\Delta K_{T-2}^l\\
&\qquad\qquad\qquad\qquad\qquad\qquad\qquad+(A+BK_{T-2}^l)^{\top}\widetilde{D}_{T-1}^l(A+BK_{T-2}^l)\big)x_{T-2}^{l,k}
\Big]\bigg)\bigg\vert H_{T-3}^{l,k}\bigg]\\
&=\left(\det(I_n-\gamma D_{T-1}^l)\right)^{-\frac{1}{2}}\mathbb{E}\left[ \exp\left(\frac{\gamma}{2}x_{T-2}^{l,k\top}D_{T-2}^lx_{T-2}^{l,k}\right)\Big\vert H_{T-3}^{l,k}\right]\\
&=\prod_{t=T-2}^{T-1}\left(\det(I_n-\gamma D_{t}^l)\right)^{-\frac{1}{2}}\exp\left (\frac{\gamma}{2}\left[x_{T-3}^{l,k\top}(A+BK_{T-3}^l)^{\top}\widetilde{D}_{T-2}^l(A+BK_{T-3}^l)x_{T-3}^{l,k}\right]\right ).
\end{align*}
When $t=i,i=1,\cdots,T-1$, similarly, we have
\begin{align*}
&\mathbb{E}\Bigg[\exp\bigg( \frac{\gamma}{2}\sum_{t=i}^{T-1}\Big(x_{t}^{l,k\top}\Delta K_{t}^{l\top}(R+B^{\top}\widetilde{P}_{t+1}B)\Delta K_{t}^l x_{t}^{l,k}\Big)\bigg)\Bigg\vert H_{i-1}^{l,k}\Bigg]\\
&=\prod_{t=i}^{T-1}\left(\det(I_n-\gamma D_{t}^l)\right)^{-\frac{1}{2}}\exp\Bigg (\frac{\gamma}{2}\Big[x_{i-1}^{l,k\top}(A+BK_{i-1}^l)^{\top}\widetilde{D}_{i}^l(A+BK_{i-1}^l)x_{i-1}^{l,k}\Big]\Bigg ).
\end{align*}
Repeating this procedure, we get 
\begin{align*}
&J_0^{\pi^{l,k}}(x_0^{l,k})-J^{\star}_0(x_0^{l,k})\\
&=\frac{1}{\gamma}\log \mathbb{E}\Bigg[\prod_{t=1}^{T-1}\left(\det(I_n-\gamma D_t^l)\right)^{-\frac{1}{2}}\\
&\quad\times\exp\Bigg (\frac{\gamma}{2}\Big[x_{0}^{l,k\top}\left((A+BK_{0}^l)^{\top}\widetilde{D}_{1}^l(A+BK_{0}^l)+\Delta K_0^{l\top}(R+B^{\top}\widetilde{P}_1 B)\Delta K_0^l\right)x_{0}^{l,k}\Big]\Bigg )\Bigg\vert x_0^{l,k},H_T^{l,k-1}\Bigg]\\
&=\frac{1}{\gamma}\log \mathbb{E}\left[\prod_{t=1}^{T-1}\left(\det(I_n-\gamma D_t^l)\right)^{-\frac{1}{2}}\exp\left(\frac{\gamma}{2}x_0^{l,k\top}D_0^lx_0^{l,k}\right)\bigg\vert x_0^{l,k},H_T^{l,k-1}\right]\\
&\overset{(2)}{=}\frac{1}{\gamma}\log\left( \prod_{t=1}^{T-1}\left(\det(I_n-\gamma D_t^l)\right)^{-\frac{1}{2}}\exp\left(\frac{\gamma}{2}x_0^{l,k\top}D_0^lx_0^{l,k}\right)\right)\\
&=-\frac{1}{2\gamma}\sum_{t=1}^{T-1}\log\left(\det\left(I_n-\gamma D_t^l\right)\right)+\frac{1}{2}x_0^{l,k\top}D_0^l x_0^{l,k},
\end{align*}
where inequality (2) holds because $D_t^l,t=0,\cdots,T-1$ is based on the data from epoch $1$ to epoch $l-1$. 
\end{proof}

\subsection{Proof of Theorem \ref{THM1}}\label{D}
Now, we can derive the regret upper bound for Algorithm \ref{ALG1}. Before we derive the high probability bounds for (\ref{pergap}), we introduce some new notations and provide the bounds for $D_t^l$ in (\ref{RICA}). Recall that
\begin{equation}\label{PSI}
\begin{aligned}
&\psi_{T-1}=2\widetilde{\Gamma}^3,\\
&\psi_{t}=2\widetilde{\Gamma}^3(10\mathcal{V}^2\mathcal{L}\widetilde{\Gamma}^4)^{2(T-t-1)}+12\widetilde{\Gamma}^4\psi_{t+1},\ t=0,\cdots,T-2,\\
\end{aligned}
\end{equation}
where the definitions of $\mathcal{V}$ and $\mathcal{L}$ are given in (\ref{parameter}).
Assume that for any $t=1,\cdots,T-1,\ l\in [L],$ we have 
\begin{equation}\label{A1}
\begin{aligned}
&\gamma\leq \frac{1}{2\psi_t \mathcal{V}^2\epsilon_l^2}.
\end{aligned}
\end{equation}
We can choose a proper constant $\mathcal{C}_0$ for the initial epoch size $m_1$ in Theorem \ref{THM1} so that $\gamma$ can satisfy assumptions in (\ref{A1}) when it satisfies the assumption of $I_n-\gamma P_{t+1}\succ 0$ and $I_n-\gamma P^l_{t+1}\succ 0$ in (\ref{eq:reccati}). Because $D_t^l$ are defined recursively, we obtain the bounds recursively from step $T-1$ to step $1$. At step $T-1$,
\begin{align*}
\left\Vert D_{T-1}^l\right\Vert&=\left\Vert \Delta K_{T-1}^{l\top}\left(R+B^{\top}\widetilde{P}_T B\right)\Delta K_{T-1}^l\right\Vert\\
&\leq \left\Vert R+B^{\top}\widetilde{P}_{T}B\right\Vert\cdot\left\Vert \Delta K_{T-1}^l\right\Vert^2\\
&\overset{(1)}{\leq} 2\widetilde{\Gamma}^3\mathcal{V}^2\epsilon_l^2\\
&=\psi_{T-1}\mathcal{V}^2\epsilon_l^2,
\end{align*}
where inequality (1) follows from the definition of $\widetilde{\Gamma}$ in (\ref{parameter}) and Lemma \ref{L7}. In terms of the bound for $\widetilde{D}_{T-1}^l$, we have
\begin{align*}
\left\Vert \widetilde{D}_{T-1}^l\right\Vert &=\left\Vert \left(I_n-\gamma D_{T-1}^l\right)^{-1}D_{T-1}\right\Vert\\
&\leq \left\Vert \left(I_n-\gamma D_{T-1}^l\right)^{-1}\right\Vert\cdot \left\Vert D_{T-1}^l\right\Vert\\
&\overset{(2)}{\leq} \frac{\Vert D_{T-1}^l\Vert}{1-\gamma\Vert D_{T-1}^l\Vert}\\
&\overset{(3)}{\leq} 2\Vert D_{T-1}^l\Vert\\
&=2\psi_{T-1}\mathcal{V}^2\epsilon_l^2,
\end{align*}
where inequality (2) holds by the fact that for any matrix $M\in R^{n\times n}$, if $\Vert M\Vert< 1$, then $\Vert (I_n-M)^{-1}\Vert\leq \frac{1}{1-\Vert M\Vert}$ and inequality (3) follows from the assumption in (\ref{A1}). At step $T-2$, we have
\begin{equation*}
\begin{aligned}
\left\Vert D_{T-2}^l\right\Vert&=\left\Vert \Delta K_{T-2}^{l\top}\left(R+B^{\top}\widetilde{P}_{T-1}B\right)\Delta K_{T-2}^l+\left(A+BK_{T-2}^l\right)^{\top}\widetilde{D}_{T-1}^l\left(A+BK_{T-2}^l\right)\right\Vert\\
&\leq 2\widetilde{\Gamma}^3 \left(10\mathcal{V}^2\mathcal{L}\widetilde{\Gamma}^4\right)^2\mathcal{V}^2\epsilon_l^2+\left\Vert A+B(\Delta K_{T-2}^l+K_{T-2})\right\Vert^2\cdot\left\Vert \widetilde{D}_{T-1}^l\right\Vert\\
&\overset{(4)}{\leq} 2\widetilde{\Gamma}^3 \left(10\mathcal{V}^2\mathcal{L}\widetilde{\Gamma}^4\right)^2\mathcal{V}^2\epsilon_l^2+\left(6\widetilde{\Gamma}^4+3\widetilde{\Gamma}^2 \left(10\mathcal{V}^2\mathcal{L}\widetilde{\Gamma}^4\right)^2\mathcal{V}^2\epsilon_l^2\right)\cdot 2\psi_{T-1}\mathcal{V}^2\epsilon_l^2\\
&=\left( 2\widetilde{\Gamma}^3\left(10\mathcal{V}^2\mathcal{L}\widetilde{\Gamma}^4\right)^2+12\widetilde{\Gamma}^4\psi_{T-1}\right)\mathcal{V}^2\epsilon_l^2+o(\epsilon_l^2)\\
&=\psi_{T-2}\mathcal{V}^2\epsilon_l^2+o(\epsilon_l^2),
\end{aligned}
\end{equation*}
where inequality (4) follows from the fact that $\left\Vert\sum_{t=1}^K x_t\right\Vert^2\leq K\sum_{t=1}^K\left\Vert x_t\right\Vert^2$. Similarly, we have
\begin{align*}
\left\Vert \widetilde{D}_{T-2}^l\right\Vert\leq 2\psi_{T-2}\mathcal{V}^2\epsilon_l^2+o(\epsilon_l^2).
\end{align*}
For $t=T-2,\cdots,1$, we can recursively derive that
\begin{align}\label{dbound}
&\Vert D_t^l\Vert\leq \psi_t \mathcal{V}^2\epsilon_l^2+o(\epsilon_l^2),\nonumber\\
&\left\Vert\widetilde{D}_t^l\right\Vert\leq 2\psi_t\mathcal{V}^2\epsilon_l^2+o(\epsilon_l^2),\nonumber\\
&\Vert D_0^l\Vert\leq \psi_0 \mathcal{V}^2\epsilon_l^2+o(\epsilon_l^2).
\end{align}

According to Lemma \ref{LP2}, the performance loss in the $k$-th episode of epoch $l$ is
\begin{equation}\label{GAP}
\begin{aligned}
&J_0^{\pi^{l,k}}(x_0^{l,k})-J^{\star}_0(x_0^{l,k})\\
&=-\frac{1}{2\gamma}\sum_{t=1}^{T-1}\log\left(\det\left(I_n-\gamma D_t^l\right)\right)+\frac{1}{2}x_0^{l,k\top}D_0^l x_0^{l,k}\\
&\overset{(5)}{\leq} -\frac{1}{2\gamma}\sum_{t=1}^{T-1}\log\left(1-\gamma\Vert D_t^l\Vert\right)^n+\frac{1}{2}\Vert x_0^{l,k}\Vert^2\Vert D_0^l\Vert.
\end{aligned}
\end{equation}
Here, inequality (5) holds because $I_n-\gamma D_t^l\succeq \left(1-\gamma\Vert D_t^l\Vert\right)I_n\succeq (1-\gamma(\psi_t\mathcal{V}^2\epsilon_l^2+o(\epsilon_l^2)))I_n\succ 0$ and $\det\left(\left(1-\gamma\Vert D_t^l\Vert\right)I_n\right)=\left(1-\gamma\Vert D_t^l\Vert\right)^n$.
Substituting the inequalities in (\ref{dbound}) into (\ref{GAP}), we obtain
\begin{equation*}
\begin{aligned}
&J_0^{\pi^{l,k}}(x_0^{l,k})-J^{\star}_0(x_0^{l,k})\\
&\overset{(6)}{\leq} -\frac{n}{2\gamma}\sum_{t=1}^{T-1}\log\left(1-\gamma\left(\psi_t\mathcal{V}^2\epsilon_l^2+o\left(\epsilon_l^2\right)\right)\right)+\frac{1}{2}\Vert x_0\Vert^2\left(\psi_0\mathcal{V}^2\epsilon_l^2+o\left(\epsilon_l^2\right)\right)\\
&\overset{(7)}{\leq} \frac{n}{2}\left(\sum_{t=1}^{T-1}\psi_t\mathcal{V}^2\epsilon_l^2+o\left(\epsilon_l^2\right)\right)+\frac{1}{2}\Vert x_0\Vert^2\left(\psi_0\mathcal{V}^2\epsilon_l^2+o\left(\epsilon_l^2\right)\right)\\
&=\frac{n}{2}\sum_{t=1}^{T-1}\psi_t\mathcal{V}^2\epsilon_l^2+\frac{1}{2}\Vert x_0\Vert^2\psi_0\mathcal{V}^2\epsilon_l^2+o(\epsilon_l^2),
\end{aligned}
\end{equation*}
inequality (6) holds by the inequalities in (\ref{dbound}), and inequality (7) follows from the fact that $\log(1+y)\leq y$ for any $y>-1$.

Now, we can substitute the high probability bounds derived in Section \ref{A} into (\ref{GAP}). Recall that conditional on event $\mathcal{G}^{l-1}$ in Lemma \ref{L12}, with probability at least $1-2\delta_{l-1}$, we have 
\begin{equation*}
\begin{aligned}
\left\Vert\theta^l-\theta\right\Vert\leq \epsilon_l:=\mathcal{C}_1 \left(\sqrt{\frac{\log\left(\frac{(m+n)^2}{\delta_{l-1}}\right)}{m_{l-1}}}+\frac{\log\left(\frac{(m+n)^2}{\delta_{l-1}}\right)}{m_{l-1}}+\frac{\log^2\left(\frac{(m+n)^2}{\delta_{l-1}}\right)}{m_{l-1}^2}\right).
\end{aligned}
\end{equation*}
Similar to the procedure in page 26 in \cite{basei2022logarithmic}, we set $\delta_{l-1}=\frac{\delta}{(l-1)^2},\ m_{l-1}=2^{l-2}m_1,\ m_1=\mathcal{C}_0(-\log\delta)$, where $\delta\in (0,\frac{3}{\pi^2})$ and $\mathcal{C}_0$ is a finite positive constant that satisfies
\begin{align*}
\mathcal{C}_0\geq \mathcal{C}_3 \sup_{l\in\mathbb{N}^{+}\backslash\{1\},\delta\in \left(0,\frac{2}{\pi^2}\right)}\left\{\left\{\frac{\log\left(\frac{(m+n)^2}{\delta_{l-1}}\right)}{2^{l-2}(-\log\delta)}\right\}\Bigg/\min\left\{\left(\frac{\rho}{3\mathcal{C}_1}\right)^2,1\right\}\right\},
\end{align*}
where $\rho$ is defined at the beginning of Appendix \ref{A}. Then, we have $m_{l-1}\geq \mathcal{C}_3\log\left(\frac{(m+n)^2}{\delta_{l-1}}\right)$ and thus
\begin{align*}
&\mathcal{C}_1 \left(\sqrt{\frac{\log\left(\frac{(m+n)^2}{\delta_{l-1}}\right)}{m_{l-1}}}+\frac{\log\left(\frac{(m+n)^2}{\delta_{l-1}}\right)}{m_{l-1}}+\frac{\log^2\left(\frac{(m+n)^2}{\delta_{l-1}}\right)}{m_{l-1}^2}\right)\overset{(8)}{\leq} 3\mathcal{C}_1\sqrt{\frac{\log\left(\frac{(m+n)^2}{\delta_{l-1}}\right)}{m_{l-1}}}\leq \rho,\forall l\in \mathbb{N^+}\backslash \{1\},
\end{align*}
where inequality (8) holds because $\mathcal{C}_3\geq 1$ in Proposition \ref{L12}. By a similar mathematical induction on page 27 in \cite{basei2022logarithmic}, we can  prove the following event
\begin{align}\label{EVE}
\mathcal{G}=\left\{\left \Vert\theta^l-\theta\right\Vert\leq 3\mathcal{C}_1\sqrt{\frac{\log\left(\frac{(m+n)^2}{\delta_{l-1}}\right)}{m_{l-1}}},\forall l\in \mathbb{N^+}\backslash \{1\}\right\}\cup \left\{\theta^1\in\Theta\right\}
\end{align}
holds with probability at least $1-2\sum_{l=2}^{\infty}\delta_{l-1}=1-\frac{\pi^2\delta}{3}$, i.e. $\mathbb{P}(\mathcal{G})\geq 1-\frac{\pi^2\delta}{3}$. 

Under the event $\mathcal{G}$, which satisfies $\mathbb{P}(\mathcal{G})\geq 1-\frac{\pi^2\delta}{3}$, we can derive that  
\begin{equation*}
\begin{aligned}
&\operatorname{Regret}(N)\\
&= \sum_{i=1}^N\left(J^{\pi^i}(x^i_0)-J^{\star}(x^i_0)\right)\\
&=\sum_{l=1}^L\sum_{k=1}^{m_l}\left(J^{\pi^{l,k}}(x_0^{l,k})-J^{\star}(x_0^{l,k})\right)\\
&\overset{(9)}{\leq} m_1\left[\frac{n}{2}\sum_{t=1}^{T-1}\psi_t\mathcal{V}^2\epsilon_1^2+\frac{1}{2}\Vert x_0\Vert^2\psi_0\mathcal{V}^2\epsilon_1^2+o(\epsilon_1^2)\right]+\sum_{l=2}^Lm_l\left[\frac{n}{2}\sum_{t=1}^{T-1}\psi_t\mathcal{V}^2\epsilon_l^2+\frac{1}{2}\Vert x_0\Vert^2\psi_0\mathcal{V}^2\epsilon_l^2+o(\epsilon_l^2)\right]\\
&\overset{(10)}{\leq} m_1\left[\frac{n}{2}\sum_{t=1}^{T-1}\psi_t\mathcal{V}^2\epsilon_1^2+\frac{1}{2}\Vert x_0\Vert^2\psi_0\mathcal{V}^2\epsilon_1^2+o(\epsilon_1^2)\right]+\sum_{l=2}^Lm_l\Bigg[\frac{n}{2}\sum_{t=1}^{T-1}\psi_t\mathcal{V}^2\cdot 9\mathcal{C}_1^2\cdot\frac{\log\left(\frac{(m+n)^2}{\delta_{l-1}}\right)}{m_{l-1}}\\
&+\frac{1}{2}\Vert x_0\Vert^2\psi_0\mathcal{V}^2\cdot 9\mathcal{C}_1^2\cdot\frac{\log\left(\frac{(m+n)^2}{\delta_{l-1}}\right)}{m_{l-1}}+o\left(\frac{\log\left(\frac{(m+n)^2}{\delta_{l-1}}\right)}{m_{l-1}}\right)\Bigg]\\
&\overset{(11)}{\leq}\mathcal{C}_{\operatorname{high}}+  m_1\left[\frac{n}{2}\sum_{t=1}^{T-1}\psi_t\mathcal{V}^2\epsilon_1^2+\frac{1}{2}\Vert x_0\Vert^2\psi_0\mathcal{V}^2\epsilon_1^2\right]\\
&+\left[9\mathcal{C}_1^2\mathcal{V}^2\left(n\sum_{t=1}^{T-1}\psi_t+\Vert x_0\Vert^2\psi_0\right)\right]\cdot\sum_{l=2}^L\left(\log\left(\frac{m+n}{\sqrt{\delta}}\right)+\log(l-1)\right)\\
&\overset{(12)}{\leq} \mathcal{C}\left(\sum_{t=0}^{T-1}\psi_{t}\right)\left[\log\left(\frac{m+n}{\sqrt{\delta}}\right)L+L\log L\right],
\end{aligned}
\end{equation*}
where inequality (9) follows from (\ref{GAP}), inequality (10) follows from the definition of the event $\mathcal{G}$ in (\ref{EVE}), $\psi_t$ is defined in (\ref{PSI}), $\mathcal{C}_{\operatorname{high}}$ in inequality (11) is a constant depends on $T,\gamma,m,n,\mathcal{V},\widetilde{\Gamma}$ polynomially and it can bound the higher order term in inequality (10), and inequality (12) holds by Stirling's formula: $\sum_{l=2}^L\log(l-1)=\log((L-1)!)\leq \mathcal{C}'(L-1)\log (L-1)$, where $\mathcal{C}'$ is a positive constant. The expression of $\mathcal{C}$ is given by
\begin{align*}
\mathcal{C}:=\operatorname{Polynomial}\left(\mathcal{C}_1,\mathcal{C}',\mathcal{V},n,\epsilon_1,n,m_1,\Vert x_0\Vert\right),
\end{align*}
where $\epsilon_1$ is the estimation error in the first epoch, $m_1$ is the number of episodes in the first epoch, $\psi_t$ is defined in (\ref{PSI}), $\mathcal{C}_1=16\mathcal{C}_2^3\mathcal{C}_4^2$ is from the proof of Proposition \ref{L12}, and $\mathcal{V}$ is defined in (\ref{parameter}).

\section{Regret Analysis of the Least-Squares-Based Algorithm with Exploration Noise}\label{SEC:noise}

In this section, we prove Theorem \ref{regret:noise} discussed in Section \ref{Sqrt:Alg}. The proof structure of Theorem \ref{regret:noise} is similar to the proof structure of Theorem \ref{THM1}. We present the high probability bounds for the estimation error of system matrices in Section \ref{SEC1}, the perturbation analysis of Riccati equations in Section \ref{SEC2}, and the simplification of the suboptimality gap resulting from controller mismatch in Section \ref{SEC3}.

\subsection{Bounds for the Estimation Error of System Matrices}\label{SEC1}
In this section, we derive the high probability bound for the estimation error of system matrices in Algorithm \ref{ALGO:alg}. Different from Section \ref{A}, we adapt the classical self-normalized martingale analysis framework to derive the desired error bound.  

Similar as in Section \ref{A}, we fix the $k$-th episode and define the following compact set
\begin{equation*}
\Xi=\left\{\hat{\theta}\in\mathbb{R}^{(n+m)\times n}\Big\vert \left\Vert\hat{\theta}-\theta\right\Vert\leq \varpi\right\}\cup \{\theta^1\},
\end{equation*}
where $\varpi>0$ is a constant that satisfies
\begin{equation}\label{Gconst}
\begin{aligned}
\varpi&\geq \max\Bigg\{\frac{2n}{\lambda}\left(\log\left(\frac{3n^2N}{\delta^2}\right)+(n+m)\log\left(1+\frac{\tilde{c}N\log\left(\frac{3TN^2}{\delta}\right)}{\lambda}\right)\right)+2(n+m)^2\widetilde{\Gamma}^2,\\
& \qquad \qquad \frac{80n}{cT}\left(\log\left(\frac{4n^2N}{\delta^2}\right)+(n+m)\log\left(1+\frac{\tilde{c}N\log\left(\frac{4TN^2}{\delta}\right)}{\lambda}\right)\right)  +\frac{80\lambda (n+m)^2\widetilde{\Gamma}^2}{cT}\Bigg\}.
\end{aligned}
\end{equation}
Here, $\widetilde{\Gamma}$ is defined in (\ref{parameter}), $\lambda$ is the regularization parameter and $c,\tilde{c}>0$ are two constants independent of $k$ and $N$ but may depend on other constants including $n,m,\gamma$. The explicit expression of $c$ and $\tilde{c}$ can be found in (\ref{constc}) and (\ref{constctilde}). For any estimated $\tilde{\theta}\in\Xi$, there exists a universal constant $C_K>0$ such that
\begin{equation}\label{bd:K}
\left\Vert\widetilde{K}_t\right\Vert\leq C_K, \quad \forall t,
\end{equation}
where $\widetilde{K}_t$ is the control corresponding to $\tilde{\theta}$ and it's continuous in terms of $\tilde{\theta}$ according to (\ref{eq:reccati}). We also define the following event
\begin{equation}\label{EVENT:Gk}
\widetilde{\mathcal{G}}^k=\{\theta^i\in\Xi,\forall i=1,\cdots,k\}.
\end{equation}
We will prove $\mathbb{P}(\widetilde{\mathcal{G}}^k)\geq 1-\sum_{i=1}^{k-1}\frac{\delta}{N-1}=1-\frac{(k-1)\delta}{N-1}$ in Section \ref{pf44}. 

The main result of this section is the following proposition, 
which provides the high probability bound for the estimation error of system matrices estimated in Algorithm \ref{ALGO:alg}. 


\begin{proposition}\label{LEM:BTotal}
Let $\delta\in \left(0,\frac{1}{4}\right)$. Conditional on event $\widetilde{\mathcal{G}}^k$, when $kT\geq 200\left(3(n+m)+\log\left(\frac{1}{\delta}\right)\right)$, with probability at least $1-4\delta$, 
\begin{equation*}
\begin{aligned}
\left\Vert \theta^{k+1}-\theta\right\Vert^2 \leq\frac{80n}{cT\sqrt{k}}\left(\log\left(\frac{n^2}{\delta^2}\right)+(n+m)\log\left(1+\frac{\tilde{c}k\log\left(\frac{TN}{\delta}\right)}{\lambda}\right)\right)+\frac{80\lambda (n+m)^2\widetilde{\Gamma}^2}{cT\sqrt{k}},
\end{aligned}
\end{equation*}
where $\widetilde{\Gamma}$ is defined in (\ref{parameter}), the explicit expressions of $c$ and $\tilde{c}$ can be found in (\ref{constc}) and (\ref{constctilde}), $n$ is the
dimension of the system state vector, $m$ is the dimension
of the control vector and $\lambda$ is the regularization parameter. 
When $kT< 200\left(3(n+m)+\log\left(\frac{1}{\delta}\right)\right)$, with probability at least $1-3\delta$, 
\begin{equation*}
\begin{aligned}
\left\Vert \theta^{k+1}-\theta\right\Vert^2\leq \frac{2n}{\lambda}\left(\log\left(\frac{n^2}{\delta^2}\right)+(n+m)\log\left(1+\frac{\tilde{c}k\log\left(\frac{TN}{\delta}\right)}{\lambda}\right)\right)+2(n+m)^2\widetilde{\Gamma}^2.
\end{aligned}
\end{equation*}
\end{proposition}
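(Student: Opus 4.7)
\textbf{Proof plan for Proposition \ref{LEM:BTotal}.}

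The plan is to combine Abbasi-Yadkori's self-normalized martingale concentration inequality for regularized least squares with matching high-probability upper and lower spectral bounds on the design matrix $\bar{V}^k = \lambda I + \sum_{i=1}^k\sum_{t=0}^{T-1} z_t^i z_t^{i\top}$, exploiting the $1/\sqrt{i}$ covariance of the injected exploration noise $g_t^i$. Using $x_{t+1}^i = \theta^{\top}z_t^i + w_t^i$, I would rewrite the estimator as $\theta^{k+1}-\theta = (\bar{V}^k)^{-1}\left(\sum_{i,t} z_t^i w_t^{i\top} - \lambda\theta\right)$, so that bounding $\|\theta^{k+1}-\theta\|$ reduces to bounding a self-normalized noise sum divided by $\lambda_{\min}(\bar{V}^k)$. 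Theorem 2 of \cite{abbasi2011improved} applied to the matrix-valued martingale $\sum_{i,t} z_t^i w_t^{i\top}$ yields, with probability at least $1-\delta$,
\begin{equation*}
\|\theta^{k+1}-\theta\|^2 \;\leq\; \frac{1}{\lambda_{\min}(\bar{V}^k)}\left[4n\log\!\left(\tfrac{n}{\delta}\cdot\tfrac{\det(\bar{V}^k)}{\det(\lambda I)}\right) + 2\lambda\|\theta\|_F^2\right].
\end{equation*}

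To control $\det(\bar{V}^k)$ from above I would use $\det(\bar{V}^k)\leq (\operatorname{Tr}(\bar{V}^k)/(n+m))^{n+m}$. Conditional on $\widetilde{\mathcal{G}}^k$, each controller $K_t^i$ is uniformly bounded by $C_K$ through \eqref{bd:K}. Expanding $x_t^i$ as in \eqref{eq:EXP}, but now with both process and exploration noises treated as independent Gaussian inputs driving a uniformly bounded linear recursion, and applying standard sub-Gaussian tail bounds together with a union bound over the $kT$ pairs $(i,t)$, yields $\max_{i,t}\|z_t^i\|^2 \lesssim \log(kTN/\delta)$ with probability at least $1-\delta$; this produces $\log[\det(\bar{V}^k)/\det(\lambda I)]\lesssim (n+m)\log(1+\tilde{c}\,k\log(TN/\delta)/\lambda)$ for an explicit constant $\tilde{c}$ depending only on problem parameters.

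The main technical step is the high-probability lower bound $\lambda_{\min}(\bar{V}^k) \gtrsim cT\sqrt{k}$. Writing $u_t^i = K_t^i x_t^i + g_t^i$ and conditioning on the history up to the moment at which $x_t^i$ is revealed but $g_t^i$ is not yet drawn, a direct computation gives
\begin{equation*}
\mathbb{E}\!\left[z_t^i z_t^{i\top}\,\big|\,\mathrm{history}\right] \;=\; \begin{bmatrix} I_n \\ K_t^i\end{bmatrix} x_t^i x_t^{i\top}\begin{bmatrix} I_n & K_t^{i\top}\end{bmatrix} + \tfrac{1}{\sqrt{i}}\begin{bmatrix} 0 & 0 \\ 0 & I_m\end{bmatrix}.
\end{equation*}
A Schur-complement argument combined with $\|K_t^i\|\leq C_K$ and the lower bound $\mathbb{E}[x_t^i x_t^{i\top}]\succeq I_n$ that follows from the accumulated process noise yields $\lambda_{\min}(\mathbb{E}[z_t^i z_t^{i\top}])\geq c/\sqrt{i}$ for a constant $c$ depending only on $C_K$. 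I would then build a bounded matrix-martingale-difference sequence by truncating on the event $\|z_t^i\|^2\leq O(\log(kTN/\delta))$ established in the previous step, and apply a matrix Freedman-type concentration inequality to the differences $z_t^i z_t^{i\top}-\mathbb{E}[z_t^i z_t^{i\top}\mid\mathrm{history}]$. The partial sum $\sum_{i=1}^k T/\sqrt{i} \asymp 2T\sqrt{k}$ then produces $\lambda_{\min}(\bar{V}^k)\geq cT\sqrt{k}/2$ with probability at least $1-2\delta$ as soon as $kT\geq 200(3(n+m)+\log(1/\delta))$, which is exactly the threshold appearing in the statement.

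Combining the three high-probability events via a union bound and substituting into the self-normalized inequality yields the large-$k$ case. When $kT$ falls below the threshold the martingale lower bound is not yet informative, and I would fall back on the deterministic regularization bound $\lambda_{\min}(\bar{V}^k)\geq \lambda$; substituting this into the self-normalized bound produces the small-$k$ case. The principal obstacle is the lower-bound step: the containment $\theta^i\in\Xi$ is essential for obtaining uniformly bounded $K_t^i$ (hence a uniform conditional covariance lower bound), while the truncation threshold for the matrix differences must be carefully calibrated so that the increments are bounded enough for matrix concentration yet large enough that truncation introduces only negligible bias; these choices determine the explicit form of $c$ and $\tilde{c}$ in \eqref{constc}--\eqref{constctilde}. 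Because $\widetilde{\mathcal{G}}^k$ itself will be proven (in Section \ref{pf44}) by induction using the very bound established here, the overall argument closes as a joint induction on $k$.
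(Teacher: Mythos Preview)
Your overall architecture---self-normalized bound, spectral upper bound on $\bar V^k$ via state-norm control, spectral lower bound on $\bar V^k$ via the injected exploration noise, then combine---is exactly the paper's. Two points deserve comment.

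First, the conditioning in your lower-bound step is inconsistent. If you condition on the history ``up to the moment at which $x_t^i$ is revealed but $g_t^i$ is not,'' then the displayed matrix $[I_n;K_t^i]\,x_t^i x_t^{i\top}\,[I_n\ K_t^{i\top}]+\tfrac{1}{\sqrt i}\mathrm{diag}(0,I_m)$ has rank at most $1+m<n+m$, so its minimum eigenvalue is zero and no Schur-complement argument can produce $\lambda_{\min}\ge c/\sqrt i$. The paper instead conditions on $\mathcal H_{t-1}^k$ (so $w_{t-1}^k$ is still random); then $\mathbb E[x_t^k x_t^{k\top}\mid\mathcal H_{t-1}^k]\succeq I_n$ and the block matrix is genuinely positive definite. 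Your later appeal to ``$\mathbb E[x_t^i x_t^{i\top}]\succeq I_n$'' shows you have the right idea, but the filtration in the displayed formula must be coarsened to make the martingale construction coherent.

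Second, for turning the per-step conditional covariance lower bound into $\lambda_{\min}(\bar V^k)\gtrsim cT\sqrt k$, you propose truncation plus matrix Freedman. The paper follows \cite{cohen2019learning}: fix a direction $e$, set $\mathcal Y_t^k=\mathbf 1\{(e^\top z_t^k)^2>c/(2\sqrt k)\}$, show $\mathbb E[\mathcal Y_t^k\mid\mathcal H_{t-1}^k]\ge 1/5$, apply scalar Azuma--Hoeffding to the bounded indicators, and then pass to all directions via a $\tfrac14$-net. Both routes are valid; the indicator approach sidesteps the truncation/bias calibration you flag as delicate, at the price of the covering-net union bound, and it is this net argument that produces the $3(n+m)$ term inside the threshold $kT\ge 200(3(n+m)+\log(1/\delta))$.
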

The proof of Proposition \ref{LEM:BTotal} is long, and we will discuss it in the following subsections.

\subsubsection{Preliminaries}
In this section, we recall an important high probability bound, known as  self-normalized bound for vector-valued martingales. It will be 
used in the derivation of the bounds for the estimation error of system matrices. 

\begin{lemma}[Theorem 1 in \citet{abbasi2011improved}]\label{Abbasi:THM1}
Let $\left\{\mathcal{F}_t\right\}_{t=0}^{\infty}$ be a filtration. Let $\left\{\eta_t\right\}_{t=0}^{\infty}$ be a real-valued stochastic process such that $\eta_t$ is $\mathcal{F}_{t+1}$-measurable and $\eta_t$ is conditionally $R$-sub-Gaussian for some $R \geq 0$ i.e.
\begin{align*}
\mathbb{E}\left[e^{\lambda \eta_t} \Big\vert \mathcal{F}_{t}\right] \leq \exp \left(\frac{\lambda^2 R^2}{2}\right),\forall \lambda \in \mathbb{R}.
\end{align*}
Let $\left\{X_t\right\}_{t=0}^{\infty}$ be an $\mathbb{R}^d$-valued stochastic process such that $X_t$ is $\mathcal{F}_{t}$-measurable. Assume that $V$ is a $d\times d$ positive definite matrix. For any $t \geq 0$, define
\begin{align*}
\bar{V}_t=V+\sum_{s=0}^t X_s X_s^{\top}, \quad\quad S_t=\sum_{s=0}^t \eta_s X_s .
\end{align*}
Then, for any $\delta>0$, with probability at least $1-\delta$, for all $t \geq 0$,
\begin{align*}
\left\|S_t\right\|_{\bar{V}_t^{-1}}^2 \leq 2 R^2 \log \left(\frac{\operatorname{det}\left(\bar{V}_t\right)^{1 / 2} \operatorname{det}(V)^{-1 / 2}}{\delta}\right),
\end{align*}
where $\Vert S_t\Vert_{\bar{V}_t^{-1}}^2=S_t^{\top}\left(\bar{V}_t\right)^{-1}S_t$.
\end{lemma}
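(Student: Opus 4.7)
The plan is to prove this via the classical \emph{method of mixtures} (a.k.a.\ the pseudo-maximization or Laplace method), following the line of argument introduced by Pe\~na, Lai, and Shao and adapted to this vector-valued form by the cited reference. The core idea is to construct, for every fixed $\lambda\in\mathbb{R}^d$, a scalar exponential supermartingale, then to average these supermartingales against a Gaussian prior on $\lambda$ so that the resulting mixture is explicitly computable and already contains the quantity $\|S_t\|_{\bar V_t^{-1}}^2$ that we wish to control.

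First I would fix any deterministic $\lambda\in\mathbb{R}^d$ and set
\begin{equation*}
M_t^{\lambda}=\exp\!\left(\sum_{s=0}^{t}\Big[\tfrac{1}{R}\lambda^{\top}X_s\,\eta_s-\tfrac{1}{2}(\lambda^{\top}X_s)^2\Big]\right).
\end{equation*}
Using the conditional $R$-sub-Gaussian hypothesis on $\eta_s$ together with the $\mathcal{F}_s$-measurability of $X_s$, a one-line computation shows $\mathbb{E}[M_t^{\lambda}\mid\mathcal{F}_t]\leq M_{t-1}^{\lambda}$, so $(M_t^{\lambda})$ is a non-negative supermartingale with $\mathbb{E}[M_t^{\lambda}]\leq 1$. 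Second, I would mix over $\lambda$ against the Gaussian density $h(\lambda)$ corresponding to $\mathcal{N}(0,R^{2}V^{-1})$ and define $\bar M_t=\int M_t^{\lambda}h(\lambda)\,d\lambda$. Fubini and the bound $\mathbb{E}[M_t^{\lambda}]\leq 1$ give $\mathbb{E}[\bar M_t]\leq 1$, and a standard Gaussian conjugacy calculation (completing the square in the exponent in $\lambda$ and evaluating the resulting Gaussian integral) yields the closed form
\begin{equation*}
\bar M_t=\Bigl(\tfrac{\det(V)}{\det(\bar V_t)}\Bigr)^{1/2}\exp\!\left(\tfrac{1}{2R^{2}}\,\|S_t\|_{\bar V_t^{-1}}^{2}\right).
\end{equation*}

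Third, to turn this into a \emph{uniform-in-time} deviation bound, I would introduce a stopping time $\tau=\inf\{t\geq 0:\bar M_t>1/\delta\}$ and apply the optional stopping theorem to the non-negative supermartingale $\bar M_{t\wedge \tau}$, together with Markov's inequality on $\bar M_{\tau}\mathbf{1}_{\{\tau<\infty\}}$, to conclude $\mathbb{P}(\sup_t \bar M_t>1/\delta)\leq\delta$. Taking logarithms of the event $\{\bar M_t\leq 1/\delta\}$ and rearranging gives exactly
\begin{equation*}
\|S_t\|_{\bar V_t^{-1}}^{2}\leq 2R^{2}\log\!\left(\tfrac{\det(\bar V_t)^{1/2}\det(V)^{-1/2}}{\delta}\right)\quad\text{for all }t\geq 0,
\end{equation*}
with probability at least $1-\delta$.

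The main obstacle I expect is not the algebra of the Gaussian integration (which is mechanical once one picks the right prior covariance proportional to $V^{-1}$), but rather the step of passing from the fixed-$t$ Markov bound to the \emph{all-$t$ simultaneously} statement while avoiding any union bound over $t$ that would introduce a $\log t$ loss. This is precisely where the mixture construction pays off: because $\bar M_t$ is itself a non-negative supermartingale, the stopping-time / maximal inequality argument gives the supremum bound for free, at no cost in $t$. A second subtlety worth flagging is measurability of $\bar M_t$, which requires verifying that $(\lambda,\omega)\mapsto M_t^{\lambda}(\omega)$ is jointly measurable so that Fubini applies; this is routine since $M_t^{\lambda}$ is a continuous function of $\lambda$ for every $\omega$.
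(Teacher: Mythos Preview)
The paper does not prove this lemma at all: it is quoted verbatim as Theorem~1 of \citet{abbasi2011improved} and used as a black box in the subsequent analysis. Your proposal reproduces the standard method-of-mixtures proof from that reference (exponential supermartingale for each $\lambda$, Gaussian mixing to obtain a closed form containing $\|S_t\|_{\bar V_t^{-1}}^2$, then Ville's maximal inequality for the uniform-in-$t$ statement), and the argument is correct in substance.

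One small slip: with your choice $M_t^{\lambda}=\exp\bigl(\tfrac{1}{R}\lambda^{\top}S_t-\tfrac12\lambda^{\top}(\bar V_t-V)\lambda\bigr)$, the prior that makes the quadratic in the exponent collapse to $\tfrac12\lambda^{\top}\bar V_t\lambda$ is $\mathcal N(0,V^{-1})$, not $\mathcal N(0,R^{2}V^{-1})$. With the latter, the precision in the exponent is $(\bar V_t-V)+R^{-2}V$, which does not reduce to $\bar V_t$ unless $R=1$. This is purely a bookkeeping issue and does not affect the structure of the proof; once the prior is corrected the Gaussian integral yields exactly the displayed closed form for $\bar M_t$ and the rest goes through as you describe.
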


\subsubsection{Self-Normalized Bounds for the Estimation Error of System Matrices}
In this section, we analyze the estimation error based on bounds for the self-normalized martingale. Similar to Section \ref{C}, let $\mathbb{H}_t^k$ be the set of possible histories up to step $t$ in the $k$-th episode. Denote the history up to step $t$ in the $k$-th episode by 
\begin{equation}\label{eq:HIST}
\mathcal{H}_t^k=\left(x_0^1,u_0^1,\cdots,x_T^1,x_0^2,\cdots, x_0^k,\cdots,x_{t-1}^k,u_{t-1}^k,x_t^k,u_t^k\right).
\end{equation}
The following lemma is a modified version of Theorem 2 in \citet{abbasi2011improved} and Lemma 6 in \citet{cohen2019learning}, which provides a coarse self-normalized bound for the estimation error.

\begin{lemma}
For any $\delta\in (0,1)$, with probability at least $1-\delta$, we have
\begin{equation}\label{ineq:selfnorm}
\operatorname{Tr} \left((\theta^{k+1}-\theta)^{\top}\bar{V}^k(\theta^{k+1}-\theta)\right)\leq 2 n\log\left(\frac{n^2}{\delta^2}\frac{\det(\bar{V}^k)}{\det(\lambda I)}\right)+2\lambda\Vert \theta\Vert_F^2,
\end{equation}
where $\theta^{k+1}$ is the estimated system matrix defined in (\ref{eq:Theta}), $\theta$ is the true system matrix, $\lambda$ is the regularization parameter and 
\begin{align*}
\bar{V}^k=\lambda I+\sum_{i=1}^{k}\sum_{t=0}^{T-1}z_t^iz_t^{i\top}.
\end{align*}
\end{lemma}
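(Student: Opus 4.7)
The plan is to follow the standard self-normalized martingale recipe of Abbasi-Yadkori et al.\ (Theorem 1 of their paper, restated as Lemma \ref{Abbasi:THM1} above), applied column-wise to the $n$-dimensional noise and combined with a union bound. The key algebraic identity is that the $\ell_2$-regularized least-squares estimator $\theta^{k+1}$ from \eqref{eq:Theta} satisfies the decomposition
\begin{equation*}
\theta^{k+1}-\theta \;=\;\bigl(\bar V^k\bigr)^{-1}\!\!\sum_{i=1}^k\sum_{t=0}^{T-1} z_t^i\, w_t^{i\top} \;-\;\lambda\bigl(\bar V^k\bigr)^{-1}\theta,
\end{equation*}
which is immediate after substituting $x_{t+1}^i=\theta^\top z_t^i+w_t^i$ into \eqref{eq:Theta} and reorganising.

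First I would set $M^k:=\sum_{i=1}^k\sum_{t=0}^{T-1} z_t^i w_t^{i\top}\in\mathbb R^{(n+m)\times n}$ and write, using the identity above and $(a+b)^\top V(a+b)\preceq 2a^\top V a+2b^\top V b$,
\begin{align*}
\operatorname{Tr}\!\bigl((\theta^{k+1}-\theta)^\top \bar V^k(\theta^{k+1}-\theta)\bigr)
&=\operatorname{Tr}\!\bigl((M^k-\lambda\theta)^\top (\bar V^k)^{-1}(M^k-\lambda\theta)\bigr)\\
&\leq 2\operatorname{Tr}\!\bigl(M^{k\top}(\bar V^k)^{-1}M^k\bigr)+2\lambda^2\operatorname{Tr}\!\bigl(\theta^\top(\bar V^k)^{-1}\theta\bigr).
\end{align*}
Since $\bar V^k\succeq\lambda I$, the bias term is controlled by $2\lambda^2\operatorname{Tr}(\theta^\top(\bar V^k)^{-1}\theta)\leq 2\lambda\|\theta\|_F^2$, yielding the clean second term of the target inequality.

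Next I would handle the variance term column by column. Writing the $j$-th column of $M^k$ as $S_{k,T}^{(j)}=\sum_{i,t} w_{t,j}^i\, z_t^i$, observe that with the filtration $\mathcal F_t^i:=\sigma(\mathcal H_t^i)$ defined from \eqref{eq:HIST}, each process $(w_{t,j}^i)$ is a martingale-difference sequence conditionally $1$-sub-Gaussian (since $w_t^i\sim\mathcal N(0,I_n)$), while $z_t^i$ is $\mathcal F_t^i$-measurable. Lemma \ref{Abbasi:THM1} with $V=\lambda I$ and confidence level $\delta/n$ therefore gives, for each fixed $j\in[n]$,
\begin{equation*}
\bigl\|S_{k,T}^{(j)}\bigr\|_{(\bar V^k)^{-1}}^{2}\;\leq\;2\log\!\left(\frac{n}{\delta}\,\frac{\det(\bar V^k)^{1/2}}{\det(\lambda I)^{1/2}}\right)\;=\;\log\!\left(\frac{n^2}{\delta^{2}}\,\frac{\det(\bar V^k)}{\det(\lambda I)}\right)
\end{equation*}
with probability at least $1-\delta/n$. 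A union bound over $j=1,\dots,n$ yields, with probability at least $1-\delta$,
\begin{equation*}
\operatorname{Tr}\!\bigl(M^{k\top}(\bar V^k)^{-1}M^k\bigr)=\sum_{j=1}^n\bigl\|S_{k,T}^{(j)}\bigr\|_{(\bar V^k)^{-1}}^{2}\;\leq\;n\log\!\left(\frac{n^2}{\delta^{2}}\,\frac{\det(\bar V^k)}{\det(\lambda I)}\right),
\end{equation*}
and combining with the bias bound gives exactly \eqref{ineq:selfnorm}.

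No real obstacle arises here; the only subtlety is checking that the filtration is chosen correctly so that the noise $w_t^i$ is independent of the past $z_t^i$, which is immediate from the model \eqref{eq:system} and the fact that $u_t^k$ in Algorithm \ref{ALGO:alg} is a function of $x_t^k$ and an exogenous exploration noise $g_t^k$ drawn before $w_t^k$ is realised. This verifies the adaptedness hypothesis of Lemma \ref{Abbasi:THM1} and completes the argument.
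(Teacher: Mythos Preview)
Your proof is correct and follows essentially the same approach as the paper: the same decomposition $\theta^{k+1}-\theta=(\bar V^k)^{-1}(M^k-\lambda\theta)$, the same $(a+b)$-inequality to separate variance and bias terms (the paper derives it via Cauchy--Schwarz and $2ab\le a^2+b^2$, but it is the same bound), the same use of $\bar V^k\succeq\lambda I$ for the bias, and the same column-wise application of Lemma~\ref{Abbasi:THM1} at level $\delta/n$ with a union bound. The filtration check you give matches the paper's as well.
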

\begin{proof}
We first follow Lemma 6 in \cite{cohen2019learning} to simplify $\theta^{k+1}-\theta$. Recall that
\begin{equation*}
x_{t+1}^i=\theta^{\top}z_t^{i}+w_t^i,\quad w_t^i\sim\mathcal{N}\left(0,I_n\right)
\end{equation*}
where $z_t^i=\left[x_t^{i\top}\ u_t^{i\top}\right]^{\top}$. Together with (\ref{eq:Theta}), we can obtain
\begin{equation*}
\begin{aligned}
\theta^{k+1}&=\left(\bar{V}^k\right)^{-1}\left(\sum_{i=1}^k\sum_{t=0}^{T-1}z_t^i\left(z_t^{i\top}\theta+w_t^{i\top}\right)\right)\\
&=\left(\bar{V}^k\right)^{-1}\left(\lambda\theta+\sum_{i=1}^k\sum_{t=0}^{T-1}z_t^iz_t^{i\top}\theta+\sum_{i=1}^k\sum_{t=0}^{T-1}z_t^iw_t^{i\top}-\lambda\theta\right)\\
&=\theta+\left(\bar{V}^k\right)^{-1}\left(S_{T-1}^k-\lambda \theta\right),
\end{aligned}
\end{equation*}
where we denote $S_{T-1}^k=\sum_{i=1}^k\sum_{t=0}^{T-1}z_t^iw_t^{i\top}$ for the simplicity of notation. Then, we obtain
\begin{equation}\label{ineq:err}
\begin{aligned}
&\operatorname{Tr}\left(\left(\theta^{k+1}-\theta\right)^{\top}\bar{V}^k\left(\theta^{k+1}-\theta\right)\right)\\
&=\operatorname{Tr}\left(\left(S_{T-1}^k-\lambda \theta\right)^{\top}\left(\bar{V}^k\right)^{-1}\left(S^k_{T-1}-\lambda\theta\right)\right)\\
&=\operatorname{Tr}\left(S_{T-1}^{k\top}\left(\bar{V}^k\right)^{-1}S_{T-1}^k+\lambda^2\theta^{\top}\left(\bar{V}^k\right)^{-1}\theta-\lambda S_{T-1}^{k\top}\left(\bar{V}^k\right)^{-1}\theta-\lambda \theta^{\top}\left(\bar{V}^k\right)^{-1}S_{T-1}^k\right)\\
&\overset{(1)}{\leq} \operatorname{Tr}\left(S_{T-1}^{k\top}\left(\bar{V}^k\right)^{-1}S_{T-1}^k+\lambda^2\theta^{\top}\left(\bar{V}^k\right)^{-1}\theta\right)+2\left\Vert \lambda\theta^{\top}\left(\bar{V}^k\right)^{-\frac{1}{2}}\right\Vert_F\cdot\left\Vert \left(\bar{V}^k\right)^{-\frac{1}{2}}S_{T-1}^k\right\Vert_F\\
&\overset{(2)}{\leq} 2\operatorname{Tr}\left(S_{T-1}^{k\top}\left(\bar{V}^k\right)^{-1}S_{T-1}^k\right)+2\lambda^2\operatorname{Tr}\left(\theta^{\top}\left(\bar{V}^k\right)^{-1}\theta\right)\\
&\overset{(3)}{\leq} 2\operatorname{Tr}\left(S_{T-1}^{k\top}\left(\bar{V}^k\right)^{-1}S_{T-1}^k\right)+2\lambda\Vert\theta\Vert_F^2.
\end{aligned}
\end{equation}
Here, we use Cauchy–Schwarz inequality $\left\vert\operatorname{Tr} (EF)\right\vert\leq \Vert E\Vert_F\Vert F\Vert_F$ for any matrix $E$ and $F$ to obtain inequality (1), we use the inequality $2ab\leq a^2+b^2$ for any $a$ and $b$ to obtain inequality (2), and we use the fact that $\bar{V}^k\succeq \lambda I$ to obtain inequality (3). 

We further bound $\operatorname{Tr}\left(S_{T-1}^{k\top}\left(\bar{V}^k\right)^{-1}S_{T-1}^k\right)$ in (\ref{ineq:err}) to get the result in (\ref{ineq:selfnorm}). Let $S_{t}^k(j)=\sum_{i=1}^k\sum_{s=0}^{t}z_s^i w_s^{i}(j),j=1,\cdots,n,\ t=0,\cdots,T-1,\ k=1,\cdots,N$, where $w_s^i(j)$ is the $j$-th element of the random vector $w_s^i$. Recall the trajectory in (\ref{eq:HIST}), $z_s^i$ is $\mathcal{H}_s^i$-measurable for any step $s$ in the $i$-th episode and $w_s^i(j)$ is $\mathcal{H}_{s+1}^i$-measurable for any step $s$ in the $i$-th episode. Therefore, we can apply Lemma \ref{Abbasi:THM1} and obtain that with probability at least $1-\frac{\delta}{n}$,
\begin{align*}
S_{T-1}^{k}(j)^{\top}\left(\bar{V}^k\right)^{-1}S_{T-1}^k(j)\leq 2\log\left(\frac{n}{\delta}\frac{\det(\bar{V}^k)^{1/2}}{\det(\lambda I)^{1/2}}\right).
\end{align*}
By a union bound, we can obtain that with probability at least $1-\delta$,
\begin{equation}\label{ineq:union}
\begin{aligned}
\operatorname{Tr}\left(S_{T-1}^{k\top}\left(\bar{V}^k\right)^{-1}S_{T-1}^k\right)&=\sum_{j=1}^n S_{T-1}^{k}(j)^{\top}\left(\bar{V}^k\right)^{-1}S_{T-1}^k(j)\leq n \log\left(\frac{n^2}{\delta^2}\frac{\det(\bar{V}^k)}{\det(\lambda I)}\right).
\end{aligned}
\end{equation}
On combining (\ref{ineq:err}) with (\ref{ineq:union}), we can obtain (\ref{ineq:selfnorm}).
\end{proof}

After deriving the coarse self-normalized bounds in (\ref{ineq:selfnorm}), we need to find the upper and lower bounds for $\bar{V}^k$ to obtain the result in Proposition \ref{LEM:BTotal}. We follow the proof of Theorem 20 in \citet{cohen2019learning} to derive the high probability lower bound for $\bar{V}^k$. The main difference is that we consider a decaying exploration noise while they consider a nondecaying exploration noise. The next lemma provides a lower bound for the conditional expectation of $z_t^kz_t^{k\top},\forall k,t$, which is a modification of Lemma 34 in \citet{cohen2019learning}.

\begin{lemma}
For all episode $k$ and step $t$, conditional on event $\widetilde{\mathcal{G}}^k$, we have 
\begin{equation*}
\begin{aligned}
&\mathbb{E}\left[z_t^kz_t^{k\top}\big\vert \mathcal{H}_{t-1}^k\right]\succeq \frac{c}{\sqrt{k}}I_{m+n},\quad \text{$t\neq 0$},\\
\end{aligned}
\end{equation*}
where $c>0$ is a constant satisfying 
\begin{equation}\label{constc}
c\leq \frac{C_K^2-C_K\sqrt{C_K^2+4}+2}{2}
\end{equation}
with $C_K$ defined in (\ref{bd:K}) and $\frac{C_K^2-C_K\sqrt{C_K^2+4}+2}{2}\in (0,1)$.
\end{lemma}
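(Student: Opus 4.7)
The plan is to condition on $\mathcal{H}_{t-1}^k$ and isolate the two independent sources of randomness that together generate $z_t^k$: the system noise $w_{t-1}^k$ that produces $x_t^k$ from the previous state-action pair, and the freshly drawn exploration noise $g_t^k \sim \mathcal{N}(0, k^{-1/2} I_m)$ that perturbs the greedy action. Since $u_t^k = K_t^k x_t^k + g_t^k$ and $K_t^k$ is measurable with respect to $\mathcal{H}_{t-1}^k$ (it is computed from $\theta^k$, which depends only on data from episodes $1,\ldots,k-1$), I decompose
\[
z_t^k = \begin{bmatrix} I_n \\ K_t^k \end{bmatrix} x_t^k + \begin{bmatrix} 0 \\ I_m \end{bmatrix} g_t^k.
\]
The cross terms vanish after taking the conditional expectation because $g_t^k$ is mean zero and independent of $x_t^k$ given $\mathcal{H}_{t-1}^k$, yielding
\[
\mathbb{E}[z_t^k z_t^{k\top} \mid \mathcal{H}_{t-1}^k] = \begin{bmatrix} I_n \\ K_t^k \end{bmatrix} \mathbb{E}[x_t^k x_t^{k\top} \mid \mathcal{H}_{t-1}^k] \begin{bmatrix} I_n & K_t^{k\top} \end{bmatrix} + \begin{bmatrix} 0 & 0 \\ 0 & k^{-1/2} I_m \end{bmatrix}.
\]

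Because $t \ne 0$, the dynamics \eqref{eq:system} imply that $x_t^k$ given $\mathcal{H}_{t-1}^k$ is Gaussian with conditional mean $A x_{t-1}^k + B u_{t-1}^k$ and conditional covariance $I_n$. Hence $\mathbb{E}[x_t^k x_t^{k\top} \mid \mathcal{H}_{t-1}^k] \succeq I_n$, and dropping the rank-one contribution of the conditional mean produces the block matrix lower bound
\[
\mathbb{E}[z_t^k z_t^{k\top} \mid \mathcal{H}_{t-1}^k] \succeq \begin{bmatrix} I_n & K_t^{k\top} \\ K_t^k & K_t^k K_t^{k\top} + k^{-1/2} I_m \end{bmatrix} =: M_t^k.
\]
It therefore suffices to prove $M_t^k \succeq c k^{-1/2} I_{n+m}$. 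I will do this by a Schur complement argument applied to $M_t^k - c k^{-1/2} I_{n+m}$: for $c<1$ the top-left block $(1 - c k^{-1/2}) I_n$ is positive definite (since $k \ge 1$), and the Schur complement of this block in $M_t^k - c k^{-1/2} I_{n+m}$ is (after a short calculation)
\[
\tfrac{1-c}{\sqrt{k}} I_m - \tfrac{c/\sqrt{k}}{1 - c/\sqrt{k}}\, K_t^k K_t^{k\top}.
\]
Invoking the uniform bound $\|K_t^k\| \le C_K$ available on event $\widetilde{\mathcal{G}}^k$ from \eqref{bd:K}, the Schur complement is PSD whenever $(1-c)(1-c k^{-1/2}) \ge c\,C_K^2$, and since $k^{-1/2} \le 1$ the worst case is $k=1$, giving the quadratic inequality $c^2 - (2 + C_K^2)c + 1 \ge 0$. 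Its smaller root is precisely $\tfrac{1}{2}\big(C_K^2 + 2 - C_K\sqrt{C_K^2 + 4}\big)$, which is the value appearing in \eqref{constc}; so any $c$ bounded above by this root works. The product-of-roots equals $1$, so for $C_K > 0$ this root automatically lies in $(0,1)$.

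The main obstacle is essentially bookkeeping: keeping the $k$-dependent scaling correct throughout the Schur complement calculation, verifying that the condition extracted at $k=1$ is indeed the binding one (which follows because the map $k \mapsto c k^{-1/2}$ is decreasing and only relaxes the constraint for larger $k$), and ensuring that $K_t^k$ is indeed treated as deterministic under conditioning on $\mathcal{H}_{t-1}^k$. The role of the event $\widetilde{\mathcal{G}}^k$ enters only to uniformly bound $\|K_t^k\|$; no independence or concentration beyond the Gaussian structure of $w_{t-1}^k$ and $g_t^k$ is needed.
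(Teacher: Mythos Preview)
Your proof is correct and follows essentially the same approach as the paper: both decompose $z_t^k$, use $\mathbb{E}[x_t^k x_t^{k\top}\mid \mathcal{H}_{t-1}^k]\succeq I_n$ to obtain the block matrix lower bound, and then reduce positive semidefiniteness of $M_t^k - ck^{-1/2}I_{n+m}$ to the same scalar inequality $(1-c)^2\ge cC_K^2$ at $k=1$. The only cosmetic difference is that the paper verifies this via an explicit rank-one factorization of the block matrix, whereas you invoke the Schur complement criterion; these are equivalent and lead to the identical quadratic $c^2-(2+C_K^2)c+1\ge 0$.
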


\begin{proof}
Recall that $z_t^k=\left[x_t^{k\top},u_t^{k\top}\right]^{\top}$, we have
\begin{equation*}
\begin{aligned}
\mathbb{E}\left[z_t^kz_t^{k\top}\big\vert \mathcal{H}_{t-1}^k\right]&=\left[\begin{array}{l}
I_n \\
K_t^k
\end{array}\right]\mathbb{E}\left[x_t^kx_t^{k\top}\big\vert \mathcal{H}_{t-1}^k\right]\left[\begin{array}{ll}
I_n & K_t^{k\top}
\end{array}\right]+\left[\begin{array}{cc}
0 & 0 \\
0 & \frac{1}{\sqrt{k}} I_m
\end{array}\right] \\
&\overset{(1)}{\succeq} \left[\begin{array}{l}
I_n \\
K_t^k
\end{array}\right]\left[\begin{array}{ll}
I_n & K_t^{k\top}
\end{array}\right]+\left[\begin{array}{cc}
0 & 0 \\
0 & \frac{1}{\sqrt{k}} I_m
\end{array}\right] \\
&=\left[\begin{array}{cc}
\left(1-\frac{c}{\sqrt{k}}\right)I_n & K_t^{k\top} \\
K_t^k & K_t^kK_t^{k\top}+\frac{1}{\sqrt{k}}(1-c) I_m
\end{array}\right]+\frac{c}{\sqrt{k}}I_{n+m}\\
&\overset{(2)}{\succeq} \left[\begin{array}{l}
\sqrt{1-\frac{c}{\sqrt{k}}}I_n \\
\frac{K_t^k}{\sqrt{1-\frac{c}{\sqrt{k}}}}
\end{array}\right]\left[\begin{array}{ll}
\sqrt{1-\frac{c}{\sqrt{k}}}I_n & \frac{K_t^{k\top}}{\sqrt{1-\frac{c}{\sqrt{k}}}}
\end{array}\right]+\frac{c}{\sqrt{k}}I_{n+m}\\
&\succeq \frac{c}{\sqrt{k}}I_{n+m}.
\end{aligned}
\end{equation*}
Here, inequality (1) follows from the fact that $z_{t-1}^k=\left[x_{t-1}^{k\top},u_{t-1}^{k\top}\right]^{\top}$ is $\mathcal{H}_{t-1}^k$-measurable and
\begin{align*}
\mathbb{E}\left[x_t^kx_t^{k\top}\big\vert \mathcal{H}_{t-1}^k\right]&=\mathbb{E}\left[\left(Ax_{t-1}^k+Bu_{t-1}^k+w_{t-1}^k\right)\left(Ax_{t-1}^k+Bu_{t-1}^k+w_{t-1}^k\right)^{\top}\Big\vert \mathcal{H}_{t-1}^k\right]\\
&=\left(Ax_{t-1}^k+Bu_{t-1}^k\right)\left(Ax_{t-1}^k+Bu_{t-1}^k\right)^{\top}+\mathbb{E}\left[w_{t-1}^kw_{t-1}^{k\top}\big\vert \mathcal{H}_{t-1}^k\right]\\
&\succeq I_n.
\end{align*}
For inequality (2), when $0<c\leq \frac{C_K^2-C_K\sqrt{C_K^2+4}+2}{2}$, we can obtain
\begin{equation}\label{ineq:matrix}
\frac{1}{1-\frac{c}{\sqrt{k}}}K_t^kK_t^{k\top}\preceq K_t^kK_t^{k\top}+\frac{1}{\sqrt{k}}(1-c)I_m.
\end{equation}
We can prove that (\ref{ineq:matrix}) is equivalent to $\frac{c}{1-\frac{c}{\sqrt{k}}}K_t^kK_t^{k\top}\preceq \frac{c}{1-c}K_t^kK_t^{k\top}\preceq \frac{c}{1-c}C_K^2I_m\preceq (1-c)I_m$. Solving the inequality $\frac{c}{1-c}C_K^2\leq 1-c$, we can obtain $0<c\leq \frac{C_K^2-C_K\sqrt{C_K^2+4}+2}{2}$.
\end{proof}

With the lower bound for the conditional expectation of $z_t^kz_t^{k\top},\forall k,t$, we can derive the high probability lower bound as Lemma 33 in \citet{cohen2019learning}. 

\begin{lemma}
Let $\delta\in (0,1)$. Conditional on event $\widetilde{\mathcal{G}}^k$, when $kT\geq 200\left(600(n+m)+\log\left(\frac{1}{\delta}\right)\right)$, with probability at least $1-\delta$, we have
\begin{equation}\label{LB:V}
\bar{V}^k\succeq \frac{cT\sqrt{k}}{40}I_{n+m}.
\end{equation}
\end{lemma}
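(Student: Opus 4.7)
The plan is to follow a truncation-plus-matrix-concentration strategy, adapting the proof of Lemma~33 in \citet{cohen2019learning} to our decaying-exploration setting. First I would discard the non-negative contributions of $\lambda I$ and of the $t=0$ terms, so it suffices to lower-bound $M^k := \sum_{i=1}^k \sum_{t=1}^{T-1} z_t^i z_t^{i\top}$. The previous lemma supplies the per-step predictable lower bound $\mathbb{E}[z_t^i z_t^{i\top}\mid \mathcal H_{t-1}^i] \succeq \frac{c}{\sqrt{i}} I_{n+m}$, so summing and using $\sum_{i=1}^k i^{-1/2} \geq \sqrt{k}$ gives the ``predictable variation'' bound
\begin{equation*}
\sum_{i=1}^k\sum_{t=1}^{T-1}\mathbb{E}\bigl[z_t^i z_t^{i\top}\mid \mathcal H_{t-1}^i\bigr] \succeq c(T-1)\sqrt{k}\, I_{n+m}.
\end{equation*}

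Because $z_t^i$ is an unbounded sub-Gaussian vector, I would next introduce truncation indicators $\mathcal E_t^i := \mathbb{I}\{\|z_t^i\|^2 \leq B\}$ with threshold $B \asymp (n+m)\log(kTN/\delta)$. On $\widetilde{\mathcal G}^k$ we have $\|K_t^i\|\leq C_K$ by~\eqref{bd:K}, so expanding $x_t^i$ via the analogue of \eqref{eq:EXP} with the extra exploration noise $g_t^i$ shows $x_t^i$ is coordinate-wise sub-Gaussian with parameter independent of $k$; the same is true of $u_t^i = K_t^i x_t^i + g_t^i$. A Gaussian tail bound plus a union bound over $i\leq k$, $t<T$ then yields $\mathbb{P}(\mathcal E) \geq 1 - \delta/2$, where $\mathcal E := \bigcap_{i,t}\mathcal E_t^i$. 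On $\mathcal E$, the truncated matrices $Y_t^i := z_t^i z_t^{i\top}\mathbb{I}_{\mathcal E_t^i}$ coincide with $z_t^i z_t^{i\top}$ and satisfy $\|Y_t^i\|\leq B$.

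Third, I would apply the matrix Freedman inequality (e.g.\ Theorem~1.2 of Tropp, 2011) to the martingale difference sequence $\Delta_t^i := Y_t^i - \mathbb{E}[Y_t^i\mid \mathcal H_{t-1}^i]$. The two ingredients are the uniform bound $\|\Delta_t^i\|\leq 2B$ and the conditional-variance estimate
\begin{equation*}
W := \sum_{i,t} \mathbb{E}\bigl[(\Delta_t^i)^2 \mid \mathcal H_{t-1}^i\bigr] \preceq B \sum_{i,t}\mathbb{E}\bigl[z_t^iz_t^{i\top}\mid \mathcal H_{t-1}^i\bigr],
\end{equation*}
whose operator norm grows at most like $B \cdot \widetilde{\Gamma}^2 T\sqrt{k}$ after using the upper bounds on $\mathbb{E}[x_t^i x_t^{i\top}\mid \cdot]$. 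Matrix Freedman then gives, with probability at least $1-\delta/2$,
\begin{equation*}
\Bigl\|\sum_{i,t}\Delta_t^i\Bigr\| \lesssim \sqrt{B\, T\sqrt{k}\,\log\tfrac{n+m}{\delta}} + B\log\tfrac{n+m}{\delta}.
\end{equation*}
Combining this with the predictable-variation lower bound and choosing the threshold $B$ as above, the deviation is absorbed by $\tfrac{39}{40}cT\sqrt{k}$ precisely when $kT \geq 200\bigl(600(n+m)+\log(1/\delta)\bigr)$, leaving $\bar V^k \succeq (cT\sqrt{k}/40) I_{n+m}$ on $\mathcal E \cap \{\text{Freedman holds}\}$, an event of probability at least $1-\delta$.

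The main obstacle will be bookkeeping the constants: one must verify that the chosen truncation threshold $B$, the predictable-variation bound, and the sample-size hypothesis $kT \geq 200(600(n+m)+\log(1/\delta))$ mesh so that both the variance term $\sqrt{BT\sqrt{k}\log(\cdot)}$ and the bias term $B\log(\cdot)$ are dominated by $cT\sqrt{k}/40$. The sub-Gaussian bound used for truncation also has to be uniform over $i\leq k$ and independent of $k$, which relies crucially on working inside $\widetilde{\mathcal G}^k$ so that the feedback gains stay bounded by $C_K$; outside that event there is no way to control the tails of $z_t^i$.
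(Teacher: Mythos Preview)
Your matrix-Freedman route is natural but has a real scaling gap in the variance step. You assert $\|W\|\lesssim B\,\widetilde\Gamma^2 T\sqrt{k}$, yet
\[
\mathbb{E}\bigl[z_t^i z_t^{i\top}\mid \mathcal H_{t-1}^i\bigr]
\;\succeq\;
\begin{bmatrix}I_n\\K_t^i\end{bmatrix}(Ax_{t-1}^i+Bu_{t-1}^i)(Ax_{t-1}^i+Bu_{t-1}^i)^\top\begin{bmatrix}I_n & K_t^{i\top}\end{bmatrix},
\]
so its operator norm is at least $\|Ax_{t-1}^i+Bu_{t-1}^i\|^2$. Even on your truncation event $\mathcal E$ this quantity is only bounded by a constant of order $B$, not by anything like $O(i^{-1/2})$; the $i^{-1/2}$ in the previous lemma is a \emph{lower} bound on the conditional second moment, and there is no matching upper bound. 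Summing over $i\le k$ and $1\le t\le T-1$ therefore gives at best $\|W\|\lesssim B^2 kT$ on $\mathcal E$, and matrix Freedman produces a deviation of order $B\sqrt{kT\log((n+m)/\delta)}$. This is not dominated by the signal $c(T-1)\sqrt{k}$ unless $T$ itself is large, whereas the hypothesis $kT\ge 200\bigl(600(n+m)+\log(1/\delta)\bigr)$ puts no lower bound on $T$ alone; for $T=2$ and large $k$ your constants cannot close.

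The paper circumvents this by never concentrating the unbounded matrix sum at all. It fixes a direction $e\in\mathbb S^{n+m-1}$ and works with the \emph{bounded} indicators $\mathcal Y_t^i=\mathbb I\{(e^\top z_t^i)^2>c/(2\sqrt{i})\}$; since $e^\top z_t^i$ is conditionally Gaussian with second moment at least $c/\sqrt{i}$, one obtains $\mathbb E[\mathcal Y_t^i\mid\mathcal H_{t-1}^i]\ge 1/5$ (as in Lemma~35 of \citet{cohen2019learning}), and then \emph{scalar} Azuma--Hoeffding on the $\{0,1\}$-valued martingale differences yields $\sum_{i,t}\mathcal Y_t^i\ge kT/10$ with high probability. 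From $(e^\top z_t^i)^2\ge \mathcal Y_t^i\cdot c/(2\sqrt{k})$ one reads off $e^\top V^k e\ge cT\sqrt{k}/20$, and a $\tfrac14$-net over $e$ upgrades this to the matrix lower bound. Passing to indicators eliminates any variance proxy tied to $\|z_t^i\|$, which is precisely what your truncation-plus-Freedman argument cannot do.
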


\begin{proof}
Let $e\in \mathbb{S}^{n+m-1},$ where $\mathbb{S}^{n+m-1}=\left\{v\in \mathbb{R}^{n+m}\vert \Vert v\Vert_2=1\right\}$. Let $I_t^k=e^{\top}z_t^k$ and let $\mathcal{Y}_t^k$ be an indicator random variable that equals $1$ if $(I_t^k)^2>\frac{c}{2\sqrt{k}}$ and $0$ otherwise. By the similar arguments as in the proof of Lemma 35 in \citet{cohen2019learning}, we can prove that
\begin{equation}\label{CDE}
\begin{aligned}
\mathbb{E}\left[\mathcal{Y}_t^k\big\vert \mathcal{H}_{t-1}^k\right]=\mathbb{P}\left(\mathcal{Y}_t^k=1\big\vert \mathcal{H}_{t-1}^k\right)\geq \frac{1}{5},\quad \text{if $t\neq 0$}.
\end{aligned}
\end{equation}
Let $U^k_t=\mathcal{Y}_t^k-\mathbb{E}\left[\mathcal{Y}_t^k\vert \mathcal{H}_{t-1}^k\right]$. Then, $(U_t^k)$ is a martingale difference sequence with $\vert U_t^k\vert\leq 1,\forall k,t$. So we can use Azuma-Hoeffding inequality to derive the high probability bound: with probability at least $1-\delta$,
\begin{equation}\label{ineq:azuma}
\sum_{i=1}^k\sum_{t=1}^{T-1}U_t^i\geq -\sqrt{2kT\log\left(\frac{1}{\delta}\right)}\overset{(1)}{\geq} -\frac{kT}{10},
\end{equation}
where inequality (1) holds when $kT\geq 200\log\left(\frac{1}{\delta}\right)$. On combining $U^i_t=\mathcal{Y}_t^i-\mathbb{E}\left[\mathcal{Y}_t^i\vert \mathcal{H}_{t-1}^i\right]$ with (\ref{ineq:azuma}), we can obtain with probability at least $1-\delta$,
\begin{equation}\label{ineq:concentration}
\begin{aligned}
\sum_{i=1}^k\sum_{t=1}^{T-1}\mathcal{Y}_t^i&\geq \sum_{i=1}^k\sum_{t=1}^{T-1}\mathbb{E}\left[\mathcal{Y}_t^i\big\vert \mathcal{H}_{t-1}^i\right]-\frac{kT}{10}\overset{(2)}{\geq} \frac{kT}{5}-\frac{kT}{10}=\frac{kT}{10},
\end{aligned}
\end{equation}
where inequality (2) follows from (\ref{CDE}). Denote $V^k=\sum_{i=1}^k\sum_{t=1}^{T-1}z_t^iz_t^{i\top}$. Then, we can get with probability at least $1-\delta$,
\begin{align*}
e^{\top}V^ke&=\sum_{i=1}^k\sum_{t=1}^{T-1}\left(I_t^i\right)^2\overset{(3)}{\geq} \sum_{i=1}^k\sum_{t=1}^{T-1} \mathcal{Y}_t^i\frac{c}{2\sqrt{i}}\geq \sum_{i=1}^k\sum_{t=1}^{T-1} \mathcal{Y}_t^i\frac{c}{2\sqrt{k}}\overset{(4)}{\geq} \frac{kT}{10}\cdot\frac{c}{2\sqrt{k}}=\frac{\sqrt{k}Tc}{20},
\end{align*}
where inequality (3) follows from the definition of $I_t^i$, inequality (4) holds by (\ref{ineq:concentration}). Finally, by the similar $\frac{1}{4}$-net argument in the proof of Theorem 20 in \citet{cohen2019learning}, we can prove that when $kT\geq 200\left(3(n+m)+\log\left(\frac{1}{\delta}\right)\right)$, with probability at least $1-\delta$,
\begin{equation*}
\left\Vert \left(V^k\right)^{-1}\right\Vert\leq \frac{40}{cT\sqrt{k}},
\end{equation*}
which is equivalent to
\begin{equation*}
\bar{V}^k\succeq V^k\succeq \sum_{i=1}^k\sum_{t=1}^{T-1}z_t^iz_t^{i\top} \succeq\frac{cT\sqrt{k}}{40}I_{n+m}.
\end{equation*}
\end{proof}

In addition to the lower bound of $\bar{V}^k$, we also need to find the upper bound of $\bar{V}^k$ to get the final high probability bound for the estimation error of system matrices. In the following lemma, we provide the high probability upper bound for $\Vert x_t^k\Vert$, which plays a vital role in deriving the high probability upper bound of $\bar{V}^k$.

\begin{lemma}\label{LEM:xt}
Let $\delta\in \left(0,\frac{1}{2}\right)$. Conditional on the event $\widetilde{\mathcal{G}}^k$, with probability at least $1-2\delta$, for all $0\leq t\leq T$, we have
\begin{equation}\label{ineq:xtf}
\Vert x_t^k\Vert\leq  6\left(\widetilde{\Gamma}(1+C_K)\right)^t\left(n^{\frac{3}{4}}+m^{\frac{3}{4}}\right)\max\left\{\Vert x_0\Vert,1\right\}\widetilde{\Gamma}\log^{\frac{1}{2}}\left(\frac{TN}{\delta}\right),
\end{equation}
where $\widetilde{\Gamma}$ is defined in (\ref{parameter}) and $C_K$ is defined (\ref{bd:K}).
\end{lemma}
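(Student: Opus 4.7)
My plan is to prove Lemma~\ref{LEM:xt} by an explicit unrolling of the closed-loop dynamics, followed by Gaussian concentration on the noise vectors, and then a union bound plus a geometric-series estimate.

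\textbf{Step 1 (unroll the dynamics).} Under Algorithm~\ref{ALGO:alg}, the sample path in episode $k$ satisfies
\[
x_{t+1}^k \;=\; (A+BK_t^k)\,x_t^k \;+\; B g_t^k \;+\; w_t^k,
\]
so iterating from $t=0$ gives, with $\Phi^k_{t-1,j+1}:=(A+BK_{t-1}^k)(A+BK_{t-2}^k)\cdots(A+BK_{j+1}^k)$ (and $\Phi^k_{t-1,t}:=I_n$),
\[
x_t^k \;=\; \Phi^k_{t-1,0}\,x_0 \;+\; \sum_{j=0}^{t-1}\Phi^k_{t-1,j+1}\bigl(Bg_j^k+w_j^k\bigr).
\]

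\textbf{Step 2 (closed-loop operator bound).} Conditional on $\widetilde{\mathcal{G}}^k$, we have $\theta^i\in\Xi$ for all $i\le k$, hence by (\ref{bd:K}) we get $\|K_s^k\|\le C_K$ for every $s$. Together with $\|A\|,\|B\|\le\widetilde{\Gamma}$, this yields
\[
\|A+BK_s^k\|\le \widetilde{\Gamma}(1+C_K),\qquad \|\Phi^k_{t-1,j+1}\|\le \bigl(\widetilde{\Gamma}(1+C_K)\bigr)^{t-j-1}.
\]

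\textbf{Step 3 (Gaussian tail bounds on the noises).} Since $w_j^k\sim\mathcal{N}(0,I_n)$ and $g_j^k\sim\mathcal{N}(0,k^{-1/2}I_m)$, by the standard Lipschitz-Gaussian concentration inequality (equivalently, $\chi^2$ tails), for any $\delta'\in(0,1)$,
\[
\Prob\bigl(\|w_j^k\|\ge \sqrt{n}+\sqrt{2\log(1/\delta')}\bigr)\le\delta',\qquad
\Prob\bigl(\|g_j^k\|\ge k^{-1/4}(\sqrt{m}+\sqrt{2\log(1/\delta')})\bigr)\le\delta'.
\]
I set $\delta'=\delta/(TN)$ and union-bound over $j=0,\ldots,T-1$ for both $(w_j^k)$ and $(g_j^k)$. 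On the complement (of total measure at most $2\delta$), every noise norm is controlled by $\sqrt{n}+\sqrt{2\log(TN/\delta)}$ and $\sqrt{m}+\sqrt{2\log(TN/\delta)}$ respectively (using $k^{-1/4}\le 1$).

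\textbf{Step 4 (combine and simplify).} Plugging the Step~2 operator bounds and the Step~3 noise bounds into the expansion of Step~1 and using the triangle inequality,
\[
\|x_t^k\| \;\le\; \bigl(\widetilde{\Gamma}(1+C_K)\bigr)^t\|x_0\| \;+\; \sum_{j=0}^{t-1}\bigl(\widetilde{\Gamma}(1+C_K)\bigr)^{t-j-1}\Bigl[\widetilde{\Gamma}\bigl(\sqrt{m}+\sqrt{2\log(TN/\delta)}\bigr)+\sqrt{n}+\sqrt{2\log(TN/\delta)}\Bigr].
\]
Bounding the geometric sum by $t\,(\widetilde{\Gamma}(1+C_K))^{t-1}$ (or simply $(\widetilde{\Gamma}(1+C_K))^{t}$ times a constant), pulling out the common factor $\max\{\|x_0\|,1\}$, and using the elementary estimates $\sqrt{n}\le n^{3/4}$, $\sqrt{m}\le m^{3/4}$ (valid for $n,m\ge 1$) together with $\log^{1/2}(TN/\delta)\ge 1$ to absorb the additive log-terms, yields exactly the bound (\ref{ineq:xtf}) with the constant $6$.

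\textbf{Main obstacle.} The substantive step is Step~3 combined with the correct union-bound budget: the $\log(TN/\delta)$ factor (rather than $\log(T/\delta)$) indicates that the constants are taken uniformly over the $N$ episodes, so that the resulting high-probability event is compatible with the union bound used later in the self-normalized martingale analysis of Appendix~\ref{SEC1}. Aside from this accounting, the only delicacy is to absorb the several additive terms ($\sqrt{n}$, $\sqrt{m}$, $\sqrt{\log(TN/\delta)}$, the geometric prefactor $(\widetilde{\Gamma}(1+C_K))^t$, and $\widetilde{\Gamma}$) into the single compact form $6(\widetilde{\Gamma}(1+C_K))^t(n^{3/4}+m^{3/4})\max\{\|x_0\|,1\}\widetilde{\Gamma}\log^{1/2}(TN/\delta)$; this is routine but requires the slightly loose bounds $\sqrt{n}\le n^{3/4}$ and $1\le \log^{1/2}(TN/\delta)$.
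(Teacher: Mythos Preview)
Your proposal is correct and follows essentially the same approach as the paper: unroll the closed-loop dynamics, bound $\|A+BK_s^k\|\le\widetilde{\Gamma}(1+C_K)$ on $\widetilde{\mathcal{G}}^k$, apply Gaussian concentration to $\|w_j^k\|$ and $\|g_j^k\|$ with a $\delta/(TN)$ budget and union bound, and sum the resulting geometric series. The only cosmetic difference is that the paper invokes the Hanson--Wright/Hsu--Kakade--Zhang tail bound to obtain directly the multiplicative form $\|w_r^k\|^2\le 5n^{3/2}\log(TN/\delta)$ and $\|g_r^k\|^2\le 5m^{3/2}k^{-1/2}\log(TN/\delta)$, whereas you use the additive Lipschitz-Gaussian bound and then loosen via $\sqrt{n}\le n^{3/4}$; both routes lead to the same final shape, and the paper also uses the sharper geometric estimate $\sum_{r=0}^{t-1}(\widetilde{\Gamma}(1+C_K))^{t-r-1}\le 2(\widetilde{\Gamma}(1+C_K))^{t-1}$ (from $\widetilde{\Gamma}(1+C_K)\ge 2$) rather than the cruder $t(\widetilde{\Gamma}(1+C_K))^{t-1}$, which is what you need to avoid an extraneous $T$-factor and recover the constant $6$.
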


\begin{proof}
Recall that $x_t^k=Ax_{t-1}^k+Bu_{t-1}^k+w_{t-1}^k=(A+BK_{t-1}^k)x_{t-1}^k+Bg_{t-1}^k+w_{t-1}^k$. Similar to (\ref{m51}), we can simplify $x_{t}^k$ to
\begin{equation*}
\begin{aligned}
x_t^k=\left(\prod_{j=t-1}^0\left(A+BK_j^k\right)\right)x_0^k+\sum_{r=0}^{t-1}\left(\prod_{j=t-1}^{r+1}\left(A+BK_j^k\right)\right)\left(Bg_r^k+w_r^k\right),
\end{aligned}
\end{equation*}
where $\prod_{j=t-1}^{r+1}(A+BK_j^k)=(A+BK_{t-1}^k)(A+BK_{t-2}^k)\cdots (A+BK_{r+1}^k)$, and $\prod_{i=t-1}^{t}(A+BK_i^k)=I_n$. 
Similar to Theorem 21 and Lemma 32 of \citet{cohen2019learning}, we can use Hanson-Wright inequality in Proposition 1.1 of \citet{hsu2012tail} to derive that 
\begin{equation}\label{ineq:rancon}
\begin{aligned}
\mathbb{P}\left(\Vert w_r^k\Vert^2 \leq 5 n^{\frac{3}{2}}\log\left(\frac{TN}{\delta}\right),\Vert g_r^k\Vert^2 \leq 5m^{\frac{3}{2}}\frac{1}{\sqrt{k}}\log\left(\frac{TN}{\delta}\right),\forall k,r\right)\geq 1-2\delta
\end{aligned}
\end{equation}
Then, we can bound the state vector by
\begin{equation*}
\begin{aligned}
\Vert x_t^k\Vert&\leq \left\Vert \prod_{j=t-1}^0 (A+BK_j^k)\right\Vert\cdot\Vert x_0^k\Vert+\sum_{r=0}^{T-1}\left\Vert \prod_{j=t-1}^{r+1}(A+BK_j^k)\right\Vert\cdot\left\Vert Bg_r^k+w_r^k\right\Vert\\
&\leq \prod_{j=t-1}^0\left(\Vert A\Vert+\Vert B\Vert\cdot\Vert K_j^k\Vert\right)\Vert x_0^k\Vert+\sum_{r=0}^{t-1}\prod_{j=t-1}^{r+1}\left(\Vert A\Vert+\Vert B\Vert\cdot\Vert K_j^k\Vert\right)\left(\Vert B\Vert\cdot\Vert g_r^k\Vert+\Vert w_r^k\Vert\right)\\
&\overset{(1)}{\leq} \widetilde{\Gamma}^t(1+C_K)^t \Vert x_0\Vert+\sum_{r=0}^{t-1}\sqrt{5}\widetilde{\Gamma}^{t-r-1}(1+C_K)^{t-r-1}\left(\frac{1}{k^{\frac{1}{4}}}m^{\frac{3}{4}}\widetilde{\Gamma}+ n^{\frac{3}{4}}\right)\log^{\frac{1}{2}}\left(\frac{TN}{\delta}\right)\\
&=\widetilde{\Gamma}^t(1+C_K)^t \Vert x_0\Vert+\frac{\widetilde{\Gamma}^t(1+C_K)^t-1}{\widetilde{\Gamma} (1+C_K)-1}\cdot \sqrt{5}\left(\frac{1}{k^{\frac{1}{4}}}m^{\frac{3}{4}}\widetilde{\Gamma}+ n^{\frac{3}{4}}\right)\log^{\frac{1}{2}}\left(\frac{TN}{\delta}\right)\\
&\overset{(2)}{\leq} \left(\widetilde{\Gamma}(1+C_K)\right)^t \Vert x_0\Vert+5\left(\widetilde{\Gamma}(1+C_K)\right)^{t-1}\left(m^{\frac{3}{4}}+n^{\frac{3}{4}}\right)\widetilde{\Gamma}\log^{\frac{1}{2}}\left(\frac{TN}{\delta}\right)\\
&\leq 6\left(\widetilde{\Gamma}(1+C_K)\right)^t\left(n^{\frac{3}{4}}+m^{\frac{3}{4}}\right)\max\left\{\Vert x_0\Vert,1\right\}\widetilde{\Gamma}\log^{\frac{1}{2}}\left(\frac{TN}{\delta}\right),
\end{aligned}
\end{equation*}
where inequality (1) holds by the inequalities in (\ref{ineq:rancon}) and inequality (2) follows from the fact that $\widetilde{\Gamma}(1+C_K)-1\geq \widetilde{\Gamma}(1+C_K)-\frac{1}{2}\widetilde{\Gamma}(1+C_K)=\frac{1}{2}\widetilde{\Gamma}(1+C_K)$ and $k\geq 1$.
\end{proof}

With the result in Lemma \ref{LEM:xt}, we can derive the high probability bound for $\left\Vert \bar{V}^k\right\Vert$.

\begin{lemma}
Let $\delta\in (0,1)$. Conditional on event $\widetilde{\mathcal{G}}^k$, with probability at least $1-2\delta$, we have
\begin{equation*}
\begin{aligned}
&\left\Vert \bar{V}^k\right\Vert\leq\lambda+(1+2C_K^2)\left[\frac{72\left((\widetilde{\Gamma} (1+C_K))^{2T}-1\right)}{\left(\widetilde{\Gamma} (1+C_K)\right)^2-1}\cdot(n^{\frac{3}{2}}+m^{\frac{3}{2}})\cdot\max\left\{\Vert x_0\Vert^2,1\right\}\widetilde{\Gamma}^2k\log\left(\frac{TN}{\delta}\right)\right]\\
&\qquad\quad+20m^{\frac{3}{2}}\sqrt{k}T\log\left(\frac{TN}{\delta}\right).
\end{aligned}
\end{equation*}
\end{lemma}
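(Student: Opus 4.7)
The plan is to bound the operator norm of $\bar{V}^k$ by the sum of the operator norms of the outer products $z_t^i z_t^{i\top}$, and then control the resulting sum via the already-established high-probability bounds for the state vectors (Lemma~\ref{LEM:xt}) and the exploration noise (inequality (\ref{ineq:rancon})).

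First I would observe that, since each $z_t^i z_t^{i\top}$ is rank-one, $\|z_t^i z_t^{i\top}\| = \|z_t^i\|^2$, and therefore by the triangle inequality
\begin{equation*}
\|\bar{V}^k\| \le \lambda + \sum_{i=1}^{k}\sum_{t=0}^{T-1} \|z_t^i\|^2.
\end{equation*}
Because $z_t^i = [x_t^{i\top}, u_t^{i\top}]^{\top}$ and $u_t^i = K_t^i x_t^i + g_t^i$, the inequality $(a+b)^2 \le 2a^2 + 2b^2$ together with $\|K_t^i\| \le C_K$ (from (\ref{bd:K}), which applies on $\widetilde{\mathcal{G}}^k$) yields
\begin{equation*}
\|z_t^i\|^2 \le (1+2C_K^2)\|x_t^i\|^2 + 2\|g_t^i\|^2.
\end{equation*}

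Next I would apply Lemma~\ref{LEM:xt} to uniformly bound $\|x_t^i\|$ over all $i \le k$ and $0 \le t \le T$ on an event of probability at least $1-2\delta$. Squaring the bound and using $(n^{3/4}+m^{3/4})^2 \le 2(n^{3/2}+m^{3/2})$ gives
\begin{equation*}
\|x_t^i\|^2 \le 72\bigl(\widetilde{\Gamma}(1+C_K)\bigr)^{2t}(n^{3/2}+m^{3/2})\max\{\|x_0\|^2,1\}\widetilde{\Gamma}^2 \log(TN/\delta).
\end{equation*}
Summing the geometric series in $t$ from $0$ to $T-1$ produces the factor $\bigl((\widetilde{\Gamma}(1+C_K))^{2T}-1\bigr)/\bigl((\widetilde{\Gamma}(1+C_K))^2-1\bigr)$, and the outer sum over $i=1,\ldots,k$ contributes a factor of $k$, matching exactly the first bracketed term in the stated bound.

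For the noise contribution, I would recall that the same event in (\ref{ineq:rancon}) (which is already contained in the event of Lemma~\ref{LEM:xt}) gives $\|g_r^i\|^2 \le 5 m^{3/2} i^{-1/2} \log(TN/\delta)$ for every $i,r$. Summing yields
\begin{equation*}
2\sum_{i=1}^k\sum_{t=0}^{T-1}\|g_t^i\|^2 \le 10 m^{3/2} T \log(TN/\delta)\sum_{i=1}^k i^{-1/2} \le 20 m^{3/2} T \sqrt{k}\log(TN/\delta),
\end{equation*}
using $\sum_{i=1}^k i^{-1/2} \le 2\sqrt{k}$. Combining the two contributions on the single event of probability $1-2\delta$ then delivers the claimed bound. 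The computation is essentially bookkeeping; the only mildly delicate point is making sure that the single $(1-2\delta)$ event of Lemma~\ref{LEM:xt} already controls both the states and the exploration noise, so that no additional union-bound cost is incurred beyond the $2\delta$ stated in the lemma.
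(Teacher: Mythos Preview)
Your proposal is correct and follows essentially the same approach as the paper: bound $\|\bar V^k\|$ by $\lambda+\sum\|z_t^i\|^2$, split $\|z_t^i\|^2\le(1+2C_K^2)\|x_t^i\|^2+2\|g_t^i\|^2$, then invoke Lemma~\ref{LEM:xt} (squared, with $(n^{3/4}+m^{3/4})^2\le 2(n^{3/2}+m^{3/2})$) and (\ref{ineq:rancon}) on the single $(1-2\delta)$ event, summing the geometric series in $t$ and using $\sum_{i\le k}i^{-1/2}\le 2\sqrt{k}$. Your remark that the event of Lemma~\ref{LEM:xt} already subsumes (\ref{ineq:rancon}) so no further union bound is needed is exactly right.
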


\begin{proof}
Recall that $\bar{V}^k=\lambda I_{n+m}+\sum_{i=1}^k\sum_{t=0}^{T-1}z_t^iz_t^{i\top}$, $z_t^i=[x_t^{i\top},u_t^{i\top}]^{\top}$. We have
\begin{equation}\label{ineq:Vk}
\begin{aligned}
\Vert \bar{V}^k\Vert& \leq \lambda+\sum_{i=1}^k\sum_{t=0}^{T-1}\Vert z_t^i\Vert^2\\
&=\lambda+\sum_{i=1}^k\sum_{t=0}^{T-1}\left(\Vert x_t^i\Vert^2+\Vert K_t^ix_t^i+g_t^i\Vert^2\right)\\
&\overset{(1)}{\leq} \lambda+\sum_{i=1}^k\sum_{t=0}^{T-1}\left(\Vert x_t^i\Vert^2+2\Vert K_t^i\Vert^2\cdot\Vert x_t^i\Vert^2+2\Vert g_t^i\Vert^2\right)\\
&\overset{(2)}{\leq} \lambda+\sum_{i=1}^k\sum_{t=0}^{T-1}\left((1+2C_K^2)\Vert x_t^i\Vert^2+2\Vert g_t^i\Vert^2\right),
\end{aligned}
\end{equation}
where inequality (1) follows from the fact that $\Vert u+v\Vert^2\leq 2\Vert u\Vert^2+2\Vert v\Vert^2,\forall u,v$, inequality (2) holds by (\ref{bd:K}). Combine the results in Lemma \ref{LEM:xt} with (\ref{ineq:Vk}), with probability at least $1-2\delta$, we can get
\begin{equation*}
\begin{aligned}
&\Vert \bar{V}^k\Vert\overset{(3)}{\leq} \lambda +\sum_{i=1}^k\sum_{t=0}^{T-1}\Bigg[(1+2C_K^2)\bigg(72\left(\widetilde{\Gamma}\left(1+C_K\right)\right)^{2t}\left(n^{\frac{3}{2}}+m^{\frac{3}{2}}\right)\max\left\{\Vert x_0\Vert^2,1\right\}\widetilde{\Gamma}^2\log\left(\frac{TN}{\delta}\right)\bigg)\\
&\qquad\qquad\qquad\qquad\quad+\frac{10m^{\frac{3}{2}}}{\sqrt{i}}\log\left(\frac{TN}{\delta}\right)\Bigg]\\
&=\lambda+(1+2C_K^2)\left[\frac{72\left(\left(\widetilde{\Gamma} (1+C_K)\right)^{2T}-1\right)}{\left(\widetilde{\Gamma} (1+C_K)\right)^2-1}\cdot(n^{\frac{3}{2}}+m^{\frac{3}{2}})\cdot\max\left\{\Vert x_0\Vert^2,1\right\}\widetilde{\Gamma}^2k\log\left(\frac{TN}{\delta}\right)\right]\\
&\quad+20m^{\frac{3}{2}}\sqrt{k}T\log\left(\frac{TN}{\delta}\right).
\end{aligned}
\end{equation*}
where inequality (3) follows from (\ref{ineq:xtf}) and (\ref{ineq:rancon}) in Lemma \ref{LEM:xt} and the fact that $\Vert u+v\Vert^2\leq 2\Vert u\Vert^2+2\Vert v\Vert^2,\forall u,v$.
\end{proof}
For the simplicity of notation, we denote
\begin{equation}\label{constctilde}
\begin{aligned}
\tilde{c}&=(1+2C_K^2)\left[\frac{72\left(\left(\widetilde{\Gamma} (1+C_K)\right)^{2T}-1\right)}{\left(\widetilde{\Gamma} (1+C_K)\right)^2-1}\cdot(n^{\frac{3}{2}}+m^{\frac{3}{2}})\cdot\max\left\{\Vert x_0\Vert^2,1\right\}\widetilde{\Gamma}^2\right]+20m^{\frac{3}{2}} T,
\end{aligned}
\end{equation}
which is a constant independent of $k$ and $N$. 
Then, we can get
\begin{equation}\label{ineq:barVk}
\Vert\bar{V}^k\Vert\leq \lambda+\tilde{c}k\log\left(\frac{TN}{\delta}\right).
\end{equation}

Now we are ready to prove Proposition~\ref{LEM:BTotal}.
\begin{proof}[Proof of Proposition~\ref{LEM:BTotal}]
We can simplify (\ref{ineq:selfnorm}) as follows:
\begin{equation}\label{ineq:Vtheta}
\begin{aligned}
\lambda_{\min}\left(\bar{V}^k\right)\left\Vert \theta^{k+1}-\theta\right\Vert_F^2&\leq \operatorname{Tr}\left(\left(\theta^{k+1}-\theta\right)^{\top}\bar{V}^k\left(\theta^{k+1}-\theta\right)\right)\\
&\leq 2 n\log\left(\frac{n^2}{\delta^2}\frac{\det(\bar{V}^k)}{\det(\lambda I)}\right)+2\lambda\Vert \theta\Vert_F^2\\
&\overset{(1)}{\leq} 2n \left(\log\left(\frac{n^2}{\delta^2}\right)+(n+m)\log\left(\frac{\Vert \bar{V}^k\Vert}{\lambda}\right)\right)+2\lambda\Vert \theta\Vert_F^2,
\end{aligned}
\end{equation}
where inequality (1) follows from the fact that $\det(M)\leq \det(\lambda_{\max}(M)I_n)=\lambda_{\max}^n(M),\forall M\in \mathbb{R}^{n\times n}$. When $kT\geq 200\left(3(n+m)+\log\left(\frac{1}{\delta}\right)\right)$, substituting (\ref{LB:V}) and (\ref{ineq:barVk}) into (\ref{ineq:Vtheta}), with probability at least $1-4\delta$, we have
\begin{equation*}
\begin{aligned}
&\left\Vert \theta^{k+1}-\theta\right\Vert^2\leq \left\Vert \theta^{k+1}-\theta\right\Vert_F^2\\
&\leq \frac{80n}{cT\sqrt{k}}\left(\log\left(\frac{n^2}{\delta^2}\right)+(n+m)\log\left(1+\frac{\lambda+\tilde{c}k\log\left(\frac{TN}{\delta}\right)}{\lambda}\right)\right)+\frac{80\lambda (n+m)^2\widetilde{\Gamma}^2}{cT\sqrt{k}}.
\end{aligned}
\end{equation*}
When $kT< 200\left(3(n+m)+\log\left(\frac{1}{\delta}\right)\right)$, because $\bar{V}^k=\lambda I+\sum_{i=1}^k\sum_{t=0}^{T-1}z_t^iz_t^{i\top}\succeq \lambda I$, with probability at least $1-3\delta$,
\begin{equation*}
\begin{aligned}
&\left\Vert \theta^{k+1}-\theta\right\Vert^2 \leq \left\Vert \theta^{k+1}-\theta\right\Vert_F^2\\
&\leq \frac{2n}{\lambda}\left(\log\left(\frac{n^2}{\delta^2}\right)+(n+m)\log\left(1+\frac{\lambda+\tilde{c}k\log\left(\frac{TN}{\delta}\right)}{\lambda}\right)\right)+2(n+m)^2\widetilde{\Gamma}^2.
\end{aligned}
\end{equation*}
The proof is therefore complete.
\end{proof}

\subsection{Perturbation Analysis of Riccati Equation}\label{SEC2}
The perturbation analysis of Riccati equation under Algorithm \ref{ALG1} and Algorithm \ref{ALGO:alg} is the same. So we can get the similar bounds of Riccati perturbation by replacing $\epsilon_l$ with $\epsilon_k$ in Lemma \ref{L7}, where $\epsilon_k=\max\{\Vert A^k-A\Vert,\Vert B^k-B\Vert\}$. The modified version of Lemma \ref{L7} is presented in the following lemma.

\begin{lemma}\label{LEM:Rigap}
Assume $1-\gamma \widetilde{\Gamma}>0$ and fix any $\epsilon_k>0$. Suppose $\Vert A^k-A\Vert\leq \epsilon_k,\Vert B^k-B\Vert\leq \epsilon_k$, then for any $t=0,1,\cdots, T-1$, we have
\begin{align*}
&\Vert K^k_t-K_t\Vert\leq  (10\mathcal{V}^2\mathcal{L} \widetilde{\Gamma}^4)^{T-t-1}\mathcal{V}\epsilon_k,\\
&\Vert P^k_t-P_t\Vert\leq (10\mathcal{V}^2\mathcal{L} \widetilde{\Gamma}^4)^{T-t}\epsilon_k, 
\end{align*}
where the definitions of $\mathcal{V}$, $\mathcal{L}$ and $\widetilde{\Gamma}$ can be found in (\ref{parameter}).
\end{lemma}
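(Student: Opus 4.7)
The plan is to prove Lemma \ref{LEM:Rigap} by reducing it to the already-established one-step perturbation bound (Lemma \ref{L6}) and then driving the recursion from the terminal time backwards. The Riccati recursion (\ref{eq:reccati}) is purely a function of the (estimated) system matrices and the cost matrices; it does not depend on how those estimates were obtained. Consequently, the analysis used for Lemma \ref{L7} in the epoch-based algorithm transfers verbatim after re-indexing the estimator by the episode $k$ of Algorithm \ref{ALGO:alg}, with $\epsilon_l$ replaced by $\epsilon_k := \max\{\|A^k-A\|,\|B^k-B\|\}$.

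First, I would invoke (or re-derive) the one-step perturbation statement: for any $t$, if $\|P_{t+1}^k-P_{t+1}\|\leq W\epsilon_k\leq 1$ with $W\geq 1$, then $\|K_t^k-K_t\|\leq \mathcal{V}W\epsilon_k$ and $\|P_t^k-P_t\|\leq 10\mathcal{V}^2\mathcal{L}\widetilde{\Gamma}^4 W\epsilon_k$. The engine behind this step is the bound $\|\widetilde{P}_{t+1}^k-\widetilde{P}_{t+1}\|\leq \mathcal{L} W\epsilon_k$, obtained from the identity $\widetilde{P}_t=(I_n-\gamma P_t)^{-1}P_t$, the resolvent trick $E^{-1}-F^{-1}=E^{-1}(F-E)F^{-1}$, and the assumption $1-\gamma\widetilde{\Gamma}>0$, which simultaneously guarantees invertibility of $I_n-\gamma P$ and $I_n-\gamma P^k$ and uniform control of their inverses. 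From here, the controller mismatch $\|K_t^k-K_t\|$ follows by adding and subtracting terms in the expressions for $B^\top \widetilde{P}_{t+1} A-B^{k\top}\widetilde{P}_{t+1}^k A^k$ and $B^\top\widetilde{P}_{t+1}B-B^{k\top}\widetilde{P}_{t+1}^k B^k$, then applying the perturbation lemma for the inverse in the controller formula (as in the argument after equation (\ref{eq:KKl})). The bound on $\|P_t^k-P_t\|$ is obtained by analogously telescoping the quadratic form $(A+BK_t)^\top\widetilde{P}_{t+1}(A+BK_t)$ and the term $K_t^\top R K_t$.

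Second, I would trigger the recursion at the terminal time: since $P_T^k=P_T=Q_T$ by definition (regardless of the estimate $(A^k,B^k)$), we trivially have $\|P_T^k-P_T\|=0\leq \epsilon_k$, i.e., the induction hypothesis holds at $t=T$ with $W=1$. Applying the one-step bound at $t=T-1$ gives the claimed estimates with factor $(10\mathcal{V}^2\mathcal{L}\widetilde{\Gamma}^4)^{0}\mathcal{V}$ for $K_{T-1}^k$ and $(10\mathcal{V}^2\mathcal{L}\widetilde{\Gamma}^4)^{1}$ for $P_{T-1}^k$. Iterating downwards and letting the constant $W$ absorb into the geometric factor $(10\mathcal{V}^2\mathcal{L}\widetilde{\Gamma}^4)^{T-t}$ yields the advertised bounds at every $t$.

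The main (and essentially only) obstacle is the nonlinearity introduced by $\widetilde{P}_t=(I_n-\gamma P_t)^{-1}P_t$, which is what distinguishes the LEQR Riccati equation from the risk-neutral one; but this difficulty was already handled in the proof of Lemma \ref{L6}, so no new idea is required here. In particular, since the proof of Lemma \ref{L6} is algebraic and makes no reference to how $(A^l, B^l)$ were estimated or to the episode index, the statement of Lemma \ref{LEM:Rigap} is an immediate corollary, and the proof can be compressed to a short pointer back to Appendix \ref{B} with the index substitution $l\mapsto k$.
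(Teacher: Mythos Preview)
Your proposal is correct and matches the paper's approach exactly: the paper simply notes that the Riccati perturbation analysis is identical for both algorithms and obtains Lemma~\ref{LEM:Rigap} by replacing $\epsilon_l$ with $\epsilon_k$ in Lemma~\ref{L7}, whose proof is the backward recursion from $P_T^k=P_T=Q_T$ using the one-step bound of Lemma~\ref{L6}. Your description of the one-step mechanism and the induction is faithful to the argument in Appendix~\ref{B}.
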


\subsection{Suboptimality Gap Due to the Controller Mismatch}\label{SEC3}
In this section, we will connect the gap between the total cost under policy $\pi^k$ and the total cost under the optimal policy with the estimation error and the perturbation of Riccati equation in Appendix \ref{SEC1} and \ref{SEC2}. The proof framework is similar to the framework in Appendix \ref{C} except that we need to analyse the additional exploration noise added to the control. We define the total cost under entropic risk following policy $\pi^k$ (with slight abuse of notations) by 
\begin{align*}
J_0^{\pi^{k}}\left(x^{k}_0\right)=\frac{1}{\gamma}\log \mathbb{E}\exp\left( \frac{\gamma}{2}\left( \sum_{t=0}^{T-1}\left(x_t^{k\top}Qx^{k}_t+u_t^{k\top}R u^{k}_t\right)+x_T^{k\top}Q_Tx_{T}^{k}\right)\right),
\end{align*}
where $u_t^k=K_t^kx_t^k+g_t^k,g_t^k\sim\mathcal{N}\left(0,\frac{1}{\sqrt{k}}I_m\right)$, $K_t^k$ is obtained by substituting $(A^k,B^k)$ into (\ref{eq:reccati}). Similar to Appendix \ref{C}, we introduce the following new notations used in the regret analysis. For any $t=0,1,\cdots,T-2$, we define the following recursive equations:
\begin{equation}\label{eq:RICA}
\begin{aligned}
&D_{T-1}^k=\Delta K_{T-1}^{k\top}(R+B^{\top}\widetilde{P}_TB)\Delta K_{T-1}^k,\\
&E_{T-1}^k=\sigma_k\left(RK_{T-1}^k+B^{\top}\widetilde{P}_T(A+BK_{T-1}^k)\right),\\
&F_{T-1}^k=\sigma_k^2\left(R+B^{\top}\widetilde{P}_T B\right),\\
&U_{T-1}^k=D_{T-1}^k+\gamma E_{T-1}^{k\top}(I_m-\gamma F_{T-1}^k)^{-1}E_{T-1}^k,\\
&\widetilde{U}_{T-1}^k=(I_n-\gamma U_{T-1}^k)^{-1}U_{T-1}^k,\\
& D_t^k=\Delta K_t^{k\top}\left(R+B^{\top}\widetilde{P}_{t+1}B\right)\Delta K_t^k+(A+BK_t^k)^{\top}\widetilde{U}_{t+1}^k(A+BK_t^k),\\
&E_{t}^k=\sigma_k\left(RK_{t}^k+B^{\top}(\widetilde{P}_{t+1}+\widetilde{U}_{t+1}^k)(A+BK_t^k)\right),\\
&F_{t}^k=\sigma_k^2\left(R+B^{\top}(\widetilde{P}_{t+1}+\widetilde{U}_{t+1}^k) B\right),\\
&U_{t}^k=D_{t}^k+\gamma E_{t}^{k\top}(I_m-\gamma F_{t}^k)^{-1}E_{t}^k,\\
&\widetilde{U}_{t}^k=(I_n-\gamma U_{t}^k)^{-1}U_{t}^k,
\end{aligned}
\end{equation}
where $\Delta K_t^k:=K_t^k-K_t$, $\sigma_k:=k^{-\frac{1}{4}}$ and $\widetilde{P}_T$ is defined in (\ref{eq:reccati}).

We then follow the proof framework of Appendix \ref{C} to derive the bounds for the suboptimality gap due to the controller mismatch. The key result of this section is the following proposition.

\begin{proposition}\label{LEM:gapfinal}
We have
\begin{align}\label{eq:pergap}
&J_0^{\pi^{k}}(x_0^{k})-J^{\star}_0(x_0^{k})\nonumber\\
&=-\frac{1}{2\gamma}\sum_{t=0}^{T-1}\log\det\left(I_n-\gamma F_t^k\right)-\frac{1}{2\gamma}\sum_{t=1}^{T-1}\log\det \left(I_m-\gamma U_t^k\right)+\frac{1}{2}x_0^{k\top}U_0^kx_0^k,
\end{align}
where $F_t^k$ and $U_t^k$ are defined in (\ref{eq:RICA}).
\end{proposition}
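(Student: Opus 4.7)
The plan is to adapt the backward-induction argument in the proof of Proposition~\ref{LP2} to account for the additional exploration noise $g_t^k \sim \mathcal{N}(0,\sigma_k^2 I_m)$ with $\sigma_k^2 = 1/\sqrt{k}$. First, I would mimic Lemma~\ref{L2} with the policy $u_t^k = K_t^k x_t^k + g_t^k$ and dynamics $x_{t+1}^k = (A+BK_t^k)x_t^k + Bg_t^k + w_t^k$. Expanding $u_t^{k\top}Ru_t^k$ and $x_{t+1}^{k\top}P_{t+1}x_{t+1}^k$, then invoking the Bellman identity $P_t = Q + K_t^\top R K_t + (A+BK_t)^\top\widetilde{P}_{t+1}(A+BK_t)$ together with the optimality condition $RK_t + B^\top\widetilde{P}_{t+1}(A+BK_t)=0$, the one-step exponent collapses to a quadratic form in $(x_t^k,g_t^k)$ whose $x$-quadratic part has coefficient $\Delta K_t^{k\top}(R+B^\top\widetilde{P}_{t+1}B)\Delta K_t^k$, whose $g$-quadratic part has coefficient $R+B^\top\widetilde{P}_{t+1}B$, and whose cross term equals $2\,g_t^{k\top}(R+B^\top\widetilde{P}_{t+1}B)\Delta K_t^k x_t^k$. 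This is the analogue of the expression~\eqref{LS1} in Lemma~\ref{L2}, now augmented by the $g_t^k$-dependent contributions.

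Second, I would run a backward induction from $t=T-1$ down to $t=1$, performing two Gaussian integrations at each step. At step $t$, any quadratic $\exp(\tfrac{\gamma}{2}x_{t+1}^{k\top}\widetilde{U}_{t+1}^k x_{t+1}^k)$ accumulated from later stages is first absorbed by integrating out $w_t^k$ via Lemma~\ref{L1}: substituting $x_{t+1}^k=(A+BK_t^k)x_t^k+Bg_t^k+w_t^k$, this produces a $\det(I_n-\gamma U_{t+1}^k)^{-1/2}$ prefactor and pushes the quadratic onto $(A+BK_t^k)x_t^k + Bg_t^k$ modulated by $\widetilde{P}_{t+1}+\widetilde{U}_{t+1}^k$. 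Combined with the step-$t$ contribution from the previous paragraph, the exponent becomes quadratic-plus-linear in $g_t^k$ with quadratic coefficient $F_t^k/\sigma_k^2$ and linear coefficient $(E_t^k/\sigma_k)\,x_t^k$, matching~\eqref{eq:RICA} exactly once the $\sigma_k$ factors are absorbed into the definitions of $E_t^k$ and $F_t^k$. Applying the Gaussian MGF
\begin{equation*}
\mathbb{E}\bigl[\exp(\tfrac{\gamma}{2}(g^\top M g + 2 v^\top g))\bigr] = \det(I_m - \gamma\sigma_k^2 M)^{-1/2}\exp\!\bigl(\tfrac{\gamma}{2}\,\gamma\sigma_k^2\,v^\top(I_m - \gamma\sigma_k^2 M)^{-1}v\bigr)
\end{equation*}
to integrate out $g_t^k \sim \mathcal{N}(0,\sigma_k^2 I_m)$ yields the factor $\det(I_m - \gamma F_t^k)^{-1/2}$ and, after completing the square with the $D_t^k$ piece, the quadratic $\exp(\tfrac{\gamma}{2}x_t^{k\top}U_t^k x_t^k)$ with $U_t^k = D_t^k + \gamma E_t^{k\top}(I_m-\gamma F_t^k)^{-1}E_t^k$---precisely the Schur-complement update in~\eqref{eq:RICA}. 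Iterating accumulates $\prod_{t=0}^{T-1}\det(I_m-\gamma F_t^k)^{-1/2}\prod_{t=1}^{T-1}\det(I_n-\gamma U_t^k)^{-1/2}$ in front of the terminal $\exp(\tfrac{\gamma}{2}x_0^{k\top}U_0^k x_0^k)$; since $x_0^k$ is deterministic, taking $\tfrac{1}{\gamma}\log$ produces the claimed formula (with the identity matrices $I_n$ and $I_m$ read with the appropriate ambient dimensions).

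The main obstacle I anticipate is algebraic bookkeeping rather than anything conceptual: at each backward step, the two-stage Gaussian integration couples the current-stage matrices $(D_t^k,E_t^k,F_t^k)$ with the backward-propagated $\widetilde{U}_{t+1}^k$, and one must verify that the $\sigma_k$ scalings embedded in $E_t^k$ and $F_t^k$ are produced correctly by completion-of-square so that the $U_t^k$-update agrees with the Gaussian-conditioning formula term by term. A secondary technical caveat is that the Gaussian integrations require $I_m - \gamma F_t^k \succ 0$ and $I_n - \gamma U_t^k \succ 0$ along the recursion; as in Theorem~\ref{THM1}, this is secured for $k$ sufficiently large by combining the estimation-error rate $\epsilon_k = \widetilde{\mathcal{O}}(k^{-1/4})$ from Proposition~\ref{LEM:BTotal}, the controller perturbation bound of Lemma~\ref{LEM:Rigap}, and the standing smallness assumption on $\gamma$.
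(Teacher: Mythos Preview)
Your approach is correct and follows the same backward-induction scheme as the paper. The only organizational difference is that the paper packages the two Gaussian integrations at each step into a single application of Lemma~\ref{L1} to the joint $(n+m)$-dimensional Gaussian $(x_t^k,\sigma_k^{-1}g_t^k)$ (their Lemma~\ref{LEM2:gap2}), computing the resulting block determinant and block inverse via an explicit Schur-complement / block Gauss--Jordan step; you instead integrate the system noise $w_t^k$ and the exploration noise $g_t^k$ in two separate passes, which is equivalent by independence and sidesteps the block-matrix algebra. One small slip: the quadratic accumulated from later stages \emph{before} integrating $w_t^k$ carries the matrix $U_{t+1}^k$, not $\widetilde{U}_{t+1}^k$ --- it is precisely the $w_t^k$-integration that converts $U_{t+1}^k$ into $\widetilde{U}_{t+1}^k$ while producing the $\det(I_n-\gamma U_{t+1}^k)^{-1/2}$ factor you correctly record; the remainder of your recursion (and your identification of the cross term via the optimality condition $RK_t+B^\top\widetilde{P}_{t+1}(A+BK_t)=0$) is consistent with this correction.
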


In order to prove Proposition \ref{LEM:gapfinal}, we extend Lemma \ref{L1} and prove the following result. Recall that $\sigma_k=k^{-\frac{1}{4}}$.

\begin{lemma}\label{LEM2:gap2}
For any $t\in [T-1]$, we have
\begin{equation}\label{eq:LEM2gap2}
\begin{aligned}
&E\left[\exp \left(\frac{\gamma}{2}\left[\begin{array}{l}
x_t^k \\
\sigma_k^{-1}g_t^k
\end{array}\right]^{\top}\left[\begin{array}{ll}
D_t^k & E_t^{k\top} \\
E_t^k & F_t^k
\end{array}\right]\left[\begin{array}{l}
x_t^k \\
\sigma_k^{-1}g_t^k
\end{array}\right]\right) \Bigg\vert \mathcal{H}_{t-1}^k\right]\\
&=\left(\det(I_n-\gamma U_t^k)\right)^{-1/2}\cdot\left(\det(I_m-\gamma F_t^k)\right)^{-1/2}\\
&\quad\times\exp\Bigg\{\frac{\gamma}{2}\bigg[x_{t-1}^{k\top}(A+BK_{t-1}^k)^{\top}\widetilde{U}_t^k(A+BK_{t-1}^k)x_{t-1}^k\\
&\qquad\qquad+2 g_{t-1}^{k\top}B^{\top}\widetilde{U}_t^k(A+BK_{t-1}^k)x_{t-1}^k+ g_{t-1}^{k\top}B^{\top}\widetilde{U}_t^kB g_{t-1}^k\bigg]\Bigg\},
\end{aligned}
\end{equation}
where given $(x_{t-1}^k,u_{t-1}^k)$, $$
\left[\begin{array}{l}
x_t^k \\
\sigma_k^{-1} g_t^k
\end{array}\right] \sim \mathcal{N}\left(\left[\begin{array}{c}
A x_{t-1}^k+B u_{t-1}^k \\
0
\end{array}\right], I_{n+m}\right).
$$
\end{lemma}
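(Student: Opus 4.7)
The plan is to reduce \eqref{eq:LEM2gap2} to a single application of the Gaussian moment generating function formula on the joint vector $Y_t^k := [x_t^{k\top}, \sigma_k^{-1} g_t^{k\top}]^\top$, and then refactor the resulting quadratic exponent and determinant into the claimed $(U_t^k, F_t^k)$ form via Schur complement identities. Concretely, given $\mathcal H_{t-1}^k$, we have $u_{t-1}^k = K_{t-1}^k x_{t-1}^k + g_{t-1}^k$, so $x_t^k = (A+BK_{t-1}^k)x_{t-1}^k + B g_{t-1}^k + w_{t-1}^k$ with $w_{t-1}^k\sim\mathcal N(0,I_n)$ independent of $g_t^k\sim\mathcal N(0,\sigma_k^2 I_m)$. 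Thus $Y_t^k \mid \mathcal H_{t-1}^k \sim \mathcal N(\mu,\,I_{n+m})$ with $\mu=[\mu_x^\top,0]^\top$ and $\mu_x := (A+BK_{t-1}^k)x_{t-1}^k + B g_{t-1}^k$.

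First, I would apply the standard identity: for $Y\sim \mathcal N(\mu,I)$ and symmetric $M$ with $I-M\succ 0$,
\begin{equation*}
\mathbb E\bigl[\exp(\tfrac12 Y^\top M Y)\bigr] = \det(I-M)^{-1/2}\exp\!\bigl(\tfrac12\mu^\top(I-M)^{-1}M\mu\bigr),
\end{equation*}
which follows by completing the square in the Gaussian density. Taking $M=\gamma M_t^k$ with $M_t^k=\begin{bmatrix}D_t^k & E_t^{k\top}\\ E_t^k & F_t^k\end{bmatrix}$, it remains to identify the determinant and the quadratic form with the quantities on the RHS of \eqref{eq:LEM2gap2}.

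For the determinant, I would use the Schur complement along the $m\times m$ block:
\begin{equation*}
\det(I_{n+m}-\gamma M_t^k) = \det(I_m-\gamma F_t^k)\,\det\bigl(I_n-\gamma D_t^k-\gamma^2 E_t^{k\top}(I_m-\gamma F_t^k)^{-1}E_t^k\bigr) = \det(I_m-\gamma F_t^k)\det(I_n-\gamma U_t^k),
\end{equation*}
where the last equality is precisely the definition of $U_t^k$ in \eqref{eq:RICA}. For the quadratic exponent, since $(I-\gamma M_t^k)^{-1}\gamma M_t^k = (I-\gamma M_t^k)^{-1}-I_{n+m}$ and $\mu$ is supported on the first block, only the top-left block of $(I-\gamma M_t^k)^{-1}$ contributes. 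By the block inverse formula that top-left block equals $\bigl(I_n-\gamma D_t^k-\gamma^2 E_t^{k\top}(I_m-\gamma F_t^k)^{-1}E_t^k\bigr)^{-1}=(I_n-\gamma U_t^k)^{-1}$, whence
\begin{equation*}
\mu^\top(I-\gamma M_t^k)^{-1}\gamma M_t^k\mu = \mu_x^\top\bigl[(I_n-\gamma U_t^k)^{-1}-I_n\bigr]\mu_x = \gamma\,\mu_x^\top \widetilde U_t^k\mu_x,
\end{equation*}
using $\widetilde U_t^k=(I_n-\gamma U_t^k)^{-1}U_t^k$ from \eqref{eq:RICA}.

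Finally, I would substitute $\mu_x=(A+BK_{t-1}^k)x_{t-1}^k+Bg_{t-1}^k$ and expand $\mu_x^\top\widetilde U_t^k\mu_x$ into the three quadratic/cross terms displayed in \eqref{eq:LEM2gap2}. The main obstacle is purely bookkeeping, namely carrying out the block Schur computations cleanly while verifying symmetry and the invertibility conditions $I_m-\gamma F_t^k\succ 0$ and $I_n-\gamma U_t^k\succ 0$; these are consistent with the standing hypothesis $I_n-\gamma P_{t+1}\succ 0$ used throughout the paper and will be automatic once we restrict to the regret-relevant regime where $\gamma$ is small enough to satisfy \eqref{A1}.
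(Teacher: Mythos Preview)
Your proposal is correct and follows essentially the same approach as the paper: apply the Gaussian quadratic-form MGF identity (the paper invokes Lemma~\ref{L1}) to the joint vector $Y_t^k$, then reduce the block determinant and exponent via the Schur complement of the $F_t^k$ block. Your use of the identity $(I-\gamma M_t^k)^{-1}\gamma M_t^k=(I-\gamma M_t^k)^{-1}-I$ to isolate the top-left block is a slightly cleaner bookkeeping device than the paper's explicit block Gauss--Jordan elimination, but the argument is otherwise the same.
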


\begin{proof}
We obtain from Lemma \ref{L1} that
\begin{equation}\label{eq:SS8}
\begin{aligned}
&\mathbb{E}\left[\exp \left(\frac{\gamma}{2}\left[\begin{array}{l}
x_t^k \\
\sigma_k^{-1} g_t^k
\end{array}\right]^{\top}\left[\begin{array}{cc}
D_t^k & E_t^{k\top} \\
E_t^k & F_t^k
\end{array}\right]\left[\begin{array}{l}
x_t^k \\
\sigma_k^{-1} g_t^k
\end{array}\right]\right) \Bigg\vert \mathcal{H}_{t-1}^k\right]\\
&=\left(\det\left(I_{m+n}-\gamma \left[\begin{array}{cc}
D_t^k & E_t^{k\top} \\
E_t^k & F_t^k
\end{array}\right]\right)\right)^{-\frac{1}{2}}\\
&\quad\times\exp \Bigg(\frac{\gamma}{2} \left[\begin{array}{c}
A x_{t-1}^k+B u_{t-1}^{k} \\
0
\end{array}\right]^{\top}\left(I_{m+n}-\gamma \left[\begin{array}{cc}
D_t^k & E_t^{k\top} \\
E_t^k & F_t^k
\end{array}\right]\right)^{-1}\\
&\qquad\qquad\cdot\left[\begin{array}{cc}
D_t^k & E_t^{k\top} \\
E_t^k & F_t^k
\end{array}\right]\left[\begin{array}{c}
A x_{t-1}^k+B u^k_{t-1} \\
0
\end{array}\right]\Bigg).
\end{aligned}
\end{equation}
We perform a block Gauss–Jordan elimination, take the inverse of the matrix in the determinant of (\ref{eq:SS8}), and obtain
\begin{align*}
&\left(I_{n+m}-\gamma \left[\begin{array}{cc}
D_t^k & E_t^{k\top} \\
E_t^k & F_t^k
\end{array}\right]\right)^{-1}\\
&=\left[\begin{array}{cc}
    I_n-\gamma  D_t^k & -\gamma  E_t^{k\top} \\
    -\gamma  E_t^{k} & I_m-\gamma F_t^k
\end{array}\right]^{-1}\\
&=\left[\begin{array}{cc}
    I_n & 0 \\
    (I_m-\gamma F_t^k)^{-1}\gamma E_t^k & I_m
\end{array}\right]\times \left[ \begin{array}{cc}
    \left(I_n-\gamma D_t^k-\gamma^2 E_t^{k\top}(I_m-\gamma F_t^k)^{-1}E_t^k\right)^{-1} & 0 \\
    0 & (I_m-\gamma F_t^k)^{-1}
\end{array}\right]\\
&\quad\times \left[\begin{array}{cc}
    I_n & \gamma E_t^{k\top}(I_m-\gamma F_t^k)^{-1} \\
    0 & I_m
\end{array}\right]\\
&=\left[\begin{array}{cc}
    EL_1 & EL_2 \\
    EL_3 & EL_4
\end{array}\right],
\end{align*}
where 
\begin{align*}
EL_1&=\left(I_n-\gamma D_t^k-\gamma^2 E_t^{k\top}(I_m-\gamma F_t^k)^{-1}E_t^k\right)^{-1},\\
EL_2&=(I_n-\gamma D_t^k-\gamma^2 E_t^{k\top}(I_m-\gamma F_t^k)^{-1}E_t^k)^{-1}\gamma  E_t^{k\top}(I_m-\gamma F_t^k)^{-1}),\\
EL_3&=(I_m-\gamma F_t^k)^{-1}\gamma E_t^k (I_n-\gamma D_t^k-\gamma^2 E_t^{k\top}(I_m-\gamma F_t^k)^{-1}E_t^k)^{-1},\\
EL_4&=(I_m-\gamma F_t^k)^{-1}\gamma E_t^k (I_n-\gamma D_t^k-\gamma^2 E_t^{k\top}(I_m-\gamma F_t^k)^{-1}E_t^k)^{-1}\gamma  E_t^{k\top}(I_m-\gamma F_t^k)^{-1}\\
&+(I_m-\gamma F_t^k)^{-1}.
\end{align*}
Then, we can obtain
\begin{equation}\label{eq:out}
\begin{aligned}
&\left(\det\left(I_{n+m}-\gamma \left[\begin{array}{cc}
D_t^k & E_t^{k\top} \\
E_t^k & F_t^k
\end{array}\right]\right)\right)^{-\frac{1}{2}}\\
&=\left(\det\left(I_n-\gamma D_t^k-\gamma^2 E_t^{k\top}(I_m-\gamma F_t^k)^{-1}E_t^k\right)\right)^{-\frac{1}{2}}\cdot \left(\det(I_m-\gamma F_t^k)\right)^{-\frac{1}{2}}\\
&=\left(\det\left(I_n-\gamma U_t^k\right)\right)^{-\frac{1}{2}}\cdot \left(\det(I_m-\gamma F_t^k)\right)^{-\frac{1}{2}},
\end{aligned}
\end{equation}
where $U_{t}^k=D_{t}^k+\gamma E_{t}^{k\top}(I_m-\gamma F_{t}^k)^{-1}E_{t}^k$, and
\begin{equation}\label{eq:in}
\begin{aligned}
&\left[\begin{array}{c}
A x_{t-1}^k+B u_{t-1}^{k} \\
0
\end{array}\right]^{\top}\left(I_{n+m}-\gamma \left[\begin{array}{cc}
D_t^k & E_t^{k\top} \\
E_t^k & F_t^k
\end{array}\right]\right)^{-1}\left[\begin{array}{cc}
D_t^k & E_t^{k\top} \\
E_t^k & F_t^k
\end{array}\right]\left[\begin{array}{c}
A x_{t-1}^k+B u^k_{t-1} \\
0
\end{array}\right]\\
&=(Ax_{t-1}^k+Bu_{t-1}^k)^{\top}\Bigg[\left(I_n-\gamma D_t^k-\gamma E_t^{k\top}(I_m-\gamma F_t^k)^{-1}E_t^k\right)^{-1}\\
&\times\left(D_t^k+\gamma E_t^{k\top}(I_m-\gamma F_t^k)^{-1}E_t^k\right)\Bigg](Ax_{t-1}^k+Bu_{t-1}^k)\\
&=\left((A+BK_{t-1}^k)x_{t-1}^k+ B g_{t-1}^k\right)^{\top}\widetilde{U}_t^k\left((A+BK_{t-1}^k)x_{t-1}^k+ B g_{t-1}^k\right)\\
&=x_{t-1}^{k\top}(A+BK_{t-1}^k)^{\top}\widetilde{U}_t^k(A+BK_{t-1}^k)x_{t-1}^k+2 g_{t-1}^{k\top}B^{\top}\widetilde{U}_t^k(A+BK_{t-1}^k)x_{t-1}^k+ g_{t-1}^{k\top}B^{\top}\widetilde{U}_t^kB g_{t-1}^k,
\end{aligned}
\end{equation}
where $\widetilde{U}_{t}^k=(I_n-\gamma U_{t}^k)^{-1}U_{t}^k$.
On combining (\ref{eq:out}) and (\ref{eq:in}) with (\ref{eq:SS8}), we can obtain (\ref{eq:LEM2gap2}).
\end{proof}

The following lemma is an extension of Lemma \ref{L2}, which provides a coarse simplification of the performance gap in one episode.

\begin{lemma}\label{LEM:gap1}
With Lemma \ref{L1}, we can simplify the performance gap to 
\begin{equation*}
\begin{aligned}
&J_0^{\pi^{k}}(x^{k}_0)-J_0^{\star}(x^{k}_0)\\
&=\frac{1}{\gamma}\log \mathbb{E}\Bigg[\exp\bigg( \frac{\gamma}{2}\sum_{t=0}^{T-1}\Big(x_t^{k\top}\Delta K_t^{k\top}(R+B^{\top}\widetilde{P}_{t+1}B)\Delta K_t^k x_t^k\\
&\qquad\qquad\qquad\quad+2 g_t^{k\top}(RK_t^k+B^{\top}\widetilde{P}_{t+1}(A+BK_t^k))x_t^k\\
&\qquad\qquad\qquad\quad+g_t^{k\top}(R+B^{\top}\widetilde{P}_{t+1}B)g_t^k\Big)\bigg)\Bigg \vert x_0^k,\mathcal{H}_T^{k-1}\Bigg],
\end{aligned}
\end{equation*}
where $\Delta K_t^k=K^k_t-K_t$ and $\mathcal{H}_{T}^{k-1}$ is defined in (\ref{eq:HIST}).
\end{lemma}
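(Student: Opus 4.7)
The plan is to adapt the proof of Lemma~\ref{L2} to accommodate the additional exploration noise, since Algorithm~\ref{ALGO:alg} uses $u_t^k = K_t^k x_t^k + g_t^k$ in place of the pure feedback $u_t^{l,k} = K_t^l x_t^{l,k}$. I will reuse the dynamic programming value function $J_t$ defined in the proof of Lemma~\ref{L2} (so that $J^{\star}(x_0^k) = J_0(x_0^k)$), start from
\begin{equation*}
J_0^{\pi^k}(x_0^k) - J_0^{\star}(x_0^k) = \frac{1}{\gamma}\log\mathbb{E}\!\left[\exp\!\left(\frac{\gamma}{2}\sum_{t=0}^{T-1} c_t(x_t^k, u_t^k) + \frac{\gamma}{2} x_T^{k\top} Q_T x_T^k\right)\!\Big|\, x_0^k, \mathcal{H}_T^{k-1}\right] - J_0(x_0^k),
\end{equation*}
add and subtract $J_t(x_t^k)$ inside the exponent, and telescope using $J_T(x_T^k) = x_T^{k\top} Q_T x_T^k$. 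As in the proof of Lemma~\ref{L2}, this reduces the exponent to $\frac{\gamma}{2}\sum_{t=0}^{T-1}[c_t(x_t^k, u_t^k) + J_{t+1}(x_{t+1}^k) - J_t(x_t^k)]$ and absorbs the outer $J_0(x_0^k)$.

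Next, I apply the tower property, conditioning on the history $\mathcal{H}_t^k$ (which determines $x_t^k$ and $u_t^k$, hence $g_t^k$). Because $x_{t+1}^k \mid \mathcal{H}_t^k \sim \mathcal{N}(Ax_t^k + Bu_t^k, I_n)$, Lemma~\ref{L1} gives
\begin{equation*}
\mathbb{E}\!\left[\exp\!\left(\tfrac{\gamma}{2} x_{t+1}^{k\top} P_{t+1} x_{t+1}^k\right)\!\Big|\, \mathcal{H}_t^k\right] = (\det(I_n - \gamma P_{t+1}))^{-1/2}\exp\!\left(\tfrac{\gamma}{2}(Ax_t^k + Bu_t^k)^{\top} \widetilde{P}_{t+1} (Ax_t^k + Bu_t^k)\right),
\end{equation*}
whose determinant prefactor cancels exactly with the $\frac{1}{2\gamma}\log\det(I_n - \gamma P_{t+1})$ contribution hidden in $J_{t+1} - J_t$, mimicking the cancellation in Lemma~\ref{L2}. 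The departure from that proof is that I now expand $Ax_t^k + Bu_t^k = (A+BK_t^k)x_t^k + Bg_t^k$ and $u_t^{k\top} R u_t^k = x_t^{k\top}K_t^{k\top}RK_t^k x_t^k + 2 g_t^{k\top} RK_t^k x_t^k + g_t^{k\top} R g_t^k$, and regroup the resulting monomials into three classes: pure quadratic in $x_t^k$, cross terms in $x_t^k$ and $g_t^k$, and pure quadratic in $g_t^k$.

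Finally, I use the Riccati identity $P_t = Q + K_t^{\top}RK_t + (A+BK_t)^{\top}\widetilde{P}_{t+1}(A+BK_t)$ together with $K_t = -(R+B^{\top}\widetilde{P}_{t+1}B)^{-1}B^{\top}\widetilde{P}_{t+1}A$ to simplify the pure-$x_t^k$ block. Writing $K_t^k = K_t + \Delta K_t^k$, the pure-$x_t^k$ block becomes $P_t - Q$ plus a residual $\Delta K_t^{k\top}(R+B^{\top}\widetilde{P}_{t+1}B)\Delta K_t^k$ (cross terms $2\Delta K_t^{k\top}[(R+B^{\top}\widetilde{P}_{t+1}B)K_t + B^{\top}\widetilde{P}_{t+1}A]$ vanish by optimality of $K_t$), so after subtracting $x_t^{k\top}P_t x_t^k$ coming from $J_t$ and adding $x_t^{k\top}Q x_t^k$ from $c_t$, exactly $x_t^{k\top}\Delta K_t^{k\top}(R+B^{\top}\widetilde{P}_{t+1}B)\Delta K_t^k x_t^k$ remains. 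The cross and pure-$g_t^k$ blocks produce $2g_t^{k\top}(RK_t^k+B^{\top}\widetilde{P}_{t+1}(A+BK_t^k))x_t^k$ and $g_t^{k\top}(R+B^{\top}\widetilde{P}_{t+1}B)g_t^k$, which are kept as is (no further simplification using $K_t$'s optimality is possible or desired). Summing over $t$ yields the claimed identity. The main obstacle is precisely this step-three/step-four bookkeeping: one must verify carefully that the $K_t$-based cross terms in the pure-$x_t^k$ block vanish while the $K_t^k$-based structures multiplying $g_t^k$ survive, so that the final expression has the asymmetric form stated in the lemma.
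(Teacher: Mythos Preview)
Your proposal is correct and follows essentially the same route as the paper's proof: telescoping with the value function $J_t$ as in Lemma~\ref{L2}, applying Lemma~\ref{L1} conditionally on $\mathcal{H}_t^k$ to cancel the $\log\det(I_n-\gamma P_{t+1})$ term, expanding $u_t^k=K_t^kx_t^k+g_t^k$, and then using $K_t^k=K_t+\Delta K_t^k$ together with the optimality identity $(R+B^{\top}\widetilde{P}_{t+1}B)K_t+B^{\top}\widetilde{P}_{t+1}A=0$ to kill the first-order-in-$\Delta K_t^k$ part of the pure-$x_t^k$ block while leaving the $g_t^k$ terms with $K_t^k$ intact. Your identification of the bookkeeping subtlety---that the $K_t$-based cross terms vanish in the $x_t^k$ block but the $K_t^k$ structures multiplying $g_t^k$ survive---is exactly the point of the computation.
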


\begin{proof}
By a similar procedure as in (\ref{LEM34}), we can derive that
\begin{equation}\label{eq:gap1}
\begin{aligned}
&J_0^{\pi^k}(x^k_0)-J_0^{\star}(x^k_0)\\
&=\frac{1}{\gamma}\log \mathbb{E} \left[\exp \left(\frac{\gamma}{2} \sum_{t=0}^{T-1}\left(c_t(x_t^k,u_t^k)+J_{t+1}(x^k_{t+1})-J_t(x^k_t)\right)\right)\Bigg\vert x_0^k,\mathcal{H}_T^{k-1}\right]\\
&\overset{(1)}{=}\frac{1}{\gamma}\log \mathbb{E} \Bigg[\exp \Bigg( \gamma\sum_{t=0}^{T-1}\Bigg(\frac{1}{2} x_t^{k\top}(Q+K_t^{k\top}RK^k_t)x^k_t+ g_t^{k\top}RK_t^k x_t^k+\frac{1}{2}g_t^{k\top}Rg_t^k\\
&\qquad\qquad\qquad\quad+\frac{1}{2}x^{k\top}_{t+1}P_{t+1}x^k_{t+1}-\frac{1}{2}x^{k\top}_{t}P_{t}x^k_{t}+\frac{1}{2\gamma}\log\det (I_n-\gamma  P_{t+1})\Bigg)\Bigg)\Bigg\vert x_0^k,\mathcal{H}_T^{k-1}\Bigg],
\end{aligned}
\end{equation}
where equality (1) follows from the definition of the total cost under entropic risk and $u_t^k=K_t^kx_t^k+ g_t^k$. Again, we apply the law of total expectation, and compute
\begin{equation}\label{eq:gap2}
\begin{aligned}
&\mathbb{E}\Bigg[ \exp \Bigg( \gamma\Big(\frac{1}{2} x_t^{k\top}(Q+K_t^{k\top}RK^k_t)x^k_t+ g_t^{k\top}RK_t^k x_t^k+\frac{1}{2}g_t^{k\top}Rg_t^k\\
&\qquad\quad +\frac{1}{2}x^{k\top}_{t+1}P_{t+1}x^k_{t+1}-\frac{1}{2}x^{k\top}_{t}P_{t}x^k_{t}+\frac{1}{2\gamma}\log\det (I_n-\gamma  P_{t+1})\Big)\Bigg)\Bigg\vert \mathcal{H}_t^k \Bigg]\\
&=\exp \Bigg( \gamma\Bigg(\frac{1}{2} x_t^{k\top}(Q+K_t^{k\top}R K^k_t)x^k_t-\frac{1}{2}x^{k\top}_{t}P_{t}x^k_{t}+ g_t^{k\top}RK_t^k x_t^k+\frac{1}{2}g_t^{k\top}Rg_t^k\\
&\qquad\quad+\frac{1}{2\gamma}\log\det (I_n-\gamma  P_{t+1})\Bigg)\Bigg)
\times \mathbb{E}\left[\exp \left (\frac{\gamma}{2}x_{t+1}^{k\top}P_{t+1}x^k_{t+1}\right )\Bigg\vert \mathcal{H}_t^k\right]
\end{aligned}
\end{equation}
It follows from Lemma \ref{L1} that 
\begin{equation*}
\begin{aligned}
(\ref{eq:gap2})&=\exp \Bigg( \gamma\Bigg(\frac{1}{2} x_t^{k\top}(Q+K_t^{k\top}RK^k_t)x^k_t-\frac{1}{2}x^{k\top}_{t}P_{t}x^k_{t}+ g_t^{k\top}RK_t^k x_t^k+\frac{1}{2}g_t^{k\top}Rg_t^k\\
&\qquad\quad+\frac{1}{2\gamma}\log\det (I_n-\gamma  P_{t+1})\Bigg)\Bigg)\times (\det (I_n-\gamma P_{t+1}))^{-1/2}\\
&\quad\times\exp \Bigg[\frac{\gamma}{2}x_t^{k\top}(A+BK^k_t)^{\top}\widetilde{P}_{t+1}(A+BK^k_t)x^k_t+\gamma g_t^{k\top}B^{\top}\widetilde{P}_{t+1}(A+BK_t^k)x_t^k\\
&\qquad\qquad+\frac{\gamma}{2}g_t^{k\top}B^{\top}\widetilde{P}_{t+1}Bg_t^k\Bigg]\\
&=\exp\bigg[ \frac{\gamma}{2}\Big(x_t^{k\top}(Q+K_t^{k\top}RK^k_t+(A+BK^k_t)^{\top}\widetilde{P}_{t+1}(A+BK^k_t))x^k_t-x_t^{k\top}P_tx^k_t\Big)\\
&\qquad\quad+\gamma g_t^{k\top}(RK_t^k+B^{\top}\widetilde{P}_{t+1}(A+BK_t^k))x_t^k+\frac{\gamma}{2} g_t^{k\top}(R+B^{\top}\widetilde{P}_{t+1}B)g_t^k\bigg],
\end{aligned}
\end{equation*}
Substituting $\Delta K_t^k=K_t^k-K_t$ and the Riccati equation $P_t=Q+K_t^{\top}RK_t+(A+BK_t)^{\top}\widetilde{P}_{t+1}(A+BK_t)$ into (\ref{eq:gap2}), we obtain
\begin{equation}\label{eq:gap3}
\begin{aligned}
&\exp\Big[\frac{\gamma}{2}x_t^{k\top}\Big( Q+(\Delta K_t^k+K_t)^{\top}R(\Delta K_t^k+K_t)+(A+B(\Delta K_t^k+K_t))^{\top}\widetilde{P}_{t+1}(A+B(\Delta K_t^k+K_t))\Big)x_t^k\\
&\qquad+\gamma g_t^{k\top}(RK_t^k+B^{\top}\widetilde{P}_{t+1}(A+BK_t^k))x_t^k+\frac{\gamma}{2} g_t^{k\top}(R+B^{\top}\widetilde{P}_{t+1}B)g_t^k-\frac{\gamma}{2}x_t^{k\top}P_t x_t^k\Big]\\
&=\exp\Big[ \frac{\gamma}{2}x_t^{k\top}\Delta K_t^{k\top}(R+B^{\top}\widetilde{P}_{t+1}B)\Delta K_t^kx_t^k+\gamma x_t^{k\top}\Delta K_t^{k\top}\big((R+B^{\top}\widetilde{P}_{t+1}B)K_t+B^{\top}\widetilde{P}_{t+1}A\big)x_t^k\\
&\qquad\quad+\gamma g_t^{k\top}(RK_t^k+B^{\top}\widetilde{P}_{t+1}(A+BK_t^k))x_t^k+\frac{\gamma}{2}\ g_t^{k\top}(R+B^{\top}\widetilde{P}_{t+1}B)g_t^k\Big]\\
&\overset{(2)}{=}\exp\Big[\frac{\gamma}{2}x_t^{k\top}\Delta K_t^{k\top}(R+B^{\top}\widetilde{P}_{t+1}B)\Delta K_t^kx_t^k+\gamma g_t^{k\top}(RK_t^k+B^{\top}\widetilde{P}_{t+1}(A+BK_t^k))x_t^k\\
&\qquad\quad+\frac{\gamma}{2} g_t^{k\top}(R+B^{\top}\widetilde{P}_{t+1}B)g_t^k\Big],
\end{aligned}
\end{equation}
where the equality (2) holds by the fact that $K_t=-(R+B^{\top}\widetilde{P}_{t+1}B)^{-1}B^{\top}\widetilde{P}_{t+1}A$. Substituting (\ref{eq:gap3}) into (\ref{eq:gap2}) and then substituting (\ref{eq:gap2}) into (\ref{eq:gap1}), we can get
\begin{equation*}
\begin{aligned}
&J_0^{\pi^k}(x^k_0)-J_0^{\star}(x^k_0)\\
&=\frac{1}{\gamma}\log \mathbb{E}\Bigg[\exp\bigg( \frac{\gamma}{2}\sum_{t=0}^{T-1}\Big(x_t^{k\top}\Delta K_t^{k\top}(R+B^{\top}\widetilde{P}_{t+1}B)\Delta K_t^k x_t^k\\
&\quad+2 g_t^{k\top}(RK_t^k+B^{\top}\widetilde{P}_{t+1}(A+BK_t^k))x_t^k+ g_t^{k\top}(R+B^{\top}\widetilde{P}_{t+1}B)g_t^k\Big)\bigg)\Bigg\vert x_0^k,\mathcal{H}_{T}^{k-1}\Bigg].
\end{aligned}
\end{equation*}
The proof is complete.
\end{proof}

Combining Lemma \ref{LEM2:gap2} with Lemma \ref{LEM:gap1}, we can prove Proposition \ref{LEM:gapfinal}.

\begin{proof}[Proof of Proposition \ref{LEM:gapfinal}]
We prove the result recursively. Recall that $\sigma_k=k^{-\frac{1}{4}}$. When $t=T-1$,
\begin{equation}\label{eq:S102}
\begin{aligned}
&\mathbb{E}\Bigg[\exp\bigg( \frac{\gamma}{2}\Big(x_{T-1}^{k\top}\Delta K_{T-1}^{k\top}(R+B^{\top}\widetilde{P}_{T}B)\Delta K_{T-1}^k x_{T-1}^k\\
&\quad+2 \sigma_k \sigma_k^{-1}g_{T-1}^{k\top}(RK_{T-1}^k+B^{\top}\widetilde{P}_{T}(A+BK_{T-1}^k))x_{T-1}^k+\sigma_k^2 \sigma_k^{-2} g_{T-1}^{k\top}(R+B^{\top}\widetilde{P}_{T}B)g_{T-1}^k\Big)\bigg)\Bigg\vert \mathcal{H}_{T-2}^k\Bigg]\\
&\overset{(1)}{=}\mathbb{E}\left[ \exp\left(\frac{\gamma}{2}\left(x_{T-1}^{k\top}D_{T-1}^kx_{T-1}^k+2 \sigma_k^{-1}g_{T-1}^{k\top}E_{T-1}^k x_{T-1}^k+\sigma_k^{-2}g_{T-1}^{k\top} F_{T-1}^k g_{T-1}^k\right)\right)\Big\vert \mathcal{H}_{T-2}^k\right]\\
&\overset{(2)}{=}\left(\det(I_n-\gamma U_{T-1}^k)\right)^{-\frac{1}{2}}\left(\det (I_m-\gamma F_{T-1}^k)\right)^{-\frac{1}{2}}\\
&\quad\times\exp\bigg \{\frac{\gamma}{2}\Big [x_{T-2}^{k\top}(A+BK_{T-2}^k)^{\top}\widetilde{U}_{T-1}^k (A+BK_{T-2}^k)x_{T-2}^k\\
&\qquad\qquad+2 g_{T-2}^{k\top}B^{\top}\widetilde{U}_{T-1}^k(A+BK_{T-2}^k)x_{T-2}^k+ g_{T-2}^{k\top}B^{\top}\widetilde{U}_{T-1}^k B g_{T-2}^k\Big],
\end{aligned}
\end{equation}
where equality (1) holds by (\ref{eq:RICA}), and equality (2) follows from Lemma \ref{LEM2:gap2}. When $t=T-2$, we have
\begin{align*}
&\mathbb{E}\Bigg[\exp\bigg( \frac{\gamma}{2}\sum_{t=T-2}^{T-1}\Big(x_{t}^{k\top}\Delta K_{t}^{k\top}(R+B^{\top}\widetilde{P}_{t+1}B)\Delta K_{t}^k x_{t}^k+2\sigma_k\sigma_k^{-1} g_t^{k\top}(RK_t^k+B^{\top}\widetilde{P}_{t+1}(A+BK_t^k))x_t^k\\
&\quad+\sigma_k^2\sigma_k^{-2} g_{t}^{k\top}(R+B^{\top}\widetilde{P}_{t+1}B)g_{t}^k\Big)\bigg)\Bigg\vert \mathcal{H}_{T-3}^k\Bigg]\\
&\overset{(3)}{=}\left(\det(I_n-\gamma U_{T-1}^k)\right)^{-\frac{1}{2}}\left(\det (I_m-\gamma F_{T-1}^k)\right)^{-\frac{1}{2}}\\
&\quad\times \mathbb{E}\Bigg[\exp\bigg ( \frac{\gamma}{2}\bigg[x_{T-2}^{k\top}\Big(\Delta K_{T-2}^{k\top}(R+B^{\top}\widetilde{P}_{T-1}B)\Delta K_{T-2}^k+(A+BK_{T-2}^k)^{\top}\widetilde{U}_{T-1}^k(A+BK_{T-2}^k)\Big)x_{T-2}^k\\
&\qquad\qquad\quad+2\sigma_k \sigma_k^{-1} g_{T-2}^{k\top}(R K_{T-2}^k+B^{\top}(\widetilde{P}_{T-1}+\widetilde{U}_{T-1}^k)(A+BK_{T-2}^l))x_{T-2}^k\\
&\qquad\qquad\quad+\sigma_k^2 \sigma_k^{-2} g_{T-2}^{k\top}(R+B^{\top}(\widetilde{P}_{T-1}+\widetilde{U}_{T-1}^k)B)g_{T-2}^k\bigg]\bigg)\Bigg\vert \mathcal{H}_{T-3}^k\Bigg]\\
&\overset{(4)}{=}\left(\det(I_n-\gamma U_{T-1}^k)\right)^{-\frac{1}{2}}\left(\det (I_m-\gamma F_{T-1}^k)\right)^{-\frac{1}{2}}\\
&\quad\times \mathbb{E}\left[ \exp\left(\frac{\gamma}{2}\left(x_{T-2}^{k\top}D_{T-2}^kx_{T-2}^k+2 \sigma_k^{-1}g_{T-2}^{k\top}E_{T-2}^k x_{T-2}^k+\sigma_k^{-2}g_{T-2}^{k\top} F_{T-2}^k g_{T-2}^k\right)\right)\Big\vert \mathcal{H}_{T-3}^k\right]\\
&\overset{(5)}{=}\prod_{t=T-2}^{T-1}\left(\det(I_n-\gamma U_{t}^k)\right)^{-\frac{1}{2}}\left(\det (I_m-\gamma F_{t}^k)\right)^{-\frac{1}{2}}\\
&\quad\times \exp\Bigg (\frac{\gamma}{2}\Big[x_{T-3}^{k\top}(A+BK_{T-3}^k)^{\top}\widetilde{U}_{T-2}^k(A+BK_{T-3}^k)x_{T-3}^k\\
&\qquad\qquad+2g_{T-3}^{k\top}B^{\top}\widetilde{U}_{T-2}^k(A+BK_{T-3}^k)x_{T-3}^k+ g_{T-3}^{k\top}B^{\top}\widetilde{U}_{T-2}^k B g_{T-3}^k\Big]\Bigg ),
\end{align*}
where equality (3) holds by applying the law of total expectation and applying (\ref{eq:S102}), equality (4) follows from (\ref{eq:RICA}) and equality (5) still holds by applying Lemma \ref{LEM2:gap2}. When $t=i$, we can similarly derive
\begin{align*}
&\mathbb{E}\Bigg[\exp\bigg( \frac{\gamma}{2}\sum_{t=i}^{T-1}\Big(x_{t}^{k\top}\Delta K_{t}^{k\top}(R+B^{\top}\widetilde{P}_{t+1}B)\Delta K_{t}^k x_{t}^k\\
&\quad+2\sigma_k\sigma_k^{-1} g_t^{k\top}(RK_t^k+B^{\top}\widetilde{P}_{t+1}(A+BK_t^k))x_t^k+\sigma_k^2\sigma_k^{-2} g_{t}^{k\top}(R+B^{\top}\widetilde{P}_{t+1}B)g_{t}^k\Big)\bigg)\Bigg\vert \mathcal{H}_{i-1}^k\Bigg]\\
&=\prod_{t=i}^{T-1}\left(\det(I_n-\gamma U_{t}^k)\right)^{-\frac{1}{2}}\left(\det (I_m-\gamma F_{t}^k)\right)^{-\frac{1}{2}}\\
&\quad\times \exp\Bigg (\frac{\gamma}{2}\Big[x_{i-1}^{k\top}(A+BK_{i-1}^k)^{\top}\widetilde{U}_{i}^k(A+BK_{i-1}^k)x_{i-1}^k\\
&\qquad\qquad+2g_{i-1}^{k\top}B^{\top}\widetilde{U}_{i}^k(A+BK_{i-1}^k)x_{i-1}^k+ g_{i-1}^{k\top}B^{\top}\widetilde{U}_{i}^k B g_{i-1}^k\Big]\Bigg ).
\end{align*}
Repeating this procedure, we can obtain 
\begin{align*}
&J_0^{\pi^k}(x_0^k)-J^{\star}_0(x_0^k)\\
&=\frac{1}{\gamma}\log \left(\prod_{t=1}^{T-1}\left(\det(I_n-\gamma U_t^k)\right)^{-\frac{1}{2}}\left(\det (I_m-\gamma F_t^k)\right)^{-\frac{1}{2}}\right)\\
&\quad+\frac{1}{\gamma}\log \mathbb{E}\Bigg[\exp\Bigg( \frac{\gamma}{2}\bigg[x_{0}^{k\top}\left(\Delta K_{0}^{k\top}(R+B^{\top}\widetilde{P}_{1}B)\Delta K_{0}^k+(A+BK_{0}^k)^{\top}\widetilde{U}_{1}^k(A+BK_{0}^k)\right)x_{0}^k\\
&\qquad+2 g_{0}^{k\top}(R K_{0}^k+B^{\top}(\widetilde{P}_{1}+\widetilde{U}_{1}^k)(A+BK_{0}^l))x_{0}^k+ \sigma_k^2\sigma_k^{-2}g_{0}^{k\top}(R+B^{\top}(\widetilde{P}_{1}+\widetilde{U}_{1}^k)B)g_{0}^k\bigg]\Bigg)\Bigg\vert x_0^k,H_T^{k-1}\Bigg]\\
&=\frac{1}{\gamma}\log \left(\prod_{t=1}^{T-1}\left(\det(I_n-\gamma U_t^k)\right)^{-\frac{1}{2}}\left(\det (I_m-\gamma F_t^k)\right)^{-\frac{1}{2}}\right)\\
&\quad+\frac{1}{\gamma}\log \mathbb{E}\left[\exp\left(\frac{\gamma}{2}\left(x_0^{k\top}D_0^kx_0^k+2\sigma_k^{-1}g_0^{k\top}E_0^kx_0^k+\sigma_k^{-2}g_0^{k\top}F_0^kg_0^k\right)\right)\Big\vert x_0^k,H_T^{k-1}\right]\\
&\overset{(6)}{=}-\frac{1}{2\gamma}\sum_{t=1}^{T-1}\left[\log\det\left(I_n-\gamma U_t^k\right)+\log\det \left(I_m-\gamma F_t^k\right)\right]\\
&\quad+\frac{1}{2}x_0^{k\top}D_0^kx_0^k-\frac{1}{2\gamma}\log\det\left(I_m-\gamma F_0^k\right)+\frac{1}{2}\gamma x_0^{k\top}E_0^{k\top}\left(I_m-\gamma F_0^k\right)^{-1}E_0^kx_0^k\\
&\overset{(7)}{=}-\frac{1}{2\gamma}\sum_{t=0}^{T-1}\log\det\left(I_n-\gamma F_t^k\right)-\frac{1}{2\gamma}\sum_{t=1}^{T-1}\log\det \left(I_m-\gamma U_t^k\right)+\frac{1}{2}x_0^{k\top}U_0^kx_0^k,
\end{align*}
where equality (6) holds by directly calculating the conditional expectation of  quadratic function of $g_0^k$ in the second term of the previous equality and equality (7) follows from the definition of $U_0^k$ in (\ref{eq:RICA}).
\end{proof}

\subsection{Proof of Theorem \ref{regret:noise}}\label{pf44}

Now, we can derive the regret upper bound for Algorithm 2. Similar to Appendix \ref{D}, we derive the bounds for the equations in (\ref{eq:RICA}). We recursively define the following constants similarly as (\ref{PSI}) in Appendix \ref{D}. For any $t=0,\cdots,T-2$,
\begin{equation*}
\begin{aligned}
&\alpha_{T-1}=2\widetilde{\Gamma}^3,\\
&\beta_{T-1}=0,\\
&\alpha_t=2\widetilde{\Gamma}\left(10\mathcal{V}^2\mathcal{L}\widetilde{\Gamma}^4\right)^{2(T-t-1)}+12\widetilde{\Gamma}^4\alpha_{t+1},\\
&\beta_t=12\widetilde{\Gamma}^4+12\widetilde{\Gamma}^4\beta_{t+1},
\end{aligned}
\end{equation*}
where $\mathcal{V}$ and $\mathcal{L}$ are defined in (\ref{parameter}). To derive the regret bounds, we assume that for any $t=0,\cdots,T-1$,
\begin{equation}\label{ineq:assume}
\begin{aligned}
&\gamma\leq \frac{1}{2\alpha_t\mathcal{V}^2\epsilon_k^2+2\beta_t\cdot 5\widetilde{\Gamma}^5\sigma_k^2+10\widetilde{\Gamma}^5\sigma_k^2}.
\end{aligned}
\end{equation} 
We are now ready to derive the bounds for $D_t^k,E_t^k,F_t^k,U_t^k,\widetilde{U}_t^k$ recursively from step $T-1$ to step $0$ conditional on event $\widetilde{\mathcal{G}}^k$ in (\ref{EVENT:Gk}). At step $T-1$,
\begin{equation}\label{DT1}
\begin{aligned}
\Vert D_{T-1}^k\Vert&=\left\Vert \Delta K_{T-1}^{k\top}(R+B^{\top}\widetilde{P}_TB)\Delta K_{T-1}^k\right\Vert\\
&\leq \Vert R+B^{\top}\widetilde{P}_TB\Vert\cdot\Vert \Delta K_{T-1}^k\Vert^2\\
&\overset{(1)}{\leq} 2\widetilde{\Gamma}^3 \mathcal{V}^2\epsilon_k^2\\
&=\alpha_{T-1}\mathcal{V}^2\epsilon_k^2+5\widetilde{\Gamma}^5\beta_{T-1}\sigma_k^2,
\end{aligned}
\end{equation}
where inequality (1) holds by the definition of $\widetilde{\Gamma}$ in (\ref{parameter}) and Lemma \ref{LEM:Rigap}. Similarly,
\begin{equation}\label{ET1}
\begin{aligned}
\left\Vert E_{T-1}^k\right\Vert&=\left\Vert \sigma_k (RK_{T-1}^k+B^{\top}\widetilde{P}_T(A+BK_{T-1}^k)\right\Vert\\
&\leq \sigma_k\Vert R(\Delta K_{T-1}^k+K_{T-1})\Vert\\
&\quad+\sigma_k\Vert B\Vert\cdot\Vert \widetilde{P}_T\Vert\cdot\left(\Vert A\Vert+\Vert B\Vert\cdot\Vert K_{T-1}\Vert+\Vert B\Vert\cdot\Vert \Delta K_{T-1}^k\Vert\right)\\
&\overset{(2)}{\leq} \sigma_k\widetilde{\Gamma}(\mathcal{V}\epsilon_k+\widetilde{\Gamma})+\sigma_k\widetilde{\Gamma}^2(2\widetilde{\Gamma}^2+\widetilde{\Gamma}\mathcal{V}\epsilon_k)\\
&=2\sigma_k \widetilde{\Gamma}^4+\sigma_k\widetilde{\Gamma}^2+\sigma_k (\widetilde{\Gamma}+\widetilde{\Gamma}^3)\mathcal{V}\epsilon_k\\
&\leq 3\sigma_k\widetilde{\Gamma}^4+2\sigma_k\widetilde{\Gamma}^3\mathcal{V}\epsilon_k
\end{aligned}
\end{equation}
where inequality (2) holds by the definition of $\widetilde{\Gamma}$ in (\ref{parameter}) and Lemma \ref{LEM:Rigap}. We also have 
\begin{equation}\label{FT1}
\begin{aligned}
\left\Vert F_{T-1}^k\right\Vert&=\left\Vert \sigma_k^2(R+B^{\top}\widetilde{P}_TB)\right\Vert\leq 2\widetilde{\Gamma}^3\sigma_k^2.
\end{aligned}
\end{equation}
Substitute (\ref{DT1}), (\ref{ET1}) and (\ref{FT1}) into $U_{T-1}^k$, we can derive that
\begin{equation}\label{UT1}
\begin{aligned}
\left\Vert U_{T-1}^k\right\Vert&=\left\Vert D_{T-1}^k+\gamma E_{T-1}^{k\top}(I_m-\gamma F_{T-1}^k)^{-1}E_{T-1}^k\right\Vert \\
&\leq \Vert D_{T-1}^k\Vert+\gamma\left\Vert E_{T-1}^{k\top}(I_m-\gamma F_{T-1}^k)^{-1}E_{T-1}^k\right\Vert\\
&\leq \Vert D_{T-1}^k\Vert+\gamma\Vert (I_m-\gamma F_{T-1}^k)^{-1}\Vert\cdot\Vert E_{T-1}^k\Vert^2\\
&\overset{(3)}{\leq} 2\widetilde{\Gamma}^3\mathcal{V}^2\epsilon_k^2+\gamma \left(3\sigma_k\widetilde{\Gamma}^4+2\sigma_k\widetilde{\Gamma}^3\mathcal{V}\epsilon_k\right)^2\cdot\frac{1}{1-2\widetilde{\Gamma}^3\gamma\sigma_k^2}\\
&=2\widetilde{\Gamma}^3\mathcal{V}^2\epsilon_k^2+\gamma\left(9\sigma_k^2\widetilde{\Gamma}^8+4\sigma_k^2\widetilde{\Gamma}^6\mathcal{V}^2\epsilon_k^2+12\sigma_k^2\widetilde{\Gamma}^7\mathcal{V}\epsilon_k\right)\cdot \frac{1}{1-2\widetilde{\Gamma}^3\gamma\sigma_k^2}\\
&\overset{(4)}{\leq} 2\widetilde{\Gamma}^3\mathcal{V}^2\epsilon_k^2+18\gamma\sigma_k^2\widetilde{\Gamma}^8+o(\epsilon_k^2)\\
&\overset{(5)}{\leq}2\widetilde{\Gamma}^3\mathcal{V}^2 \epsilon_k^2+5\sigma_k^2\widetilde{\Gamma}^5+o(\epsilon_k^2)\\
&=\alpha_{T-1}\mathcal{V}^2\epsilon_k^2+5\widetilde{\Gamma}^5\beta_{T-1}\sigma_k^2+5\widetilde{\Gamma}^5\sigma_k^2+o(\epsilon_k^2),
\end{aligned}
\end{equation}
where inequality (3) follows by substituting (\ref{DT1}), (\ref{ET1}) and (\ref{FT1}) into (\ref{UT1}) and the fact that for any matrix $M\in R^{n\times n}$, if $\Vert M\Vert< 1$, then $\Vert (I_n-M)^{-1}\Vert\leq \frac{1}{1-\Vert M\Vert}$, inequality (4) holds by the assumption in (\ref{ineq:assume}), i.e. $1-2\widetilde{\Gamma}^3\gamma\sigma_k^2\geq 1-\frac{1}{2}\sigma_k^2\geq \frac{1}{2}$, and inequality (5) still holds by assumption (\ref{ineq:assume}), i.e. $18\gamma\sigma_k^2\widetilde{\Gamma}^8\leq 18\sigma_k^2\widetilde{\Gamma}^8\cdot \frac{1}{4\widetilde{\Gamma}^3}\leq 5\widetilde{\Gamma}^5\sigma_k^2$. Note that conditional on event $\widetilde{\mathcal{G}}^k$, $\epsilon_k^2$ is of order $\frac{1}{\sqrt{k}}$, so $\epsilon_k^2$ and $\sigma_k^2$ share the same order conditional on event $\widetilde{\mathcal{G}}^k$. Then, we can obtain
\begin{equation}\label{tilde:UT1}
\begin{aligned}
\left\Vert \widetilde{U}_{T-1}^k\right\Vert&=\left\Vert (I_n-\gamma U_{t}^k)^{-1}U_{t}^k\right\Vert\overset{(6)}{\leq} \frac{\Vert U_{T-1}^k\Vert}{1-\gamma\Vert U_{T-1}^k\Vert}\overset{(7)}{\leq} 2\Vert U_{T-1}^k\Vert,
\end{aligned}
\end{equation}
where inequality (6) still holds by the fact that for any matrix $M\in R^{n\times n}$, if $\Vert M\Vert< 1$, then $\Vert (I_n-M)^{-1}\Vert\leq \frac{1}{1-\Vert M\Vert}$, and inequality (7) holds by the assumption in (\ref{ineq:assume}), i.e. $1-\gamma\Vert U_{T-1}^k\Vert\geq 1-\frac{1}{2}=\frac{1}{2}$. It follows from (\ref{UT1}) that 
\begin{equation*}
\left\Vert \widetilde{U}_{T-1}^k\right\Vert\leq 2\alpha_{T-1}\mathcal{V}^2\epsilon_k^2+10\widetilde{\Gamma}^5\beta_{T-1}\sigma_k^2+10\widetilde{\Gamma}^5\sigma_k^2+o(\epsilon_k^2).
\end{equation*}
With the bounds in the $(T-1)$-th step, we can recursively derive the bounds in the $(T-2)$-th step. At step $T-2$, by the similar arguments in step $T-1$, we can obtain
\begin{equation*}
\begin{aligned}
\left\Vert D_{T-2}^k\right\Vert&=\left\Vert \Delta K_{T-2}^{k\top}\left(R+B^{\top}\widetilde{P}_{T-1}B\right)\Delta K_{T-2}^k+(A+BK_{T-2}^k)^{\top}\widetilde{U}_{T-1}^k(A+BK_{T-2}^k) \right\Vert\\
&\leq \left\Vert R+B^{\top}\widetilde{P}_{T-1}B\right\Vert\cdot \left\Vert \Delta K_{T-2}^k\right\Vert^2+\left\Vert A+B\Delta K_{T-2}^k+BK_{T-2}\right\Vert^2\cdot \left\Vert \widetilde{U}_{T-1}^k\right\Vert\\
&\overset{(8)}{\leq} 2\widetilde{\Gamma}^3\left(10\mathcal{V}^2\mathcal{L}\widetilde{\Gamma}^4\right)^2\mathcal{V}^2\epsilon_k^2\\
&\quad+3\left(\widetilde{\Gamma}^2+\widetilde{\Gamma}^4+\widetilde{\Gamma}^2\left(10\mathcal{V}^2\mathcal{L}\widetilde{\Gamma}^4\right)^2\mathcal{V}^2\epsilon_k^2\right)\left(4\widetilde{\Gamma}^3\mathcal{V}^2 \epsilon_k^2+10\sigma_k^2\widetilde{\Gamma}^5+o(\epsilon_k^2)\right)\\
&\leq 2\widetilde{\Gamma}^3\left(10\mathcal{V}^2\mathcal{L}\widetilde{\Gamma}^4\right)^2\mathcal{V}^2\epsilon_k^2+12\widetilde{\Gamma}^4\left( 2\widetilde{\Gamma}^3\mathcal{V}^2\epsilon_k^2+5\widetilde{\Gamma}^5\sigma_k^2\right)+o(\epsilon_k^2)\\
&=\alpha_{T-2}\mathcal{V}^2\epsilon_k^2+5\widetilde{\Gamma}^5\beta_{T-2}\sigma_k^2+o(\epsilon_k^2),
\end{aligned}
\end{equation*}
where inequality (8) holds by Lemma \ref{LEM:Rigap}, (\ref{tilde:UT1}) and the fact that $\left\Vert\sum_{t=1}^K x_t\right\Vert^2\leq K\sum_{t=1}^K\left\Vert x_t\right\Vert^2$. Similar to (\ref{ET1}), we have
\begin{equation*}
\begin{aligned}
\left\Vert E_{T-2}^k\right\Vert&=\left\Vert \sigma_k \left(RK_{T-2}^k+B^{\top}\left(\widetilde{P}_{T-1}+\widetilde{U}_{T-1}^k\right)(A+BK_{T-2}^k)\right)\right\Vert\\
&\leq \sigma_k\left\Vert R K_{T-2}^k+B^{\top}\widetilde{P}_{T-1}(A+BK_{T-2}^k)\right\Vert+\sigma_k\left\Vert B^{\top}\widetilde{U}_{T-1}^k(A+BK_{T-2}^k)\right\Vert\\
&\leq \sigma_k\left\Vert R\Delta K_{T-2}^k+RK_{T-2}\right\Vert+\sigma_k\Vert B\Vert\cdot\left\Vert \widetilde{P}_{T-1}\right\Vert\cdot\left\Vert A+B\Delta K_{T-2}^k+BK_{T-2}\right\Vert\\
&\quad+\sigma_k\Vert B\Vert\cdot\left\Vert \widetilde{U}_{T-1}^k\right\Vert\cdot\left\Vert A+B\Delta K_{T-2}^k+BK_{T-2}\right\Vert\\
&\overset{(9)}{\leq} \sigma_k\widetilde{\Gamma}\left(\left(10\mathcal{V}^2\mathcal{L}\widetilde{\Gamma}^4\right)\mathcal{V}\epsilon_k+\widetilde{\Gamma}\right)+\sigma_k\widetilde{\Gamma}^2\left(2\widetilde{\Gamma}^2+\widetilde{\Gamma}\left(10\mathcal{V}^2\mathcal{L}\widetilde{\Gamma}^4\right)\mathcal{V}\epsilon_k\right)\\
&\quad+\sigma_k\widetilde{\Gamma}\left(4\widetilde{\Gamma}^3\mathcal{V}^2\epsilon_k^2+10\sigma_k^2\widetilde{\Gamma}^5\right)\cdot \left(2\widetilde{\Gamma}^2+\widetilde{\Gamma}\left(10\mathcal{V}^2\mathcal{L}\widetilde{\Gamma}^4\right)\mathcal{V}\epsilon_k\right)\\
&=\sigma_k\left(\widetilde{\Gamma}\left(10\mathcal{V}^2\mathcal{L}\widetilde{\Gamma}^4\right)+\widetilde{\Gamma}^3\left(10\mathcal{V}^2\mathcal{L}\widetilde{\Gamma}^4\right)\right)\mathcal{V}\epsilon_k+\sigma_k \left(\widetilde{\Gamma}^2+2\widetilde{\Gamma}^4\right)+o(\epsilon_k^2)\\
&\leq 2\sigma_k\widetilde{\Gamma}^3\left(10\mathcal{V}^2\mathcal{L}\widetilde{\Gamma}^4\right)\mathcal{V}\epsilon_k+3\widetilde{\Gamma}^4\sigma_k+o(\epsilon_k^2),
\end{aligned}
\end{equation*}
where inequality (9) follows from Lemma \ref{LEM:Rigap} and (\ref{UT1}). Similar to (\ref{FT1}), we have
\begin{equation*}
\begin{aligned}
\left\Vert F_{T-2}^k\right\Vert&=\left\Vert \sigma_k^2\left(R+B^{\top}\left(\widetilde{P}_{T-1}+\widetilde{U}_{T-1}^k\right)B\right)\right\Vert\leq 2\widetilde{\Gamma}^3\sigma_k^2+o(\epsilon_k^2).
\end{aligned}
\end{equation*}
Similar to (\ref{UT1}), we have
\begin{equation}\label{UT2}
\begin{aligned}
\left\Vert U_{T-2}^k\right\Vert&\leq \Vert D_{T-2}^k\Vert+\gamma\Vert (I_m-\gamma F_{T-2}^k)^{-1}\Vert\cdot\Vert E_{T-2}^k\Vert^2\\
&\leq 2\widetilde{\Gamma}^3\left(10\mathcal{V}^2\mathcal{L}\widetilde{\Gamma}^4\right)^2\mathcal{V}^2\epsilon_k^2+6\widetilde{\Gamma}^4\left(4\widetilde{\Gamma}^3\mathcal{V}^2\epsilon_k^2+10\widetilde{\Gamma}^5\sigma_k^2\right)+18\gamma\sigma^2\widetilde{\Gamma}^8\sigma_k^2+o(\epsilon_k^2)\\
&\overset{(10)}{\leq} 2\widetilde{\Gamma}^3\left(10\mathcal{V}^2\mathcal{L}\widetilde{\Gamma}^4\right)^2\mathcal{V}^2\epsilon_k^2+6\widetilde{\Gamma}^4\left(4\widetilde{\Gamma}^3\mathcal{V}^2\epsilon_k^2+10\widetilde{\Gamma}^5\sigma_k^2\right)+5\widetilde{\Gamma}^5\sigma_k^2+o(\epsilon_k^2)\\
&=\alpha_{T-2}\mathcal{V}^2\epsilon_k^2+5\widetilde{\Gamma}^5\beta_{T-2}\sigma_k^2+5\widetilde{\Gamma}^5\sigma_k^2+o(\epsilon_k^2),
\end{aligned}
\end{equation}
where inequality (10) follows from the assumption (\ref{ineq:assume}) and the similar arguments in (\ref{UT1}). Then, by (\ref{UT2}) and assumption (\ref{ineq:assume}), we can get
\begin{equation*}
\begin{aligned}
\left\Vert \widetilde{U}^k_{T-2}\right\Vert\leq 2\alpha_{T-2}\mathcal{V}^2\epsilon_k^2+10\widetilde{\Gamma}^5\beta_{T-2}\sigma_k^2+10\widetilde{\Gamma}^5\sigma_k^2+o(\epsilon_k^2).
\end{aligned}
\end{equation*}
Repeat this procedure from step $T-1$ to step $0$, we can get the following recursive inequalities. For any $t=0,\cdots,T-2$, 
\begin{equation}\label{ineq:recurconsts}
\begin{aligned}
\left\Vert D_{T-1}^k\right\Vert&\leq\alpha_{T-1}\mathcal{V}^2\epsilon_k^2+5\widetilde{\Gamma}^5\beta_{T-1}\sigma_k^2,\\
\left\Vert F_{T-1}^k\right\Vert&\leq 2\widetilde{\Gamma}^3\sigma_k^2,\\
\left\Vert U_{T-1}^k\right\Vert&\leq \alpha_{T-1}\mathcal{V}^2\epsilon_k^2+5\widetilde{\Gamma}^5\beta_{T-1}\sigma_k^2+5\widetilde{\Gamma}^5\sigma_k^2+o(\epsilon_k^2),\\
\left\Vert D_{t}^k\right\Vert&\leq\alpha_{t}\mathcal{V}^2\epsilon_k^2+5\widetilde{\Gamma}^5\beta_{t}\sigma_k^2+o(\epsilon_k^2),\\
\left\Vert F_{t}^k\right\Vert&\leq 2\widetilde{\Gamma}^3\sigma_k^2+o(\epsilon_k^2),\\
\left\Vert U_{t}^k\right\Vert&\leq \alpha_{t}\mathcal{V}^2\epsilon_k^2+5\widetilde{\Gamma}^5\beta_{t}\sigma_k^2+5\widetilde{\Gamma}^5\sigma_k^2+o(\epsilon_k^2).
\end{aligned}
\end{equation}

Substituting (\ref{ineq:recurconsts}) into (\ref{eq:pergap}) in Lemma \ref{LEM:gapfinal}, we have
\begin{equation}\label{ineq:Jgap}
\begin{aligned}
&J_0^{\pi^{k}}(x_0^{k})-J^{\star}_0(x_0^{k})\\
&=-\frac{1}{2\gamma}\sum_{t=1}^{T-1}\log\det\left(I_n-\gamma U_t^k\right)-\frac{1}{2\gamma}\sum_{t=0}^{T-1}\log\det \left(I_m-\gamma F_t^k\right)+\frac{1}{2}x_0^{k\top}U_0^kx_0^k\\
&\overset{(11)}{\leq} -\frac{1}{2\gamma}\sum_{t=1}^{T-1}\log\left(1-\gamma\left\Vert U_t^k\right\Vert\right)^n-\frac{1}{2\gamma}\sum_{t=0}^{T-1}\log\left(1-\gamma\left\Vert F_t^k\right\Vert\right)^m+\frac{1}{2}\left\Vert U_0^k\right\Vert\cdot\left\Vert x_0^k\right\Vert^2\\
&\overset{(12)}{\leq} -\frac{1}{2\gamma}\sum_{t=1}^{T-1}n\log\left(1-\gamma\left(\alpha_{t}\mathcal{V}^2\epsilon_k^2+5\widetilde{\Gamma}^5\beta_{t}\sigma_k^2+5\widetilde{\Gamma}^5\sigma_k^2+o(\epsilon_k^2)\right)\right)\\
&\quad-\frac{1}{2\gamma}\sum_{t=0}^{T-1}m\log\left(1-\gamma\left(2\widetilde{\Gamma}^3\sigma_k^2+o(\epsilon_k^2)\right)\right)+\frac{1}{2}\left(\alpha_{0}\mathcal{V}^2\epsilon_k^2+5\widetilde{\Gamma}^5\beta_{0}\sigma_k^2+5\widetilde{\Gamma}^5\sigma_k^2+o(\epsilon_k^2)\right)\left\Vert x_0\right\Vert^2\\
&\overset{(13)}{\leq} \frac{n}{2}\sum_{t=1}^{T-1}\left(\alpha_{t}\mathcal{V}^2\epsilon_k^2+5\widetilde{\Gamma}^5\beta_{t}\sigma_k^2+5\widetilde{\Gamma}^5\sigma_k^2+o(\epsilon_k^2)\right)+\frac{m}{2}\sum_{t=0}^{T-1}\left(2\widetilde{\Gamma}^3\sigma_k^2+o(\epsilon_k^2)\right)\\
&\quad+\frac{1}{2}\left(\alpha_{0}\mathcal{V}^2\epsilon_k^2+5\widetilde{\Gamma}^5\beta_{0}\sigma_k^2+5\widetilde{\Gamma}^5\sigma_k^2+o(\epsilon_k^2)\right)\left\Vert x_0\right\Vert^2,
\end{aligned}
\end{equation}
where inequality (11) holds because $$I_n-\gamma U_t^k\succeq \left(1-\gamma \left\Vert U_t^k\right\Vert\right)I_n\succeq \left(1-\gamma  \left(\alpha_{t}\mathcal{V}^2\epsilon_k^2+5\widetilde{\Gamma}^5\beta_{t}\sigma_k^2+5\widetilde{\Gamma}^5\sigma_k^2+o(\epsilon_k^2)\right)\right)I_n\succ 0,$$
and 
$$I_m-\gamma F_t^k\succeq \left(1-\gamma\left\Vert F_t^k\right\Vert\right)I_m\succeq \left(1-\gamma\left(2\widetilde{\Gamma}^3\sigma_k^2+o(\epsilon_k^2)\right)\right)I_m\succ 0,$$ inequality (12) follows from the inequalities in (\ref{ineq:recurconsts}), inequality (13) holds by the fact that $\log(1+x)\leq x$ for any $x>-1$.

Then, substituting the high probability bounds derived in Appendix \ref{SEC1} into (\ref{ineq:Jgap}), we can further bound $J_0^{\pi^{k}}(x_0^{k})-J^{\star}_0(x_0^{k})$. According to Proposition \ref{LEM:BTotal}, conditional on event $\widetilde{\mathcal{G}}^k$ defined in (\ref{EVENT:Gk}), when $kT\geq 200\left(3(n+m)+\log\left(\frac{4N}{\delta}\right)\right)$, with probability at least $1-\frac{\delta}{N-1}$, we have
\begin{equation}\label{ineq:epsk}
\left\Vert \theta^{k+1}-\theta\right\Vert^2\leq \epsilon_k:=\frac{C_N}{\sqrt{k}}.
\end{equation}
where
\begin{equation}\label{eq:CN}
C_N=\frac{160n}{cT}\left(\log\left(\frac{4nN}{\delta}\right)+(n+m)\log\left(1+\frac{\tilde{c}N\log\left(\frac{4TN^2}{\delta}\right)}{\lambda}\right)\right)+\frac{80\lambda (n+m)^2\widetilde{\Gamma}^2}{cT},
\end{equation}
and $c,\tilde{c}$ are defined in (\ref{constc}) and (\ref{constctilde}). Denote $\tilde{k}=\left\lceil \frac{200\left(3(n+m)+\log\left(\frac{4N}{\delta}\right)\right)}{T}\right\rceil$. When $k>\tilde{k}$, the estimation error bounds are given by (\ref{ineq:epsk}).

By a similar mathematical induction as discussed in Section \ref{D} and page 27 in \citet{basei2022logarithmic}, we can prove that the event $\widetilde{\mathcal{G}}=\left\{ \Vert \theta^k-\theta\Vert\leq \varpi,\forall k=2,\cdots,N\right\}\cup\left\{\theta^1\in\Xi\right\}$ holds with probability at least $1-\sum_{i=2}^{N}\frac{\delta}{N-1}=1-\delta$, i.e. $\mathbb{P}\left(\widetilde{\mathcal{G}}\right)\geq 1-\delta$, where $\varpi$ is defined in (\ref{Gconst}). 

Finally, conditional on the event $\widetilde{\mathcal{G}}$, we can derive an upper bound for $\operatorname{Regret}(N)$. Note that
\begin{equation}\label{eq:regret1}
\begin{aligned}
\operatorname{Regret}(N) 
=\sum_{k=1}^{\tilde{k}}\left(J^{\pi^k}(x_0^k)-J^{\star}(x_0^k)\right)+\sum_{k=\tilde{k}+1}^{N}\left(J^{\pi^k}(x_0^k)-J^{\star}(x_0^k)\right),
\end{aligned}
\end{equation}
where $\tilde{k}=\left\lceil \frac{200\left(3(n+m)+\log\left(\frac{4N}{\delta}\right)\right)}{T}\right\rceil$. We bound the two terms in (\ref{eq:regret1}) separately. We first bound the regret incurred up to the $\tilde{k}$-th episode. We have
\begin{equation}\label{regret:before}
\begin{aligned}
&\sum_{k=1}^{\tilde{k}}\left(J^{\pi^k}(x_0^k)-J^{\star}(x_0^k)\right)\leq \sum_{k=1}^{\tilde{k}}J^{\pi^k}(x_0^k)\leq \sum_{k=1}^{\tilde{k}}\frac{1}{\gamma}\log\mathbb{E}\exp\left(\frac{\gamma\widetilde{\Gamma}}{2}\left(\sum_{t=0}^{T-1}\left(\Vert x_t^k\Vert^2+\Vert u_t^k\Vert^2\right)+\Vert x_T^k\Vert^2\right)\right).
\end{aligned}
\end{equation}
It follows from (\ref{ineq:rancon}) in Lemma \ref{LEM:xt} that
\begin{equation*}
\begin{aligned}
&\sum_{k=1}^{\tilde{k}}\left(J^{\pi^k}(x_0^k)-J^{\star}(x_0^k)\right)\\
&\leq\frac{\widetilde{\Gamma}\tilde{k}}{2}\Bigg( \tilde{c}\log\left(\frac{TN}{\delta}\right)+72T\left(\widetilde{\Gamma}(1+C_K)\right)^{2T}\left(n^{\frac{3}{2}}+m^{\frac{3}{2}}\right)\max\{\Vert x_0\Vert^2,1\}\widetilde{\Gamma}^2\log\left(\frac{TN}{\delta}\right)\Bigg).
\end{aligned}
\end{equation*}
We next bound the regret in the remaining episodes as follows:
\begin{equation}\label{regret:after}
\begin{aligned}
&\sum_{k=\tilde{k}+1}^{N}\left(J^{\pi^k}(x_0^k)-J^{\star}(x_0^k)\right)\\
&\leq \sum_{k={\tilde{k}+1}}^N\Bigg[\frac{n}{2}\sum_{t=1}^{T-1}\left(\alpha_{t}\mathcal{V}^2\epsilon_k^2+5\widetilde{\Gamma}^5\beta_{t}\sigma_k^2+5\widetilde{\Gamma}^5\sigma_k^2+o(\epsilon_k^2)\right)+\frac{m}{2}\sum_{t=0}^{T-1}\left(2\widetilde{\Gamma}^3\sigma_k^2+o(\epsilon_k^2)\right)\\
&\qquad\qquad+\frac{1}{2}\left(\alpha_{0}\mathcal{V}^2\epsilon_k^2+5\widetilde{\Gamma}^5\beta_{0}\sigma_k^2+5\widetilde{\Gamma}^5\sigma_k^2+o(\epsilon_k^2)\right)\left\Vert x_0\right\Vert^2\Bigg]\\
&\leq \sum_{k=\tilde{k}+1}^N\Bigg[\frac{n}{2}\sum_{t=1}^{T-1}\left(\frac{\alpha_{t}\mathcal{V}^2C_N}{\sqrt{k}}+\frac{5\widetilde{\Gamma}^5\beta_{t}}{\sqrt{k}}+\frac{5\widetilde{\Gamma}^5}{\sqrt{k}}+o(\epsilon_k^2)\right)+\frac{m}{2}\sum_{t=0}^{T-1}\left(\frac{2\widetilde{\Gamma}^3}{\sqrt{k}}+o(\epsilon_k^2)\right)\\
&\qquad\qquad+\frac{1}{2}\left(\frac{\alpha_{0}\mathcal{V}^2C_N}{\sqrt{k}}+\frac{5\widetilde{\Gamma}^5\beta_{0}}{\sqrt{k}}+\frac{5\widetilde{\Gamma}^5}{\sqrt{k}}+o(\epsilon_k^2)\right)\left\Vert x_0\right\Vert^2\Bigg]\\
&\leq \Bigg[ n\sum_{t=1}^{T-1}\left(\alpha_t\mathcal{V}^2C_N+5\widetilde{\Gamma}^5\beta_t+5\widetilde{\Gamma}^5\right)+2mT\widetilde{\Gamma}^3+\left(\alpha_0\mathcal{V}^2C_N+5\widetilde{\Gamma}^5\beta_0+5\widetilde{\Gamma}^5\right)\Vert x_0\Vert^2\Bigg]\sqrt{N}+o\left(\sqrt{N}\right),
\end{aligned}
\end{equation}
where the first inequality follows from (\ref{ineq:Jgap}), the second inequality follows from (\ref{ineq:epsk}) and $C_N$ is given in (\ref{eq:CN}). On combining (\ref{regret:before}) with (\ref{regret:after}), we can obtain 
\begin{align*}
\operatorname{Regret}(N)\leq \widetilde{\mathcal{C}}\sum_{t=0}^{T-1}\left(\alpha_t C_N+\beta_t\right)\sqrt{N},
\end{align*}
where $\widetilde{\mathcal{C}}:=\operatorname{Polynomial}\left(n,m,\tilde{c},\mathcal{V},\widetilde{\Gamma},T,\tilde{k},\Vert x_0\Vert,\epsilon_1\right)\cdot\left(\widetilde{\Gamma}(1+C_K)\right)^{2T}.$

\section{Dependency of the regret bounds on other parameters }\label{sec:depend}
In this section, we provide some further discussions on the dependency of the regret bounds on other problem parameters, including the horizon length $T$, and the risk parameter $\gamma$ of the LEQR model. Because the coefficient terms of regret bounds in Theorem \ref{THM1} and Theorem \ref{regret:noise} share the similar recursive structure, we focus on the regret bound in Theorem \ref{THM1} and the regret bound in Theorem \ref{regret:noise} can be analysed similarly. Spelling out the explicit dependency is generally difficult, due to the implicit dependency of $\tilde \Gamma$ and  constant $\mathcal{C}$  on the model parameters. Hence, in the following we focus our discussion on the term $\sum_{t=0}^{T-1}\psi_t$ in view of the bound (\ref{REGRET}).


Since $\{\psi_t\}_{t=0}^{T-1}$ is defined in a recursive manner, one can directly verify that
\begin{align}\label{psi1} 
2\widetilde{\Gamma}^3\left(10\mathcal{V}^2\mathcal{L}\widetilde{\Gamma}^4\right)^{2(T-1)} \le & \sum_{t=0}^{T-1} \psi_t \le 2\widetilde{\Gamma}^3T^2\left(10\mathcal{V}^2\mathcal{L}\widetilde{\Gamma}^4\right)^{2(T-1)}.
\end{align}
The formula (\ref{psi1}) implies that the term $\sum_{t=0}^{T-1}\psi_t$ has exponential dependence on the horizon length $T$. 
When $\gamma\widetilde{\Gamma}>0$ is small, according to Taylor's Theorem, we have
\begin{align}\label{decompose}
\frac{1}{1-\gamma\widetilde{\Gamma}}=1+\gamma\widetilde{\Gamma}+o\left(\gamma\widetilde{\Gamma}\right) \approx \exp\left(\gamma\widetilde{\Gamma}\right).
\end{align}
Using the formula of $\mathcal{L}$ in (\ref{parameter}) and plugging (\ref{decompose}) into (\ref{psi1}), we find that the dependence of the term $\sum_{t=0}^{T-1}\psi_t$ on $\gamma$ is on the order of  $\exp\left(12\gamma\widetilde{\Gamma}(T-1)\right).$ This also suggests that the regret bound in Theorem \ref{THM1} has exponential dependence on $\gamma$ (ignoring the possible dependency of the constants $\mathcal{C}$ and $\tilde \Gamma$ on these parameters).

Note that \cite{basei2022logarithmic} proved a regret bound that is logarithmic in the number of episodes $N$ for continuous-time risk neutral LQR problem, also in the finite-horizon episodic setting. They also mentioned (see Remark 2.2 in their paper) that the regret bound of their algorithm in general depends exponentially on the time horizon $T$. So our previous discussion is consistent with their findings. 
Note that they did not make explicit of the dependency of their regret bound on the horizon length $T$. 


We also compare our results with \citet{fei2022cascaded}, which proved gap-dependent logarithmic regret bounds for tabular MDPs under the entropic risk criteria. In particular, they showed their algorithms can achieve the regret of $\frac{\left(\exp(\vert\beta\vert H)-1\right)^2}{\vert\beta\vert^2\Delta_{\min}}\cdot \text{poly}(H,S,A)\cdot\log\left(\frac{HSAK}{\delta}\right)$
with probability at least $1-\delta$, 
where $\text{poly}(\cdot)$ represents the polynomial function, $H$ is the length of the episode, $S$ is the size of the state space, $A$ is the size of the action space, $\beta$ is the risk coefficient and $\Delta_{\min}$ is the minimum value of the sub-optimality gap of the value functions. 
Their regret bound also has exponential dependency on the risk coefficient $\beta$ and the length of the episode $H$, which is similar as our regret bound. While there are some similarities, it is also important to emphasize we consider LEQR which has continuous state and action spaces, which are different from tabular MDPs with finite state and action spaces.

\section{Simulation Results in System 2 and System 3 in Section \ref{simulation}}\label{simresult}
In this section, we present the simulation results of Algorithms \ref{ALG1} and \ref{ALGO:alg} for System 2 and System 3 that are defined in Section \ref{simulation}.

\subsection{System 2}

Figures \ref{Fig7}--\ref{Fig9} show the average regret of Algorithm \ref{ALG1} in System 2 using 150 independent runs and Figures \ref{Fig10}--\ref{Fig12} show the average regret of Algorithm \ref{ALGO:alg} in the same system. The two blue dotted lines in Figures \ref{Fig7} and \ref{Fig10} represent the 95\% confidence interval of the regret when $\gamma=0.1$ and $T=3$. 
In Figures \ref{Fig8} and \ref{Fig11}, we set the true risk aversion value $\gamma=0.1$ and plot the regret of our algorithms with the true risk aversion value and the regret of the algorithms with misspecified risk aversion values. The results show that applying the algorithms with a wrong risk aversion value, e.g., applying the algorithm suitable for risk-neutral learning agents to a risk averse agent can lead to greater regret. In Figures \ref{Fig9} and \ref{Fig12}, we set $\gamma=0.001$ and study the dependence of the regret on the time horizon $T$. As expected, a longer time horizon implies greater regret.


\begin{figure}[htbp]
\centering
\subfloat[Regret performance (Algorithm \ref{ALG1} System 2)]{		\includegraphics[width=7cm,height=5cm]{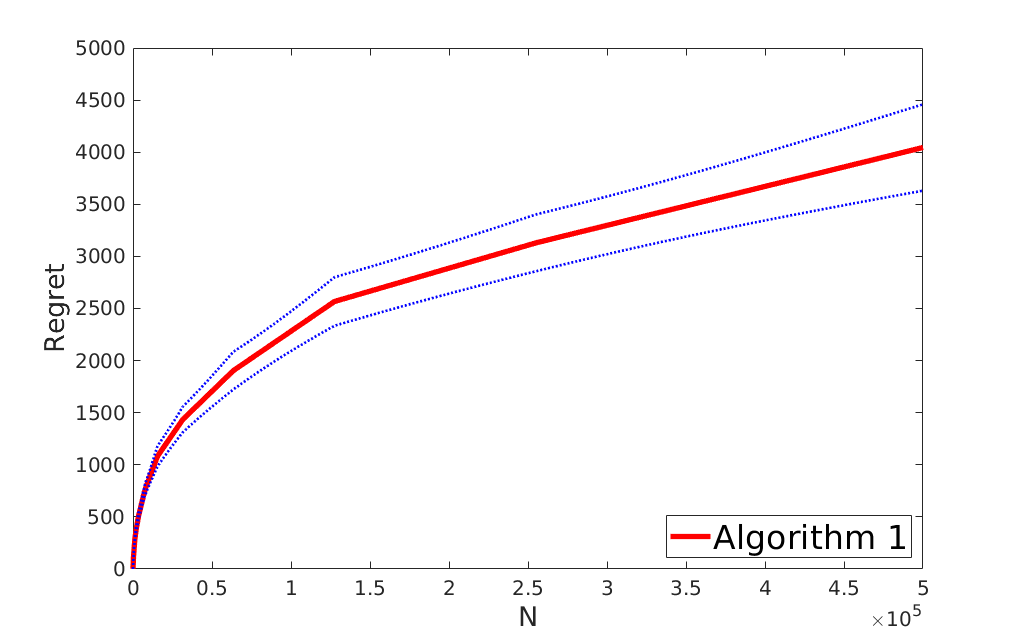}\label{Fig7}}
\subfloat[Effect of the risk parameter (Algorithm \ref{ALG1} System 2)]{
		\includegraphics[width=7cm,height=5cm]{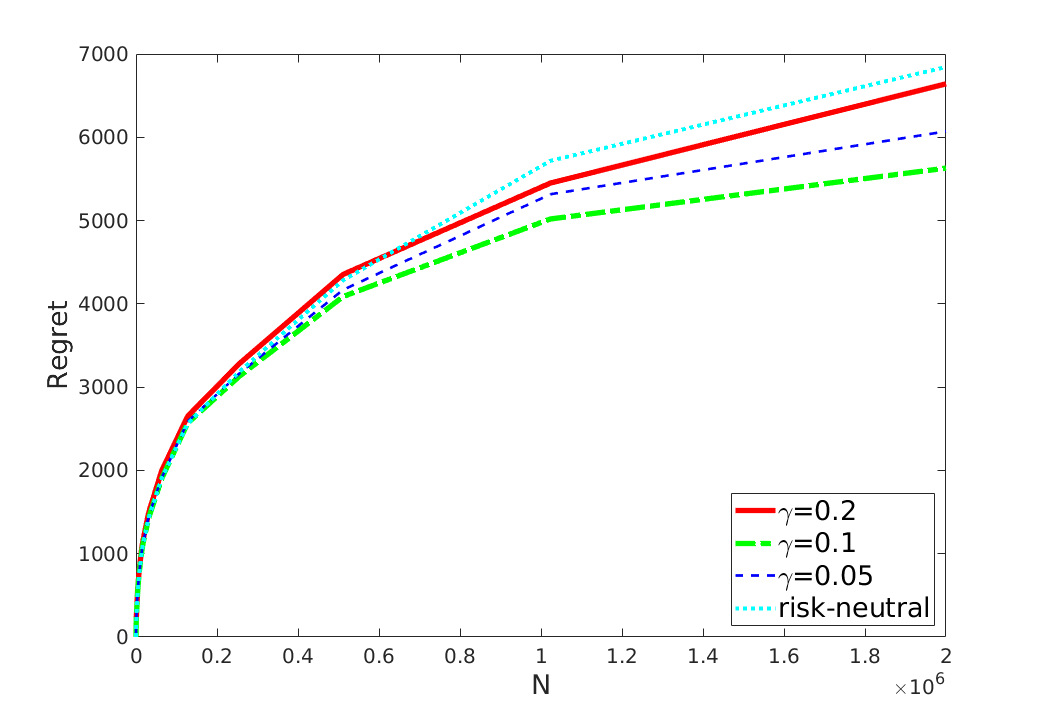}\label{Fig8}}
\\
\subfloat[Effect of the time horizon (Algorithm \ref{ALG1} System 2)]{
		\includegraphics[width=7cm,height=5cm]{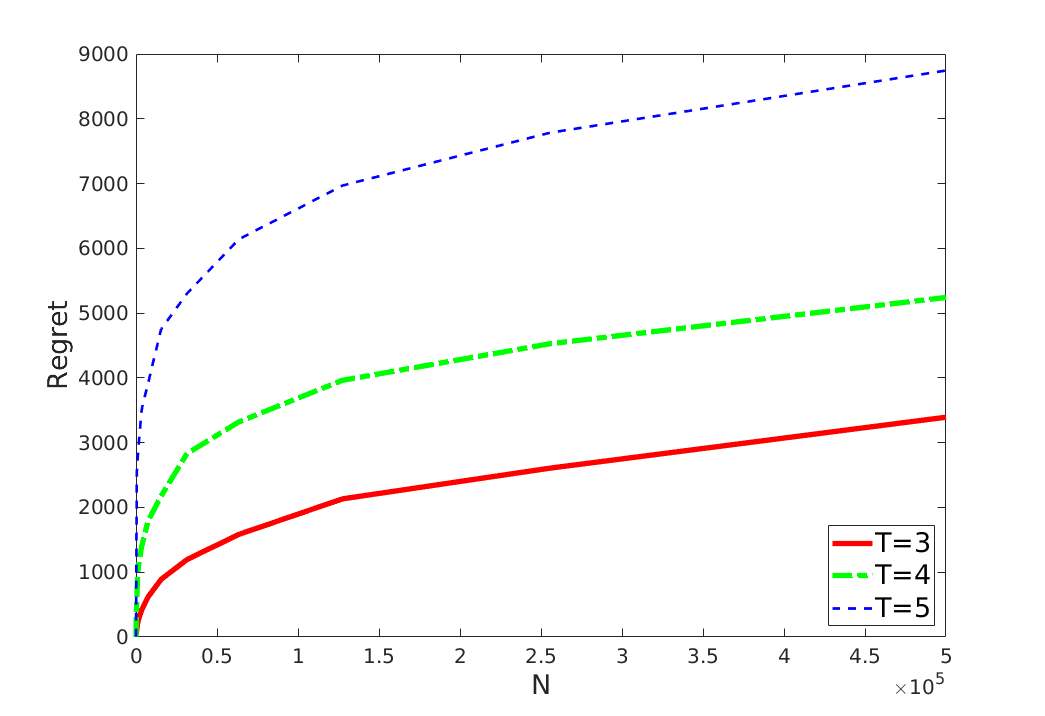}\label{Fig9}}
\subfloat[Regret performance (Algorithm \ref{ALGO:alg} System 2)]{
		\includegraphics[width=7cm,height=5cm]{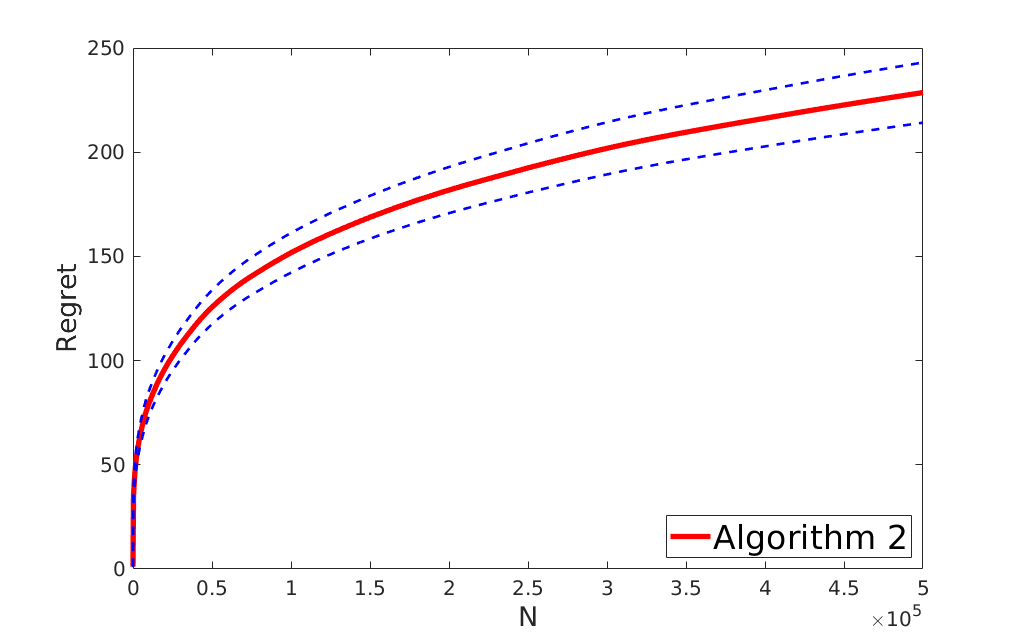}\label{Fig10}}
\\
\subfloat[Effect of the risk parameter (Algorithm \ref{ALGO:alg} System 2)]{
		\includegraphics[width=7cm,height=5cm]{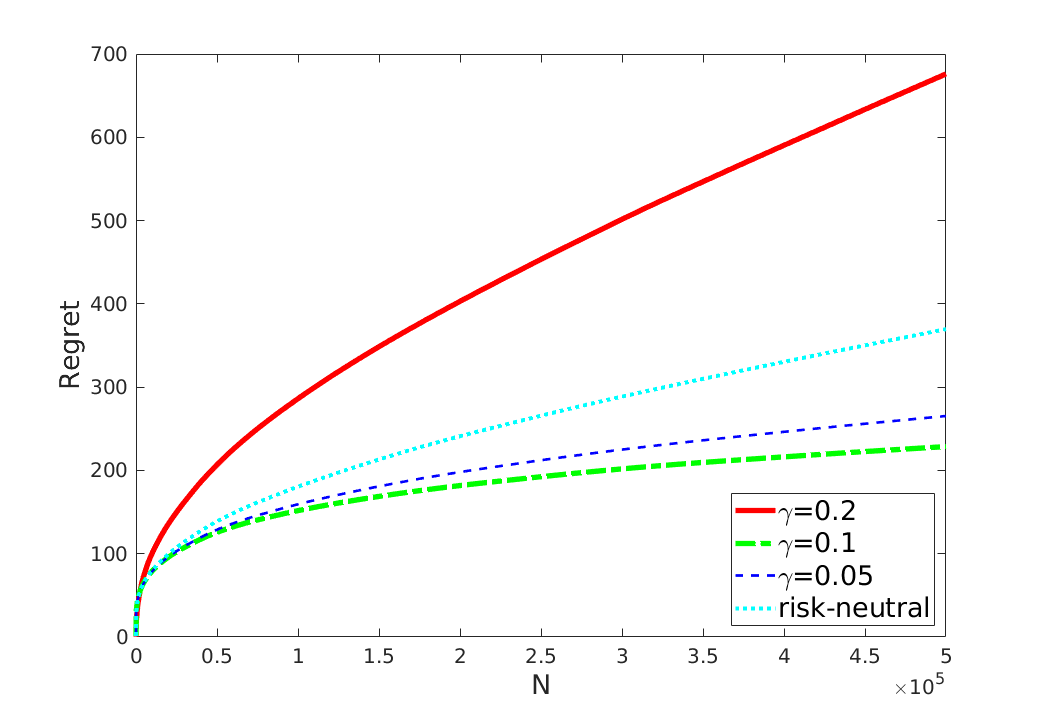}\label{Fig11}}
\subfloat[Effect of the time horizon (Algorithm \ref{ALGO:alg} System 2)]{
		\includegraphics[width=7cm,height=5cm]{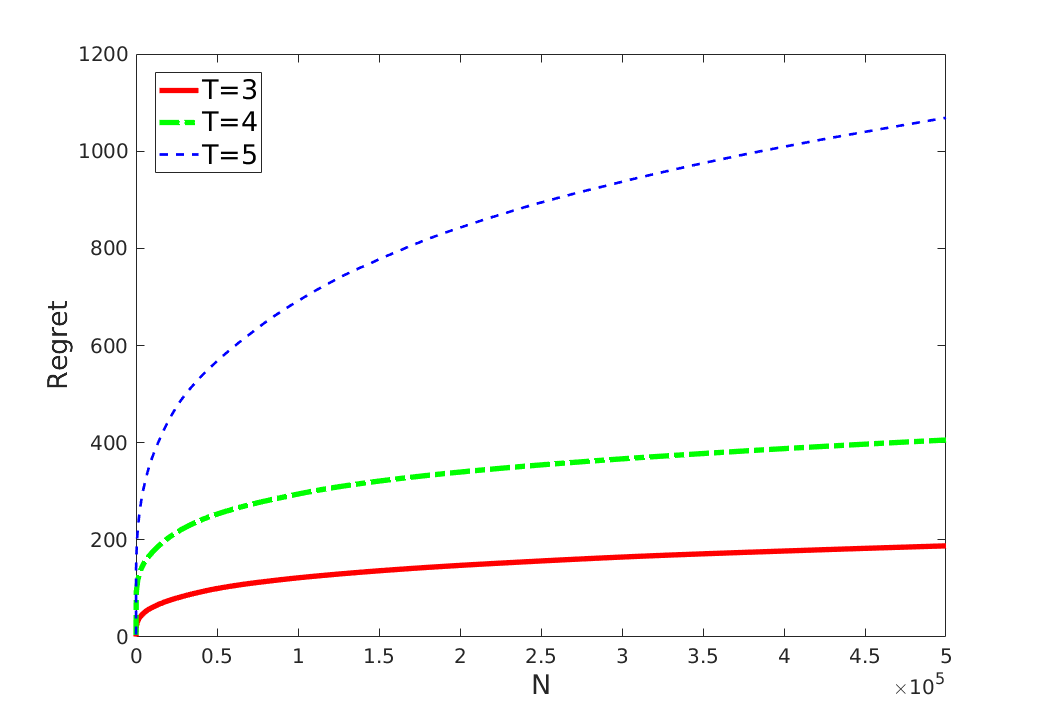}\label{Fig12}}
  \caption{Simulation results in System 2}
\end{figure}

\subsection{System 3}
Figures \ref{Fig13}-\ref{Fig15} show the average regret of Algorithm \ref{ALG1} in System 3 using 150 independent runs and Figures \ref{Fig16}--\ref{Fig18} show the average regret of Algorithm \ref{ALGO:alg} in the same system. The two blue dotted lines in Figures \ref{Fig13} and \ref{Fig16} depict the 95\% confidence interval of the regret when $\gamma=0.1$ and $T=3$. Setting the true risk aversion value $\gamma=0.1$, Figures \ref{Fig14} and \ref{Fig17} show the regret of the two algorithms with the true risk aversion value and the misspecified risk aversion values, which illustrates that applying the algorithms with an incorrect risk aversion value can cause poor performance of the algorithms. Setting $\gamma=0.005$, Figures \ref{Fig15} and \ref{Fig18} illustrates the dependence of the regret on the time horizon $T$. Similar to the results in the previous two systems, the regret of the algorithms can increase when the time horizon is longer.

\begin{figure}[htbp]
\centering
\subfloat[Regret performance (Algorithm \ref{ALG1} System 3)]{		\includegraphics[width=7cm,height=5cm]{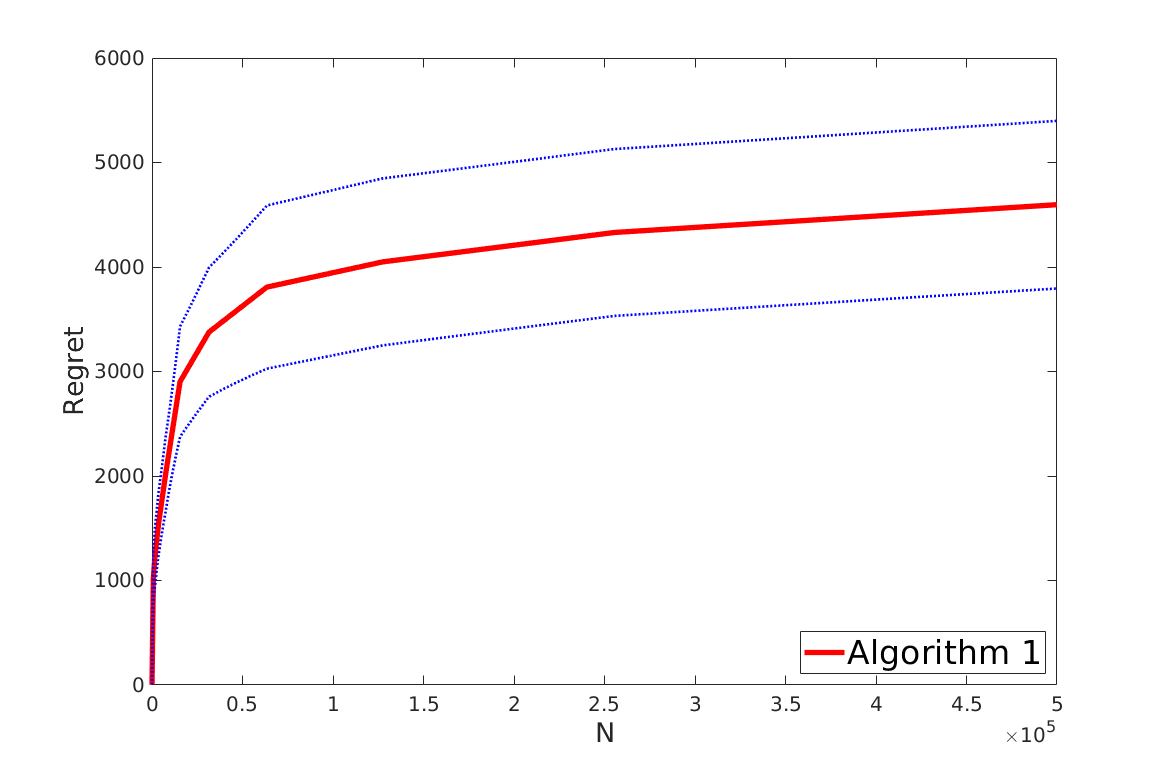}\label{Fig13}}
\subfloat[Effect of the risk parameter (Algorithm \ref{ALG1} System 3)]{
		\includegraphics[width=7cm,height=5cm]{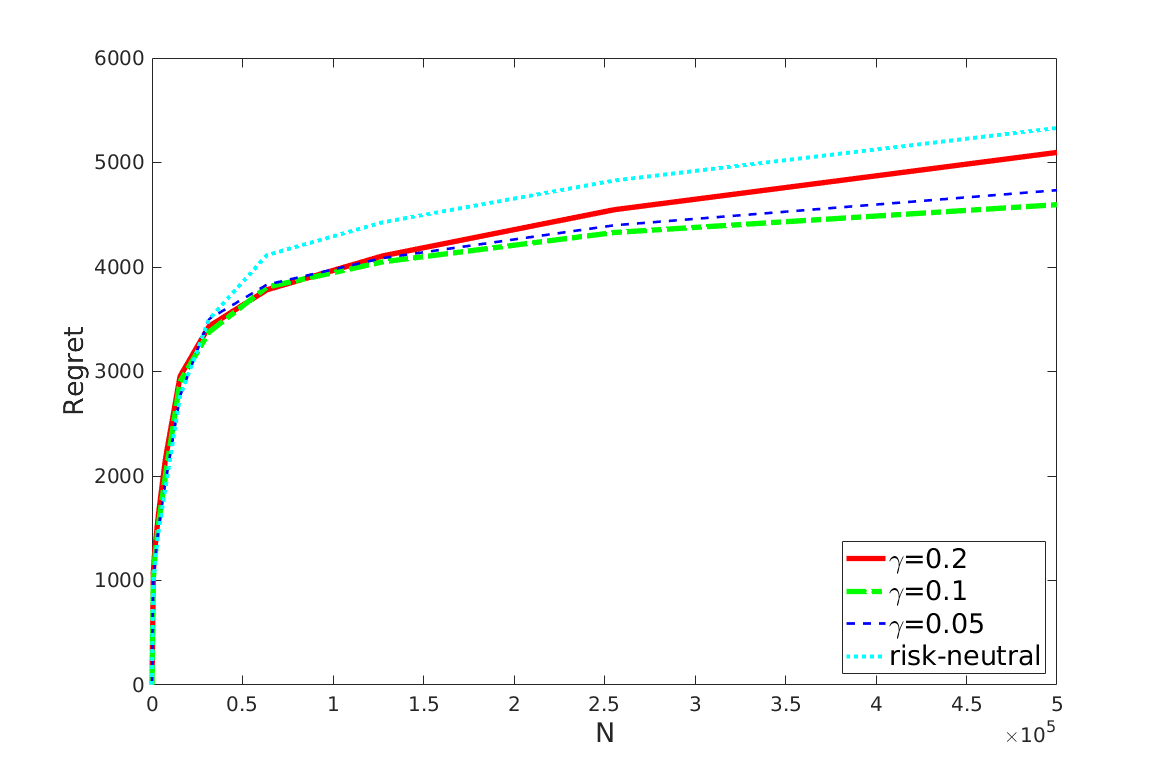}\label{Fig14}}
\\
\subfloat[Effect of the time horizon (Algorithm \ref{ALG1} System 3)]{
		\includegraphics[width=7cm,height=5cm]{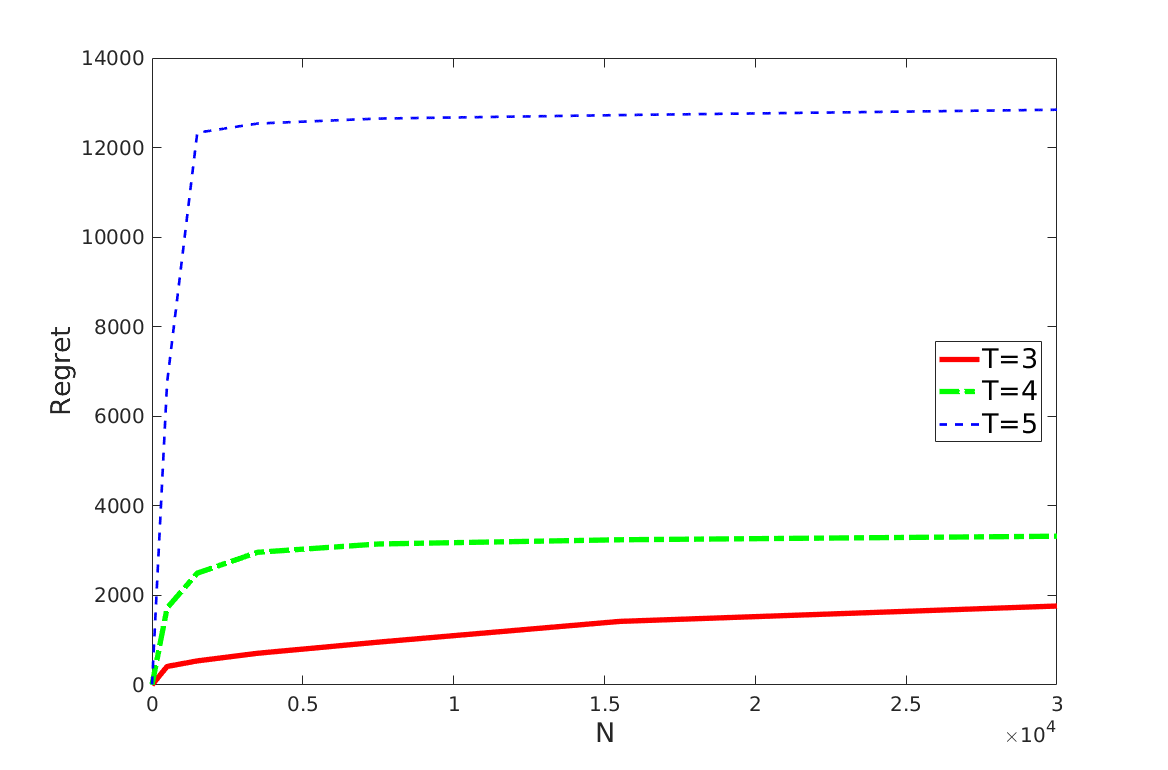}\label{Fig15}}
\subfloat[Regret performance (Algorithm \ref{ALGO:alg} System 3)]{
		\includegraphics[width=7cm,height=5cm]{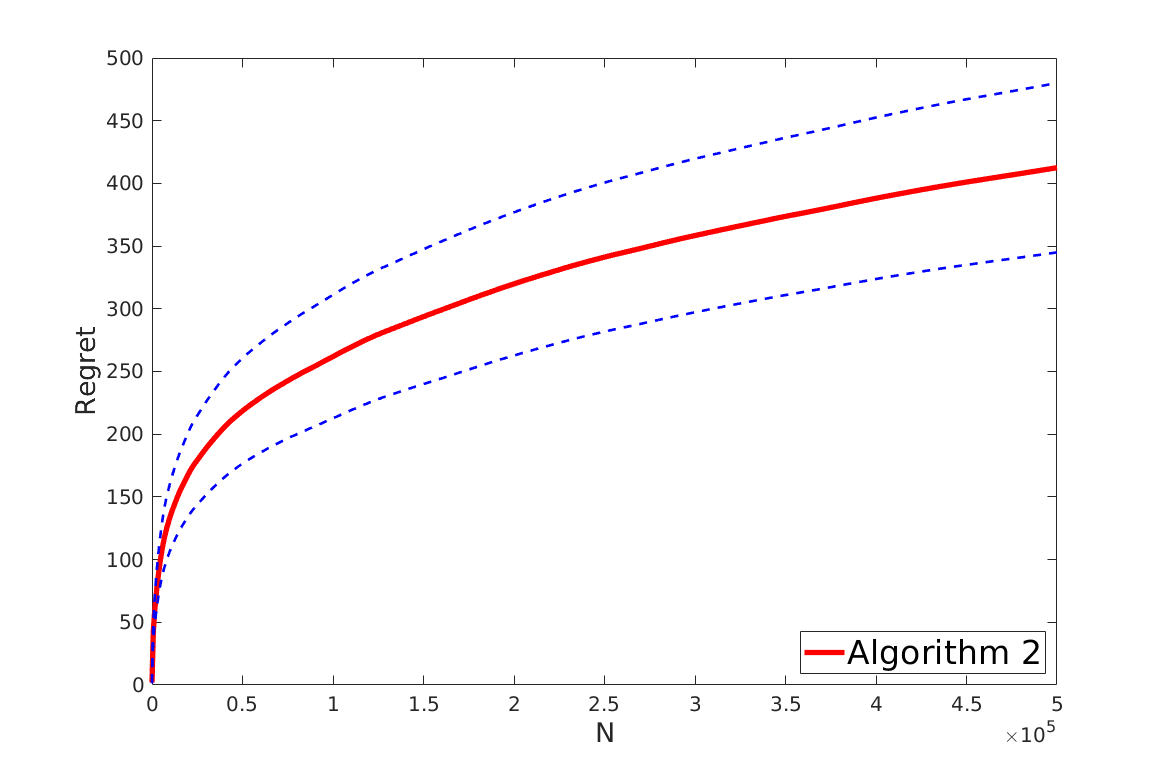}\label{Fig16}}
\\
\subfloat[Effect of the risk parameter (Algorithm \ref{ALGO:alg} System 3)]{
		\includegraphics[width=7cm,height=5cm]{sys1_A2_riskcompare.png}\label{Fig17}}
\subfloat[Effect of the time horizon (Algorithm \ref{ALGO:alg} System 3)]{
		\includegraphics[width=7cm,height=5cm]{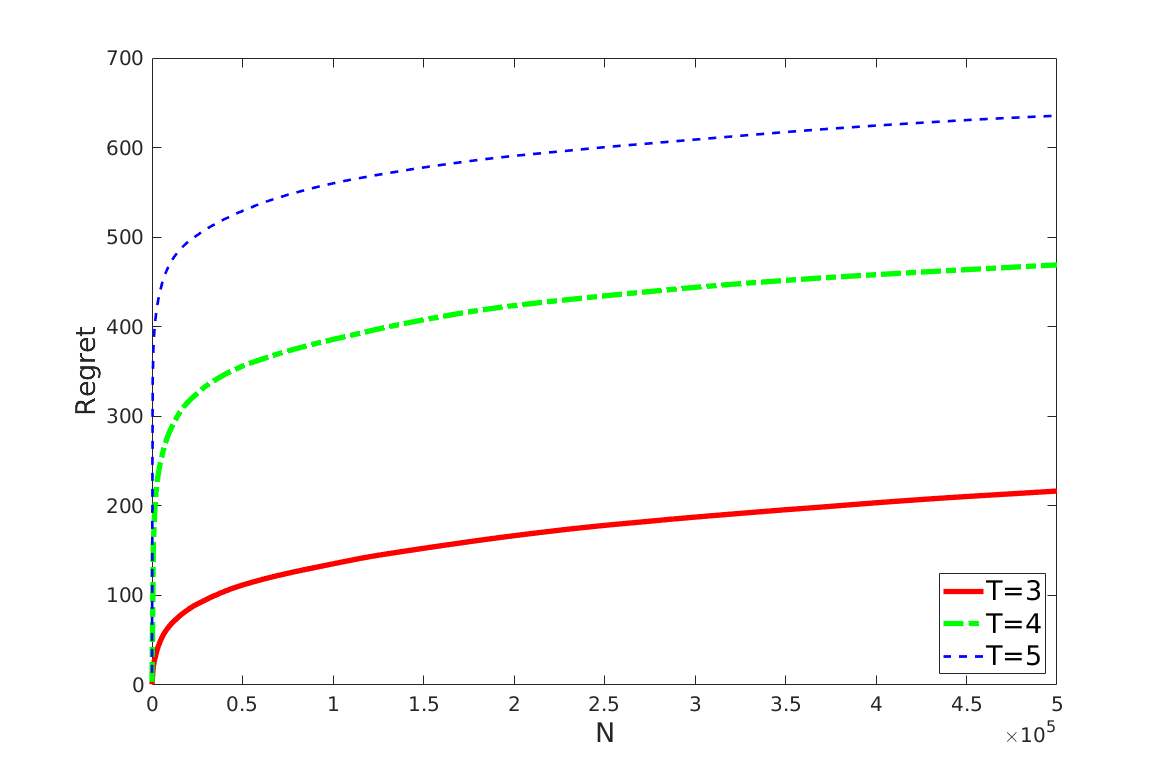}\label{Fig18}}
  \caption{Simulation results in System 3}
\end{figure}

\end{document}